\documentclass{article}

 \usepackage[preprint]{neurips_2026}
\makeatletter
\providecommand{\@notice}{}
\makeatother

\usepackage{amsmath,amssymb,amsthm,mathtools}
\usepackage{tikz}
\usetikzlibrary{fadings,patterns,shadows.blur,shapes,shapes.geometric,arrows.meta,positioning,fit,calc}
\usepackage{array,booktabs,multirow,tabularx}
\usepackage{yhmath,mathdots,extarrows}
\usepackage{siunitx,gensymb}
\usepackage{physics,cancel,color,xcolor,xspace}
\usepackage{mdframed,thmtools}
\usepackage{pifont}
\usepackage[normalem]{ulem} 
\usepackage{bbold}          
\usepackage{enumitem}
\usepackage{wrapfig}
\usepackage{algorithm}      
\usepackage{algpseudocode}  

\usepackage{microtype}      
\usepackage{needspace}      
\usepackage{etoolbox}       
\usepackage{caption}        

\allowdisplaybreaks[2]

\preto\section{\needspace{4\baselineskip}}
\preto\subsection{\needspace{3\baselineskip}}
\preto\subsubsection{\needspace{3\baselineskip}}
\preto\paragraph{\needspace{2\baselineskip}}

\usepackage{hyperref}
\hypersetup{
    colorlinks=true,
    linkcolor=blue,
    citecolor=teal,
    urlcolor=magenta,
    pdfauthor={Your Name},
    pdftitle={Paper Title},
    pdfborder={0 0 0}
}
\usepackage{url}
\usepackage{enumitem}
\usepackage[sort&compress,capitalize,nameinlink]{cleveref}		

\definecolor{darkgreen}{RGB}{110,192,19}
\definecolor{shadecolor}{gray}{0.9}

\definecolor{thmblue}{RGB}{220,235,252}         
\definecolor{thmblueborder}{RGB}{37,90,180}      
\definecolor{lemmint}{RGB}{218,248,234}           
\definecolor{lemmintborder}{RGB}{25,140,90}       
\definecolor{propamber}{RGB}{255,244,215}         
\definecolor{propamberborder}{RGB}{185,115,10}    
\definecolor{assumlav}{RGB}{234,230,250}          
\definecolor{assumlavborder}{RGB}{90,75,165}      
\definecolor{examplebg}{RGB}{240,250,240}         
\definecolor{exampleborder}{RGB}{60,160,60}       

\newtheoremstyle{remark}
  {3pt}{3pt}{\itshape}{}{\bfseries}{.}{ }
  {Remark~\thmnumber{#2}}
\theoremstyle{remark}

\declaretheoremstyle[
  headfont=\normalfont\bfseries\color{thmblueborder},
  notefont=\normalfont\mdseries\color{thmblueborder},
  notebraces={(}{)},
  bodyfont=\normalfont,
  postheadspace=0.5em,
  spaceabove=6pt, spacebelow=4pt,
  mdframed={
    skipabove=6pt, skipbelow=6pt,
    linewidth=1.5pt,
    linecolor=thmblueborder,
    backgroundcolor=thmblue,
    innerleftmargin=8pt, innerrightmargin=8pt,
    innertopmargin=5pt, innerbottommargin=5pt,
    roundcorner=4pt}
]{thmbox}

\declaretheoremstyle[
  headfont=\normalfont\bfseries\color{lemmintborder},
  notefont=\normalfont\mdseries\color{lemmintborder},
  notebraces={(}{)},
  bodyfont=\normalfont,
  postheadspace=0.5em,
  spaceabove=6pt, spacebelow=4pt,
  mdframed={
    skipabove=6pt, skipbelow=6pt,
    linewidth=1.5pt,
    linecolor=lemmintborder,
    backgroundcolor=lemmint,
    innerleftmargin=8pt, innerrightmargin=8pt,
    innertopmargin=5pt, innerbottommargin=5pt,
    roundcorner=4pt}
]{lembox}

\declaretheoremstyle[
  headfont=\normalfont\bfseries\color{propamberborder},
  notefont=\normalfont\mdseries\color{propamberborder},
  notebraces={(}{)},
  bodyfont=\normalfont,
  postheadspace=0.5em,
  spaceabove=6pt, spacebelow=4pt,
  mdframed={
    skipabove=6pt, skipbelow=6pt,
    linewidth=1.5pt,
    linecolor=propamberborder,
    backgroundcolor=propamber,
    innerleftmargin=8pt, innerrightmargin=8pt,
    innertopmargin=5pt, innerbottommargin=5pt,
    roundcorner=4pt}
]{propbox}

\declaretheoremstyle[
  headfont=\normalfont\bfseries\color{assumlavborder},
  notefont=\normalfont\mdseries\color{assumlavborder},
  notebraces={(}{)},
  bodyfont=\normalfont,
  postheadspace=0.5em,
  spaceabove=6pt, spacebelow=4pt,
  mdframed={
    skipabove=6pt, skipbelow=6pt,
    linewidth=1.5pt,
    linecolor=assumlavborder,
    backgroundcolor=assumlav,
    innerleftmargin=8pt, innerrightmargin=8pt,
    innertopmargin=5pt, innerbottommargin=5pt,
    roundcorner=4pt}
]{assumbox}

\declaretheorem[style=assumbox,within=section]{definition}
\declaretheorem[style=thmbox,sibling=definition]{theorem}

\declaretheorem[style=lembox,sibling=definition]{assumption}

\declaretheorem[style=lembox,sibling=definition]{lemma}

\newcommand{\debug}[1]{#1}

\newcommand{\newmacro}[2]{\newcommand{#1}{\debug{#2}}}
\newcommand{\newop}[2]{\DeclareMathOperator{#1}{\debug{#2}}}

\newcommand{\R}{\mathbb{R}}
\newcommand{\Var}{\operatorname{\textrm{Var}}}

\newcommand{\F}{\mathcal{F}}

\newop{\ex}{\mathbb{E}}     
\newop{\prob}{\mathbb{P}}   
\newop{\simplex}{\hull}

\newmacro{\noise}{U}
\newmacro{\filter}{\mathcal{F}}
\newmacro{\sgda}{\text{SGDA}}
\newmacro{\seg}{\text{SEG}}
\newcommand{\Expep}[1]{\ex\left[#1 \Big| \filter_t\right]}

\newmacro{\point}{x}
\newmacro{\pointalt}{\alt\point}
\newmacro{\pointaltalt}{\altalt\point}
\newmacro{\points}{\mathcal{X}}
\newmacro{\intpoints}{\relint\points}
\newmacro{\base}{p}
\newmacro{\basealt}{q}
\newmacro{\basealtalt}{u}
\newmacro{\open}{\mathcal{U}}
\newmacro{\closed}{\mathcal{C}}
\newmacro{\cpt}{\mathcal{K}}
\newmacro{\nhd}{\mathcal{U}}
\newmacro{\gmat}{g}
\newmacro{\gdist}{\dist_{\gmat}}
\newmacro{\mfld}{M}
\newmacro{\form}{\omega}
\newmacro{\tvec}{z}
\newmacro{\uvec}{u}
\newmacro{\basin}{\mathbb{B}}
\newmacro{\ball}{\basin}
\newmacro{\sphere}{\mathbb{S}}

\newmacro{\tstart}{0}		
\newmacro{\timealt}{s}		
\newmacro{\horizon}{T}		

\newmacro{\traj}{x}		
\newmacro{\trajalt}{y}		
\newmacro{\trajaltalt}{z}		

\newmacro{\flowmap}{\Theta}		
\DeclarePairedDelimiterXPP{\flowof}[2]{\flowmap_{#1}}{(}{)}{}{#2}		

\newop{\Opt}{Opt}		
\newop{\Sol}{Sol}		
\newop{\gap}{Gap}	
\newop{\dualitygap}{Duality-Gap}		
\newop{\orcl}{Or}		

\newmacro{\tfun}{f}		
\newmacro{\obj}{f}		
\newmacro{\objalt}{g}		
\newmacro{\sobj}{F}		

\newmacro{\gvec}{g}		
\newmacro{\oper}{A}		
\newmacro{\vecfield}{v}		

\newcommand{\sol}[1][\point]{#1^{\ast}}		
\newmacro{\solvec}{\vecfield(\sol)}		
\newmacro{\solpay}{\eq[\payv]}		

\newmacro{\signal}{V}		
\newmacro{\step}{\gamma}		
\newmacro{\learn}{\eta}		

\newmacro{\vbound}{G}		
\newmacro{\lips}{\ell}		
\newmacro{\strong}{\mu}		
\newmacro{\smooth}{\beta}		

\newop{\tspace}{T}		
\newop{\tcone}{TC}		
\newop{\dcone}{\tcone^{\ast}}		
\newop{\ncone}{NC}		
\newop{\pcone}{PC}		
\newop{\hull}{\Delta}		

\newmacro{\cvx}{\mathcal{C}}		
\newmacro{\subd}{\partial}		

\newmacro{\minmax}{\mathcal{L}}		

\newmacro{\minvar}{{\point_{1}}}		
\newmacro{\minvaralt}{\alt\minvar}		
\newmacro{\minvars}{\points_{1}}		
\newmacro{\minsol}{\sol[\minvar]}		

\newmacro{\maxvar}{\point_{2}}		
\newmacro{\maxvaaltr}{\alt\maxvar}		
\newmacro{\maxvars}{\points_{2}}		
\newmacro{\maxsol}{\sol[\maxvar]}		

\newop{\Eucl}{\Pi}		
\newop{\logit}{\Lambda}		
\newop{\dkl}{KL}		

\newmacro{\hreg}{h}		
\newmacro{\hconj}{\hreg^{\ast}}		
\newmacro{\breg}{D}		
\newmacro{\mprox}{P}		
\newmacro{\mirror}{Q}		
\newmacro{\fench}{F}		
\newmacro{\depth}{H}		
\newmacro{\hstr}{K}		
\newmacro{\hker}{\theta}		

\newmacro{\proxdom}{\points_{\hreg}}		
\newmacro{\proxdomi}{\points_{\hreg_{\play}}}		
\newmacro{\zone}{\mathbb{D}}		

\DeclarePairedDelimiterXPP{\proxof}[2]{\mprox_{#1}}{(}{)}{}{#2}		

\newmacro{\state}{x}
\newmacro{\statealt}{y}
\newmacro{\statealtalt}{z}
\newmacro{\solb}{\mathrm{R}}
\newmacro{\growth}{L}
\newmacro{\wqsmscale}{\mu}
\newmacro{\wqsmshift}{\lambda}

\newmacro{\stepalt}{\alpha}

\newmacro{\rvar}{g}
\newmacro{\kyrt}{\delta_{\text{\tiny{KYRT}}}}
\newmacro{\energy}{\mathcal{E}}

\newmacro{\irr}{\psi}

\newmacro{\set}{C}
\newmacro{\tf}{\phi}

\newmacro{\normal}{\mathcal{N}}
\newmacro{\thres}{\theta}
\newmacro{\borel}{\mathcal{B}}

\DeclarePairedDelimiterX{\setdef}[2]{\lbrace}{\rbrace}{#1:#2}
\DeclarePairedDelimiterX{\inner}[2]{\langle}{\rangle}{#1,#2}
\DeclarePairedDelimiterXPP{\exclude}[1]{\mathopen{}\setminus}{\lbrace}{\rbrace}{}{#1}

\DeclarePairedDelimiterXPP{\exof}[1]{\ex}{\Big[}{\Big]}{}{%
   #1}
\DeclarePairedDelimiterXPP{\probof}[1]{\prob}{(}{)}{}{%
   #1}
\DeclarePairedDelimiterXPP{\oneof}[1]{\one}{\lbrace}{\rbrace}{}{%
   #1}

\newcommand{\Expe}[1]{\exof{#1}}

\DeclareMathOperator{\dist}{dist}

\DeclareMathOperator{\one}{\mathbb{1}} 

\DeclareMathOperator{\relint}{ri}

\usepackage{xcolor}

\makeatletter
\newcommand{\onlyappendixintoc}{%
  \let\oldaddcontentsline\addcontentsline
  \renewcommand{\addcontentsline}[3]{}%
}
\newcommand{\restoreaddcontentsline}{%
  \let\addcontentsline\oldaddcontentsline
}
\makeatother

\usepackage[colorinlistoftodos,bordercolor=orange,backgroundcolor=orange!20,linecolor=orange,textsize=scriptsize]{todonotes}

\title{
SGD at the Edge of Stability: Stochastic Stabilization with Large Learning Rates
}

\author{%
  Konstantinos Emmanouilidis$^{1, 4}$,
  Lachlan MacDonald$^{1, 4}$, 
  Salma Tarmoun$^{2, 4}$, 
  Rene Vidal$^{3, 4}$ \\
  Department of CIS$^{1}$, AMCS $^{2}$, ESE$^{3}$, IDEAS$^{4}$ \\
  University of Pennsylvania\\
}

\begin{document}

\maketitle

\begin{abstract}
  Modern deep learning has been shown to operate at the edge of stability, routinely using learning rates far larger than those justified by classical optimization theory. Most prior analyses of the edge of stability phenomenon focus on deterministic gradient descent, leaving the stochastic setting largely unexplored. In this work, we provide sharp convergence guarantees for Stochastic Gradient Descent (SGD) applied to the multiclass cross-entropy loss, for both linear classifiers and two-layer neural networks. We show that the stochasticity of SGD may cause the dynamics to alternate between an \textit{edge-of-stability regime} that is dominated by curvature-driven oscillations, and a \textit{stable regime} in which the expected loss decreases at a controlled rate. Despite that, we prove that SGD self-stabilizes the dynamics, ensuring that the iterates return to stability in a fixed number of iterations and allowing convergence in the best-iterate sense even with large learning rates. Experiments validate our theoretical findings and illustrate the benefits of SGD in the large-stepsize regime.
\end{abstract}


\section{Introduction}

Modern machine learning models are trained at unprecedented scales and with stepsizes that lie far outside the regime covered by classical optimization theory. Yet, stochastic gradient descent (SGD) and its variants \citep{bottou2012stochastic, kingma2017adammethodstochasticoptimization,gould2024continuoustimeanalysisadaptiveoptimization} remain the basic workhorses of deep learning. A striking empirical phenomenon, reported repeatedly across architectures, datasets, and training paradigms, is that state-of-the-art performance is typically achieved by using large learning rates, often far exceeding the stability thresholds predicted by smooth convex optimization. This phenomenon, commonly referred to as the Edge of Stability (EoS) \citep{cohengradient}, has attracted considerable attention in recent years and has emerged as a unifying perspective on the behavior of gradient methods in modern deep networks \citep{damian2023self, cohen2024understandingoptimizationdeeplearning}.

\begin{figure}[h]
    \centering
    \includegraphics[width=.85\linewidth]{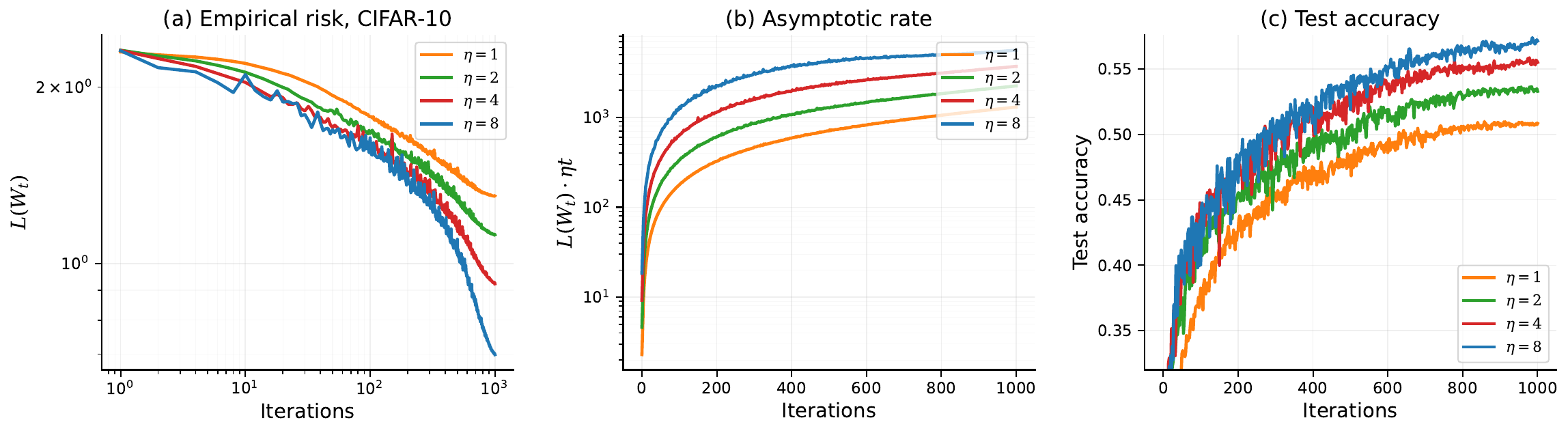}
    \caption{Experiment on CIFAR-10. An 8-layer NN with GELU activation function trained with SGD and large step sizes. The training loss decreases faster for large step sizes.}
    \label{fig:placeholder}
\end{figure}

In the deterministic setting, the EoS phenomenon is now relatively well understood in linear logistic regression \citep{wu2024}. For Gradient Descent (GD) applied to the logistic regression setup
\begin{eqnarray}
     L(W) = \frac{1}{n} \sum_{i=1}^n \ell(W x_i, y_i),
\end{eqnarray}
the classical theory dictates that convergence with a monotonic decrease of the loss requires a step size $\eta$ satisfying
\begin{eqnarray}
    \lambda_{\max}(\nabla^2 L(W_t)) \leq \frac{2}{\eta},
\end{eqnarray}
where  $\lambda_{\max}(\nabla^2 L(W_t))$ denotes the maximum eigenvalue of the Hessian of the loss at the current iterate $W_t$. However, prior empirical results have demonstrated that GD often converges even when this condition is violated, entering a regime characterized by non-monotonic decrease of the loss and oscillatory dynamics. Moreover, in the case of the square objective $\lambda_{\max}(\nabla^2 L(W_t))$ hovers around the critical value 
$\frac{2}{\eta}$ \citep{cohengradient, damian2023self}.

The aforementioned behavior was analyzed in a sequence of works \citep{wu2024large, cai2024large, wu2025large} culminating in the analysis of \emph{large-stepsize GD} for logistic regression. 
A recurring conclusion is that the training dynamics can be divided in two regimes:
\begin{enumerate}[leftmargin=*]
    \item An early EoS phase, where the iterates experience curvature-driven oscillations, the loss is not monotonically decreasing, and the Hessian’s top eigenvalue exceeds the stability boundary. 
    \item A late stable phase, in which the algorithm self-stabilizes, the iterates enter a region of the landscape where 
$\lambda_{\max}(\nabla^2 L(W_t)) \leq \frac{2}{\eta}$, and the loss decreases at a predictable rate.
\end{enumerate}

A key insight is that, although the dynamics during the EoS phase appear erratic, their structure is surprisingly rigid: the oscillations persist only for a finite window, after which the system transitions to the stable regime where standard descent analysis becomes applicable.

These advances have significantly sharpened our understanding of large-stepsize GD. Nevertheless, the deterministic viewpoint does not capture the behavior of the stochastic counterpart of the algorithm (SGD), that is commonly used in practice. Indeed, stochasticity introduces several new layers of complexity. First, the notion of a stability threshold becomes ill-defined. Second, the loss function need not decrease monotonically even in the stable phase. Third, 
oscillations may be caused either by curvature or by noise.
As a consequence, the deterministic EoS theory, based on precise control of the Hessian's largest eigenvalue does not extend in a straightforward or even meaningful way to the stochastic setting. This leaves a fundamental question open:

\begin{center}
    \textit{What is the appropriate notion of stability for SGD with the cross-entropy loss}\\
    \textit{and can we characterize the dynamics of SGD beyond the classical small learning rate regime?}
\end{center}

Empirically, practitioners routinely observe that training at large stepsizes remains effective even under stochastic noise, often with improved generalization. Prior work \citep{wu2024} only provides theoretical guarantees for the average loss of SGD in logistic regression, without describing what happens when the iterates become stable or what are the intrinsic differences with the GD dynamics. Thus, the theoretical picture remains incomplete: we lack a principled understanding of how stochasticity alters the two-phase structure identified in the deterministic GD, and what stability even \emph{means} in the presence of noise.

In this work, we aim to characterize the complex behaviour of the dynamics of SGD on the multi-class cross-entropy loss under arbitrarily large stepsizes and provide insights on the convergence behaviour of the algorithm in the large learning rate regime. Our contributions can be described as follows:
\begin{enumerate}[leftmargin=*, label=$\star$]
    \item We propose a notion of stability for SGD in the cross-entropy loss setting, by leveraging the stochastic Lyapunov stability theory for dynamical systems. An important characteristic is that the stochastic stability criterion reduces to the deterministic one in the full-batch limit and captures the precise balance between curvature and noise that determines whether SGD remains stable.
    \item We provide convergence guarantees for SGD on the multi-class cross-entropy loss with arbitrarily large stepsizes $\eta > 0$, establishing the first theoretical guarantees for the multi-class setting.
    \item We provide a refined analysis of the dynamics, decoupling the EoS from the stable regime. We show that SGD enters the stable regime in a fixed number of iterations with high probability. We prove that, despite the inherent stochasticity, SGD stabilizes the trajectory, returning the dynamics back to stability after a fixed number of iterations and remaining there with high probability.
    \item We provide a fine-grained analysis for the two-layer neural networks dynamics of SGD in the large learning rate setting, establishing distinct EoS and stable regimes and characterizing the behaviour of the dynamics in each regime.
    \item We provide extensive experimental validation of our theoretical results, showing the benefits of large step-sizes in practice.
\end{enumerate}
\vspace{-0.2cm}

\section{SGD for Cross-entropy Loss}

We consider the multi-class classification problem \eqref{eq:loss-def} with \(K\) classes, where \(x_i \in \mathbb{R}^d\) is the input, $y_i\in\{1,\ldots,K\}$ is the corresponding label, $f$ denotes the prediction function parameterized by weights \(W = [w_1,\ldots,w_K] \in \mathbb{R}^{d \times K}\), and $\ell\left(\cdot\right)$ is the cross-entropy loss function 
\begin{equation}
    L(W)
    = \frac{1}{n}\sum_{i=1}^n \ell\big(f(x_i;W), y_i\big),
    \qquad 
    \ell(f,y)
    = \log\!\bigg(\sum_{j=1}^K e^{f_j}\bigg)
      - f_y.
    \label{eq:loss-def}
    \vspace{-0.25cm}
\end{equation}

Our goal is to analyze the behavior of SGD in the large stepsize regime. For a mini-batch \(\mathcal{B}_t\subseteq[n]\) of size \(b \geq 1\), the SGD update at time \(t \geq 0\) is given by
\begin{equation}
    W_{t+1}
    = W_t - \eta\, g_t,
    \label{SGD} \tag{SGD}
\end{equation}
where $
    g_t
    = \frac{1}{b} \sum_{i\in\mathcal{B}_t}
    \nabla \ell(f(x_i;W_t), y_i)$ is the stochastic gradient oracle and $\eta>0$ is the stepsize. 

\subsection{Blanket Assumptions} 
We, next, introduce the assumptions needed for presenting our theoretical results. 
We assume without loss of generality that inputs satisfy $\|x_i\|_2 \le 1, \forall i \in [n]$. The next assumption regards the separability of the dataset, which is common in the literature \citep{wu2023implicit, cai2024large} for this setting.

\begin{assumption}[Separable data]\label{assumpt: separable}
There exists \(W_* \in \mathbb{R}^{d\times K}\) with \(\|W_*\|_F = 1\) and a margin \(\gamma>0\) such that for every \(i\in[n]\) and every class \(j\neq y_i\), it holds that
$(W_* x_i)_{y_i}
    \;-\;
    (W_* x_i)_j
    \ge
    \gamma.$
\end{assumption}\vspace{-0.2cm}

\subsection{Convergence Guarantees}

We are now in a position to present our main convergence guarantees for the cross-entropy loss setting. We emphasize that the dynamics of SGD with large stepsizes are inherently non-monotonic: the loss may increase due to stochasticity, while curvature-driven oscillations may pose another obstacle in achieving convergence. Our first main result shows that, despite the inherent impediments, SGD is able to converge even with \emph{arbitrarily large} stepsizes and in the presence of stochastic noise.

\begin{theorem}\label{thm:sgd_crossentropy}
Let Assumption~\ref{assumpt: separable} hold. The iterates of~\eqref{SGD} with any step size $\eta > 0$ and batch size $b \geq 1$ satisfy for any $t > 0$ that
\begin{eqnarray}
    \min_{0 \leq k \leq t-1} \Expe{L(W_k)} \leq\frac{K-1 + \ln^2(\gamma^2\eta t)+\eta^2 (1+\frac{1}{b})^2}{\gamma^2\eta t}.\label{eq:EoS-bound}
\end{eqnarray}
\end{theorem}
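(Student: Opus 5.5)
Here $f(x;W)=W^\top x$ is the linear classifier, so $L$ is convex in $W$. The proof is the standard large-stepsize perceptron-style argument (as in \citep{wu2024large}): compare $W_t$ to a rescaled separator $U=\rho W_*$ with $\rho>0$ chosen at the end, and expand
\[
\|W_{t+1}-U\|_F^2=\|W_t-U\|_F^2-2\eta\langle g_t,W_t-U\rangle+\eta^2\|g_t\|_F^2 .
\]
The main difficulty is the $\eta^2\|g_t\|^2$ term. Classical analysis would need $\eta\lesssim 1/\text{smoothness}$ to absorb it, but here $\eta$ is arbitrary.

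\textbf{Step 1: per-sample gradient bound.} For one sample, $\nabla_W\ell=x_i(p_i-e_{y_i})^\top$, where $p_i$ is the softmax vector. Hence
\[
\|\nabla\ell_i\|_F\le\|p_i-e_{y_i}\|_2\le \sqrt2\,(1-p_{i,y_i})\le\sqrt2\min(1,\ell_i),
\]
using $1-p_y\le -\log p_y=\ell_i$. Consequently
\[
\|g_t\|_F\le \tfrac1b\textstyle\sum_{i\in\mathcal B_t}\|\nabla\ell_i\|\le \sqrt2\,\min\!\big(1,\tfrac1b\sum_{i\in\mathcal B_t}\ell_i\big).
\]
This bound is $O(1)$ but not small relative to the loss.

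\textbf{Step 2: inner product with the separator.} I handle $\langle g_t,W_t\rangle$ and $\langle g_t,U\rangle$ separately rather than via convexity alone.

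For the separator, $\langle \nabla\ell_i,-W_*\rangle=\sum_j (e_{y_i}-p_i)_j (W_*x_i)_j=\sum_{j\ne y_i}p_{ij}\big((W_*x_i)_{y_i}-(W_*x_i)_j\big)\ge\gamma(1-p_{i,y_i})$. So $-\eta\langle g_t,U\rangle$ yields a negative term proportional to $\rho\gamma\eta\,\|\nabla\ell\|$-type quantities. For the iterate, convexity gives $\langle \nabla\ell_i,W_t-U\rangle\ge \ell_i(W_t)-\ell_i(U)$, and $\ell_i(U)\le (K-1)e^{-\rho\gamma}$.

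The alternative, perceptron-style route is to choose $\rho\approx \frac{\eta}{\gamma}+\frac{\ln(\gamma^2\eta t)}{\gamma}$ so that the split-off term $2\eta\rho\gamma(1-p_y)$ dominates $\eta^2\|g\|^2\le 2\eta^2(1-p_y)$. To make this work I write $U=U_1+U_2$ with $U_1=\frac{\ln(\cdot)}{\gamma}W_*$ and $U_2=\frac{\eta(1+1/b)}{2\gamma}W_*$. Convexity is applied with $U_1$; the margin inequality with $U_2$ absorbs the quadratic term. The $(1+1/b)$ factor enters from bounding $\mathbb E\|g_t\|^2$ through the variance of the minibatch average, giving $\eta^2(1+\frac1b)^2$ after squaring $\|U_2\|$.

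\textbf{Step 3: telescope.} Taking conditional expectations (so that $\mathbb E[g_t\mid\mathcal F_t]=\nabla L(W_t)$), I obtain
\[
2\eta\,\mathbb E[L(W_k)]\le \mathbb E\|W_k-U\|^2-\mathbb E\|W_{k+1}-U\|^2+2\eta(K-1)e^{-\rho\gamma}.
\]
Summing over $k<t$ with $W_0=0$ gives
\[
\min_k\mathbb E L(W_k)\le \frac{\|U\|^2}{2\eta t}+(K-1)e^{-\rho_1\gamma}.
\]
Choosing $e^{-\rho_1\gamma}=1/(\gamma^2\eta t)$ makes the second term $(K-1)/(\gamma^2\eta t)$. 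Then $\|U\|^2\le 2(\rho_1^2+\rho_2^2)$ produces $\ln^2(\gamma^2\eta t)/\gamma^2$ and $\eta^2(1+1/b)^2/\gamma^2$, which matches \eqref{eq:EoS-bound} up to constants.

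The main obstacle is Step 2: verifying that the margin term from $U_2$ really dominates $\eta^2\mathbb E\|g_t\|^2$. This needs
\[
\|g_t\|^2\lesssim (1+1/b)\cdot\tfrac1b\textstyle\sum(1-p_{i,y_i})
\]
together with $\|g_t\|\le\sqrt2$. I would handle it by combining the $\ell_2$ bound $\|g_t\|\le \frac1b\sum\|\nabla\ell_i\|$ with Jensen's inequality and linearity of the margin lower bound.
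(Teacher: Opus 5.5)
\textbf{Gap: the coefficient of $U_2$.} Your architecture is the paper's: split $U=U_1+U_2$, apply convexity against $U_1=\frac{\ln(\gamma^2\eta t)}{\gamma}W_*$ so that $L(U_1)\le (K-1)/(\gamma^2\eta t)$, use the margin inequality against $U_2$ to cancel $\eta^2\|g_t\|_F^2$, telescope from $W_0=0$, and bound $\|U\|_F^2\le 2\|U_1\|_F^2+2\|U_2\|_F^2$. The step that fails as written is the size of $U_2$, and it is off by a factor of $2$.

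With $U_2=\rho_2W_*$, the margin term gives $2\eta\langle g_t,U_2\rangle\le -2\eta\rho_2\gamma\cdot\frac1b\sum_{i\in\mathcal B_t}(1-p_{i,y_i})$. Your own Step 1 only gives $\eta^2\|g_t\|_F^2\le 2\eta^2\cdot\frac1b\sum_{i\in\mathcal B_t}(1-p_{i,y_i})$, and this factor $2$ is essentially attained. Take $K=2$, $\|x_i\|=1$, and a batch of samples with identical gradients and $p_{i,y_i}$ near $0$; then $\|g_t\|_F^2=2(1-p_{i,y_i})^2\approx 2(1-p_{i,y_i})$, in conditional expectation as well.

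Your choice $\rho_2=\frac{\eta(1+1/b)}{2\gamma}$ supplies only $\eta^2(1+\frac1b)\cdot\frac1b\sum(1-p_{i,y_i})$. That is too little whenever $b\ge 2$. The inequality $\|g_t\|^2\le (1+\frac1b)\frac1b\sum(1-p_{i,y_i})$ in your ``main obstacle'' paragraph is false for general $W$. The leftover positive multiple of $\eta^2 G(W_t)$ cannot be absorbed by $-2\eta L(W_t)$ when $\eta$ is arbitrary. This also contradicts your own earlier, correct calibration $\rho\gtrsim\eta/\gamma$.

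\textbf{Two fixes.} The first follows the paper. Bound $\mathbb{E}[\|g_t\|_F^2\mid\mathcal F_t]=\|\nabla L(W_t)\|_F^2+\mathbb{E}[\|g_t-\nabla L(W_t)\|_F^2\mid\mathcal F_t]\le 2G(W_t)^2+\frac2b G(W_t)\le 2(1+\frac1b)G(W_t)$, using Lemma~\ref{lemma:variance} and $\|\nabla L\|_F\le\sqrt2\,G$. Then take $U_2=\frac{\eta}{\gamma}(1+\frac1b)W_*$. This gives $\|U\|_F^2/(2\eta t)\le(\ln^2(\gamma^2\eta t)+\eta^2(1+\frac1b)^2)/(\gamma^2\eta t)$, which is exactly the stated bound, not merely ``up to constants''.

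The second follows the route you sketch at the end, and it differs mildly from the paper. Jensen gives $\|g_t\|_F^2\le\frac1b\sum_{i\in\mathcal B_t}\|\nabla\ell_i\|_F^2\le\frac2b\sum_{i\in\mathcal B_t}(1-p_{i,y_i})$. The per-sample margin bound $\langle\nabla\ell_i,W_*\rangle\le-\gamma(1-p_{i,y_i})$ is linear, so $U_2=\frac{\eta}{\gamma}W_*$ already gives $2\eta\langle g_t,U_2\rangle+\eta^2\|g_t\|_F^2\le 0$ pointwise, before any expectation. This variant needs no variance lemma, and unbiasedness of $g_t$ is used only in the convexity step with $U_1$. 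It yields $\eta^2$ in place of $\eta^2(1+\frac1b)^2$, which implies the theorem.

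Separately, the exponent in your Step 3 should be $\rho_1$, not $\rho$.
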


Theorem~\ref{thm:sgd_crossentropy} establishes an upper bound on the best iterate of SGD with any step size $\eta > 0$. The theorem allows us to compute in at most how many iterations the loss will have attained a specific value. Interestingly, the result indicates a $\Tilde{\mathcal{O}}\left(\frac{1}{t}\right)$ rate for the best iterate both for the case of small step sizes as well as even for the case of large step sizes. In the small learning rate regime, inequality \eqref{eq:EoS-bound} recovers the well known $\mathcal{O}\left(\frac{1}{t}\right)$ rate 
\cite{nacson2019stochastic} 
for SGD last iterate of the algorithm. 

Our result serves as an extension from the binary logistic setting considered in \citet{wu2024} to the multi-class cross-entropy loss, recovering the same rate of convergence in the case of $K=2$. We highlight that even though the rate in \citet{wu2024} is established for the average loss, it can easily be algebraically converted to a best-iterate guarantee. Lastly, we note that our in expectation results can be converted to high probability results by leveraging Freedman's inequality. 

\section{Stability in Stochastic Optimization Algorithms}
\label{sec: stability_stoch_algo}

In the stochastic setting, non-monotonicity of the loss is no longer a sufficient signature of edge-of-stability behavior. Along the trajectory of SGD, the loss can increase for two different reasons. The first is simply the noise coming from the minibatch gradient: since SGD does not use the full gradient, the loss may occasionally increase even when the stepsize is small. The second reason is the use of a large stepsize: when the stepsize is too large compared to the local curvature of the loss, the iterates start to oscillate, which is the behavior typically associated with the edge of stability.

\begin{figure}[h]
    \centering
    \includegraphics[width=0.4\linewidth]{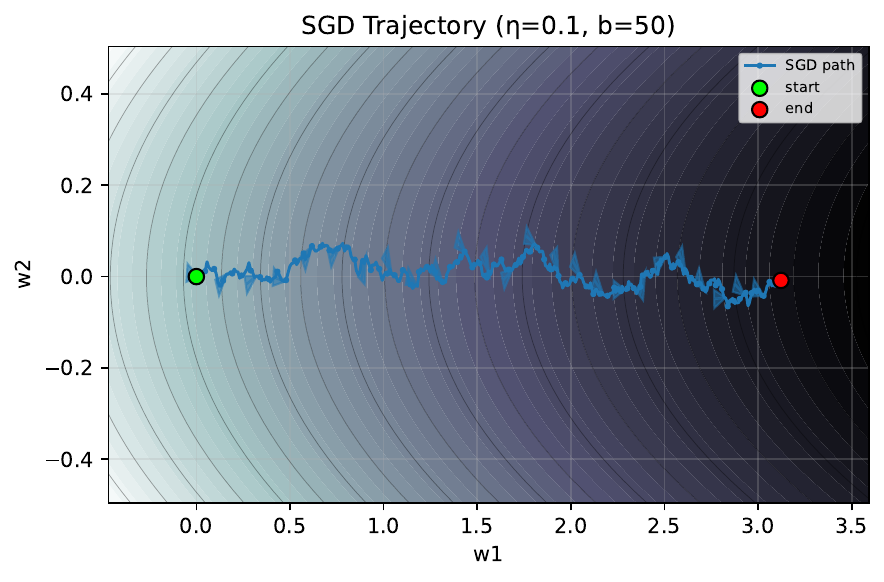}
    \includegraphics[width=0.4\linewidth]{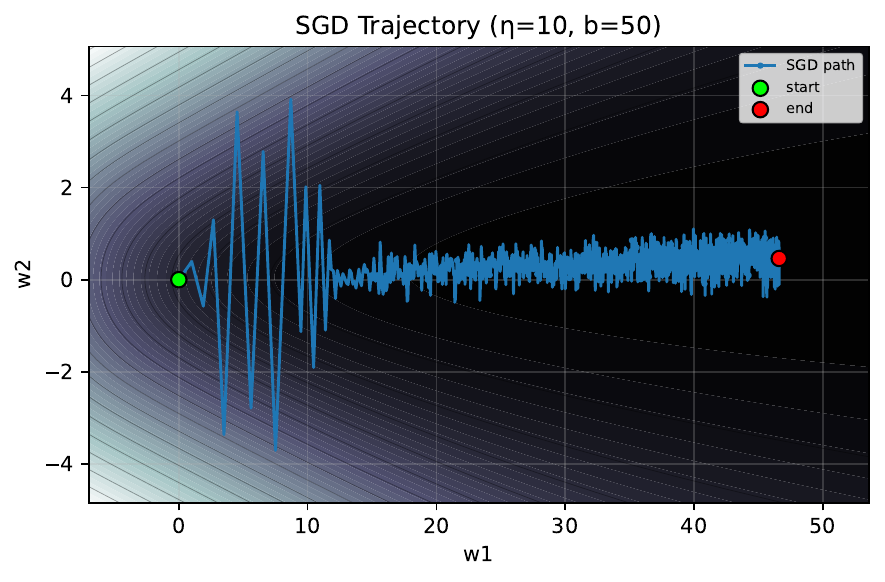}
    \caption{Two-dimensional trajectories of SGD with small and large stepsizes. Left: the trajectory of small stepsize SGD is affected by stochastic noise but does not oscillate. Right: in large stepsize SGD the stochastic noise is accompanied by genuine oscillations caused by the local curvature.}
    \label{fig:sgd-small-large-traj}
\end{figure}

This distinction is important. In the deterministic case, prior work \citep{wu2024} defines the EoS phase as the phase where the loss decreases non-monotonically, and the stable phase as the phase where the loss decreases monotonically. For SGD, however, this definition is not sufficient. Even with a small stepsize, the loss may increase from one iteration to the next because the update is computed using a minibatch. Therefore, in the stochastic case, non-monotonicity alone cannot indicate whether the dynamics are unstable because of large curvature, or whether the loss is only fluctuating due to noise.

This makes the definition of stability less direct. One could try to use the largest eigenvalue of the full-batch Hessian, or a similar quantity for the minibatch Hessian. However, any such definition has to allow for the fact that SGD may temporarily leave any given stable region because of minibatch noise. Thus, the stability criterion should capture whether the loss decreases on average over the mini-batches, rather than whether it decreases at every single step.

For this reason, we use a stochastic Lyapunov viewpoint \citep{kushner1967stochastic,khasminskii2012stochastic,mao2007stochastic}. In deterministic optimization, a Lyapunov function $\mathcal{E}:\mathbb{R}^d \to \mathbb{R}_{\geq 0}$, which in our case is the loss, decreases along the trajectory. This is exactly the role played by the usual descent lemma. For SGD, such a pointwise decrease is too strong, since every update contains noise. The natural replacement is that the loss decreases in conditional expectation. Namely, for a Lyapunov function $\mathcal{E}:\mathbb{R}^d \to \mathbb{R}_{\geq 0}$, we say that the dynamics are Lyapunov stable whenever
\begin{eqnarray}
   \Expep{\mathcal{E}(W_{t+1}) - \mathcal{E}(W_t)} \leq 0 . \label{eq: stability_notion}
\end{eqnarray}

Even though there are other notions of stochastic stability, such as stability in probability \citep{khasminskii2012stochastic}, almost-sure stability \citep{mao2007stochastic} and Foster-Lyapunov stability \citep{meyn1993markov,meyn1993stability}, the conditional Lyapunov-drift criterion is the correct notion for separable cross-entropy minimization. These criteria are powerful in settings with a finite attracting equilibrium, a stationary distribution, or a recurrent compact set. 
However, they are less suited to separable cross-entropy minimization, where the relevant limiting behavior is not convergence of \(W_t\) to a finite point, but rather convergence of \(L(W_t)\) to zero while the parameters diverge in norm. 

Instead, the stability notion used in this paper is well-motivated by the geometry of the cross-entropy loss. It is local, because it characterizes stability at the current iterate; stochastic, because it averages only over the next mini-batch; and loss-based, because it follows the quantity that actually converges in separable cross-entropy problems. Moreover, in the full-batch limit, the stochastic gradient \(g_t\) becomes the deterministic gradient \(\nabla L(W_t)\), and the conditional drift condition reduces to the usual deterministic descent condition up to constants. Thus, the proposed stable regime simultaneously generalizes the deterministic edge-of-stability threshold and captures the additional stochastic fluctuations introduced by mini-batch SGD. We refer the interested reader for a dedicated comparison of the proposed stability criterion with other notions of stochastic stability to Appendix~\ref{app: stochastic-stability}.

We now apply the stability criterion \eqref{eq: stability_notion} with the loss itself as the Lyapunov function. Using the smoothness of the cross-entropy loss, we obtain
\begin{eqnarray}
    \Expep{L(W_{t+1}) - L(W_t)}
    &\leq&
    - \eta \|\nabla L(W_t)\|^2
    + \eta^2 \cdot\left(8L(W_t)\right) \cdot \Expep{\|g_t\|^2},
    \nonumber
\end{eqnarray}
where we used that the smoothness constant of the cross-entropy loss can be bounded by $16L(W_t)$. Moreover, by Lemma~\ref{lemma: stochastic_grad_bound}, the stochastic gradient satisfies
$\Expep{\|g_t\|^2} \leq \frac{c}{8} \|\nabla L(W_t)\|^2$,
for some constant $c>0$ depending on the batch size. Substituting this bound gives
\begin{eqnarray}
    \Expep{L(W_{t+1}) - L(W_t)}
    &\leq&
    - \eta \|\nabla L(W_t)\|^2
    + \eta^2 c L(W_t) \|\nabla L(W_t)\|^2
    \nonumber\\
    &=&
    - \eta \|\nabla L(W_t)\|^2
    \left(1 - \eta c L(W_t)\right).
    \nonumber
\end{eqnarray}
Therefore, the expected change of the loss is non-positive whenever
$1 - \eta c L(W_t) \geq 0.$
Equivalently, the stochastic dynamics are stable whenever
\[
    L(W_t) \leq \frac{1}{\eta c}.
\]
The constant $c$ depends on the batch size, and in the full-batch case this criterion recovers the classical stability condition up to constants. The detailed proof is given in Appendix~\ref{app: proof_for_stable}.

\section{The Stable Regime}

In this section, we provide a refined analysis of convergence by studying the stable regime of SGD. Using the Lyapunov stability viewpoint of Section~\ref{sec: stability_stoch_algo}, we define the stable set as
\begin{eqnarray}
    \mathcal{S}
    =
    \left\{
    W \in \R^{d\times K}: L(W) \leq \tilde L,
    \tilde L = \frac{1}{\eta c}
    \right\},
    \nonumber
\end{eqnarray}
where $c = 8\left(1+\frac{n}{b\min\left\{\gamma^2, 1\right\}}\right)$.
This definition captures the region in which the \textit{expected loss} admits a one-step descent. In other words, inside $\mathcal{S}$ the stochastic updates may still fluctuate, but the loss decreases on average over the different mini-batches. The constant $c$ reflects the effect of minibatch noise through the batch size $b \geq1$, and in the full-batch case the criterion recovers the deterministic stability condition up to constants.

With this definition in place, we can describe the behavior of SGD more precisely. The next theorem shows that the iterates enter the stable set with high probability and, as long as they remain inside it, the expected loss of the last iterate decays at rate $\mathcal{\tilde{O}}(1/t)$.

\begin{theorem}\label{thm: stable_regime}
Let Assumption~\ref{assumpt: separable} hold. The iterates of~\eqref{SGD} with any stepsize $\eta > 0$ and batch size $b \geq 1$ satisfy the following.
\begin{enumerate}[leftmargin=*]
    \item \textbf{Entrance to the stable regime.}
    For any $\delta \in (0,1)$, there exists $t_{in} \leq t_{max}(\delta)$ such that, with probability at least $1-\delta$, the SGD iterates enter the stable set $\mathcal{S}$ at time $t_{in}$.

    \item \textbf{Convergence inside the stable regime.}
    If the iterates $W_k \in \mathcal{S}, \forall k \in [t_{1}, t-1]$ for some $t_1 > 0$, then it holds that
    \begin{eqnarray}
     \Expe{L(W_{t})} &\leq& \frac{8 F(W_{t_{1}}) + 4\ln^2(\gamma^2\eta (t-{t_{1}}))}{7\gamma^2\eta (t-{t_{1}})},
    \label{eq:stable-bound}
    \end{eqnarray}
    where $F(W)
        =
        \frac{1}{n}
        \sum_{i=1}^n
        \sum_{j \ne y_i}
        e^{-\langle W(y_i)-W(j),x_i\rangle}.$
\end{enumerate}
\end{theorem}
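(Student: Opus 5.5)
Part 1 will follow from Theorem~\ref{thm:sgd_crossentropy} together with Markov's inequality. Fix $\delta\in(0,1)$ and choose $t_{max}(\delta)$ as the smallest $t$ for which the right-hand side of \eqref{eq:EoS-bound} is at most $\delta\tilde L$. This is possible because that right-hand side is $\tilde{\mathcal O}(1/t)$ and $\tilde L=1/(\eta c)>0$ is fixed.

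Theorem~\ref{thm:sgd_crossentropy} then gives some $t_{in}\le t_{max}(\delta)-1$ with $\mathbb{E}[L(W_{t_{in}})]\le\delta\tilde L$. Markov's inequality yields
\[
\mathbb{P}\bigl(L(W_{t_{in}})>\tilde L\bigr)\le\delta,
\]
so with probability at least $1-\delta$ the iterate $W_{t_{in}}$ lies in $\mathcal S$. The only subtlety is that the index $t_{in}$ must be deterministic. The best-iterate bound is stated in expectation, so the minimizing index is indeed deterministic, and this resolves the issue.

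For Part 2 I will run a comparator (potential-based) analysis anchored at $t_1$. I use the comparator $U=W_{t_1}+\frac{\ln(\gamma^2\eta(t-t_1))}{\gamma}W_*$ and expand
\[
\|W_{k+1}-U\|^2=\|W_k-U\|^2-2\eta\langle g_k,W_k-U\rangle+\eta^2\|g_k\|^2 .
\]
The three terms are handled as follows.
\begin{itemize}[leftmargin=*]
\item \emph{Inner-product term.} Convexity of each $\ell(\cdot,y_i)$ in $W$ gives $\mathbb{E}_k\langle g_k,W_k-U\rangle\ge L(W_k)-L(U)$.
\item \emph{Value at the comparator.} By separability, $L(U)\le\frac1n\sum_i\sum_{j\neq y_i}e^{-\langle U(y_i)-U(j),x_i\rangle}$. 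The exponential at $U$ is bounded by $e^{-\ln(\gamma^2\eta(t-t_1))}$ times the corresponding exponential at $W_{t_1}$. Hence $L(U)\le F(W_{t_1})/(\gamma^2\eta(t-t_1))$, which is where $F(W_{t_1})$ enters.
\item \emph{Noise term.} I bound $\eta^2\mathbb{E}_k\|g_k\|^2$ by $\eta^2\frac{c}{8}\|\nabla L(W_k)\|^2$ using Lemma~\ref{lemma: stochastic_grad_bound}. The gradient bound $\|\nabla L\|\le 2L$ for cross-entropy with $\|x_i\|\le1$, together with $L(W_k)\le\tilde L=1/(\eta c)$ inside $\mathcal S$, gives $\eta^2\mathbb{E}_k\|g_k\|^2\le\frac{\eta}{2}L(W_k)$ or similar. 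This term is then absorbed into the $-2\eta L(W_k)$ term, which is the source of constants like $7/8$.
\end{itemize}

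Summing from $t_1$ to $t-1$, taking expectations and using $\|W_{t_1}-U\|^2=\ln^2(\cdot)/\gamma^2$ gives a bound on $\frac1{t-t_1}\sum_k\mathbb{E}L(W_k)$.

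To turn this averaged bound into the last-iterate bound, I will use the stable-regime descent from Section~\ref{sec: stability_stoch_algo}. Since $W_k\in\mathcal S$, we have $\mathbb{E}_k[L(W_{k+1})]\le L(W_k)$, so $\mathbb{E}L(W_t)$ is nonincreasing over $[t_1,t]$. It is therefore dominated by the average, which yields \eqref{eq:stable-bound}.

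The main obstacle is conditioning. The hypothesis "$W_k\in\mathcal S$ for all $k$" is a pathwise event, and using it inside conditional expectations requires either interpreting the statement as a deterministic condition or working with a stopped process at the exit time from $\mathcal S$. I would first try the stopped process, $W_{k\wedge\tau}$, for which both the supermartingale property and the telescoping argument remain valid. A secondary difficulty is tuning the constants (the $8/7$ and the $4$) from the absorption step. I would handle this by choosing the fraction of $\eta L(W_k)$ consumed by the noise term to be $1/8$, which is consistent with the definition of $c$.
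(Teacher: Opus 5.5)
Your Part~1 is correct but takes a different route from the paper's Lemma~\ref{lemma:phase-transition}, while your Part~2 follows the paper.

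\textbf{Part 1.} The paper applies Markov's inequality to $G$, not to $L$. The averaged bound of Lemma~\ref{lemma: average_G_bound} gives an index with $\mathbb{E}[G(W_{t_{in}})]\le\delta\tilde L/2$, hence $G(W_{t_{in}})\le\tilde L/2$ with probability at least $1-\delta$. The paper then deduces $p_i(y_i)\ge 1/2$ for every $i$, so $L(W_{t_{in}})\le F(W_{t_{in}})\le 2G(W_{t_{in}})\le\tilde L$.

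You instead apply Markov directly to $L(W_{t_{in}})$, using the best-iterate bound of Theorem~\ref{thm:sgd_crossentropy}. Your remark that the minimizing index is deterministic is right.

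\emph{What your route buys.} It is shorter, and it avoids the $G\to F\to L$ conversion. That conversion needs $G\le 1/(2n)$, i.e.\ $\tilde L\le 1/n$. The linear threshold $\tilde L=1/(\eta c)$ does not guarantee this in general, whereas the two-layer threshold $\tilde L_{NN}$ builds it in through the term $\frac{1}{2ne^{\kappa+2}}$.

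\emph{What the paper's detour buys.} It gives a sharper explicit entrance time. The $G$-average bound has numerator $\sqrt2+2\ln(\gamma^2\eta t)+2\eta(1+\frac1b)$, which is linear in both $\eta$ and the logarithm. By contrast, \eqref{eq:EoS-bound} carries $\eta^2(1+\frac1b)^2$ and $\ln^2$. Since $\tilde L=1/(\eta c)$, inverting \eqref{eq:EoS-bound} makes your $t_{max}(\delta)$ of order $\frac{c}{\gamma^2\delta}\bigl(K+\eta^2+\ln^2(\cdot)\bigr)$. The paper's explicit formula is of order $\frac{c}{\gamma^2\delta}\bigl(K+\eta+\ln(\cdot)\bigr)$. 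The theorem leaves $t_{max}(\delta)$ unspecified, so your implicit choice suffices. The linear-in-$\eta$ scaling is nonetheless what matters in the large-stepsize regime the paper targets.

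\textbf{Part 2.} This is the paper's proof of Lemma~\ref{lemma: sgd_stable_phase}: the same comparator, convexity step, bound $L(U)\le F(U)\le F(W_{t_1})/(\gamma^2\eta(t-t_1))$, noise absorption, telescoping, and Lemma~\ref{lem:ce_sgd_decrease} to pass from the average to the last iterate.

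\emph{Constants.} To land exactly on $8/7$ and $4/7$, use $\|\nabla L\|_F\le\sqrt2\,L$ from Lemma~\ref{lemma: gradient_bound} rather than $2L$. Then $\eta^2\mathbb{E}_k\|g_k\|_F^2\le\frac{c\eta^2}{4}L(W_k)^2\le\frac{\eta}{4}L(W_k)$, which leaves $-\frac{7\eta}{4}L(W_k)$. With $2L$ you only get $4/3$ and $2/3$.

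\emph{Conditioning.} The paper does not address your concern; it uses $W_k\in\mathcal S$ as if it held almost surely. Your stopping-time repair works. Let $\tau$ be the first exit time after $t_1$, and telescope $\|W_{k\wedge\tau}-U\|_F^2$ conditionally on $\mathcal F_{t_1}$ (you need this conditioning anyway, since $U$ depends on $W_{t_1}$).
\begin{itemize}
\item This bounds $\sum_{k=t_1}^{t-1}\mathbb{E}[\mathbb{1}\{k<\tau\}L(W_k)\mid\mathcal F_{t_1}]$.
\item These terms are nonincreasing in $k$, by the descent inequality on $\{k<\tau\}$ together with $\{k+1<\tau\}\subseteq\{k<\tau\}$.
\item You therefore obtain \eqref{eq:stable-bound} for $\mathbb{E}[L(W_t)\mathbb{1}\{\tau\ge t\}\mid\mathcal F_{t_1}]$.
\end{itemize}
This is a rigorous version of what the paper states only informally.
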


Theorem~\ref{thm: stable_regime} gives a last-iterate guarantee for SGD after the dynamics have entered the stable regime. In the deterministic setting, GD first passes through an EoS phase and then enters a stable phase after some deterministic time bound $t_{in}^{GD}$. For SGD, however, the entrance time is necessarily probabilistic, because the trajectory depends on the sampled minibatches. Thus, the theorem shows that even in the presence of stochastic noise, the dynamics reach a region where the expected loss decreases at the standard $\mathcal{\tilde{O}}(1/t)$ rate.

However, there is an important difference from GD. In the deterministic case, once the iterates enter the stable regime, the dynamics remain there by construction. For SGD, this is no longer automatic: minibatch noise can temporarily push the iterates outside $\mathcal{S}$. This is the main additional difficulty in the stochastic setting and motivates the stochastic stabilization analysis developed in the next section.

\subsection{Stochastic Stabilization of SGD}

We now show that SGD has an additional stabilizing property that is absent from the deterministic setup. Although mini-batch noise may occasionally push the iterates outside the stable set, this instability cannot persist indefinitely. Instead, the dynamics are pulled back toward the stable regime after a controlled number of iterations. In this sense, SGD is \emph{stochastically stabilizing} the trajectory:
\begin{center}
    \textit{even if the iterates escape the stable regime,} \\
    \textit{they return to stability in a fixed number of iterations.}
\end{center}

\begin{figure}[h]
    \centering
    \includegraphics[width=0.4\linewidth]{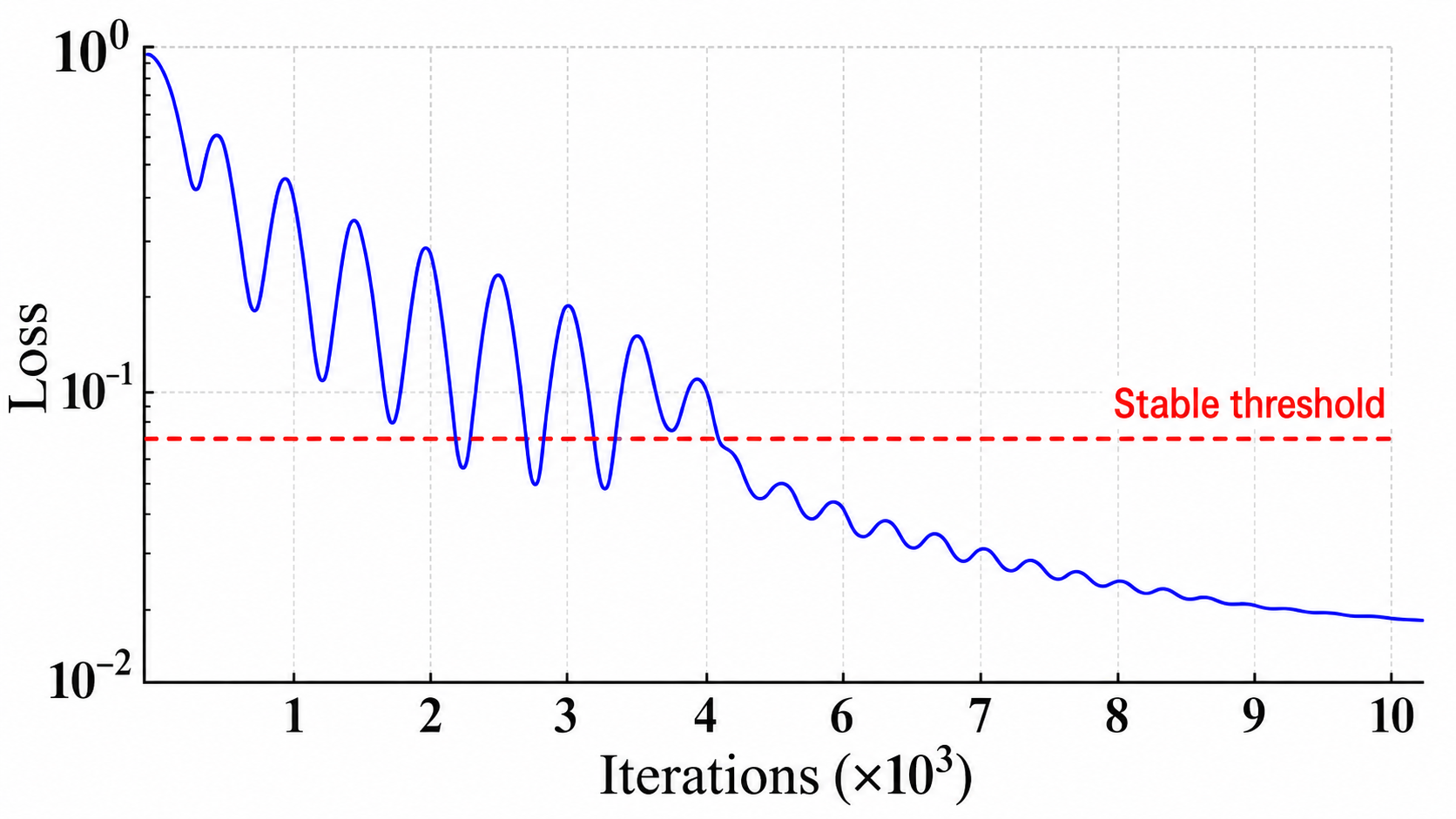}
    \caption{Stochastic stabilization property of SGD.}
    \label{fig:self-stabilization}
\end{figure}

This property is the key reason why SGD can converge despite the noise introduced by minibatch gradients. The iterates may not remain in the stable set forever, but they revisit it often enough for the loss to keep decreasing over time. We make this statement precise in the following theorem.

\begin{theorem}[Stochastic Stabilization of SGD] \label{thm: self_stab}
    Let Assumption~\ref{assumpt: separable} hold. The iterates of~\eqref{SGD} with any stepsize $\eta > 0$ and batch size $b \geq 1$ satisfy the following.
    \begin{enumerate}[leftmargin=*]
        \item \textbf{Stability is maintained with high probability.}
        If $W_t \in \mathcal{S}$, then with probability at least $1-\delta'$ it holds that $W_{t+1} \in \mathcal{S}$, where
        \[
        \delta'
        =
        2\exp\!\left(
        -\frac{bD_t}{4\eta L(W_t)^{3/2}}
        \right),
        \]
        with $D_t
        =
        \frac{\Delta_t}
        {4\eta L(W_t)^{1/2}
        +
        \frac{\sqrt{2}(1+n)}{3}\sqrt{\Delta_t}}$
        and \\
        $\Delta_t
        =
        \tilde L-L(W_t)
        +
        2\eta\bigl(1-8\eta L(W_t)\bigr)L(W_t)^2
        +
        \frac{\bigl(1-16\eta L(W_t)\bigr)^2}{8}L(W_t).$

        \item \textbf{Return to stability.}
        Assume that the dynamics exit the stable regime at $t_{out}>0$, and fix $\delta\in(0,1)$. Then, with probability at least $1-\delta$, the iterates of~\eqref{SGD} return to the stable regime in at most
        \begin{equation*}
t_{\mathrm{re}} = \left\lceil\frac{4 }{\gamma^2 \eta \delta \tilde{L}} \max\!\left\{ A,\; 4 \ln\!\left( \frac{16}{\gamma^2 \eta \delta \tilde{L}} \right) \right\} \right\rceil
\end{equation*}
number of steps, where $A = 3(K - 1) + 4 \ln^2(\gamma^2 \eta t_{\mathrm{out}}) + 5 \eta^2 \big(1 + \tfrac{1}{b}\big)^2$.
    \end{enumerate}
\end{theorem}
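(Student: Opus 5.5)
For part 1, I will condition on $\mathcal{F}_t$ with $W_t\in\mathcal{S}$ and write $L(W_{t+1})-L(W_t)$ as a conditional mean plus a mean-zero fluctuation. The mean is controlled by the self-bounded smoothness of cross-entropy and by Lemma~\ref{lemma: stochastic_grad_bound}. Keeping the quadratic term exactly rather than only through $c$, I aim for
\[
\mathbb{E}\bigl[L(W_{t+1})\mid\mathcal{F}_t\bigr]\le L(W_t)-2\eta\bigl(1-8\eta L(W_t)\bigr)L(W_t)^2-\tfrac{(1-16\eta L(W_t))^2}{8}L(W_t).
\]
The deterministic slack $\Delta_t$ in the statement is then exactly the distance from this conditional mean to the threshold $\tilde L$. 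So it suffices to show that the fluctuation exceeds $\Delta_t$ with probability at most $\delta'$.

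I will write the fluctuation as an average of $b$ independent bounded centered terms, one for each sampled index in $\mathcal{B}_t$, via a first-order expansion of $L(W_t-\eta g_t)$. Each per-sample gradient has norm at most $\sqrt2\,\ell_i\le \sqrt2\, nL(W_t)$, and this is the source of the $(1+n)$ factor. Each term then has range of order $\eta L^{1/2}\sqrt{\Delta_t}$ and variance of order $\eta^2L^{3/2}$, after using $\|\nabla L\|^2\lesssim L^2$ and $L\le\tilde L$.

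I will apply Bernstein's inequality in the form
\[
2\exp\!\left(-\frac{b\,\Delta_t^2}{2\sigma^2+\tfrac23 M\Delta_t}\right).
\]
Factoring the denominator as $4\eta L^{3/2}\bigl(4\eta L^{1/2}+\tfrac{\sqrt2(1+n)}{3}\sqrt{\Delta_t}\bigr)$ gives the stated $\delta'$ with $D_t$. The main obstacle here is bounding the second-order remainder of the Taylor expansion so that it is absorbed into the mean rather than into the fluctuation. I will handle it by using the local smoothness bound $16L$ along the segment. This is valid because the cross-entropy Hessian is controlled by the loss at nearby points, up to a constant factor, when $\eta\|g_t\|$ is bounded.

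For part 2, I will restart the argument of Theorem~\ref{thm:sgd_crossentropy} at time $t_{out}$. That argument compares $W_k$ to the reference point $u=\frac{\ln(\gamma^2\eta t)}{\gamma}W_*$ and telescopes $\|W_k-u\|^2$. It yields a bound on $\frac1T\sum_{k=t_{out}}^{t_{out}+T-1}\mathbb{E}L(W_k)$ of the form
\[
\frac{A'+\ln^2(\gamma^2\eta T)}{\gamma^2\eta T},
\]
where $A'$ absorbs $\|W_{t_{out}}-u\|^2$. To get $A'$, I will bound $\|W_{t_{out}}\|$ using the growth of the iterates up to time $t_{out}$, which produces the terms $3(K-1)$, $4\ln^2(\gamma^2\eta t_{out})$ and $5\eta^2(1+1/b)^2$.

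Next, let $\tau$ be the first return time to $\mathcal{S}$. On the event $\{\tau>T\}$ every iterate in the window has $L(W_k)>\tilde L$, so $\frac1T\sum_k L(W_k)>\tilde L$. Markov's inequality applied to the average then gives
\[
\mathbb{P}(\tau>T)\le\frac{A+4\ln^2(\gamma^2\eta T)}{\gamma^2\eta T\tilde L}.
\]

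It remains to choose $T$ so that this is at most $\delta$. I will use the elementary lemma that $T\ge\frac{2}{a}\max\{2A,\,8\ln(16/a)\}$ with $a=\gamma^2\eta\delta\tilde L$ implies $\ln^2(\cdot)\le aT/8$. With the constant $4$ in the ceiling of $t_{\mathrm{re}}$, this gives $\mathbb{P}(\tau>t_{\mathrm{re}})\le\delta$.

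In this part, the delicate step is handling the $\ln^2 T$ term. I expect to split into cases according to which argument of the max is larger, and the resulting constants should match $t_{\mathrm{re}}$.
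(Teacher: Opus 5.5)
Your argument for part~1 fails at its first step, because the conditional-mean bound you aim for is false. Since $L$ is convex, $L(W_{t+1})\ge L(W_t)-\eta\langle\nabla L(W_t),g_t\rangle$. Hence $\mathbb{E}[L(W_{t+1})\mid\mathcal{F}_t]\ge L(W_t)-\eta\|\nabla L(W_t)\|^2\ge L(W_t)-2\eta L(W_t)^2$, so one step cannot lower the loss in conditional expectation by more than $2\eta L^2$. Your claimed decrease contains $\frac{(1-16\eta L)^2}{8}L$, which is of order $L$. The two are compatible only if $(1-16\eta L)^2\le128\eta^2L^2$, i.e.\ $\eta L(W_t)\ge1/(16+8\sqrt{2})$. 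Your bound therefore fails on all of $\mathcal{S}$ once $c>16+8\sqrt{2}$, and always as $L(W_t)\to0$.

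So $\Delta_t$ is not the distance from the conditional mean to $\tilde L$. That distance is at most $\tilde L-L(W_t)+2\eta L(W_t)^2$, which is smaller than $\Delta_t$ in this regime, so no concentration bound can be run at level $\Delta_t$.

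The Bernstein step does not fit the target either. The per-sample terms $-\eta\langle\nabla L(W_t),\zeta_{t,i}\rangle$, with $\zeta_{t,i}=\nabla\ell_i(W_t)-\nabla L(W_t)$, have range $O((1+n)\eta L^2)$ and variance $O(\eta^2L^3)$, independent of $\Delta_t$. Matching $b\Delta_t^2/(2\sigma^2+\frac{2}{3}M\Delta_t)$ to the stated exponent would instead force $\sigma^2\propto\eta^2L^2\Delta_t$ and $M\propto(1+n)\eta L^{3/2}\sqrt{\Delta_t}$. In addition, the remainder $8\eta^2L\|g_t\|^2$ is random and quadratic in the batch, so it cannot simply be absorbed into a deterministic mean.

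In the paper, $\delta'$ has a different origin. With $\xi_t=g_t-\nabla L(W_t)$, the pathwise bound reads $L(W_{t+1})\le L-\eta(1-8\eta L)\|\nabla L\|^2-\eta(1-16\eta L)\langle\nabla L,\xi_t\rangle+8\eta^2L\|\xi_t\|^2$. The paper then:
\begin{itemize}
\item completes the square in $\xi_t$ and applies $\|\nabla L\|^2\le2L^2$ to both the descent term and the leftover $\frac{(1-16\eta L)^2}{16L}\|\nabla L\|^2$, which produces $\Delta_t$;
\item reduces exit to $\|\xi_t\|>s:=\sqrt{\Delta_t/(4\eta^2L)}$;
\item applies vector Bernstein with $\|\zeta_{t,i}\|\le\sqrt{2}(1+n)L$ and $\mathbb{E}[\|\zeta_{t,i}\|^2\mid\mathcal{F}_t]\le2L$, giving the exponent $bs^2/(4L+\frac{2\sqrt{2}(1+n)}{3}Ls)=bD_t/(4\eta L^{3/2})$.
\end{itemize}
Be aware that this chain only shows $\|\xi_t\|>s$ is \emph{sufficient} for the inequality that exit implies. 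Bounding $\Pr(\mathcal{E}_t\mid\mathcal{F}_t)$ needs the reverse inclusion, which requires an upper bound on the noise term. Under such a bound the two extra terms of $\Delta_t$ are no longer available.

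Part~2 follows the paper's route: rerun the comparator argument from $t_{\mathrm{out}}$, bound $\mathbb{E}\|W_{t_{\mathrm{out}}}\|_F^2$ by the same inequality started at $W_0=\mathbf{0}$, and finish with Markov. I have three remarks.
\begin{itemize}
\item The reference point must include the shift $U_2=\frac{\eta}{\gamma}(1+\frac{1}{b})W_*$, not just $\frac{\ln(\gamma^2\eta T)}{\gamma}W_*$. Through the perceptron inequality and the gradient and variance bounds, the shift cancels $\eta^2\mathbb{E}[\|g_t\|^2\mid\mathcal{F}_t]$ for every $\eta$. It is also where the $\eta^2(1+\frac{1}{b})^2$ terms in $A$ come from.
\item Your first-return-time step, $\{\tau>T\}\subseteq\{\frac{1}{T}\sum_k L(W_k)>\tilde L\}$ followed by Markov on the window average, is cleaner than the paper's Markov at the single index of smallest expected loss. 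It bounds the non-return probability directly.
\item The elementary lemma you invoke is false: $T\ge\frac{16}{a}\ln\frac{16}{a}$ controls $\ln(T)/T$, not $\ln^2(T)/T$. Take $\gamma^2\eta=1$ with the logarithmic entry of the max active, and $T=\frac{16}{a}\ln\frac{16}{a}$. Then $\ln^2T\ge\ln^2\frac{16}{a}$ while $aT/8=2\ln\frac{16}{a}$, so the claim fails whenever $a<16e^{-2}$. The paper's appeal to Lemma~G.5 of Cai et al.\ has the same defect. To close this step, the second entry of the max needs a squared logarithm, e.g.\ $\gamma^2\eta T\gtrsim\frac{1}{\delta\tilde L}\ln^2\frac{1}{\delta\tilde L}$.
\end{itemize}
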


Theorem~\ref{thm: self_stab} shows that the stable regime is robust under the stochastic perturbations of SGD. The first part states that, if the current iterate is already stable, then the next iterate remains stable with high probability. This probability improves as the loss becomes smaller and as the iterate moves farther away from the stability boundary. In particular, as $L(W_t)\to 0$, the probability of staying inside $\mathcal{S}$ tends to one.

The second part shows that even if SGD leaves the stable set, it returns after a bounded number of iterations with high probability. Thus, the stochastic noise may create short excursions outside $\mathcal{S}$, but it does not permanently destabilize the dynamics. The resulting picture is a repeated cycle: SGD enters the stable set, may occasionally exit because of minibatch noise, and then returns again. As training progresses and the loss decreases, the exits become less likely, and the dynamics spend more and more time in the stable regime.

This stochastic stabilization mechanism is one of the main differences between SGD and GD at the edge of stability. In GD, once the iterates enter the stable phase, there is no stochastic noise that can push them out. In SGD, temporary exits are possible, but the geometry of the cross-entropy loss pulls the dynamics back inside the stable set. We emphasize that this mechanism is different from the self-stabilization phenomenon for the square loss studied by \citet{damian2023self}, where stabilization comes from a higher-order Taylor term in the deterministic GD dynamics. Here, stabilization is instead driven by the self-bounded structure of the cross-entropy loss, which makes both curvature and stochastic noise decrease as the loss becomes small.


\section{Edge of Stability for Two-Layer Neural Networks}

An important question is whether the previous theoretical guarantees and insights extend to the dynamics of SGD when applied in neural networks. While the curvature of deep neural networks is more intricate, many architectures exhibit structural properties, such as near-homogeneity or controlled activation derivatives, that allow for a thorough analysis to be proceeded. In this section, we show that under mild conditions on the activation function, a similar picture describes the behaviour of SGD for two-layer neural networks with the cross-entropy loss.

We consider two-layer neural networks in mean-field scaling. The predictor is given by
\begin{equation}
   \label{eq: 2nn}
    f(W; x)
    :=
    \frac{1}{m}\sum_{j=1}^m a_j \phi(x^\top W^{(j)}),
    \quad
    W^{(j)} \in \R^{d\times K},
    \quad
    j=1,\dots,m,
\end{equation}
where the coefficients $a_j \in \{\pm 1\}$ are fixed and
$W = (W^{(j)})_{j=1}^m \in \R^{md\times K}$ are the trainable parameters.
We train the network with SGD using the rescaled stepsize $\tilde\eta := m \eta$, namely
\begin{eqnarray}
    W_{t+1} = W_t - \tilde\eta g_t .
    \nonumber
\end{eqnarray}
Throughout the section, we assume $\sup_i \|x_i\|\leq 1$. This is only for simplicity of presentation and can always be enforced by normalizing the data.

We impose the following conditions on the activation function.

\begin{assumption}
[Activation function conditions]
\label{assump: activation}
Let $\phi:\R \to \R$ be continuously differentiable and satisfy the following
\begin{enumerate}[leftmargin=*]
    \item \label{assump: activation:grad}
    \textbf{Derivative condition.}
    There exists $\alpha \in \left(\frac{1}{1+\gamma},1\right)$ such that
    $\alpha \leq \phi'(z) \leq 1,$
    where $\gamma>0$ is the margin from Assumption~\ref{assumpt: separable}.

    \item \label{assump:activation:smooth}
    \textbf{Smoothness.}
    There exists $\tilde\beta>0$ such that, for all $x,y\in\R$,
    $|\phi'(x)-\phi'(y)| \leq \tilde\beta |x-y|$.

    \item \label{assump: activation:near-homogeneous}
    \textbf{Near-homogeneity.}
    There exists $\kappa>0$ such that, for every $z\in\R$,
    $|\phi(z)-\phi'(z)z| \leq \kappa$.
\end{enumerate}
\end{assumption}

Assumption~\ref{assump: activation} is satisfied by several standard activations after a mild leaky modification. The smoothness and near-homogeneity conditions are standard in analyses of optimization for two-layer networks, and are also used in prior work on large-stepsize gradient descent~\citep{cai2024large}. The derivative condition ensures that every neuron continues to receive a non-trivial gradient during training; in particular, it rules out the degenerate case where some neurons become effectively frozen.

The lower bound $\alpha > 1/(1+\gamma)$ is the multiclass analogue of the positive derivative lower bound used in the binary setting by \citet{cai2024large}. In binary classification, any positive lower bound on $\phi'$ is enough for the usual perceptron-type argument, because each sample contributes only one signed margin inequality. In the multiclass case, each sample must be separated from all $K-1$ incorrect classes, and the activation derivative may be evaluated at different values for different classes. This creates an additional error term of size $1-\alpha$, which is controlled precisely when
$    \alpha \gamma > 1-\alpha,
$ or equivalently $\alpha>1/(1+\gamma)$.

This leads to the effective margin
\begin{equation}
  \tilde\gamma
  :=
  \alpha\gamma - (1-\alpha)
  =
  \alpha(1+\gamma)-1.
  \label{eq: effective-margin}
\end{equation}
The condition $\alpha>1/(1+\gamma)$ is exactly what makes $\tilde\gamma$ positive. Thus, $\tilde\gamma$ plays the role of the margin in the two-layer network analysis. It can be interpreted as the original margin $\gamma$, discounted by the smallest possible activation derivative, and corrected by the variation of the activation derivatives across classes. In the limit $\alpha\to 1$, this correction disappears and $\tilde\gamma\to\gamma$, recovering the usual margin for linear logistic regression.

The next lemma shows that this assumption is not restrictive: leaky versions of several common activations satisfy Assumption~\ref{assump: activation}.

\begin{lemma}[Leaky activation functions satisfying Assumption~\ref{assump: activation}]
\label{lem:leaky-activations}
For a margin $\gamma > 0$, let $c$ satisfy $1/(1+\gamma)<c<1$.
The following leaky variants of common activation functions satisfy Assumption~\ref{assump: activation}.
\begin{itemize}[leftmargin=*]
\item \textbf{GELU, Softplus, SiLU.}
Let $\phi$ be GELU, Softplus, or SiLU. Then,
$\tilde\phi(x) := c x + (1-c)\phi(x)$
satisfies Assumption~\ref{assump: activation} with
$\alpha=c$, $\tilde\beta=4(1-c)$, and $\kappa=1-c$.

\item \textbf{Huberized ReLU.}
Let $\phi$ be the Huberized $\mathrm{ReLU}_h$. Then,
$\tilde\phi(x) := c x + (1-c)\phi(x)$
satisfies Assumption~\ref{assump: activation} with
$\alpha=c$, $\tilde\beta=(1-c)/h$, and $\kappa=(1-c)h/2$.

\item \textbf{Tanh, Sigmoid.}
The leaky tanh
$\tilde\phi(x) := c x + (1-c)\tanh(x)$
and the leaky sigmoid
$\tilde\phi(x) := c x + \frac{1-c}{1+e^{-x}}$
both satisfy Assumption~\ref{assump: activation} with
$\alpha=c$, $\tilde\beta=1-c$, and $\kappa=1-c$.
\end{itemize}
\end{lemma}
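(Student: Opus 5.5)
The plan is to reduce all three bullets to elementary facts about the base activation $\phi$, using the common leaky structure $\tilde\phi(x)=cx+(1-c)\phi(x)$. The leaky sigmoid has the same form with $\phi=\sigma$, where $\sigma(x)=1/(1+e^{-x})$. For this structure
\[
\tilde\phi'(x)=c+(1-c)\phi'(x),\qquad \tilde\phi''(x)=(1-c)\phi''(x),
\]
and, because the linear part is exactly homogeneous,
\[
\tilde\phi(z)-\tilde\phi'(z)z=(1-c)\bigl(\phi(z)-\phi'(z)z\bigr).
\]
Each of the three conditions of Assumption~\ref{assump: activation} therefore follows from a scalar bound on $\phi$, scaled by $(1-c)$:
\begin{itemize}[leftmargin=*]
\item \textbf{Derivative condition.} If $0\le\phi'\le 1$, then $c\le\tilde\phi'\le 1$, which gives $\alpha=c$. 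The hypothesis $1/(1+\gamma)<c<1$ is precisely the required range for $\alpha$.
\item \textbf{Smoothness.} By the mean value theorem, $\tilde\beta=(1-c)\sup|\phi''|$. For the Huberized ReLU, $\phi'$ is piecewise linear, so I use its Lipschitz constant directly in place of $\sup|\phi''|$.
\item \textbf{Near-homogeneity.} $\kappa=(1-c)\sup_z|\phi(z)-\phi'(z)z|$.
\end{itemize}
Continuous differentiability is immediate in every case, including the Huberized ReLU, whose derivative is continuous at the breakpoints $0$ and $h$.

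The next step is to compute these suprema case by case.
\begin{itemize}[leftmargin=*]
\item \textbf{Softplus.} $\phi'=\sigma\in(0,1)$ and $\phi''=\sigma(1-\sigma)\le 1/4$. The homogeneity defect is $\log(1+e^z)-z\sigma(z)$, an entropy-type expression maximized at $z=0$ with value $\ln 2\le1$.
\item \textbf{GELU.} With $\phi(z)=z\Phi(z)$ and $\varphi$ the standard Gaussian density, $\phi''(z)=\varphi(z)(2-z^2)$, which is bounded by $2/\sqrt{2\pi}<4$. The defect is $-z^2\varphi(z)$, of size at most $2/(e\sqrt{2\pi})<1$.
\item \textbf{SiLU.} With $\phi=z\sigma(z)$, $\phi''$ is bounded by a small absolute constant, below $4$. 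The defect is $-z^2\sigma(1-\sigma)$, which is at most about $0.44$.
\item \textbf{Huberized ReLU.} $\phi'$ equals $0$ on $(-\infty,0]$, $z/h$ on $[0,h]$ and $1$ on $[h,\infty)$. Hence $0\le\phi'\le1$ and $\phi'$ is $1/h$-Lipschitz. The defect is $0$ for $z\le 0$, $z^2/(2h)\le h/2$ on $[0,h]$, and $-h/2$ for $z\ge h$. This gives $\tilde\beta=(1-c)/h$ and $\kappa=(1-c)h/2$.
\item \textbf{Tanh.} $\phi'=\operatorname{sech}^2\in(0,1]$ and $|\phi''|=2|\tanh|\operatorname{sech}^2\le 4/(3\sqrt3)<1$. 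Since $|\tanh z|\le1$ and $z\operatorname{sech}^2 z$ has the same sign as $\tanh z$ with $|z|\operatorname{sech}^2 z\le|\tanh z|$, the defect $\tanh z-z\operatorname{sech}^2z$ lies in $[-1,1]$.
\item \textbf{Sigmoid.} $\phi'=\sigma(1-\sigma)\in(0,1/4]$ and $|\phi''|\le 1/(6\sqrt3)$. For $z\ge0$ the defect $\sigma-z\sigma(1-\sigma)$ lies between a small negative constant and $\sigma\le 1$. For $z\le 0$ it lies in $(0,1)$.
\end{itemize}

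The main obstacle is the derivative condition for GELU and SiLU. Their derivatives are not confined to $[0,1]$: they dip slightly below $0$ (about $-0.13$ and $-0.10$) and overshoot $1$ (about $1.13$ and $1.10$). Consequently $\tilde\phi'$ ranges over roughly $[c-0.13(1-c),\,1+0.13(1-c)]$, rather than $[c,1]$. I would first try to handle this by replacing $\phi$ with $\phi/\sup|\phi'|$ (or equivalently shrinking $1-c$). This sends the upper bound to $1$, but it still leaves a lower bound slightly below $c$. The honest resolution is therefore one of two options. The first is to state $\alpha=c-(1-c)\epsilon_\phi$, with $\epsilon_\phi\approx0.13$, and require this quantity, not $c$, to exceed $1/(1+\gamma)$; this amounts to a slightly stronger condition on $c$. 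The second is to check whether the later analysis only needs the upper bound up to an absolute constant, in which case the slack can be absorbed. The other verifications are routine one-variable calculus.
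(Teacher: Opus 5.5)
Your reduction is the same as the paper's. You write $\tilde\phi'=c+(1-c)\phi'$, take $\tilde\beta$ to be $(1-c)$ times the Lipschitz constant of $\phi'$, and use the exact cancellation $\tilde\phi(z)-\tilde\phi'(z)z=(1-c)\bigl(\phi(z)-\phi'(z)z\bigr)$, followed by constants computed case by case. Your computations for Softplus, the Huberized ReLU, tanh and sigmoid are correct and prove those bullets. There is one harmless slip: on $[0,h]$ the Huber defect is $-z^2/(2h)$, not $+z^2/(2h)$, but only its absolute value $\le h/2$ is used.

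Your objection on GELU and SiLU is correct, and it is a defect of the statement, not of your argument.
\begin{itemize}
\item \textbf{The claim is false.} We have $\phi'_{\mathrm{GELU}}(-\sqrt2)\approx-0.129$ and $\phi'_{\mathrm{GELU}}(\sqrt2)\approx1.129$, and similarly about $-0.100$ and $1.100$ for SiLU at $x\approx\mp2.40$. So $cx+(1-c)\phi(x)$ has derivative below $c$ at one point and above $1$ at another. The first bullet with $\alpha=c$ is therefore false as written.
\item \textbf{The paper's proof does not avoid this.} It asserts $0\le\phi'\le M_\phi$ for every base, which is false for GELU and SiLU. It then divides $\tilde\phi$ by $c+(1-c)M_\phi$. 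This fixes the upper bound, but even granting $\phi'\ge 0$ it only gives the lower bound $c/(c+(1-c)M_\phi)<c$.
\item \textbf{Your first repair is the right one.} Replacing $\phi$ by $\phi/\sup\phi'$ gives $\tilde\phi'\in[c-(1-c)\epsilon_\phi,1]$, with $\epsilon_\phi\approx0.114$ for GELU and $\approx0.091$ for SiLU. The lemma then holds with $\alpha=c-(1-c)\epsilon_\phi$ under the stronger hypothesis $c-(1-c)\epsilon_\phi>1/(1+\gamma)$, and the stated $\tilde\beta,\kappa$ remain valid since they only shrink. Note that this changes the activation in the first bullet, so the bullet must be restated.
\item \textbf{Your second option cannot replace this.} The two-layer perceptron inequality uses both $\phi'\ge\alpha$ and $|\phi'_j-\phi'_{y_i}|\le1-\alpha$. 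So both endpoints of the range of $\phi'$ enter the effective margin $\tilde\gamma$, not just absolute constants. A negative dip in $\phi'$ has to be paid for by lowering $\alpha$.
\end{itemize}
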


Having introduced the assumptions on the activation function, we now state the main convergence result for two-layer neural networks. The theorem shows that the large-stepsize convergence guarantees obtained in the linear case continue to hold despite the nonlinearity of the model.

\begin{theorem}\label{thm: sgd_2_layer_nn}
Let Assumptions~\ref{assumpt: separable} and~\ref{assump: activation} hold. The iterates of~\eqref{SGD} for the two-layer neural network in~\eqref{eq: 2nn}, with any stepsize $\tilde\eta := m\eta$ and batch size $b>1$, satisfy
\begin{eqnarray}
    \min_{0\leq t \leq T-1} \Expe{L(W_t)}
    &\leq&
    \frac{
    K - 1
    +
    2\ln^{2}(\tilde\gamma^{2}\eta T)
    +
    8\kappa^{2}
    +
    \eta^{2}\left(1 + \frac{1}{b}\right)^{2}
    }{
    \tilde\gamma^{2}\eta T
    },
    \label{eq:EoS-bound_2layer}
\end{eqnarray}
where $\tilde\gamma = \alpha\gamma - (1-\alpha)$.
\end{theorem}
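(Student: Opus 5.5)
We follow the comparator (perceptron-style) argument behind Theorem~\ref{thm:sgd_crossentropy}, as in \citet{wu2024} and \citet{cai2024large}, adapted to the two-layer predictor in~\eqref{eq: 2nn}.

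\textbf{Comparator.} Fix a reference point built from $W_*$ by copying it into every neuron with the sign of its output weight:
\[
U^{(j)} := a_j \,\rho\, W_*, \qquad \rho>0 \text{ a scale chosen later.}
\]
The plan is to expand
\[
\|W_{t+1}-U\|_F^2 = \|W_t-U\|_F^2 - 2\tilde\eta\,\langle g_t, W_t-U\rangle + \tilde\eta^2\|g_t\|_F^2
\]
and take conditional expectations.

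\textbf{Inner-product term.} The key step is a lower bound on $\langle \nabla \ell_i(W_t), W_t-U\rangle$. Write the per-sample gradient through the softmax residual $p_i-e_{y_i}$, so that the inner product splits into two pieces.
\begin{itemize}[leftmargin=*]
\item \emph{Homogeneity piece.} It involves $\sum_j a_j\phi'(x_i^\top W^{(j)}_t)\,x_i^\top W^{(j)}_t$. Near-homogeneity (Assumption~\ref{assump: activation}) lets us replace this by $m f(W_t;x_i)$ up to an error of $m\kappa$ per class. Convexity of $\ell$ in the logits then gives, up to the $\kappa$ term, the bound $\ge \ell_i(W_t)-\ell(\cdot)$ for the logits $f(W_t;x_i)$.
\item \emph{Margin piece.} It involves $\rho\,(1/m)\sum_j \phi'(\cdot)\,(W_*x_i)_k$, with the derivative evaluated at class-dependent arguments. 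For each wrong class $k$, the derivative lower bound $\alpha\le\phi'\le 1$ and the margin give at least $\rho\bigl(\alpha\gamma-(1-\alpha)\bigr)=\rho\tilde\gamma$ times the residual mass $1-p_{i,y_i}$. The $(1-\alpha)$ correction comes from the derivatives differing across classes, bounded using $\|W_*\|_F=1$ and $\|x_i\|\le 1$.
\end{itemize}
Combining the two pieces, and using the mean-field $1/m$ scaling together with $\tilde\eta=m\eta$ so that everything is in units of $\eta$, we aim for
\[
\langle \nabla L(W_t), W_t-U\rangle \ge L(W_t) + \Bigl(\rho\tilde\gamma \,\mathcal{G}(W_t) - \log\text{-type terms}\Bigr) - O(\kappa),
\]
where $\mathcal{G}(W_t)$ denotes the residual mass. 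Choosing $\rho\approx \ln(\tilde\gamma^2\eta T)/\tilde\gamma$ should make the comparator loss at most about $(K-1)/(\tilde\gamma^2\eta T)$, since $\log(1+(K-1)e^{-\rho\tilde\gamma})\le (K-1)e^{-\rho\tilde\gamma}$.

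\textbf{Second-order term.} Bound $\|g_t\|$ using $\phi'\le 1$, $\|x_i\|\le 1$ and $|a_j|=1$, which give $\|\nabla \ell_i\|\lesssim (1-p_{i,y_i})\le \ell_i$, the self-bounding property. Then
\[
\Expep{\|g_t\|^2}\le \bigl(1+\tfrac1b\bigr)^2 \cdot \text{const}.
\]
This is the same crude bound that produces the $\eta^2(1+1/b)^2$ term in Theorem~\ref{thm:sgd_crossentropy}.

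\textbf{Telescoping.} Sum over $t<T$ and divide by $\eta T$. The initialisation contributes $\|W_0-U\|^2\approx m\rho^2$, which becomes $\ln^2(\tilde\gamma^2\eta T)/(\tilde\gamma^2\eta T)$ after rescaling; the factor $2$ there arises because we split $\|W_0-U\|^2\le 2\|W_0\|^2+2\|U\|^2$, or through a Young inequality absorbing the $\kappa$ error. The $\kappa$ terms, handled by Young's inequality against $\rho$, give $8\kappa^2$. Bounding the minimum by the average then yields~\eqref{eq:EoS-bound_2layer}.

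\textbf{Main obstacle.} The hardest step is the class-dependent derivative argument yielding the effective margin $\tilde\gamma$. For a single neuron, the gradient with respect to column $k$ of $W^{(j)}$ carries $\phi'(x_i^\top W^{(j)}_{k})$, a different value for each $k$. Pairing these with $W_*$'s margin differences $(W_*x_i)_{y_i}-(W_*x_i)_k$ therefore does not factor cleanly. To handle it, I would first write $\phi'=\alpha+(\phi'-\alpha)$ with $0\le\phi'-\alpha\le 1-\alpha$. The $\alpha$ part gives $\alpha\gamma$, and the remainder is bounded by $(1-\alpha)$ times $|(W_*x_i)_k|\le 1$.
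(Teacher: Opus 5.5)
Your proposal breaks at the second-order term. If you bound $\mathbb{E}[\|g_t\|_F^2\mid\mathcal{F}_t]$ by a constant, telescoping adds $\tilde\eta^2T\cdot\mathrm{const}$. After dividing by $2\tilde\eta T$ this leaves a term of order $\eta(1+\frac1b)$ (using $\|\nabla\ell_i\|_F^2\le\frac2m$), which does not decay in $T$. Keeping the self-bound proportional to $L(W_t)$ instead only closes if $\eta(1+\frac1b)\lesssim 1$, which is the small-stepsize regime the theorem is meant to escape.

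The missing idea is the split comparator used in Lemma~\ref{lem:nn-distance-comparator}. Let $\overline{W}_*=(a_1W_*,\dots,a_mW_*)$, so your $U$ is $\rho\overline{W}_*$. Take $U=U_1+U_2$ with $U_1=\frac{\ln(\tilde\gamma^2\eta T)+2\kappa}{\tilde\gamma}\overline{W}_*$ and $U_2=\frac{\eta(1+1/b)}{\tilde\gamma}\overline{W}_*$.
\begin{itemize}
\item The perceptron inequality $\langle\nabla L(W),\overline{W}_*\rangle_F\le-\tilde\gamma G(W)$ gives $2\tilde\eta\langle\nabla L(W_t),U_2\rangle_F\le-2m\eta^2(1+\frac1b)G(W_t)$.
\item The bound $\|\nabla L(W_t)\|_F^2\le\frac2mG(W_t)^2\le\frac2mG(W_t)$, together with the variance bound $\frac{2}{mb}G(W_t)$, gives $\tilde\eta^2\,\mathbb{E}[\|g_t\|_F^2\mid\mathcal{F}_t]\le 2m\eta^2(1+\frac1b)G(W_t)$.
\end{itemize}
The two cancel exactly for every $\eta$. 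So you must keep the second moment proportional to the residual mass $G(W_t)$, and reserve an extra $\eta(1+\frac1b)/\tilde\gamma$ of comparator scale to pay for it. Your single scale $\rho\approx\ln(\tilde\gamma^2\eta T)/\tilde\gamma$ is fully consumed by the comparator-loss bound.

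This also corrects your account of the constants. The $\eta^2(1+\frac1b)^2$ in the numerator is the price $2\|U_2\|_F^2/(2\tilde\eta T)$ of the extra comparator component, not a summed crude bound; the same holds in Theorem~\ref{thm:sgd_crossentropy}. The terms $2\ln^2+8\kappa^2$ come from $\|U_1+U_2\|_F^2\le2\|U_1\|_F^2+2\|U_2\|_F^2$ and $(\ln(\tilde\gamma^2\eta T)+2\kappa)^2\le2\ln^2(\tilde\gamma^2\eta T)+8\kappa^2$. Since $W_0=\mathbf{0}$, there is nothing to split in $\|W_0-U\|_F^2$.

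Your first-order treatment, by contrast, is the right one. It is sounder than the paper's, which at this step simply invokes convexity of $L$ in $W$, a property the two-layer loss does not have. To make it precise, set $z_i(W)=f(W;x_i)$, $\epsilon_{i,k}:=\langle\nabla z_{i,k}(W),W\rangle_F-z_{i,k}(W)\in[-\kappa,\kappa]$, $\bar\phi'_{i,k}:=\frac1m\sum_r\phi'(x_i^\top W^{(r)}_k)\in[\alpha,1]$ and $\Delta_i:=W_*^\top x_i$. Then $\langle\nabla\ell_i(W),W-\rho\overline{W}_*\rangle_F=\langle p_i-e_{y_i},z_i(W)-v\rangle$ with $v_k:=\rho\bar\phi'_{i,k}\Delta_i(k)-\epsilon_{i,k}$. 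Convexity of the cross-entropy in the logits then gives the lower bound $\ell_i(W)-(K-1)e^{-\rho\tilde\gamma+2\kappa}$. This is exactly why $U_1$ carries the shift $2\kappa/\tilde\gamma$.

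One repair is needed in the margin piece. Writing $\phi'=\alpha+(\phi'-\alpha)$ for both classes leaves two remainders, $(\bar\phi'_{i,y_i}-\alpha)\Delta_i(y_i)$ and $-(\bar\phi'_{i,k}-\alpha)\Delta_i(k)$. Bounding each by $1-\alpha$ only yields $\alpha\gamma-2(1-\alpha)$. Either observe that at most one of them can be negative because $\Delta_i(y_i)>\Delta_i(k)$, or split as $\bar\phi'_{i,y_i}\bigl(\Delta_i(y_i)-\Delta_i(k)\bigr)+\bigl(\bar\phi'_{i,y_i}-\bar\phi'_{i,k}\bigr)\Delta_i(k)$.
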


Theorem~\ref{thm: sgd_2_layer_nn} shows that SGD converges for any stepsize $\eta>0$ for two-layer neural networks. More precisely, the best iterate achieves a $\widetilde{\mathcal{O}}(1/T)$ convergence rate. This guarantee holds without requiring the trajectory to be in the stable regime; it remains valid even during the edge-of-stability phase, where the loss may be non-monotone and the iterates may oscillate.

To the best of our knowledge, this is the first convergence guarantee for SGD on two-layer neural networks trained with the cross-entropy loss in the large-stepsize regime. The closest related result is the deterministic GD analysis of~\citet{cai2024large}, which studies two-layer networks under the binary logistic loss. Our result recovers the same asymptotic rate up to constants, while allowing stochastic gradients and multiclass cross-entropy loss.

\subsection{The Stable Regime}
We, next, provide a fine-grained analysis of the dynamics by introducing a stable regime for the iterates similarly to the cross-entropy loss case. 
As in the cross-entropy loss, the appropriate notion of stability is not based on pointwise monotonicity of the realized loss, since the SGD updates remain stochastic. Instead, we use the Lyapunov viewpoint and define the stable regime as the region where the expected loss admits a one-step descent.

For the two-layer network, this leads to the stable set (refer to Appendix~\ref{app:twolayer:stable} for the exact derivation)
\begin{eqnarray}
    \mathcal{S}_{NN}
    &=&
    \left\{
    W_t\in\R^{md \times K}: L(W_t) \leq \tilde{L}_{NN}
    \right\},
    \nonumber \\
    \tilde{L}_{NN}
    &=&
    \min\left\{
    \frac{1}{8\eta(1+\tilde\beta)\left(1+\frac{2n}{b\min\{\tilde\gamma^{2}, 1\}}\right)},
    \frac{1}{2ne^{\kappa+2}}
    \right\}.
    \nonumber
\end{eqnarray}

We now show that SGD enters this stable regime with high probability and that, once inside it, the last iterate enjoys the same sublinear convergence rate.

\begin{theorem}\label{thm: sgd_2_layer_nn_stable}
Let Assumptions~\ref{assumpt: separable} and~\ref{assump: activation} hold. The iterates of~\eqref{SGD} for the two-layer neural network in~\eqref{eq: 2nn}, with any stepsize $\tilde\eta := m\eta$ and batch size $b>1$, satisfy the following.
\begin{enumerate}[leftmargin=*]
    \item \textbf{Entrance to the stable regime.}
    For any $\delta\in(0,1)$, there exists $t_{in}\leq t_{max}(\delta)$ such that, with probability at least $1-\delta$, the SGD iterates enter the stable set $\mathcal{S}_{NN}$.

    \item \textbf{Convergence inside the stable regime.}
    If the iterates of \ref{SGD} satisfy $W_k \in \mathcal S_{NN}, \forall k \in [t_{1}, t-1]$, for some $t_1 > 0$, it holds that
    \begin{eqnarray}
     \mathbb E[L(W_t)] \;\le\; 2\frac{F(W_{t_{1}}) + \,\ln^{2}(\tilde\gamma^{2} \eta\,(t - t_{1}))+\kappa^{2}}{\tilde\gamma^{2}\, \eta\,(t - t_{1})},
        \nonumber
    \end{eqnarray}
    where $F(W)
        =
        \frac{1}{n}
        \sum_{i=1}^n
        \sum_{j\ne y_i}
        e^{-\langle z_{y_i}-z_j,x_i\rangle},
        z_i = f(W;x)_i .$
\end{enumerate}
\end{theorem}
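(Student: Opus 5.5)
The proof has two parts, and both reuse the linear-case arguments of Theorem~\ref{thm: stable_regime}. We replace the linear logits by the network logits $z_i = f(W;x_i)$ and the margin $\gamma$ by the effective margin $\tilde\gamma$ from \eqref{eq: effective-margin}.

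\textbf{Entrance.} We deduce this from Theorem~\ref{thm: sgd_2_layer_nn} and Markov's inequality. Let $T$ be large enough that the right-hand side of \eqref{eq:EoS-bound_2layer} is at most $\delta\tilde L_{NN}$. The $\ln^2(\tilde\gamma^2\eta T)/T$ term decays, so such a $T$ exists and defines $t_{max}(\delta)$. Theorem~\ref{thm: sgd_2_layer_nn} controls only $\min_t \mathbb E[L(W_t)]$, not $\mathbb E[\min_t L(W_t)]$. To get the latter, I rerun its telescoping argument, which bounds $\frac{1}{T}\sum_{t<T}\mathbb E[L(W_t)]$ by the same quantity. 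Then
\[
\mathbb E\Big[\min_{t<T}L(W_t)\Big]\le \frac{1}{T}\sum_{t<T}\mathbb E[L(W_t)] \le \delta\tilde L_{NN},
\]
and Markov's inequality gives $\mathbb P(\min_{t<T}L(W_t)>\tilde L_{NN})\le\delta$. Hence $t_{in}$, the first hitting time of $\mathcal S_{NN}$, satisfies $t_{in}\le t_{max}(\delta)$ with probability at least $1-\delta$.

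\textbf{Convergence inside $\mathcal S_{NN}$.} Here I follow the split-comparator argument for the linear case. Fix a reference point $U = U_1 + U_2$, where $U_1$ is a scaled copy of the separator placed on every neuron, $U_1^{(j)} = a_j\,\rho\, W_*$ with $\rho = \ln(\tilde\gamma^2\eta(t-t_1))/\tilde\gamma$. This choice makes $\langle \nabla \ell_i(W), U_1\rangle$ comparable to $-\tilde\gamma\rho$ times the logit gap.

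The key bound uses Assumption~\ref{assump: activation}: the derivative condition $\alpha\le\phi'\le 1$ together with near-homogeneity $|\phi(z)-\phi'(z)z|\le\kappa$. These linearize the network around the reference point and give
\[
\ell_i(W)+\langle\nabla\ell_i(W),U-W\rangle \le \ell_i(U)+O(\kappa)\,\ell_i(W).
\]
The class-dependent derivative gap is what costs $(1-\alpha)$ in the margin, giving $\tilde\gamma$. The second threshold $\frac{1}{2ne^{\kappa+2}}$ in $\tilde L_{NN}$ guarantees that every sample is correctly classified with margin, so cross-entropy is comparable to the exponential surrogate $F$ up to a factor of $2$.

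Expanding $\|W_{k+1}-U\|^2$ and taking conditional expectation yields the regret inequality
\[
2\tilde\eta\,\mathbb E[L(W_k)-L(U)-O(\kappa)L(W_k)] \le \|W_k-U\|^2-\mathbb E\|W_{k+1}-U\|^2+\tilde\eta^2\,\mathbb E\|g_k\|^2 .
\]
The variance term is absorbed using Lemma~\ref{lemma: stochastic_grad_bound}, adapted to the network with a $(1+\tilde\beta)$ smoothness factor, together with the fact that $W_k\in\mathcal S_{NN}$. In parallel, the stable-set descent inequality $\mathbb E[L(W_{k+1})\mid\F_k]\le L(W_k)$ for $k\in[t_1,t-1]$ gives monotonicity in expectation. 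Combining the two converts the averaged regret bound into a last-iterate bound $\mathbb E[L(W_t)]\le\frac{1}{t-t_1}\sum_k\mathbb E[L(W_k)]$. Finally, I bound $L(U)\lesssim 1/(\tilde\gamma^2\eta(t-t_1))$ and $\|W_{t_1}-U\|^2$ via $F(W_{t_1})$, $\rho^2$ and $\kappa^2$, which gives the stated $2(F+\ln^2+\kappa^2)/(\tilde\gamma^2\eta(t-t_1))$.

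\textbf{Main obstacle.} I expect the hardest step to be the last one: showing that the initial-distance term is controlled by $F(W_{t_1})$ rather than $\|W_{t_1}\|^2$. I would try the linear proof's potential, namely $\|W-U\|^2$ combined with the exponential loss, with the extra $\kappa^2$ absorbing the homogeneity defect.
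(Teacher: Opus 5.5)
\textbf{Gap in the convergence part.} The gap sits exactly at the step you flag as the main obstacle, and it cannot be closed along your route. With a comparator $U=U_1+U_2$ built from $\overline{W}_*$ alone, the initial term $\|W_{t_1}-U\|_F^2$ contains $\|W_{t_1}\|_F^2$. This term grows precisely when $F(W_{t_1})$ is small. A small $F(W_{t_1})$ means large logit margins, and since $\phi$ is $1$-Lipschitz those margins force $\|W_{t_1}\|_F$ to be large. At best your route gives a bound with $\ln^2(\tilde\gamma^2\eta t_1)$ and $\eta^2(1+\tfrac{1}{b})^2$ terms, as in Lemma~\ref{lem:return-nn}, not the stated one.

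The missing idea is to anchor the comparator at the entry point: take $U=W_{t_1}+C\,\overline{W}_*$ with $C=\frac{\ln(\tilde\gamma^2\eta(t-t_1))+2\kappa}{\tilde\gamma}$ and no $U_2$. Inside $\mathcal S_{NN}$ the noise is absorbed via $\tilde\eta^2\,\mathbb E[\|g_t\|_F^2\mid\mathcal F_t]\le\frac{\tilde\eta}{4}L(W_t)$; keeping a $U_2$ would add an $\eta^2$ term that is absent from the claim. The distance term is then only $\|C\,\overline{W}_*\|_F^2=mC^2$, which produces the $\ln^2$ and $\kappa^2$ terms. $F(W_{t_1})$ enters through $L(U)$ instead. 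By the mean value theorem, moving from $W_{t_1}$ along $\overline{W}_*$ raises every margin $z_i(\cdot)_{y_i}-z_i(\cdot)_j$ by at least $C\tilde\gamma$. This uses derivatives in $[\alpha,1]$, margin $\gamma$, and $|(W_*^{\top}x_i)_j|\le 1$. Hence $L(U)\le F(U)\le e^{-C\tilde\gamma}F(W_{t_1})\le\frac{F(W_{t_1})}{\tilde\gamma^2\eta(t-t_1)}$.

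One caution if you adopt this. Your near-homogeneity linearization works for comparators of the form $\rho\,\overline{W}_*$, but it does not transfer to $W_{t_1}+C\,\overline{W}_*$, because $J_i(W_k)W_{t_1}$ need not be close to $z_i(W_{t_1})$. The paper closes the inequality $-\langle\nabla L(W_k),W_k-U\rangle_F\le L(U)-L(W_k)$ by appealing to convexity of $L$. The two-layer loss is not convex in general, so this step needs its own justification.

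\textbf{Entrance part.} Your argument is correct and takes a different, simpler route than the paper. You apply Markov's inequality to $\min_{t<T}L(W_t)$, bounding $\mathbb E[\min_{t<T}L(W_t)]$ by the averaged loss bound behind Theorem~\ref{thm: sgd_2_layer_nn}; this gives a hitting time at most $T$ with probability $1-\delta$. The paper instead works as follows:
\begin{enumerate}
\item it averages $G$ (Lemma~\ref{lem:nn-G-cesaro});
\item it picks a deterministic $t_{in}$ with $\mathbb E[G(W_{t_{in}})]\le\delta\tilde L_{NN}/2$;
\item it applies Markov at that fixed time;
\item it converts $G\le\tilde L_{NN}/2$ into $L\le F\le 2G\le\tilde L_{NN}$ using $p_i(y_i)\ge\frac{1}{2}$.
\end{enumerate}
It is this conversion, not the convergence part, that uses the threshold $\frac{1}{2ne^{\kappa+2}}$. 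The paper's conclusion (membership at a fixed time) is marginally stronger than a hitting-time statement. You recover it too by taking the $t<T$ that minimizes $\mathbb E[L(W_t)]$.
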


Theorem~\ref{thm: sgd_2_layer_nn_stable} shows that, after entering the stable regime, the last iterate of SGD satisfies a $\widetilde{\mathcal{O}}(1/t)$ convergence rate. This is the nonlinear analogue of the stable-regime guarantee obtained for the linear cross-entropy model. It also matches the rate known for deterministic GD with logistic regression in two-layer networks~\citep{cai2024large}, while allowing for stochastic gradients and multiclass cross-entropy loss.

There is, however, the same additional difficulty as in the linear SGD case. Even after the iterates enter $\mathcal{S}_{NN}$, mini-batch noise may temporarily force them outside the stable set. This motivates the stochastic stabilization analysis of the next section, which shows that such exits do not permanently destabilize the dynamics.

\subsection{Stochastic Stabilization in Two-layer Neural Networks}
Interestingly, the stochastic stabilization mechanism persists beyond the linear cross-entropy loss setting. Even in the nonlinear two-layer network setting, SGD has the same qualitative behavior: as the loss decreases, the iterates become increasingly likely to remain in the stable set $\mathcal{S}_{NN}$, and even if mini-batch noise temporarily forces them outside, the dynamics return to stability after a controlled number of iterations with high probability.

\begin{theorem}[Stochastic stabilization of SGD] \label{thm: self_stab_2_layer_nn}
    Let Assumptions~\ref{assumpt: separable} and~\ref{assump: activation} hold. The iterates of~\eqref{SGD} with any stepsize $\tilde\eta := m\eta$ and batch size $b \geq 1$ satisfy the following.
    \begin{enumerate}[leftmargin=*]
        \item \textbf{Maintaining stability.}
        \label{prob_remaining_2_layer}
        If $W_t \in \mathcal{S}_{NN}$, then with probability at least $1-\delta'$ it holds that $W_{t+1} \in \mathcal{S}_{NN}$, where
        \[
            \delta'
            =
            2\exp\!\left(
            -\,\frac{bD_t}{4\eta L(W_t)^{\frac{3}{2}}}  \right),
        \] $ D_t = \frac{\Delta_t^{NN}}
          {4\eta (1+\tilde\beta)\,L(W_t)^{1/2}
          + \frac{\sqrt{2(1+\tilde\beta)}(1+n)}{3}\sqrt{\Delta_t^{NN}}}$ and\\
          $\Delta_t^{NN}= \tilde L_{NN} - L(W_t)
        + 2\eta\bigl(1 - 8\eta (1+\tilde\beta) L(W_t)\bigr)L(W_t)^2
        + \frac{\bigl(1 - 16\eta (1+\tilde\beta) L(W_t)\bigr)^2}{8(1+\tilde\beta)}\,L(W_t).$

        \item \textbf{Return to stability.}
        \label{return_steps_2_layer}
        If the dynamics exit the stable regime $\mathcal{S}_{NN}$ at time $t_{out}>0$, then for any $\delta\in(0,1)$, with probability at least $1-\delta$, they return to $\mathcal{S}_{NN}$ in at most \begin{equation*}
t_{re} = \left\lceil\frac{4}{\tilde{\gamma}^2 \eta \delta \tilde{L}_{NN}} \max\!\left\{ A_{NN}, 16 \ln\!\left( \frac{64}{\tilde{\gamma}^2 \eta \delta \tilde{L}_{NN}} \right) \right\}\right\rceil
\end{equation*}
number of steps, where $A_{NN} = 3(K - 1) + 4 \ln^2(\tilde{\gamma}^2 \eta t_{\mathrm{out}}) + 20 \kappa^2 + 5 \eta^2 \Big(1 + \tfrac{1}{b}\Big)^{\!2}$.
    \end{enumerate}
\end{theorem}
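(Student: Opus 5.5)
The proof mirrors the linear argument behind Theorem~\ref{thm: self_stab}. The two-layer ingredients are the smoothness and self-boundedness estimates for the network loss from Appendix~\ref{app:twolayer:stable}, which underlie the definition of $\tilde L_{NN}$.

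\emph{Part 1 (maintaining stability).} On $\{L(W)\le \tilde L_{NN}\}$ the local smoothness of the loss is at most $16(1+\tilde\beta)L(W)$, and the stochastic gradient obeys a variance bound analogous to Lemma~\ref{lemma: stochastic_grad_bound}. Fix $W_t\in\mathcal S_{NN}$ and use the descent inequality
\[
L(W_{t+1})\le L(W_t)-\tilde\eta\langle \nabla L(W_t),g_t\rangle+8(1+\tilde\beta)\tilde\eta^2L(W_t)\|g_t\|^2 .
\]
Write $g_t=\frac1b\sum_{i\in\mathcal B_t}\nabla\ell_i$. Then $L(W_{t+1})-L(W_t)$ is bounded by a deterministic drift plus a centered average of $b$ independent bounded terms. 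The event $\{W_{t+1}\notin\mathcal S_{NN}\}$ forces this centered sum to exceed a deviation level. Completing the square in $\|\nabla L\|$, as in the linear case, turns that level into $\Delta^{NN}_t$; the factor $(1+\tilde\beta)$ appears wherever the curvature constant $16L$ appeared before.

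Each summand satisfies $\|\nabla\ell_i\|\le 2\ell_i\le 2nL(W_t)$ by self-boundedness of cross-entropy composed with a $1$-Lipschitz activation. This gives the range term $(1+n)$, and the conditional variance is of order $\eta(1+\tilde\beta)L^{3/2}$. Applying Bernstein's inequality to the two-sided deviation yields $\delta'=2\exp(-bD_t/(4\eta L^{3/2}))$ with the stated $D_t$. The main bookkeeping is matching the square-root term, $\sqrt{2(1+\tilde\beta)}(1+n)\sqrt{\Delta^{NN}_t}/3$, which comes from converting the quadratic inequality in the sample average back into a linear deviation threshold.

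\emph{Part 2 (return to stability).} Restart the argument of Theorem~\ref{thm: sgd_2_layer_nn} at time $t_{out}$. That proof yields, for a comparator $U=\ln(\cdot)\,W_*$-type rescaled direction of norm about $\ln(\tilde\gamma^2\eta T)/\tilde\gamma$, a telescoping bound
\[
\frac1T\sum_{k=t_{out}}^{t_{out}+T-1}\mathbb E[L(W_k)]\le \frac{A_{NN}}{4\tilde\gamma^2\eta T}+(\text{log terms}),
\]
where the distance at time $t_{out}$ is controlled by the iterate norm growth up to $t_{out}$. This explains why $\ln^2(\tilde\gamma^2\eta t_{out})$ enters $A_{NN}$, with $20\kappa^2$ and $5\eta^2(1+1/b)^2$ coming from the near-homogeneity and noise terms.

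Let $\tau$ be the first return time and suppose $\tau>T$. Then $L(W_k)>\tilde L_{NN}$ for all $k\in[t_{out},t_{out}+T)$, so the average above is at least $\tilde L_{NN}\,\mathbb P(\tau>T)$. Markov's inequality gives
\[
\mathbb P(\tau>T)\le \frac{A_{NN}+16\ln^2(\cdot)}{4\tilde\gamma^2\eta T\tilde L_{NN}} .
\]
Require this to be at most $\delta$. The log-squared term in $T$ is absorbed by the standard inequality $\ln^2 x\le \sqrt{x}$-type bounds, or more precisely by choosing $T\ge \frac{4}{a}\max\{A,16\ln(64/a)\}$ with $a=\tilde\gamma^2\eta\delta\tilde L_{NN}$ and verifying $\ln^2(aT)/(aT)$ is small enough via $x\ge 2c\ln(2c)\Rightarrow x\ge c\ln x$.

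\emph{Expected obstacle.} The hardest step is the restart in Part 2: justifying that the perceptron-style potential $\|W_k-U\|^2$ at $t_{out}$ is bounded by a quantity depending only on $\ln(t_{out})$ and not on the unknown trajectory. I would bound $\|W_{t_{out}}\|$ through the same telescoped inequality used in Theorem~\ref{thm: sgd_2_layer_nn}, taken over $[0,t_{out}]$, in expectation. The mean-field scaling and the bound $\phi'\in[\alpha,1]$ keep the neuron-wise gradient norms controlled by the loss, which gives this estimate.
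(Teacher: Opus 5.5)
Your outline follows the paper's route: Lemma~\ref{lem:nn-exit} for Part~1 and Lemma~\ref{lem:return-nn} for Part~2. However, the steps meant to produce the stated constants do not go through.

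\textbf{Part 1.} You correctly say what is needed: exit must \emph{force} the noise $\xi_t=g_t-\nabla L(W_t)$ to be large, that is, $\{W_{t+1}\notin\mathcal S_{NN}\}\subseteq\{\|\xi_t\|>r_t\}$. Set $c=1+\tilde\beta$, $A_t=\tilde\eta(1-16\eta cL(W_t))$ and $B_t=8\eta\tilde\eta cL(W_t)$. The descent inequality only tells you that exit implies
\[
\tilde L_{NN}-L(W_t)+\tilde\eta\bigl(1-8\eta cL(W_t)\bigr)\|\nabla L(W_t)\|^2<-A_t\langle\nabla L(W_t),\xi_t\rangle+B_t\|\xi_t\|^2 .
\]
To extract a lower bound on $\|\xi_t\|$ from this, you need an upper bound on the right side and a lower bound on the left side. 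Both manipulations go the other way:
\begin{itemize}
\item Completing the square gives the \emph{lower} bound $\frac{B_t}{2}\|\xi_t\|^2-\frac{A_t^2}{2B_t}\|\nabla L(W_t)\|^2$ on the right.
\item Substituting $\|\nabla L(W_t)\|^2\le\frac{2}{m}L(W_t)^2$ \emph{enlarges} the left.
\end{itemize}
So all you learn is that a large $\|\xi_t\|$ suffices for a necessary condition of exit, which yields no inclusion. The two extra positive terms in $\Delta_t^{NN}$ are exactly these two manipulations. Hence the stated $D_t$ cannot come out of this argument, and Lemma~\ref{lem:nn-exit} takes the same step.

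A valid version runs as follows. Drop the nonnegative gradient term on the left (note $8\eta cL(W_t)<1$ on $\mathcal S_{NN}$). Bound the right side by $|A_t|\sqrt{2/m}\,L(W_t)\|\xi_t\|+B_t\|\xi_t\|^2$. Take $r_t$ to be the positive root of $B_tx^2+|A_t|\sqrt{2/m}\,L(W_t)\,x=\tilde L_{NN}-L(W_t)$. Vector Bernstein then gives a correct but different bound, whose threshold vanishes as $L(W_t)\uparrow\tilde L_{NN}$.

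Separately, your descent inequality and per-sample gradient bound drop the mean-field factors. The correct bounds are $\|\nabla^2\ell_i\|_{\mathrm{op}}\le\frac{2(1+\tilde\beta)}{m}\ell_i$ and $\|\nabla\ell_i\|\le\sqrt{2/m}\,\ell_i$ (Lemmas~\ref{lem:nn-hessian} and~\ref{lem:nn-grad-bound}). It is $\tilde\eta^2/m=\tilde\eta\eta$ that turns $\tilde\eta$ into $\eta$ in $\Delta_t^{NN}$ and $D_t$, so with your constants stray powers of $m$ would remain.

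\textbf{Part 2.} Your first-return bound $\frac1T\sum_k\mathbb E[L(W_k)]\ge\tilde L_{NN}\Pr(\tau>T)$ is valid. It is an equivalent substitute for the paper's step of picking a time in the window with small expected loss and applying Markov. Like the paper, you treat $t_{out}$ as a fixed time, since the comparator used to bound $\mathbb E\|W_{t_{out}}\|^2$ is tuned to $t_{out}$. So the conclusion is an unconditional probability. The gaps here are quantitative.

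First, the factor $\frac14$ in front of $A_{NN}$ is asserted, not derived. The restart with the split $\mathbb E\|W_{t_{out}}-U\|^2\le 2\mathbb E\|W_{t_{out}}\|^2+2\|U\|^2$ used in Lemma~\ref{lem:return-nn} produces coefficients $16$ on $\ln^2(\tilde\gamma^2\eta t_{out})$ and $80$ on $\kappa^2$. These are four times those in $A_{NN}$, so at $T=t_{re}$ that part alone can exhaust the budget $\delta\tilde L_{NN}$.

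Second, the remaining term is $\ln^2(x)/x$ with $x=\tilde\gamma^2\eta T$, and the elementary fact you invoke only controls $\ln x/x$. Take $\tilde\gamma^2\eta=1$ and $M=64/(\delta\tilde L_{NN})$. For small $\delta$ one has $t_{re}\approx M\ln M$, and at $x=M\ln M$ we get $\ln^2x/x\ge\ln M/M=\frac{\delta\tilde L_{NN}}{64}\ln M$. This is not $O(\delta\tilde L_{NN})$ as $\delta\to0$. You need $\tilde\gamma^2\eta T\gtrsim(\delta\tilde L_{NN})^{-1}\ln^2\bigl(1/(\delta\tilde L_{NN})\bigr)$, which follows by applying the same fact to $\sqrt x$. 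So the single logarithm in $t_{re}$ is not justified, and Lemma~\ref{lem:return-nn} makes the same choice.
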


Theorem~\ref{thm: self_stab_2_layer_nn} shows that the stable regime remains meaningful even for nonlinear two-layer networks trained with SGD. The first part states that, once the iterate is inside $\mathcal{S}_{NN}$, the probability of staying inside the stable set increases as the loss decreases. In particular, as $L(W_t)\to 0$, the failure probability $\delta'$ vanishes, and the dynamics remain stable as the probability tends to one.

The second part of Theorem~\ref{thm: self_stab_2_layer_nn} shows that temporary exits from the stable set do not permanently destabilize the algorithm. Even if mini-batch noise forces the iterates outside $\mathcal{S}_{NN}$, the dynamics return to the stable regime after a fixed number of iterations with high probability. Thus, as in the linear case, SGD alternates between short stochastic excursions and stable periods, while spending increasingly more time in the stable regime as the loss decreases during training.

\section{Experiments}
\label{sec:experiments}

We now complement our theoretical results with a series of experiments illustrating the behavior of SGD in the large step-size regime. Our goal is twofold, namely verify the different regimes of the dynamics predicted by our analysis, and evaluate how the stochasticity of the gradient oracles interacts with the curvature and step-size at the edge of stability.

We conduct three set of experiments to validate our theoretical results. In the first set of experiments, we run SGD with a linear model on a separable synthetic dataset and plot the loss for different step sizes. Each class of points contains $N = 50$ points from $\mathcal{N}(\mu_i, \sigma^2 I),$ where $\mu_i = (\frac{\pi i}{3}, 1), i \in [2], \sigma = 0.05$ are the parameters of the $i-$th class. 

In figure~\ref{fig: cross-entropy-loss}, we observe that the loss incurs significant spikes at the start and non-monotonic decrease even after it has been significantly reduced in value, indicating the distinct EoS and stable regimes. In addition, we validate the sublinear rate of decrease in the stable regime, validating the results of Theorem~\ref{thm: stable_regime}. Lastly, we observe that for larger batch sizes the loss incurs less noise and less oscillations. 

\begin{figure}[h!]
    \centering
    \includegraphics[width=0.45\linewidth]{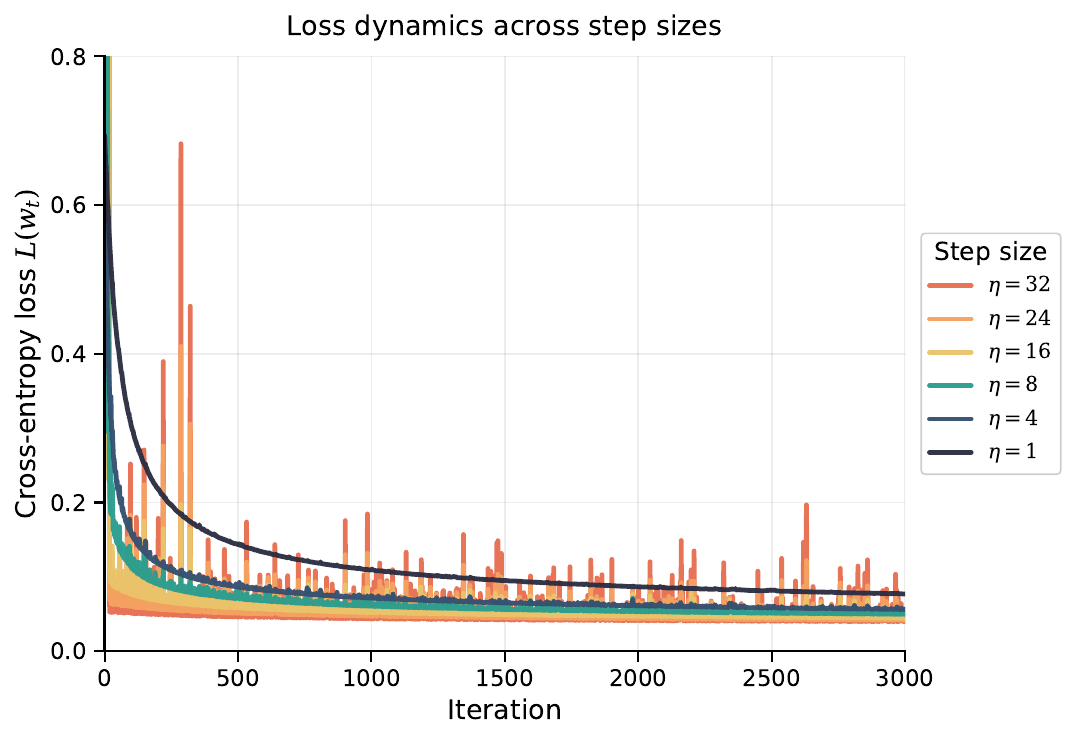}
    \includegraphics[width=0.45\linewidth]{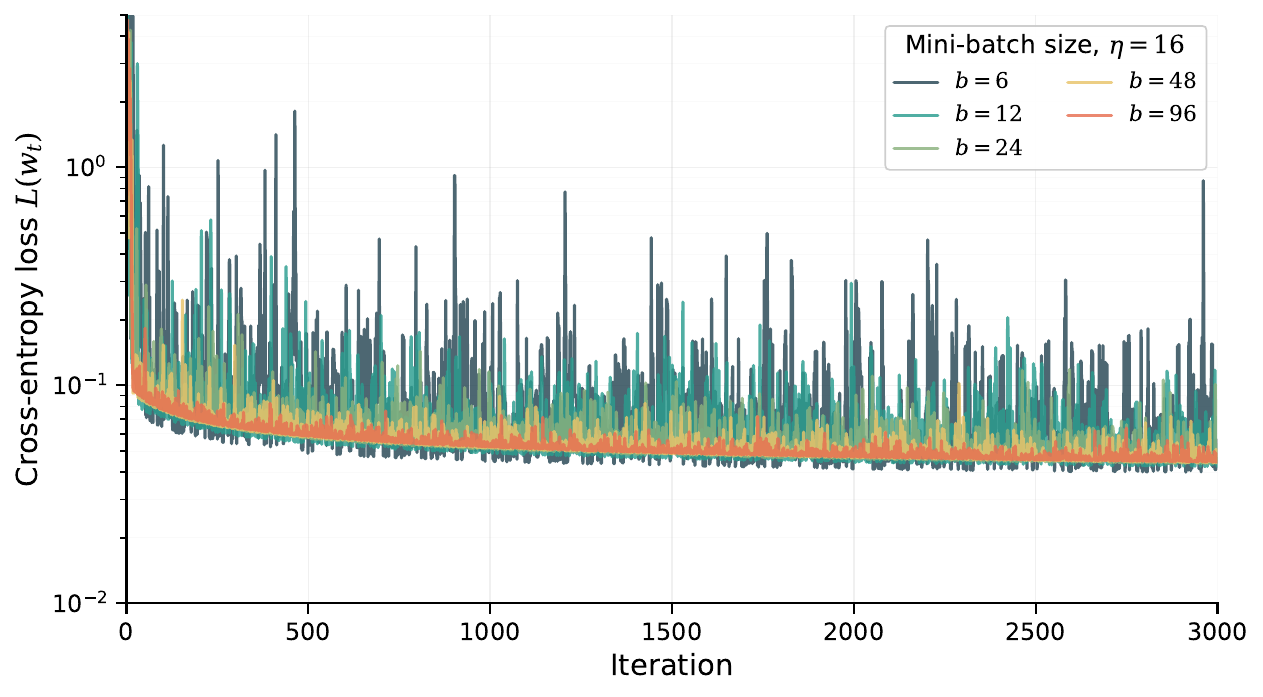}
    \caption{Cross-entropy loss with different stepsizes and batch sizes.}
    \label{fig: cross-entropy-loss}
\end{figure}

The second set of experiments focuses on validating our theory for Neural Networks (NN). In particular, we run SGD on a two-layer NN with leaky softplus activation on the MNIST dataset. In figure~\ref{fig: mnist}, we observe that the loss spikes at the initial EoS regime and then decreases non-monotonically. We, also, verify that the loss converges at an $\mathcal{O}\left(\frac{1}{t}\right)$ rate in the stable regime. Lastly, we observe that larger stepsizes provide faster convergence and, if not better, similar test accuracy. 
\begin{figure}[ht]
    \centering
    \includegraphics[width=\linewidth]{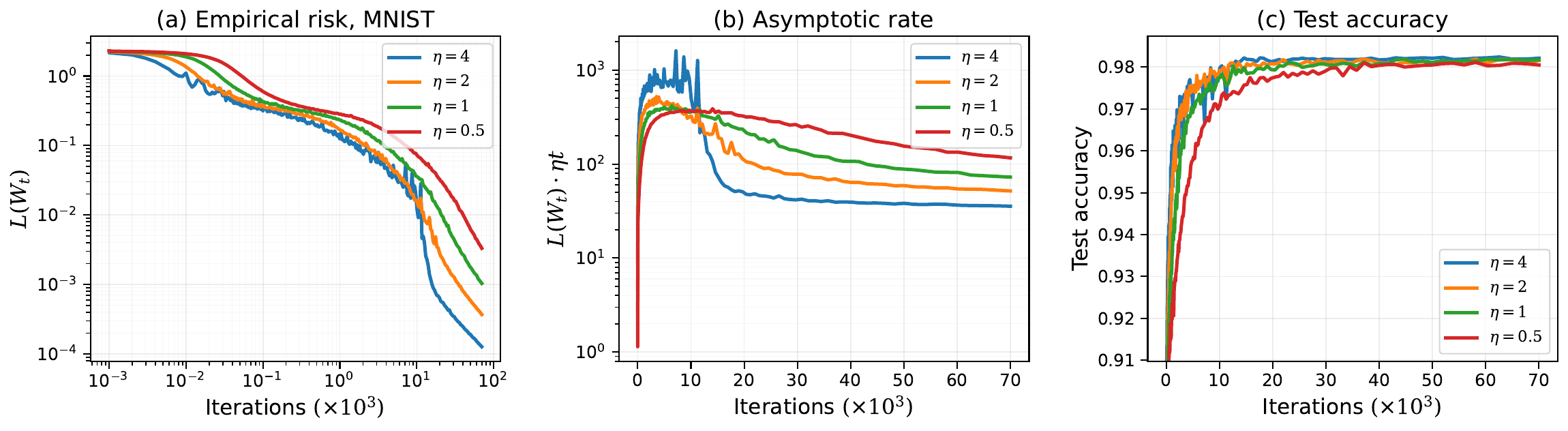}
    \caption{Experiment on MNIST dataset. The dynamics of SGD for training a two-layer NN with leaky softplus activation.}
    \label{fig: mnist}
\end{figure}
\begin{figure}[ht]
    \centering
    \includegraphics[width=\linewidth]{figures/cai_three_panel_cifar10_cleaned.pdf}
    \caption{Experiment on CIFAR-10 dataset. An 8-layer neural network with GELU activation function trained with SGD on the cross-entropy loss.}
    \label{fig: cifar-10}
\end{figure}

The third set of experiments focuses on the CIFAR-10 dataset, where we test whether our results hold even for larger networks than the two-layer NNs considered in the theoretical results. More specifically, we run SGD on an 8-layer NN with GELU activation for different stepsizes. In figure~\ref{fig: cifar-10}, we observe that the loss decreases non-monotonically at a $\mathcal{O}\left(\frac{1}{t}\right)$ rate. Large stepsizes provide faster decrease of the loss and seem to be synergetic in achieving higher test accuracy. 
We have not tried to test exhaustively the effect of the stepsize on the test accuracy, as this remains beyond the scope of this paper.


\section{Conclusion}

In this work, we developed a theoretical framework for analyzing SGD with the cross-entropy loss at the edge of stability. Unlike deterministic GD, non-monotonicity of the loss in SGD is caused either due to mini-batch noise or due to curvature-driven oscillations, and therefore stability must be defined through a stochastic Lyapunov criterion. Using this viewpoint, we proved convergence guarantees for multiclass cross-entropy loss with arbitrary large step sizes and showed that the dynamics enter a stable regime where the last iterate converges at a sublinear rate. We, further, established the stochastic stabilization mechanism of SGD: even when stochastic noise temporarily forces the iterates outside the stable set, the dynamics return to stability in a controlled number of iterations. Finally, we extended this picture to two-layer neural networks and validated the established results experimentally on MNIST and CIFAR-10, showing the benefits of SGD with large learning rates.

\newpage
\bibliographystyle{plainnat}
\bibliography{ref}
\newpage
\appendix


\section*{Supplemental Material}
\addcontentsline{toc}{section}{Appendix}
\tableofcontents
\newpage


\section{Stochastic Lyapunov Stability and the Stable Regime}
\label{app: stochastic-stability}

We provide additional context for the stability notion used in the main paper. 
The classical stability criterion for deterministic GD is based on monotonic decrease of a Lyapunov function, typically the objective itself. 
Indeed, if \(W_{t+1}=W_t-\eta \nabla L(W_t)\), then the standard descent lemma implies
\[
    L(W_{t+1})-L(W_t)
    \leq
    -\eta \|\nabla L(W_t)\|^2
    + \frac{\eta^2}{2}\lambda_{\max}(\nabla^2 L(W_t))\|\nabla L(W_t)\|^2 .
\]
Thus, a sufficient condition for one-step descent is
\[
    \eta \lambda_{\max}(\nabla^2 L(W_t)) \leq 2.
\]
This is the usual deterministic stability threshold. In deterministic edge-of-stability analyses, the violation of this inequality is directly related to curvature-driven oscillations and non-monotone behavior of the loss \citep{cohen2021gradient,damian2023self,wu2024large}. 

For SGD, however, pointwise monotonicity of the realized loss is too strong and is not the right stability requirement. The update
\[
    W_{t+1}=W_t-\eta g_t,
    \qquad
    \mathbb E[g_t\mid \mathcal F_t]=\nabla L(W_t),
\]
contains mini-batch noise. Therefore, even when the stepsize is small and the curvature is benign, the realized loss \(L(W_{t+1})\) may be larger than \(L(W_t)\) for a particular mini-batch. Consequently, non-monotonicity alone does not distinguish between two qualitatively different phenomena: harmless stochastic fluctuations and genuine curvature-driven instability. A stability notion for SGD must therefore average over the randomness of the next mini-batch while keeping the current iterate fixed.

The natural replacement for deterministic Lyapunov decrease is stochastic Lyapunov decrease. Given the filtration \(\mathcal F_t\) generated by the trajectory up to time \(t \geq 0\), a nonnegative function \(\mathcal{E}\) is a stochastic Lyapunov function if it satisfies a one-step drift condition of the form
\[
    \mathbb E\!\left[\mathcal{E}(W_{t+1})-\mathcal{E}(W_t)\mid \mathcal F_t\right]\leq 0.
\]
This conditional drift condition is the discrete-time analogue of the Lyapunov criteria used in stochastic stability theory \citep{kushner1967stochastic,khasminskii2012stochastic,mao2007stochastic}. Related Foster--Lyapunov drift conditions are also standard in the stability and recurrence theory of Markov chains \citep{meyn1993markov}. The condition says that \(\mathcal{E}\) is not required to decrease for every realization of the stochastic update, but it must decrease on average over the algorithmic randomness at the current point. In our setting, the most appropriate Lyapunov function is the cross-entropy loss itself, namely \(\mathcal{E}(W)=L(W)\). This choice is natural for two reasons. First, the goal of the dynamics is to drive the training loss to zero. Second, in the separable cross-entropy setting the parameters need not converge to a finite minimizer; instead, their norm may diverge while the loss goes to zero \citep{soudry2018implicit}. Hence, stability notions based on distance to a fixed equilibrium point are \textit{not} well aligned with the geometry of the problem.

This Lyapunov stability viewpoint motivates, as discussed in Section~\ref{sec: stability_stoch_algo}, the definition of the stable set
\[
    \mathcal S
    :=
    \left\{
        W\in \mathbb R^{d\times K}:
        L(W)\leq \frac{1}{\eta c}
    \right\}.
\]
Inside \(\mathcal S\), SGD may still exhibit random upward fluctuations of the realized loss, but the loss decreases in conditional expectation. This is precisely the stochastic analogue of deterministic Lyapunov stability.

It is useful to contrast this criterion with other standard notions of stochastic stability. 
Stability in probability requires that trajectories starting near an equilibrium remain near it with high probability; this is the classical stochastic analogue of Lyapunov stability and is commonly formulated for stochastic differential equations and Markov processes \citep{khasminskii2012stochastic,kushner1967stochastic}. 
Almost-sure stability strengthens this requirement by demanding pathwise stability with probability one, namely that the trajectory remains close to the equilibrium for almost every realization of the noise \citep{mao2007stochastic,khasminskii2012stochastic}. 
Moment stability requires decay, or at least uniform control, of quantities such as \(\mathbb E\|W_t-W_\star\|^p\), with the cases \(p=1\) and \(p=2\) corresponding to mean and mean-square stability, respectively \citep{mao2007stochastic,higham2002mean}. 
Exponential moment stability further asks for such moments to decay at an exponential rate, and is often used in the numerical analysis of stochastic differential equations \citep{higham2002mean}. 
Finally, Foster--Lyapunov or positive-recurrence criteria require a negative drift outside a suitable compact set and are used to prove recurrence, ergodicity, or existence of invariant distributions for Markov chains and Markov processes \citep{meyn1993markov,meyn1993stability}. 
These criteria are powerful in settings with a finite attracting equilibrium, a stationary distribution, or a recurrent compact set. 
However, they are less suited to separable cross-entropy minimization, where the relevant limiting behavior is not convergence of \(W_t\) to a finite point, but rather convergence of \(L(W_t)\) to zero while the parameters may diverge in norm.

For this reason, the conditional Lyapunov-drift criterion is the correct notion for our purposes. It is a pointwise criterion, since it evaluates stability at the current iterate; it is genuinely stochastic, since it takes expectation with respect to the next mini-batch; and it is expressed in terms of the loss, which is the relevant convergent quantity in separable cross-entropy minimization. 
Furthermore, when the mini-batch coincides with the full dataset, the stochastic gradient \(g_t\) reduces to the full gradient \(\nabla L(W_t)\), and the conditional drift condition recovers the standard deterministic descent criterion, up to constant factors. 
In this sense, the stable regime notion introduced above extends the deterministic edge-of-stability threshold while also accounting for the fluctuations caused by mini-batch sampling.

\section{Additional Related Work}
\label{app:additional-related-work}

In this appendix, we provide a more detailed discussion of related work. 
Our work is connected to several lines of research: edge-of-stability dynamics, large-stepsize optimization for logistic and cross-entropy losses as well as implicit bias in separable classification.

\paragraph{Edge of stability.}
The edge-of-stability phenomenon was systematically documented by \citet{cohen2021gradient}, who observed that gradient descent on neural networks often operates near the classical stability threshold predicted by the largest eigenvalue of the Hessian. 
Classical smooth optimization theory suggests that, for deterministic gradient descent, monotone descent of the loss is guaranteed when the stepsize is smaller than a curvature-dependent threshold. 
However, empirical evidence shows that modern neural network training often uses stepsizes that exceed this threshold, while the loss still decreases over longer time scales. 

\citet{damian2023self} proposed a self-stabilization mechanism for deterministic gradient descent at the edge of stability. 
Their analysis shows that, in certain settings, the dynamics themselves can reduce the effective instability and bring the trajectory back toward a stable region. 
Relatedly, \citet{cohen2024understandingoptimizationdeeplearning} studied edge-of-stability dynamics through the lens of central flows, further clarifying the structure of large-stepsize training. 
These works focus primarily on deterministic gradient descent. 
In contrast, our work studies stochastic gradient descent, where non-monotonicity of the loss has two possible sources: curvature-driven oscillations caused by large stepsizes, and random fluctuations caused by mini-batch sampling. 
This distinction makes the stochastic setting qualitatively different from the deterministic one and motivates a stability notion based on conditional Lyapunov drift rather than pointwise monotone descent.

\paragraph{Large-stepsize methods for logistic and cross-entropy losses.}
A recent line of work has provided rigorous guarantees for large-stepsize gradient descent in classification problems with logistic-type losses. 
\citet{wu2024large} showed that, for logistic regression, large stepsizes can accelerate optimization and that non-monotonicity of the loss can be beneficial rather than harmful. 
Their analysis demonstrates that deterministic gradient descent may pass through a non-monotone edge-of-stability phase before entering a stable regime where the loss decreases at a controlled rate. 
\citet{cai2024large} extended this perspective to non-homogeneous two-layer networks under binary logistic loss, showing that large stepsizes can improve margin growth and optimization speed. 
More recent work has continued to refine the theory of large-stepsize training for logistic-type objectives \citep{wu2025large}.

Our work is closest in spirit to this line of research, but differs in three important ways. 
First, we study SGD rather than full-batch GD. 
This requires separating curvature-driven instability from mini-batch noise, since the realized loss of SGD may be non-monotone even in regimes where the expected loss decreases. 
Second, we consider the multiclass cross-entropy loss rather than only the binary logistic loss. 
The multiclass setting introduces interactions between all incorrect classes through the softmax probabilities, and each example must be separated from every competing class. 
Third, our stability criterion is stochastic: the stable regime is defined as the region where the cross-entropy loss decreases in conditional expectation over the next mini-batch. 
Thus, our results can be viewed as a stochastic analogue of the deterministic large-stepsize theory for logistic and cross-entropy losses.

\paragraph{Implicit bias in separable classification.}
Our analysis is also related to the literature on the implicit bias of gradient methods for separable classification. 
For logistic and cross-entropy losses on separable data, the empirical risk does not have a finite minimizer. 
Instead, the training loss converges to zero while the norm of the parameters diverges. 
At the same time, the direction of the parameters may converge to a maximum-margin solution. 
This phenomenon was established for linear logistic regression by \citet{soudry2018implicit}, and subsequent works refined the convergence rates and extended the analysis to broader settings \citep{ji2019implicit,nacson2019convergence}.

This behavior is important for the stability notion used in our work. 
Many classical notions of stability are formulated around convergence to, or recurrence near, a finite equilibrium point. 
Such notions are not directly aligned with separable cross-entropy minimization, where the relevant limiting object is not a finite parameter vector. 
Instead, the quantity that converges is the loss. 
For this reason, we use the cross-entropy loss itself as the Lyapunov function. 
The resulting stable regime is not a neighborhood of a finite minimizer; it is a region in parameter space where the conditional drift of the loss is non-positive. 
This loss-based viewpoint matches the geometry of separable classification more naturally than equilibrium-based stability criteria.

\paragraph{Stochastic optimization.}
For SGD, \citet{nacson2019stochastic} proved convergence to zero loss with a fixed non-vanishing learning rate for homogeneous linear classifiers trained on linearly separable data, under both sampling with and without replacement. It is an interesting question to study large step size SGD with random reshuffling \citep{emmanouilidis2024stochastic} or other without replacement sampling strategies. For neural networks trained with SGD and small step sizes, \citet{brutzkus2018sgd} showed that SGD can learn overparameterized two-layer networks on linearly separable data. In our case, we extend the two-layer neural network analysis for SGD in the setting of large step sizes.

\newpage

\section{Proof for Cross-entropy Loss}
\subsection{Notation}
For each sample $(x_i, y_i)$, let
\begin{eqnarray}
    z_i &=& W^\top x_i, \nonumber \\
    p_i &=& \mathrm{softmax}(z_i), \nonumber
\end{eqnarray}
where $p_i(j) = \frac{e^{z_i(j)}}{\sum_{\ell=1}^K e^{z_i(\ell)}}$. The gradient and Hessian of the cross-entropy loss evaluated at the sample $(x_i, y_i)$ are given by
\begin{eqnarray}
    \nabla \ell_i(W) &=& (p_i - e_{y_i})x_i^\top, \label{eq:grad-softmax}\\
    \nabla^2 \ell_i(W) &=& (\mathrm{diag}(p_i) - p_i p_i^\top) \otimes (x_i x_i^\top). \label{eq:hess-softmax}
\end{eqnarray}
Define the potential functions $F, G: \R^{d \times K} \rightarrow \R_{\geq 0}$ with
\begin{eqnarray}
F(W) &=& \frac{1}{n} \sum_{i=1}^n \sum_{j \ne y_i} e^{-\langle W({y_i}) - W(j), x_i\rangle}, \label{eq:FW}\\
G(W) &=& \frac{1}{n} \sum_{i=1}^n (1 - p_i(y_i)). \label{eq:GW}
\end{eqnarray}
Since $\log(1 + u) \le u, \forall u \in (0, 1]$, it follows that
\begin{eqnarray}
G(W) \leq L(W) \le F(W). \label{eq: L_lower_upper_bound}
\end{eqnarray}

\subsection{Preparatory Lemmas}
We provide the analysis of SGD for the multi-class cross-entropy loss with any step size $\eta > 0$. The proof of Theorems~\ref{thm:sgd_crossentropy}, \ref{thm: stable_regime}, \ref{thm: self_stab} relies on a sequence of lemmas that establish key properties of the dynamics and the geometry of the loss landscape. We first introduce the aforementioned lemmas and then present the main theoretical results.

\begin{lemma}[Perceptron-type inequality]\label{lemma: S1}
Let Assumption~\eqref{assumpt: separable} hold. Then, for $W \in \R^{d \times K}$ it holds 
\begin{eqnarray}
    \langle \nabla L(W), W^* \rangle _F \le -\gamma\, G(W). \label{eq:S1}
\end{eqnarray}
\end{lemma}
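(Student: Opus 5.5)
The plan is to compute the inner product sample by sample using the gradient formula \eqref{eq:grad-softmax}. For each $i$, $\langle \nabla \ell_i(W), W^*\rangle_F = \langle (p_i - e_{y_i})x_i^\top, W^*\rangle_F$. Here $W^*$ is the separator $W_*$ of Assumption~\ref{assumpt: separable}, and its columns are indexed by classes, consistent with $z_i = W^\top x_i$. This inner product therefore equals $\sum_{j} (p_i(j) - \mathbb{1}\{j=y_i\})\, s_i(j)$, where $s_i := W^{*\top} x_i \in \R^K$, so that $s_i(j) = (W_* x_i)_j$ in the paper's notation.

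Next I rewrite this sum by exploiting that the softmax probabilities sum to one. Writing $p_i(y_i) - 1 = -\sum_{j\ne y_i} p_i(j)$, the per-sample term becomes
\[
\sum_{j\ne y_i} p_i(j)\, s_i(j) - \bigl(1-p_i(y_i)\bigr) s_i(y_i) = -\sum_{j\ne y_i} p_i(j)\bigl(s_i(y_i) - s_i(j)\bigr).
\]
By the margin assumption, each difference $s_i(y_i)-s_i(j)$ is at least $\gamma$. Since the weights $p_i(j)$ are nonnegative, the whole expression is at most $-\gamma \sum_{j\ne y_i} p_i(j) = -\gamma\,(1-p_i(y_i))$.

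Finally I average over $i\in[n]$. Linearity of the gradient gives $\langle \nabla L(W), W^*\rangle_F \le -\gamma \frac1n\sum_i (1-p_i(y_i)) = -\gamma G(W)$, which is \eqref{eq:S1}.

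There is no real obstacle in this argument. The only point needing care is the index convention: the margin must be written in terms of the same coordinates $(W^{*\top}x_i)_j$ that appear in the gradient pairing. Once that is fixed, the rest is the regrouping via $\sum_j p_i(j)=1$, which turns a signed sum into a nonnegative combination of margins.
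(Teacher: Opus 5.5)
Your proposal is correct and matches the paper's proof step for step. The paper also reduces the pairing to $\langle p_i-e_{y_i}, W^{*\top}x_i\rangle$, regroups it via $\sum_j p_i(j)=1$ into $\sum_{j\ne y_i}p_i(j)\bigl(\Delta_i(j)-\Delta_i(y_i)\bigr)$, bounds each margin by $\gamma$, and averages over $i$; your care about reading the margin in the coordinates $(W^{*\top}x_i)_j$ is the same convention the paper uses implicitly.
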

\begin{proof}
    From~\eqref{eq:grad-softmax}, we have that 
    \begin{eqnarray}
    \langle \nabla L(W), W^* \rangle_F &\stackrel{\eqref{eq:grad-softmax}}{=}& \frac{1}{n}\sum_{i=1}^{n} \langle (p_i - e_{y_i})x_i^\top, W^* \rangle_F  = \frac{1}{n}\sum_{i=1}^n \langle p_i - e_{y_i}, W^{*\top} x_i \rangle_F. \nonumber
    \end{eqnarray}
    Letting $\Delta_i = W^{*\top}x_i$, we have that 
    \begin{eqnarray}
    \langle p_i - e_{y_i}, \Delta_i \rangle _F
    = \sum_{j \ne y_i} p_i(j)\big(\Delta_i(j) - \Delta_i(y_i)\big). \nonumber
    \end{eqnarray}
    By Assumption~\ref{assumpt: separable}, we have that $\Delta_i(y_i) - \Delta_i(j) \ge \gamma$, and thus we obtain for $i \in [n]$ that
    \begin{eqnarray}
    \langle p_i - e_{y_i}, \Delta_i \rangle \le -\gamma \sum_{j \ne y_i} p_i(j) 
    = -\gamma (1 - p_i(y_i)). \label{eq:G_per_i}
    \end{eqnarray}
    Summing for $i=1, ..., n$ and dividing by $n$, we get
    \begin{eqnarray}
        \frac{1}{n} \sum_{i=1}^{n} \langle p_i - e_{y_i}, \Delta_i \rangle &\leq& - \frac{\gamma}{n} \sum_{i=1}^{n} (1 - p_i(y_i)) \nonumber \\
        \iff \langle \nabla L(W), W^* \rangle _F &\leq& -\gamma\, G(W).
    \end{eqnarray}
\end{proof}

\begin{lemma}\label{lemma: gradient_bound}
For every \(W \in \R^{d \times K}\), it holds $\forall i\in[n]$ that
\begin{eqnarray}
\left\|\frac{1}{b} \sum_{i=1}^b \nabla \ell(W^\top x_i)\right\|_F
&\leq&
\frac{\sqrt{2}}{b} \sum_{i=1}^b\ell(W^\top x_i)\leq
\frac{\sqrt{2}n}{b}L(W),
\label{eq:individual_grad_bound}
\\
\|\nabla L(W)\|_F
&\leq&
\sqrt{2}\,G(W)
\leq
\sqrt{2}\,L(W).
\label{eq:gradnorm_goal_extended}
\end{eqnarray}
\end{lemma}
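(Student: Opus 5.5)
The plan is to work sample by sample and then use the triangle inequality. By \eqref{eq:grad-softmax}, for a single sample we have $\nabla \ell_i(W) = (p_i - e_{y_i})x_i^\top$. This is a rank-one matrix, so
\[
\|\nabla \ell_i(W)\|_F = \|p_i - e_{y_i}\|_2\,\|x_i\|_2 \le \|p_i - e_{y_i}\|_2 ,
\]
where the inequality uses the normalization $\|x_i\|_2\le 1$. The key per-sample estimate is therefore
\[
\|p_i - e_{y_i}\|_2 \le \sqrt{2}\,(1-p_i(y_i)).
\]
To prove it, write $q = 1-p_i(y_i) = \sum_{j\neq y_i} p_i(j)$. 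Then
\[
\|p_i-e_{y_i}\|_2^2 = q^2 + \sum_{j\ne y_i} p_i(j)^2 \le q^2 + \Bigl(\sum_{j\ne y_i}p_i(j)\Bigr)^2 = 2q^2 ,
\]
since the entries are nonnegative.

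Next, compare $1-p_i(y_i)$ with the per-sample loss. We have $\ell_i = -\log p_i(y_i)$, and the elementary inequality $1-u \le -\log u$ holds for $u\in(0,1]$. This gives $1-p_i(y_i)\le \ell(W^\top x_i)$, which is the per-sample version of $G\le L$ in \eqref{eq: L_lower_upper_bound}.

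For the mini-batch bound \eqref{eq:individual_grad_bound}, apply the triangle inequality to the averaged sum:
\[
\Bigl\|\frac1b\sum_{i} \nabla\ell_i\Bigr\|_F \le \frac1b\sum_i \sqrt2\,(1-p_i(y_i)) \le \frac{\sqrt2}{b}\sum_i \ell_i .
\]
Every $\ell_i$ is nonnegative, so the sum over the batch is at most the sum over all $n$ samples, which equals $nL(W)$. This yields the final factor $\frac{\sqrt2 n}{b}L(W)$.

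For \eqref{eq:gradnorm_goal_extended}, the same argument over the full average gives
\[
\|\nabla L(W)\|_F \le \frac1n\sum_{i=1}^n \sqrt2\,(1-p_i(y_i)) = \sqrt2\,G(W),
\]
and then \eqref{eq: L_lower_upper_bound} gives $G\le L$.

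No step is a real obstacle. The only point needing care is the $\sqrt{2}$ constant, which comes from bounding $\sum_{j\ne y_i}p_i(j)^2$ by the square of the sum. One should also note that the lemma's indexing of the mini-batch as $i=1,\dots,b$ stands for an arbitrary subset of $[n]$ of size $b$. This matters only in the step where the batch sum is enlarged to the full sum, and that step relies on nonnegativity of the losses.
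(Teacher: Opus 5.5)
Your proposal is correct and follows the paper's proof step for step: the rank-one identity $\|(p_i-e_{y_i})x_i^\top\|_F=\|p_i-e_{y_i}\|_2\|x_i\|_2$, the estimate $\|p_i-e_{y_i}\|_2\le\sqrt{2}\,(1-p_i(y_i))$ from bounding $\sum_{j\ne y_i}p_i(j)^2$ by the square of the sum, the inequality $1-p_i(y_i)\le -\log p_i(y_i)$, and the triangle inequality for both the mini-batch and full averages. Your remark that enlarging the batch sum to $nL(W)$ needs distinct indices (as in $\mathcal{B}_t\subseteq[n]$) makes explicit an assumption the paper uses silently; under with-replacement sampling, which the paper's i.i.d.\ description of $g_t$ in later lemmas suggests, that last step would only give $\sqrt{2}\,nL(W)$.
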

\begin{proof}
It holds that $\nabla \ell(W^\top x_i)
=
(p_i-e_{y_i})x_i^\top,
\nabla L(W)
=
\frac{1}{n}\sum_{i=1}^n (p_i-e_{y_i})x_i^\top$,
where \(p_i=\text{softmax}(W^\top x_i)\).\\
We, first, bound the individual gradients. Since \((p_i-e_{y_i})x_i^\top\) is a rank-one matrix, we have
\begin{eqnarray}
\|\nabla \ell(W^\top x_i)\|_F
&=&
\|(p_i-e_{y_i})x_i^\top\|_F
\nonumber\\
&=&
\|p_i-e_{y_i}\|_2\,\|x_i\|_2
\nonumber\\
&\leq&
\|p_i-e_{y_i}\|_2 .
\label{eq:indiv_step1}
\end{eqnarray}
Next, it holds that
\begin{eqnarray}
\|p_i-e_{y_i}\|_2^2
&=&
\sum_{j=1}^K \bigl(p_i(j)-\one_{\{j=y_i\}}\bigr)^2
\nonumber\\
&=&
\bigl(1-p_i(y_i)\bigr)^2+\sum_{j\neq y_i} p_i(j)^2
\nonumber\\
&\leq&
\bigl(1-p_i(y_i)\bigr)^2
+
\left(\sum_{j\neq y_i} p_i(j)\right)^2
\nonumber\\
&=&
\bigl(1-p_i(y_i)\bigr)^2+\bigl(1-p_i(y_i)\bigr)^2
\nonumber\\
&=&
2\bigl(1-p_i(y_i)\bigr)^2.
\label{eq:indiv_step2}
\end{eqnarray}
Therefore, we get
\begin{eqnarray}
\|p_i-e_{y_i}\|_2
\leq
\sqrt{2}\bigl(1-p_i(y_i)\bigr).
\label{eq:indiv_step3}
\end{eqnarray}
Combining \eqref{eq:indiv_step1} and \eqref{eq:indiv_step3}, we obtain
\begin{eqnarray}
\|\nabla \ell(W^\top x_i)\|_F
\leq
\sqrt{2}\bigl(1-p_i(y_i)\bigr),
\qquad \forall i\in[n].
\end{eqnarray}
Since $\ell(W^\top x_i)
=
-\log p_i(y_i)
\geq
1-p_i(y_i)$ and $L(W) = \frac{1}{n}\sum_{i=1}^{n} \ell(W^Tx_i)$, it holds for any $b \geq 1$ that
\begin{eqnarray}
\left\|\frac{1}{b} \sum_{i=1}^b \nabla \ell(W^\top x_i)\right\|_F
&\leq&
\frac{1}{b} \sum_{i=1}^b \|\nabla \ell(W^\top x_i)\|_F \nonumber \\
&\leq& \frac{\sqrt{2}}{b} \sum_{i=1}^b\bigl(1-p_i(y_i)\bigr)\nonumber \\
&\leq& \frac{\sqrt{2}}{b} \sum_{i=1}^b\ell(W^\top x_i)\nonumber\\
&\leq&
\frac{\sqrt{2}n}{b}L(W)
\nonumber.
\end{eqnarray}

We now turn to the full gradient. From triangle inequality, we have that
\begin{eqnarray}
\|\nabla L(W)\|_F
&=&
\Bigl\|
\frac{1}{n}\sum_{i=1}^n (p_i-e_{y_i})x_i^\top
\Bigr\|_F
\nonumber\\
&\leq&
\frac{1}{n}\sum_{i=1}^n \|(p_i-e_{y_i})x_i^\top\|_F
\nonumber\\
&=&
\frac{1}{n}\sum_{i=1}^n \|\nabla \ell(W^\top x_i)\|_F
\nonumber\\
&\leq&
\frac{\sqrt{2}}{n}\sum_{i=1}^n \bigl(1-p_i(y_i)\bigr)
\nonumber\\
&=&
\sqrt{2}\,G(W).
\end{eqnarray}
Finally, since \(G(W)\leq L(W)\), we conclude that
\begin{eqnarray}
\|\nabla L(W)\|_F
\leq
\sqrt{2}\,G(W)
\leq
\sqrt{2}\,L(W).
\end{eqnarray}
This completes the proof.
\end{proof}

\begin{lemma}\label{lemma: S3}
For all $W \in \R^{d \times K}$, it holds that
\begin{eqnarray}
\|\nabla^2 L(W)\|_{\text{op}} \le 2 L(W). \label{eq:S3}
\end{eqnarray}
\end{lemma}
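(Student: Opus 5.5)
The plan is to bound the Hessian sample by sample using the Kronecker structure in \eqref{eq:hess-softmax}, and then average over $i$. Since $\nabla^2 L(W)=\frac1n\sum_i \nabla^2\ell_i(W)$, the triangle inequality for the operator norm gives $\|\nabla^2 L(W)\|_{\text{op}}\le \frac1n\sum_i\|\nabla^2\ell_i(W)\|_{\text{op}}$. It therefore suffices to show $\|\nabla^2\ell_i(W)\|_{\text{op}}\le 2\ell_i(W)$ for each $i$. In fact I will aim for the intermediate bound $2(1-p_i(y_i))$, which is at most $2\ell_i(W)$ because $-\log p\ge 1-p$. This would also give $\|\nabla^2L(W)\|_{\text{op}}\le 2G(W)$ for free.

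For a single sample, the operator norm is multiplicative over Kronecker products: $\|A\otimes B\|_{\text{op}}=\|A\|_{\text{op}}\|B\|_{\text{op}}$. Since $\|x_ix_i^\top\|_{\text{op}}=\|x_i\|_2^2\le 1$, it remains to bound $\|\mathrm{diag}(p)-pp^\top\|_{\text{op}}$ with $p=p_i$. This matrix is PSD, so its operator norm is at most its trace:
\[
\mathrm{tr}\bigl(\mathrm{diag}(p)-pp^\top\bigr)=\sum_j p_j-\sum_j p_j^2=1-\sum_j p_j^2 .
\]
The key step is then the elementary inequality $1-\sum_j p_j^2\le 2(1-p_{y})$ with $y=y_i$. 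To verify it, write $q=1-p_y=\sum_{j\ne y}p_j$. Then
\[
1-\sum_j p_j^2\le 1-p_y^2=(1-p_y)(1+p_y)\le 2(1-p_y).
\]
Combining these steps gives $\|\nabla^2\ell_i(W)\|_{\text{op}}\le 2(1-p_i(y_i))\le 2\ell_i(W)$. Averaging over $i$ then yields the claim.

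The only delicate point is the use of the trace bound together with the PSD property of $\mathrm{diag}(p)-pp^\top$. This matrix is the covariance of a one-hot random vector with law $p$, so PSD-ness is immediate. Alternatively, Gershgorin's theorem gives the row-sum bound $p_j(1-p_j)+p_j\sum_{k\ne j}p_k=2p_j(1-p_j)$. For $j=y$ this is at most $2(1-p_y)$, and for $j\ne y$ it is at most $2p_j\le 2(1-p_y)$. I would use this Gershgorin route as a backup if a sharper constant is wanted. No real obstacle is expected; the proof is essentially bookkeeping.
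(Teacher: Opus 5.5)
Your proposal is correct and follows the paper's proof essentially step for step: Kronecker multiplicativity of the operator norm with $\|x_ix_i^\top\|_{\text{op}}\le 1$, then the PSD trace bound $1-\|p_i\|_2^2\le 1-p_i(y_i)^2\le 2(1-p_i(y_i))\le 2\ell_i(W)$, then averaging over $i$ via the triangle inequality. Your Gershgorin alternative is also valid (and, like the trace route, additionally gives $\|\nabla^2 L(W)\|_{\text{op}}\le 2G(W)$), but it is not needed.
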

\begin{proof}
From~\eqref{eq:hess-softmax} and using the fact that the operator norm of a Kronecker product satisfies $\|A \otimes B\| = \|A\|\,\|B\|$, we obtain
\begin{eqnarray}
\|\nabla^2 \ell_i(W)\|_{\text{op}}
&=& \|\mathrm{diag}(p_i) - p_i p_i^\top\|_{\text{op}} \|x_i x_i^\top\|_{\text{op}} \nonumber\\
&\stackrel{\|x\|_2 \leq 1}{\leq}& \|\mathrm{diag}(p_i) - p_i p_i^\top\|_{\text{op}}. \label{eq:step2}
\end{eqnarray}
For the positive semidefinite matrix $A_i := \mathrm{diag}(p_i) - p_i p_i^\top$, it holds that
\begin{eqnarray}
  \|A_i\|_{\text{op}} &\leq& tr(H_i) = 1 - p_i^2 = 1 - p_i(y_i)^2 - \sum_{j\neq i} p_i(j)^2 \nonumber \\
  &\leq& 1 - p_i(y_i)^2 = (1 - p_i(y_i)) (1+p_i(y_i)) \nonumber \\
  &\leq& 2 (1 - p_i(y_i))\nonumber  
\end{eqnarray}
Using the inequality $-\log z \ge 1-z, \forall z > 0$ and the fact that $\ell_i(W) = -\log p_i(y_i)$ , we have that 
\begin{eqnarray}
    1 - p_i(y_i) \le \ell_i(W).
\end{eqnarray}
Hence, we obtain
\begin{eqnarray}
    \|A_i\|_{\text{op}} \le 2(1 - p_i(y_i)) \le 2\,\ell_i(W). \label{eq:step4}
\end{eqnarray}
Combining \eqref{eq:step2} and \eqref{eq:step4}, we obtain 
\begin{eqnarray}
\|\nabla^2 \ell_i(W)\|_{\text{op}} \;\le\; 2\,\ell_i(W).
\end{eqnarray}
Summing over $i=1,\ldots,n$ and dividing by $n$, we have that
\begin{eqnarray}
\|\nabla^2 L(W)\|_{\text{op}}
\;\le\; \frac{1}{n}\sum_{i=1}^n \|\nabla^2 \ell_i(W)\|
\;\le\; \frac{2}{n}\sum_{i=1}^n \ell_i(W)
\;=\; 2\,L(W).
\end{eqnarray}
\end{proof}

\begin{lemma} \label{lemma: average_G_bound}
    The iterates of \ref{SGD} with any step size $\eta > 0$ satisfy that
    \begin{equation}
        \frac{1}{t}\sum_{k=0}^{t-1} \Expe{G(W_k)} \leq \frac{\sqrt{2}+2\ln(\gamma^2\eta t)+2\eta \left(1 + \frac{1}{b}\right)}{\eta \gamma^2 t} \nonumber
    \end{equation}
    where $G(W) = \frac{1}{n}\sum_{i=1}^n (1-p_i(y_i))$.
\end{lemma}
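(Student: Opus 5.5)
We will use the comparator (perceptron/split) argument of \citet{wu2024}, adapted to SGD. Fix a reference direction $U = \theta W^*$ with a scale $\theta>0$ to be chosen later, of order $\ln(\gamma^2\eta t)/\gamma$. The plan is to expand the squared distance $\|W_{k+1}-U\|_F^2$ along the \eqref{SGD} update, take conditional expectations, telescope, and then use Lemma~\ref{lemma: S1} to convert the cross term into $-G$.

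\textbf{Step 1: one-step expansion.} Expanding the update gives
\[
\|W_{k+1}-U\|_F^2=\|W_k-U\|_F^2-2\eta\langle g_k,W_k\rangle_F+2\eta\langle g_k,U\rangle_F+\eta^2\|g_k\|_F^2 .
\]
Conditioning on $\filter_k$ and using $\ex[g_k\mid\filter_k]=\nabla L(W_k)$, Lemma~\ref{lemma: S1} bounds the $U$ term by $-2\eta\theta\gamma G(W_k)$. This is the only place the margin enters.

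\textbf{Step 2: the two remaining terms.} The remaining terms must be controlled per sample rather than through $L$, since in the EoS phase $L$ may be huge. For each sample, write $u_i=p_i-e_{y_i}$, so that $\langle \nabla\ell_i, W\rangle = \langle u_i, z_i\rangle$. Use the elementary softmax inequality $-\langle u_i,z_i\rangle \le$ something like $\ell_i$-type quantity minus an entropy term. More concretely, we have the convex-conjugate bound $\langle u_i, U^\top x_i - z_i\rangle\le \ell_i(U)-\ell_i(W)$, i.e.\ convexity of $\ell_i$.

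Using convexity directly gives
\[
2\eta\langle g_k,U-W_k\rangle\le 2\eta\frac1b\sum_{i\in\mathcal B_k}(\ell_i(U)-\ell_i(W_k)).
\]
This still leaves $\eta^2\|g_k\|^2$ unbounded relative to $G$ when $\eta$ is large. The key observation is that $\|g_k\|_F^2\le \frac{2}{b}\sum_{i\in\mathcal B_k}(1-p_i(y_i))^2\le \frac{2}{b}\sum_{i\in\mathcal B_k}(1-p_i(y_i))$, by Lemma~\ref{lemma: gradient_bound} together with $1-p\le1$. In expectation, with the cross-sample terms of the minibatch square giving the $(1+1/b)$ factor, this yields
\[
\ex[\|g_k\|^2\mid\filter_k]\le (1+\tfrac1b)\, c'\, G(W_k).
\]
Hence the noise term is at most $\eta^2(1+1/b)G(W_k)$ times a constant.

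\textbf{Step 3: the split comparator.} To make the noise term absorbable for arbitrary $\eta$, we will use the split comparator $U=U_1+U_2$ of \citet{wu2024}. Here $U_1=\theta W^*$ handles the $\ell_i(U)$ terms, since $\ell_i(U)\le (K-1)e^{-\theta\gamma}$. The second part $U_2=\frac{\eta(1+1/b)}{\gamma}W^*$ is chosen so that $2\eta\langle g_k,U_2\rangle\le -2\eta^2(1+1/b)G(W_k)$ exactly cancels the $\eta^2\|g_k\|^2$ term. The term $-\ell_i(W_k)\le0$ is dropped.

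\textbf{Step 4: telescoping and choice of $\theta$.} Summing over $k<t$ gives
\[
\eta\gamma\theta\sum_k \ex G(W_k)\lesssim \|U\|^2+2\eta t(K-1)e^{-\theta\gamma}.
\]
Choosing $\theta=\ln(\gamma^2\eta t)/\gamma$ makes the last term $O((K-1)/\gamma^2)$, and $\|U\|^2\le (\theta+\eta(1+1/b)/\gamma)^2$. Dividing by $\eta\gamma\theta t$ produces the claimed form. If the constants do not match exactly, I would instead keep only the linear terms in $\theta$ by using the cross-term bound $\langle g,U\rangle\le -\gamma\|U\|G$ with $\|U\|$ scaling, which produces the linear $\ln$ and $\eta(1+1/b)$ in the numerator.

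\textbf{Main obstacle.} The main obstacle is Step 2: bounding $\ex\|g_k\|^2$ by $G$ linearly, not quadratically, so that it can be cancelled by the $U_2$ part uniformly in $\eta$. This needs care with the minibatch variance, specifically the diagonal versus off-diagonal pairs in the expansion of the minibatch square.
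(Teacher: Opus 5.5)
\textbf{There is a genuine gap in Step 4.} Your Steps 1--3 are sound. In fact they reproduce the paper's split-comparator lemma: with $U=U_1+U_2$, $U_1=\theta W^*$, $U_2=\frac{\eta}{\gamma}(1+\frac1b)W^*$, they give
$\mathbb{E}\|W_t-U\|_F^2+2\eta\sum_{k<t}\mathbb{E}L(W_k)\le\|U\|_F^2+2\eta t L(U_1)$.

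The inequality in Step 4, $\eta\gamma\theta\sum_k\mathbb{E}G(W_k)\lesssim\|U\|_F^2+2\eta t(K-1)e^{-\theta\gamma}$, does not follow from this.
\begin{itemize}
\item The only comparator component carrying the factor $\theta$ is $U_1$. You already spent $\langle g_k,U_1-W_k\rangle$ on convexity in Step 3.
\item $\langle g_k,U_2\rangle$ is fully consumed cancelling $\eta^2\|g_k\|_F^2$.
\item Applying Lemma~\ref{lemma: S1} to $\langle g_k,U_1\rangle$ as well, as in your Step 1, uses the same term twice.
\item Once you drop $-\ell_i(W_k)$, no $G$ term survives at all. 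If you keep it and use $G\le L$, the coefficient of $\sum_k\mathbb{E}G(W_k)$ is only $2\eta$. That gives $\frac1t\sum_{k<t}\mathbb{E}G(W_k)\le\frac{(K-1)+\frac12(\ln(\gamma^2\eta t)+\eta(1+\frac1b))^2}{\gamma^2\eta t}$. This is quadratic in $\ln(\gamma^2\eta t)$ and in $\eta$, so it does not give the stated linear numerator once these are large.
\item Even granting your Step 4 inequality, dividing by $\eta\gamma\theta t$ leaves a term $\eta(1+\frac1b)^2/(\gamma^2 t\ln(\gamma^2\eta t))$, which is not of the stated form.
\item The fallback ``$\langle g,U\rangle\le-\gamma\|U\|G$ with $\|U\|$ scaling'' does not repair the double use.
\end{itemize}

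\textbf{The missing idea is to separate the margin argument from the distance recursion, as the paper does.}
\begin{itemize}
\item Apply Lemma~\ref{lemma: S1} to the scalar $\langle W_k,W^*\rangle_F$ alone: $\mathbb{E}[\langle W_{k+1},W^*\rangle_F\mid\mathcal{F}_k]\ge\langle W_k,W^*\rangle_F+\eta\gamma G(W_k)$.
\item Telescoping from $W_0=0$ and Cauchy--Schwarz give $\eta\gamma\sum_{k<t}\mathbb{E}G(W_k)\le\mathbb{E}\|W_t\|_F$.
\item Your Steps 1--3 then control the norm. By Jensen and the triangle inequality, $\mathbb{E}\|W_t\|_F\le\|U\|_F+\sqrt{\|U\|_F^2+2\eta tL(U_1)}\le 2\|U\|_F+\sqrt{2\eta t L(U_1)}$.
\item With $\theta=\ln(\gamma^2\eta t)/\gamma$, this is at most $\frac1\gamma\bigl(\sqrt{2(K-1)}+2\ln(\gamma^2\eta t)+2\eta(1+\frac1b)\bigr)$.
\end{itemize}
It is the square root that turns the quadratic $\|U\|_F^2$ into a numerator linear in $\ln(\gamma^2\eta t)$ and $\eta$. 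The paper's constant $\sqrt2$ comes from bounding $K-1\le1$ at this point.

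Alternatively, you can keep a single potential by adding a third component $U_3=\theta'W^*$ reserved exclusively for Lemma~\ref{lemma: S1}. The recursion then gives $2\eta(1+\gamma\theta')\sum_{k<t}\mathbb{E}G(W_k)\le\|U_1+U_2+U_3\|_F^2+2\eta tL(U_1)$. Choosing $\theta'=\theta+\frac\eta\gamma(1+\frac1b)$ yields $\frac1t\sum_{k<t}\mathbb{E}G(W_k)\le\frac{K-1+2\ln(\gamma^2\eta t)+2\eta(1+\frac1b)}{\gamma^2\eta t}$.
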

\begin{proof}
    From Lemma~\ref{lemma: S1}, the gradient of the softmax loss satisfies
\begin{eqnarray}
\langle \nabla L(W_t), W^*\rangle_F
\;\le\;
-\gamma\,G (W_t),
\label{eq:S1-reuses}
\end{eqnarray}
where $G (W_t)=\frac{1}{n}\sum_{i=1}^n (1-p_i(y_i))$.
Using the \eqref{SGD} update $W_{t+1}=W_t-\eta g_t$, we get
\begin{eqnarray}
\langle W_{t+1},W^*\rangle_F
&=&
\langle W_t,W^*\rangle_F
-\eta\,\langle g_t,W^*\rangle_F \nonumber
\end{eqnarray}
Taking expectation condition on the filtration $\filter_t$ and using that $\Expep{g_t} = \nabla L(W_t)$, we have that
\begin{eqnarray}
    \Expep{\langle W_{t+1},W^*\rangle_F} &=& \langle W_t,W^*\rangle_F
-\eta\,\langle \nabla L(W_t),W^*\rangle_F \nonumber \\
&\stackrel{\eqref{eq:S1-reuses}}{\geq}& \langle W_t, W^*\rangle_F + \eta \gamma G(W_t) \label{eq:perc-steps}
\end{eqnarray}
where at the last step we have used Lemma~\ref{lemma: S1}. Taking expectation again and using the tower law of expectation, we get 
\begin{eqnarray}
    \Expe{\langle W_{t+1},W^*\rangle_F - \langle W_t, W^*\rangle_F } &\geq& \eta \gamma \Expe{G(W_t)} \nonumber
\end{eqnarray}
Unrolling the recursion and multiplying by $\frac{1}{t}$, we obtain
\begin{eqnarray}
    \frac{1}{t} \Expe{\langle W_t,W^*\rangle_F}  &\geq& \frac{\eta \gamma}{t} \sum_{k=0}^{t-1} \Expe{G(W_k)}, \label{eq:perc-sums}
\end{eqnarray}
where we have used the fact that $W_0 = \textbf{0}$. Since $\|W^*\|_F=1$ and from applying the Cauchy-Schwarz inequality it holds $\|W_t\|_F\ge \langle W_t,W^*\rangle_F$, we obtain
\begin{eqnarray}
    \frac{1}{t}\sum_{k=0}^{t-1} \Expe{G(W_k)} \leq \frac{\Expe{\|W_t\|_F}}{\eta\gamma t} \label{eq:Gavgs}
\end{eqnarray}
We, next, bound the $\Expe{\|W_t\|_F}$. Applying the triangle inequality, we have that
\begin{eqnarray}
\|W_t\|_F \leq \|W_t-U\|_F+\|U\|_F \label{eq:Wt-bound}
\end{eqnarray}
Thus, it suffices to bound the terms $\|W_t-U\|_F, \|U\|_F$.
From Proposition~\ref{prop:split_{in}gd}, for any decomposition $U=U_1+U_2$ with $U_2 = \frac{\eta}{\gamma} \left(1 + \frac{1}{b}\right) W^*$, it holds that
\begin{eqnarray}
\frac{\Expe{\|W_t-U\|_F^2}}{2\eta t} +\frac{1}{t}\sum_{k=0}^{t-1} \Expe{L(W_k)} &\leq& L(U_1) +\frac{\|W_0-U\|_F^2}{2\eta t} \nonumber \\
\Rightarrow \Expe{\|W_t-U\|_F^2} &\leq& 2\eta t L(U_1) + \|U\|_F^2,
\label{eq:split-reuse}
\end{eqnarray}
where we have used the fact that $W_0 = \textbf{0}$ and the non-negativity of the loss $L$.
Selecting $U_1=\alpha W^*$ with $\alpha=\frac{\ln(\gamma^2\eta t)}{\gamma}$.  
Then, it holds
\begin{eqnarray}
L(U_1)\;\le\;(K-1)e^{-\alpha\gamma}
\;=\;\frac{K-1}{\gamma^2\eta t}
\;\le\;\frac{1}{\gamma^2\eta t} \label{eq:U1-bound}
\end{eqnarray}
Thus, substituting into \eqref{eq:split-reuse} and using Jensen inequality, we have that
\begin{eqnarray}
    \Expe{\|W_t-U\|_F} \leq \sqrt{\Expe{\|W_t-U\|_F^2}} \leq \sqrt{2\eta t L(U_1) + \|U\|_F^2} \leq \sqrt{\frac{2}{\gamma^2} + \|U\|_F^2} \label{eq:Wt-U}
\end{eqnarray}
We, next, bound the norm $\|U\|_F$. It holds that 
\begin{eqnarray}
    &&\|U\|_F^2 = \left\|\alpha W^*+\frac{\eta}{\gamma} \left(1 + \frac{1}{b}\right) W^*\right\|_F^2 \leq \frac{1}{\gamma^2} \left[\ln^2(\gamma^2\eta t)+\eta^2 \left(1 + \frac{1}{b}\right)^2\right] \nonumber\\
    \quad\Longrightarrow\quad &&\|U\|_F \leq \frac{1}{\gamma}\left[\ln(\gamma^2\eta t)+\eta \left(1 + \frac{1}{b}\right)\right]. \label{eq:U-norm}
\end{eqnarray}
By the triangle inequality,
\begin{eqnarray}
\|W_t\|_F \le \|W_t-U\|_F+\|U\|_F
\;\le\;
\frac{\sqrt{2}}{\gamma} + 2\|U\|_F
\;\le\;
\frac{\sqrt{2}+2\ln(\gamma^2\eta t)+2\eta \left(1 + \frac{1}{b}\right)}{\gamma}\label{eq:Wt-bound1}
\end{eqnarray}
where we have used the fact that $\sqrt{a+b}\le \sqrt{a}+\sqrt{b}$ for $a,b\ge 0$ and the bound on $\|U\|_F$ from \eqref{eq:U-norm}.
Substituting \eqref{eq:Wt-bound1} into \eqref{eq:Gavgs} gives
\begin{eqnarray}
    \frac{1}{t}\sum_{k=0}^{t-1} \Expe{G(W_k)} \leq \frac{\sqrt{2}+2\ln(\gamma^2\eta t)+2\eta \left(1 + \frac{1}{b}\right)}{\eta \gamma^2 t} \label{eq:meanG}
\end{eqnarray}
\end{proof}
\begin{lemma}\label{lemma: stochastic_grad_bound}
    The stochastic gradient oracle $g_t$ of the cross-entropy loss satisfies
    \begin{eqnarray}
        \Expep{\|g_t\|^2} \leq \left(1+\frac{n}{b\gamma^2}\right) \|\nabla L(W_t)\|^2 \nonumber
    \end{eqnarray}
\end{lemma}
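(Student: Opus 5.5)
The plan is a bias--variance decomposition of $g_t$ around $\nabla L(W_t)$. I would bound the variance by the sum of squared per-sample gradients, and then turn that sum back into $\|\nabla L(W_t)\|^2$ using the lower bound $\|\nabla L(W_t)\|_F\ge\gamma G(W_t)$, which follows from Lemma~\ref{lemma: S1}. The only delicate point is the exact constant $1$ in front of $\frac{n}{b\gamma^2}$.

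\textbf{Step 1 (decomposition).} Since $\Expep{g_t}=\nabla L(W_t)$, we have
\[
\Expep{\|g_t\|_F^2}=\|\nabla L(W_t)\|_F^2+\Expep{\|g_t-\nabla L(W_t)\|_F^2}.
\]
For a minibatch of $b$ indices drawn uniformly (with replacement; without replacement only multiplies the variance by $\frac{n-b}{n-1}\le 1$), the variance term equals
\[
\frac{1}{b}\Bigl(\frac{1}{n}\sum_{i=1}^n\|\nabla\ell_i(W_t)\|_F^2-\|\nabla L(W_t)\|_F^2\Bigr).
\]
I will keep the subtracted $\|\nabla L(W_t)\|_F^2$ term rather than discard it, since it supplies slack for the constant.

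\textbf{Step 2 (lower bound on the full gradient).} By Cauchy--Schwarz with $\|W_*\|_F=1$ and Lemma~\ref{lemma: S1},
\[
\|\nabla L(W_t)\|_F\ge -\langle\nabla L(W_t),W_*\rangle_F\ge\gamma G(W_t)=\frac{\gamma}{n}\sum_{i=1}^n a_i,\qquad a_i:=1-p_i(y_i)\in[0,1].
\]
Hence $\sum_i a_i\le \frac{n}{\gamma}\|\nabla L(W_t)\|_F$.

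\textbf{Step 3 (per-sample upper bound).} By \eqref{eq:grad-softmax} and $\|x_i\|\le1$,
\[
\|\nabla\ell_i\|_F^2\le\|p_i-e_{y_i}\|_2^2=a_i^2+\sum_{j\ne y_i}p_i(j)^2 .
\]
The crude route, as in Lemma~\ref{lemma: gradient_bound}, bounds this by $2a_i^2$. Combined with $\sum_i a_i^2\le(\sum_i a_i)^2$ and Step~2, it only gives the constant $2$. To reach the stated constant $1$ I would instead bound $\sum_{j\ne y_i}p_i(j)^2$ by using the \emph{per-sample} perceptron inequality \eqref{eq:G_per_i}. Writing $\Delta_i=W_*^\top x_i$, that inequality gives
\[
\|\nabla\ell_i\|_F\,\|x_i\|\ \ge\ \bigl|\langle p_i-e_{y_i},\Delta_i\rangle\bigr|\ \ge\ \gamma a_i
\]
at the level of each sample. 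The cross terms $\sum_{i\neq k}$ in $\|\nabla L\|^2\cdot n^2=\|\sum_i\nabla\ell_i\|^2$ are then handled as follows. Each $\nabla\ell_i$ has a component of size at least $\gamma a_i$ along the common direction $-W_*$, so all cross terms along $W_*$ are nonnegative. This yields
\[
n^2\|\nabla L\|^2\ \ge\ \gamma^2\Bigl(\sum_i a_i\Bigr)^2\ \ge\ \gamma^2\sum_i a_i^2+\gamma^2\sum_{i\ne k}a_ia_k .
\]
I then need $\frac1n\sum_i\|\nabla\ell_i\|^2-\|\nabla L\|^2\le\frac{n}{\gamma^2}\|\nabla L\|^2$. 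Since $\gamma\le\|\Delta_i\|\le 1$, the extra $\sum_{j\ne y_i}p_i(j)^2\le a_i^2$ piece should be absorbed by the subtracted $\|\nabla L\|^2$ from Step~1 together with the factor $\frac{1}{\gamma^2}\ge1$.

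\textbf{Step 4 (assemble).} Substituting into Step~1 gives $\Expep{\|g_t\|^2}\le\bigl(1+\frac{n}{b\gamma^2}\bigr)\|\nabla L(W_t)\|^2$.

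The main obstacle is Step~3: getting the constant $1$ rather than $2$. The absorption via the subtracted mean term and $\gamma\le1$ is the part I am least sure of. If it fails in the worst case ($n=1$ with many incorrect classes sharing the mass), the fallback is the factor-$2$ version, which the paper's later constant $c=8\bigl(1+\frac{n}{b\min\{\gamma^2,1\}}\bigr)$ may in fact tolerate.
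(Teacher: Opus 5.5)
Your Step~1 matches the paper's decomposition (the paper simply discards the subtracted mean), and Step~2 is correct. Step~3, however, is a genuine gap, and it cannot be closed as stated.

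Noting that each $\nabla\ell_i$ has a component of size at least $\gamma a_i$ along $-W_*$ only reproduces Step~2, $n\|\nabla L(W_t)\|_F\ge\gamma\sum_i a_i$. It gives no control over the components of $\nabla\ell_i$ orthogonal to $W_*$, which is exactly where $\sum_{j\neq y_i}p_i(j)^2$ lives. The ``absorption'' is asserted, not derived.

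Its premise $\gamma\le\|\Delta_i\|\le 1$ is also false. Assumption~\ref{assumpt: separable} only gives $\gamma\le\Delta_i(y_i)-\Delta_i(j)\le\sqrt2\,\|\Delta_i\|\le\sqrt2$, and for $K=2$ the choice $W_*=[v,-v]/\sqrt2$ has margin $\sqrt2\,v^\top x_i$.

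Once $\gamma>1$ is allowed, the lemma itself fails. If a single sample carries essentially all of the gradient, then $\sum_i\|\nabla\ell_i\|_F^2\approx n^2\|\nabla L(W_t)\|_F^2$. For $b=1$ this gives $\Expep{\|g_t\|^2}\approx n\|\nabla L(W_t)\|^2$, which exceeds $(1+\frac{n}{\gamma^2})\|\nabla L(W_t)\|^2$ as soon as $\gamma^2>\frac{n}{n-1}$.

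Concretely, take $K=2$, $n=3$, $b=1$, all $y_i=1$, $x_{1,2}=(\cos\theta,\pm\sin\theta)$, $x_3=(1,0)$, and $W_*=[e_1,-e_1]/\sqrt2$, so $\gamma=\sqrt2\cos\theta$. Evaluate at $W_t=[u,-u]/2$ with $u=R(\tan\theta,1)$. As $R\to\infty$ you get $a_2=\tfrac12$ and $a_1,a_3\to0$. Hence $\Expep{\|g_t\|^2}\to\tfrac16$, while $(1+\tfrac{3}{\gamma^2})\|\nabla L(W_t)\|^2\to\tfrac1{18}+\tfrac{1}{6\gamma^2}<\tfrac16$ whenever $\theta<\pi/6$.

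Your guessed worst case is backwards. The case $n=1$ is trivial, and spreading the wrong-class mass over several classes decreases $\|p_i-e_{y_i}\|$ rather than increasing it.

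The paper does not resolve this point either. After your Step~1 it quotes $\sum_i\|\nabla\ell_i\|_F^2\le\frac{n^2}{\gamma^2}\|\nabla L\|_F^2$ from \citet{nacson2019stochastic}. That is a statement about binary linear classifiers, where $\|\nabla\ell_i\|\le|\ell_i'|$ and $\gamma\le1$ hold automatically, and by the example above it does not transfer verbatim to softmax. So the obstruction you ran into is real, not an artifact of your route.

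What your crude route does prove is $\Expep{\|g_t\|^2}\le(1+\frac{2n}{b\gamma^2})\|\nabla L(W_t)\|^2$, the analogue of Lemma~\ref{lem:two-layer-squared-grad-comparison}. That is not the statement, and contrary to your last remark it is not covered by $c=8(1+\frac{n}{b\min\{\gamma^2,1\}})$.

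The version that constant actually needs, with $\gamma^2$ replaced by $\min\{\gamma^2,1\}$, is true. It requires the idea missing from Step~3: use the per-sample constraints jointly, not only through Step~2.

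Write $\nabla\ell_i=c_i(-W_*)+R_i$ with $R_i\perp W_*$. Then $c_i\ge\gamma a_i$ and $\|\nabla\ell_i\|\le\sqrt2a_i$ give $\|R_i\|\le\beta c_i$ with $\beta^2=\frac{2}{\gamma^2}-1\ge1$ for $\gamma\le1$. You then need $\|\sum_i\nabla\ell_i\|^2=(\sum_ic_i)^2+\|\sum_iR_i\|^2\ge\gamma^2\sum_i\|\nabla\ell_i\|^2$. This follows by splitting on whether the largest $\|R_i\|$ exceeds the sum of the others. If not, $2\sum_i\|R_i\|^2\le(\sum_i\|R_i\|)^2$ suffices. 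If so, use $\|\sum_iR_i\|\ge\|R_1\|-\sum_{i\ge2}\|R_i\|$.
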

\begin{proof}
    From the bias-variance decomposition, it holds 
    \begin{eqnarray}
        \Expep{\|g_t\|^2} &=& \|\nabla L(W_t)\|_F^2+\Expep{\|g_t-\nabla L(W_t)\|_F^2} \nonumber\\
            &=& \|\nabla L(W_t)\|_F^2+\frac1b\,\Var(\nabla \ell_i(W_t)\mid \mathcal F_t) \nonumber\\
            &\leq& \|\nabla L(W_t)\|_F^2+\frac1b\,\Expep{\|\nabla \ell_i(W_t)\|_F^2}\nonumber
        \end{eqnarray}
        as $g_t$ is the average of $b$ i.i.d. samples (conditionally on $\mathcal F_t$).
        Substituting the definition of $\Expep{\|\nabla \ell_i(W_t)\|_F^2}$ and using the inequality $\sum_{i=1}^n \|\nabla \ell_i(W_t)\|_F^2 \le \frac{n^2}{\gamma^2}\|\nabla L(W_t)\|_F^2,$ from \citet{nacson2019stochastic}, we obtain
        \begin{eqnarray}
            \Expep{\|g_t\|^2} \leq \|\nabla L(W_t)\|_F^2+\frac{1}{nb}\sum_{i=1}^n \|\nabla \ell_i(W_t)\|_F^2  &\leq& \left(1+\frac{n}{b\gamma^2}\right) \|\nabla L(W_t)\|_F^2 \nonumber
        \end{eqnarray}
\end{proof}
\vspace{-0.5cm}
\subsection{Variance Bound}
As a stochastic optimization algorithm, SGD is leveraging a stochastic mini-batch gradient at each iteration instead of the full-batch gradient. Controlling the noise induced by the stochastic oracles in the dynamics is of critical importance for analyzing the convergence. We can bound the variance of the stochastic mini-batches by utilizing the following lemma.
\begin{lemma}[Variance bound]\label{lemma:variance}
Let \(g(W) = \frac{1}{b}\sum_{i\in\mathcal{B}_t}\nabla\ell(f(x_i;W),y_i)\) denote the mini-batch gradient. Then, it holds that
\begin{equation}
    \Expe{\| g(W) - \nabla L(W)\|^2}\;\le\;\frac{2}{b}\, G(W),\label{eq:variance-bound_}
\end{equation}
where $G(W)
    := \frac{1}{n}
    \sum_{i=1}^n
    \Big[1 - \mathrm{softmax}(W x_i)_{y_i}\Big]$.
\end{lemma}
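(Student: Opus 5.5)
The mini-batch gradient $g(W)$ is an average of $b$ per-sample gradients $\nabla\ell_i(W)$, with indices drawn i.i.d.\ uniformly from $[n]$ (conditionally on $W$, as in the proof of Lemma~\ref{lemma: stochastic_grad_bound}). Each of them is an unbiased estimator of $\nabla L(W)$. So I would first reduce the mini-batch variance to a single-sample variance:
\[
\Expe{\|g(W)-\nabla L(W)\|_F^2}=\frac1b\,\Expe{\|\nabla\ell_i(W)-\nabla L(W)\|_F^2}.
\]
Next I would use the bias--variance identity: the variance of a random vector is at most its second moment about zero. This gives the bound
\[
\frac1b\cdot\frac1n\sum_{i=1}^n\|\nabla\ell_i(W)\|_F^2 .
\]

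The key step is to bound each squared per-sample gradient linearly in $1-p_i(y_i)$, rather than quadratically as in Lemma~\ref{lemma: gradient_bound}. From \eqref{eq:grad-softmax} and $\|x_i\|\le1$ we have $\|\nabla\ell_i(W)\|_F^2\le\|p_i-e_{y_i}\|_2^2$. The computation \eqref{eq:indiv_step2} then gives
\[
\|p_i-e_{y_i}\|_2^2\le 2\bigl(1-p_i(y_i)\bigr)^2 .
\]
Since $0\le 1-p_i(y_i)\le1$, the square is at most the first power, so $\|\nabla\ell_i(W)\|_F^2\le 2(1-p_i(y_i))$. Averaging over $i$ yields $\frac1n\sum_i\|\nabla\ell_i\|_F^2\le 2G(W)$, and combining with the first step gives $\frac{2}{b}G(W)$.

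The only delicate point is the sampling model. If batches are drawn without replacement, the variance picks up the finite-population factor $\frac{n-b}{n-1}\le1$. That only helps, so the bound still holds. I would state this briefly so the lemma covers both sampling schemes.

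Nothing else is an obstacle. The argument is a direct combination of the bias--variance decomposition with the softmax gradient bound already derived in Lemma~\ref{lemma: gradient_bound}.
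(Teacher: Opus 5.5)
Your proposal is correct and follows essentially the same route as the paper: reduce to $\frac{1}{b}$ times the single-sample variance, bound that variance by the second moment, and use $\|\nabla\ell_i(W)\|_F^2 \le \|p_i-e_{y_i}\|_2^2 \le 2(1-p_i(y_i))^2 \le 2(1-p_i(y_i))$ before averaging over $i$ to obtain $2G(W)$. Your remark that sampling without replacement only introduces the factor $\frac{n-b}{n-1}\le 1$ is a valid small addition; the paper does not spell it out.
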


\begin{proof}
We begin by decomposing the variance of the stochastic oracle. 
For uniform sampling of minibatch $B_t$ of size $b \geq 1$, it holds that
\begin{eqnarray}
\mathbb{E}\!\left[\|g_t - \nabla L(W_t)\|_F^2 \mid \mathcal{F}_t\right]
\;=\;
\frac{1}{b}\,\mathrm{Var}_i\!\big(\nabla \ell_i(W_t)\big)
\;\le\;
\frac{1}{b}\,\mathbb{E}_i\!\left[\|\nabla \ell_i(W_t)\|_F^2\right],
\label{eq:oracle_var_start}
\end{eqnarray}
where we used that $\mathrm{Var}(Z) \le \mathbb{E}[\|Z\|^2]$ for any random variable $Z$.
For each $i\in[n]$, by the definition of $\nabla \ell_i(W_t)$ we have
\begin{eqnarray}
\|\nabla \ell_i(W_t)\|_F^2 
&=& \|(p_i - e_{y_i})x_i^\top\|_F^2
\;=\;
\|p_i - e_{y_i}\|_2^2 \,\|x_i\|_2^2
\;\le\;
\|p_i - e_{y_i}\|_2^2, \label{eq:gradnorm_frob}
\end{eqnarray}
since $\|x_i\|_2 \le 1$. We next expand the norm, obtaining
\begin{eqnarray}
\|p_i - e_{y_i}\|_2^2
= (1 - p_i(y_i))^2 + \sum_{j\neq y_i} p_i(j)^2 
\le
(1 - p_i(y_i))^2 + \left(\sum_{j\neq y_i} p_i(j)\right)^2
\le
2(1 - p_i(y_i))^2, \label{eq:softmax_sq_bound}
\end{eqnarray}
where we used that and $\sum_{j\neq y_i}p_i(j)=1-p_i(y_i)$.
Thus, from \eqref{eq:gradnorm_frob}, \eqref{eq:softmax_sq_bound} we have
\begin{eqnarray}
\|\nabla \ell_i(W_t)\|_F^2 \le 2(1 - p_i(y_i))^2 \le 2(1 - p_i(y_i)).
\end{eqnarray}
Summing for $i\in[n]$ and dividing by $\frac{1}{n}$ gives
\begin{eqnarray}
\mathbb{E}_i[\|\nabla \ell_i(W_t)\|_F^2] \;\le\; 2\,G(W_t).
\label{eq:moment_bound}
\end{eqnarray}
Substituting \eqref{eq:moment_bound} into \eqref{eq:oracle_var_start}, we obtain the final 
\begin{eqnarray}
\mathbb{E}\!\left[\|g_t - \nabla L(W_t)\|_F^2 \mid \mathcal{F}_t\right] \leq \frac{2}{b} G(W_t).
\end{eqnarray}
\end{proof}

\subsection{Proofs for EoS Regime}
\begin{lemma}\label{prop:split_{in}gd}
Let $U=U_1+U_2$ with $U_1 \in \R^{d\times K}, U_2 = \frac{\eta}{\gamma} \left(1 + \frac{1}{b}\right) W^*$. Then, for all $t\geq 1$, it holds that
\begin{eqnarray}
    \frac{\Expe{\|W_t - U\|_F^2}}{2\eta t} + \frac{1}{t}\sum_{k=0}^{t-1} \Expe{L(W_k)} \leq L(U_1) + \frac{\|W_0 - U\|_F^2}{2\eta t} \label{eq:split_{in}gd1}
\end{eqnarray} 
\end{lemma}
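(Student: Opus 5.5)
The plan is the standard ``split comparator'' argument from the large-stepsize GD literature. We track the squared distance $\|W_t-U\|_F^2$ and use the part $U_2$ of the comparator to absorb the second-order (stepsize-squared) term. This works because of the perceptron inequality of Lemma~\ref{lemma: S1} together with the self-bounding of the stochastic gradient by $G$.

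\textbf{Step 1 (one-step expansion).} From \eqref{SGD},
\begin{equation*}
\|W_{t+1}-U\|_F^2 = \|W_t-U\|_F^2 - 2\eta\langle g_t, W_t-U_1\rangle_F + 2\eta\langle g_t, U_2\rangle_F + \eta^2\|g_t\|_F^2 .
\end{equation*}
Take the expectation conditional on $\filter_t$ and use $\Expep{g_t}=\nabla L(W_t)$.
\begin{itemize}[leftmargin=*]
\item For the first inner product, convexity of $L$ (cross-entropy composed with a linear map) gives $\langle \nabla L(W_t), W_t-U_1\rangle_F \ge L(W_t)-L(U_1)$.
\item For the second, Lemma~\ref{lemma: S1} with $U_2=\frac{\eta}{\gamma}(1+\frac1b)W^*$ gives $2\eta\langle\nabla L(W_t),U_2\rangle_F \le -2\eta^2(1+\frac1b)G(W_t)$.
\end{itemize}

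\textbf{Step 2 (the key cancellation).} We bound the second moment by the bias--variance decomposition:
\begin{equation*}
\Expep{\|g_t\|_F^2} = \|\nabla L(W_t)\|_F^2 + \Expep{\|g_t-\nabla L(W_t)\|_F^2}.
\end{equation*}
\begin{itemize}[leftmargin=*]
\item By Lemma~\ref{lemma: gradient_bound}, $\|\nabla L(W_t)\|_F^2\le 2G(W_t)^2 \le 2G(W_t)$, since $G\le 1$ (each $1-p_i(y_i)\in[0,1]$).
\item By Lemma~\ref{lemma:variance}, the variance term is at most $\frac{2}{b}G(W_t)$.
\end{itemize}
Hence $\eta^2\Expep{\|g_t\|_F^2}\le 2\eta^2(1+\frac1b)G(W_t)$, which is exactly cancelled by the $U_2$ term from Step 1. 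This cancellation is the step I expect to be the crux. It is why $U_2$ is scaled by $\eta(1+\frac1b)/\gamma$: both the bias and the variance contributions must be linear in $G$ with matching constants. The only slightly delicate point is the bound $G^2\le G$, which relies on $G\le1$.

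\textbf{Step 3 (telescoping).} We obtain
\begin{equation*}
\Expep{\|W_{t+1}-U\|_F^2} \le \|W_t-U\|_F^2 - 2\eta\bigl(L(W_t)-L(U_1)\bigr).
\end{equation*}
Taking total expectations and summing over $k=0,\dots,t-1$ gives
\begin{equation*}
\Expe{\|W_t-U\|_F^2} + 2\eta\sum_{k<t}\Expe{L(W_k)} \le \|W_0-U\|_F^2 + 2\eta t L(U_1).
\end{equation*}
Dividing by $2\eta t$ yields \eqref{eq:split_{in}gd1}.
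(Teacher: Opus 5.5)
Your proposal is correct and follows the paper's proof step for step. It uses the same one-step expansion, pairs the $U_2$ inner product with $\eta^2\Expep{\|g_t\|_F^2}$, and bounds the latter by bias--variance via Lemma~\ref{lemma: gradient_bound} (with $G^2\le G$) and Lemma~\ref{lemma:variance}, which exactly cancels against the perceptron bound of Lemma~\ref{lemma: S1}; convexity and telescoping then finish the argument. You also cite the right lemma for $\|\nabla L(W_t)\|_F^2\le 2G(W_t)$, a step the paper's write-up mislabels as Lemma~\ref{lemma: S1}.
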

\begin{proof}
   From the update rule \eqref{SGD}, we have that
\begin{eqnarray}
\|W_{t+1}-U\|_F^2 &=& \|W_t-U\|_F^2 + 2\eta \langle g_t, U-W_t\rangle_F + \eta^2 \|g_t\|_F^2 \nonumber \\
&=& \|W_t-U\|_F^2 + 2\eta \langle g_t, U_1-W_t\rangle_F + \eta^2 \left(\frac{2}{\eta} \langle g_t, U_2\rangle + \|g_t\|_F^2\right) \label{eq:telesc}
\end{eqnarray}
Taking expectation condition on the filtration $\filter_t$ and using the unbiasedness property of the stochastic oracles, we have
\begin{eqnarray}
    \Expep{\|W_{t+1}-U\|_F^2} &=& \|W_t-U\|_F^2 + 2\eta \langle \nabla L(W_t), U_1-W_t\rangle_F\nonumber \\
    && + \eta^2 \left(\frac{2}{\eta} \langle \nabla L(W_t), U_2\rangle + \Expep{\|g_t\|_F^2} \right) \label{eq: p_sgd_eq1a}
\end{eqnarray}
Using Lemmas~\ref{lemma: S1}, \ref{lemma:variance} with the selected $U_2 = \frac{\eta}{\gamma} \left(1 + \frac{1}{b}\right) W^*$, we have
\begin{eqnarray}
\frac{2}{\eta} \langle \nabla L(W_t), U_2\rangle + \Expep{\|g_t\|_F^2} &\stackrel{\eqref{eq:G_per_i}}{\leq}& - 2\left(1 + \frac{1}{b}\right) G(W_t) + \Expep{\|g_t\|_F^2}\nonumber \\
&=& - 2\left(1 + \frac{1}{b}\right) G(W_t) + \|\nabla L (W_t)\|_F^2 \nonumber \\
&& + \Expep{\|g_t - \nabla L(W_t)\|_F^2}\nonumber \\
&\stackrel{\text{Lemma}~\ref{lemma:variance}}{\leq}& - 2\left(1 + \frac{1}{b}\right) G(W_t) + \|\nabla L (W_t)\|_F^2 + \frac{2}{b} G(W_t)\nonumber \\
&\stackrel{\text{Lemma}~\ref{lemma: S1}}{\leq}& - 2\left(1 + \frac{1}{b}\right) G(W_t) + 2 \left(1 + \frac{1}{b}\right) G(W_t) \nonumber \\
&\leq& 0, \label{eq: p_lemma_eq2a}
\end{eqnarray}
Thus, substituting \eqref{eq: p_lemma_eq2a} into \eqref{eq: p_sgd_eq1a} gives
\begin{eqnarray}
   \Expep{\|W_{t+1}-U\|_F^2} &\leq& \|W_t-U\|_F^2 + 2\eta \langle\nabla L(W_t), U_1-W_t\rangle_F \nonumber
\end{eqnarray}
Using the fact that $L$ is convex, we have that
\begin{eqnarray}
    \Expep{\|W_{t+1}-U\|_F^2} &\leq& \|W_t-U\|_F^2 + 2\eta \left[L(U_1) - L(W_t)\right] \nonumber
\end{eqnarray}
Taking expectation again and using the tower law of expectation, we obtain
\begin{eqnarray}
    \Expe{\|W_{t+1}-U\|_F^2} &\leq& \Expe{\|W_t-U\|_F^2} + 2\eta \Expe{L(U_1) - L(W_t)}\nonumber
\end{eqnarray}
Summing for $k=0, ..., t-1$ and dividing by $2\eta t$, we have that
\begin{eqnarray}
    \frac{\Expe{\|W_t - U\|_F^2}}{2\eta t} + \frac{1}{t}\sum_{k=0}^{t-1} \Expe{L(W_k)} \leq L(U_1) + \frac{\|W_0 - U\|_F^2}{2\eta t}  \nonumber
\end{eqnarray}
\end{proof}

\subsubsection*{Proof of Theorem~\ref{thm:sgd_crossentropy}}
\label{app: thm:sgd_crossentropy}
\begin{proof}
    From Lemma~\ref{prop:split_{in}gd}, we have that for all $t\geq 1$ it holds
        \begin{eqnarray}
            \frac{\Expe{\|W_t - U\|_F^2}}{2\eta t} + \frac{1}{t}\sum_{k=0}^{t-1} \Expe{L(W_k)} \leq L(U_1) + \frac{\|U\|_F^2}{2\eta t}   \label{eq:split_{in}gd}
        \end{eqnarray}
    For $U_1=\alpha W^*$, $U_2 = \frac{\eta}{\gamma} \left(1 + \frac{1}{b}\right) W^*$, $\alpha=\frac{\ln(\gamma^2\eta t)}{\gamma}$, it holds that 
    \begin{eqnarray}
        L(U_1) &\leq& F(U_1) = (K-1)e^{-\alpha\gamma}=\frac{K-1}{\gamma^2\eta t}, \label{eq: p1_eq1} \\
        \|U\|^2_F &\leq& 2 \|U_1\|_F^2 + 2 \|U_2\|_F^2 \leq \frac{2\ln^2(\gamma^2\eta t)}{\gamma^2}+\frac{2\eta^2 (1+\frac{1}{b})^2}{\gamma^2}. \label{eq: p1_eq2}
    \end{eqnarray}
    Substituting \eqref{eq: p1_eq1}, \eqref{eq: p1_eq2} into~\eqref{eq:split_{in}gd} and rearranging the terms, we obtain
    \begin{eqnarray}
        \frac{1}{t}\sum_{k=0}^{t-1} \Expe{L(W_k)} \leq \frac{K-1 + \ln^2(\gamma^2\eta t)+\eta^2 (1+\frac{1}{b})^2}{\gamma^2\eta t} \nonumber
    \end{eqnarray}
\end{proof}
\subsection{Proofs for Stable Regime}
\label{app: proof_for_stable}
\begin{lemma}
\label{lem:ce_sgd_decrease}
If for some $t \geq 0$ the iterates of \eqref{SGD} satisfy
\[
L(W_t)\le \frac{1}{8\eta\left(1+\frac{n}{b\min\left\{\gamma^2, 1\right\}}\right)},
\]
then
\begin{align}
\Expep{L(W_{t+1})-L(W_t)}\le 0,
\label{eq:sgd_upper}
\end{align}
and hence $W_t\in \mathcal S$.
\end{lemma}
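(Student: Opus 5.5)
This follows the descent-lemma computation sketched in Section~\ref{sec: stability_stoch_algo}, made local so that the curvature along the segment $[W_t,W_{t+1}]$ is controlled by $L(W_t)$ rather than a global constant.

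The plan is to write
\[
L(W_{t+1})=L(W_t)-\eta\langle \nabla L(W_t),g_t\rangle_F+\int_0^1(1-s)\,\eta^2\, g_t^\top\nabla^2 L(W_t-s\eta g_t)\,g_t\,ds .
\]
We then bound the Hessian along the segment using Lemma~\ref{lemma: S3}, $\|\nabla^2 L(W)\|_{\mathrm{op}}\le 2L(W)$. This reduces the problem to bounding $L(W_t-s\eta g_t)$ in terms of $L(W_t)$.

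\textbf{Step 1: local self-boundedness.} Each $\ell_i$ is $1$-Lipschitz in the logits up to a factor 2, since $\|p_i-e_{y_i}\|\le\sqrt2$. Hence for any perturbation $\Delta$,
\[
\ell_i(W+\Delta)\le \ell_i(W)+\ldots,
\]
or more sharply, $\ell_i(W+\Delta)\le e^{2\|\Delta\|_F}\ell_i(W)$ via the logit shift of the log-sum-exp. Combined with Lemma~\ref{lemma: gradient_bound}, this gives $\eta\|g_t\|_F\le \eta\sqrt2\,\tfrac{n}{b}L(W_t)$. Under the hypothesis this is at most $\sqrt2/8$, since $L(W_t)\le b/(8\eta n)$ follows from the assumed bound because $\min\{\gamma^2,1\}\le1$. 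Consequently
\[
\sup_{s\in[0,1]}L(W_t-s\eta g_t)\le e^{\sqrt2/4}L(W_t)\le 4L(W_t).
\]
This yields the smoothness surrogate $8L(W_t)$ on the segment, which matches the constant quoted in the main text. Making this constant work cleanly is where I expect the main obstacle. The exponential shift argument needs care: the Hessian bound from Lemma~\ref{lemma: S3} is in terms of $1-p_i(y_i)$, and the shift of $\log p_i(y_i)$ must be controlled uniformly over $i$.

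\textbf{Step 2: conditional expectation.} Taking $\mathbb E[\cdot\mid\mathcal F_t]$ and using unbiasedness gives
\[
\mathbb E[L(W_{t+1})-L(W_t)\mid\mathcal F_t]\le-\eta\|\nabla L(W_t)\|_F^2+8\eta^2L(W_t)\,\mathbb E[\|g_t\|_F^2\mid\mathcal F_t].
\]

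\textbf{Step 3: second moment.} Apply Lemma~\ref{lemma: stochastic_grad_bound}:
\[
\mathbb E[\|g_t\|^2\mid\mathcal F_t]\le\Bigl(1+\tfrac{n}{b\gamma^2}\Bigr)\|\nabla L(W_t)\|^2\le\Bigl(1+\tfrac{n}{b\min\{\gamma^2,1\}}\Bigr)\|\nabla L(W_t)\|^2 .
\]
Writing $c=8\bigl(1+\tfrac{n}{b\min\{\gamma^2,1\}}\bigr)$, the drift is bounded by
\[
-\eta\|\nabla L(W_t)\|^2\bigl(1-\eta cL(W_t)\bigr),
\]
which is nonpositive exactly under the stated hypothesis $L(W_t)\le 1/(\eta c)=\tilde L$.

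Finally, membership $W_t\in\mathcal S$ is immediate from the definition of $\mathcal S$, because the hypothesis is precisely $L(W_t)\le\tilde L$.
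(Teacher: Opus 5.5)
Your proposal is correct and follows essentially the paper's route: a descent inequality whose curvature term is controlled by the loss (Hessian at most twice the loss, plus a bounded logit shift along the segment), followed by conditional expectation, Lemma~\ref{lemma: stochastic_grad_bound} and the threshold $L(W_t)\le 1/(\eta c)$. The differences are cosmetic: you expand the full $L$ with an integral remainder and Lemma~\ref{lemma: S3} instead of each $\ell_i$ with a Lagrange remainder, and your explicit bound $\eta\|g_t\|_F\le\sqrt2/8$ yields the factor $e^{\sqrt2/4}$ in place of the paper's $e^2$. The step you flagged as the obstacle closes in one line: each margin $z_{y_i}-z_j$ moves by at most $\sqrt2\|\Delta\|_F$, so $S_i=\sum_{j\neq y_i}e^{-(z_{y_i}-z_j)}$ grows by at most a factor $\lambda=e^{\sqrt2\|\Delta\|_F}$, and concavity of $\log(1+\cdot)$ gives $\log(1+\lambda S_i)\le\lambda\log(1+S_i)$ for $\lambda\ge 1$.
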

\begin{proof}
Recall that the minibatch gradient is $g_t=\frac1b\sum_{j\in \mathcal B_t}\nabla \ell_j(W_t)$.
Fix $i\in[n]$ and define the logit increment
\[
\Delta s_i:=(W_{t+1}-W_t)x_i=-\eta g_t x_i.
\]
Since $\|x_i\|_2\le 1$, we have
\begin{equation}
\label{eq:delta_si_bound}
\|\Delta s_i\|_2
\le \eta \|g_t\|_F \leq  \frac{\eta}{b} \sum_{j\in B_t}\|\nabla \ell_j(W_t)\|_2 \stackrel{\eqref{eq:individual_grad_bound}}{\leq} \frac{\sqrt{2}\eta n}{b} L(W_t).
\end{equation}
where we have used \eqref{eq:individual_grad_bound} from Lemma~\ref{lemma: gradient_bound}.
Let $\phi_i(\alpha):=\ell_i(W_t-\alpha \eta g_t)$ for $\alpha\in[0,1]$.
From Taylor's theorem, there exists $\theta_i\in(0,1)$ such that
\begin{align}
\ell_i(W_{t+1})
&=
\ell_i(W_t)
-\eta \langle \nabla \ell_i(W_t), g_t\rangle
+\frac12 (\Delta s_i)^\top H_i(\theta_i)(\Delta s_i),
\label{eq:taylor_ce}
\end{align}
where $H_i(\theta_i)
=
\text{diag}(p_i^\theta)-p_i^\theta (p_i^\theta)^\top,
$ and $p_i^\theta:=\text{softmax}\!\big((W_t-\theta_i\eta g_t)x_i\big)$.
Since $H_i(\theta_i)$ is positive semidefinite, we have that
\[
(\Delta s_i)^\top H_i(\theta_i)(\Delta s_i)
\le \|H_i(\theta_i)\|_{\text{op}}\,\|\Delta s_i\|_2^2.
\]
Hence, from \eqref{eq:delta_si_bound} we obtain
\begin{align}
\ell_i(W_{t+1})
\le
\ell_i(W_t)
-\eta \langle \nabla \ell_i(W_t), g_t\rangle
+\frac{\eta^2}{2}\|H_i(\theta_i)\|_{\text{op}}\,\|g_t\|_F^2.
\label{eq:one_sample_descent}
\end{align}
For the softmax cross-entropy loss, if $u\in\mathbb R^K$ and $p=\text{softmax}(u)$ it holds that
\[
\text{diag}(p)-pp^\top \succeq 0
\qquad\text{and}\qquad
\|\text{diag}(p)-pp^\top\|_{\text{op}}\le \tr(\text{diag}(p)-pp^\top)=1-\|p\|_2^2.
\]
Therefore, we have that
\[
\|H_i(\theta_i)\|_{\text{op}}
\le 1-\|p_i^\theta\|_2^2
\le 2\bigl(1-p_i^\theta(y_i)\bigr).
\]
Since $-\ln a \ge 1-a,\forall a\in(0,1]$, we have that $1-p_i^\theta(y_i)\le \ell_i(W_t-\theta_i\eta g_t)$ and thus
\[
\|H_i(\theta_i)\|_{\text{op}}
\le 2\,\ell_i(W_t-\theta_i\eta g_t).
\]
In the stable-regime, the loss is small enough so that the logits move by at most a constant amount. In particular, it holds that
$\|\Delta s_i\|_\infty\le 1$. Then, the softmax probabilities along the segment between $W_t$ and $W_{t+1}$ change by at most an absolute constant factor, and hence
\[
\ell_i(W_t-\theta_i\eta g_t)\le e^2 \ell_i(W_t).
\]
Therefore, we have that
\[
\|H_i(\theta_i)\|_{\text{op}}\le 2e^2 \ell_i(W_t)\le 16\,\ell_i(W_t),
\]
where we used $e^2<8$. Substituting into \eqref{eq:one_sample_descent}, we get
\begin{align}
\ell_i(W_{t+1})
\le
\ell_i(W_t)
-\eta \langle \nabla \ell_i(W_t), g_t\rangle
+8\eta^2 \ell_i(W_t)\|g_t\|_F^2.
\label{eq:sample_ce_final}
\end{align}
Summing \eqref{eq:sample_ce_final} over $i=1,\dots,n$ and dividing by $n$, we have
\begin{align}
L(W_{t+1})-L(W_t)
\le
-\eta \langle \nabla L(W_t), g_t\rangle
+8\eta^2 L(W_t)\|g_t\|_F^2.
\label{eq:loss_descent_batch}
\end{align}
Taking conditional expectation with respect to $\mathcal F_t$ and using
$\Expep{g_t}=\nabla L(W_t)$, we obtain
\begin{align}
\Expep{L(W_{t+1})-L(W_t)}
\le
-\eta \|\nabla L(W_t)\|_F^2
+8\eta^2 L(W_t)\Expep{\|g_t\|_F^2}.
\label{eq:cond_descent_main}
\end{align}
Using Lemma~\ref{lemma: stochastic_grad_bound}, we can bound the last term by $\Expep{\|g_t\|^2} \leq \left(1+\frac{n}{b\gamma^2}\right) \|\nabla L(W_t)\|^2$ and thus have
\begin{align}
\Expep{L(W_{t+1})-L(W_t)}
&\le
-\eta
\left[
1-8\eta\left(1+\frac{n}{b\gamma^2}\right)L(W_t)
\right]
\|\nabla L(W_t)\|_F^2.
\label{eq:final_conditional_descent}
\end{align}
For $L(W_t)\le \frac{1}{8\eta\left(1+\frac{n}{b\gamma^2}\right)}$, we have that
\[
\Expep{L(W_{t+1})-L(W_t)}\le 0.
\]
Taking full expectation and applying the tower property of expectation, we get 
\[
\Expe{L(W_{t+1})-L(W_t)}\le 0.
\]
Combining the two requirements for the loss $L(W_t) \leq \min\left\{\frac{1}{8\eta\left(1+\frac{n}{b\gamma^2}\right)}, \frac{b}{\sqrt{2}n\eta}\right\}$ it suffices to have $L(W_t) \leq \frac{1}{8\eta\left(1+\frac{n}{b\min\left\{\gamma^2, 1\right\}}\right)}$.
\end{proof}

\begin{lemma} \label{lemma: sgd_stable_phase}
If the iterates satisfy $W_k \in \mathcal{S}, \forall k \in[t_{1}, t)$, then it holds that
\begin{eqnarray}
    \Expe{L(W_{t})} &\leq& \frac{8 F(W_{t_{1}}) + 4\ln^2(\gamma^2\eta (t-{t_{1}}))}{7\gamma^2\eta (t-{t_{1}})}\label{eq:stable_phase-avg}
\end{eqnarray}
where $F(W)
        =
        \frac{1}{n}
        \sum_{i=1}^n
        \sum_{j \ne y_i}
        e^{-\langle W(y_i)-W(j),x_i\rangle}.$
\end{lemma}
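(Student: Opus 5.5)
We combine a reference-point (mirror-descent-type) argument started at time $t_1$ with the monotonicity in expectation supplied by Lemma~\ref{lem:ce_sgd_decrease}. The former produces an \emph{averaged} bound over $[t_1,t)$. The latter turns that averaged bound into a last-iterate bound.

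\textbf{Step 1: averaged bound from $t_1$.} Fix a comparator $U=\alpha W^*$ with $\alpha=\ln(\gamma^2\eta (t-t_1))/\gamma$. Expand $\|W_{k+1}-U\|_F^2$ for $k\in[t_1,t)$ and take conditional expectation. The resulting recursion is
\begin{equation*}
\Expep{\|W_{k+1}-U\|_F^2}\le \|W_k-U\|_F^2+2\eta\bigl(L(U)-L(W_k)\bigr)+\eta^2\Expep{\|g_k\|_F^2}.
\end{equation*}
Inside $\mathcal S$ the noise term can be absorbed. By Lemma~\ref{lemma: stochastic_grad_bound} and Lemma~\ref{lemma: gradient_bound},
\begin{equation*}
\eta^2\Expep{\|g_k\|_F^2}\le \eta^2\Bigl(1+\tfrac{n}{b\gamma^2}\Bigr)\cdot 2L(W_k)^2\le \tfrac{\eta}{4}L(W_k)\cdot 2\eta\Bigl(1+\tfrac{n}{b\gamma^2}\Bigr)\cdot 4L(W_k).
\end{equation*}
Since $L(W_k)\le \tilde L=1/(\eta c)$, this is at most $\tfrac{\eta}{4}L(W_k)$. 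This absorption is the source of the constant $7/8$ multiplying the average loss. The conclusion of this step is
\begin{equation*}
\tfrac{7}{4}\eta\sum_{k=t_1}^{t-1}\Expe{L(W_k)}\le \Expe{\|W_{t_1}-U\|_F^2}+2\eta(t-t_1)L(U).
\end{equation*}
Here $L(U)\le (K-1)/(\gamma^2\eta(t-t_1))$ by the choice of $\alpha$.

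\textbf{Step 2: controlling the initial distance by $F(W_{t_1})$.} This is the main obstacle, because the bound must involve $F(W_{t_1})$ rather than $\|W_{t_1}\|$. I would change the potential. Instead of $\|W_k-U\|^2$ with a fixed $U$, I would use the shifted comparator $U_k=W_{t_1}+\alpha W^*$, which amounts to restarting the analysis at $W_{t_1}$. The initial distance is then $\alpha^2=\ln^2(\cdot)/\gamma^2$, and $L(U)$ is replaced by $L(W_{t_1}+\alpha W^*)$.

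For the latter, each pairwise exponent shifts by at least $\alpha\gamma$ along $W^*$. Using $L\le F$ from \eqref{eq: L_lower_upper_bound},
\begin{equation*}
L(W_{t_1}+\alpha W^*)\le F(W_{t_1})e^{-\alpha\gamma}=\frac{F(W_{t_1})}{\gamma^2\eta (t-t_1)}.
\end{equation*}
This bound also absorbs the $K-1$ term. Combining with Step 1 gives
\begin{equation*}
\frac{1}{t-t_1}\sum_{k=t_1}^{t-1}\Expe{L(W_k)}\le \frac{8F(W_{t_1})+4\ln^2(\gamma^2\eta(t-t_1))}{7\gamma^2\eta(t-t_1)},
\end{equation*}
after matching constants (the factor $4/7$ comes from dividing by $7/4$). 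The factor $8$ versus $4$ may require using $2\eta L(U)$ together with a factor of two from the expansion; I would tune this when writing the proof.

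\textbf{Step 3: last iterate.} Lemma~\ref{lem:ce_sgd_decrease} gives $\Expe{L(W_{k+1})}\le\Expe{L(W_k)}$ for every $k\in[t_1,t)$, since each $W_k\in\mathcal S$. Hence $\Expe{L(W_t)}$ is at most the average of $\Expe{L(W_k)}$ over $[t_1,t)$, and the claimed bound follows.

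The delicate point is that $W_k\in\mathcal S$ is a random event. The hypothesis is therefore read as holding almost surely along the path, which justifies applying the conditional descent at every step.
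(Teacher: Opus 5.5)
Your proposal is correct and follows essentially the same route as the paper: the distance recursion with the noise term absorbed into $\tfrac{\eta}{4}L(W_k)$ using $L(W_k)\le\tilde L$, the shifted comparator $U=W_{t_1}+\alpha W^*$ so that $\|W_{t_1}-U\|_F^2=\alpha^2$ and $L(U)\le F(W_{t_1})/(\gamma^2\eta(t-t_1))$, and the monotonicity of $\Expe{L(W_k)}$ from Lemma~\ref{lem:ce_sgd_decrease} to pass from the average to the last iterate. No constant tuning is needed: dividing $\tfrac{7}{4}\eta\sum_{k=t_1}^{t-1}\Expe{L(W_k)}\le \alpha^2+2\eta(t-t_1)L(U)$ by $\tfrac{7}{4}\eta(t-t_1)$ already gives exactly $\frac{8F(W_{t_1})+4\ln^2(\gamma^2\eta(t-t_1))}{7\gamma^2\eta(t-t_1)}$.
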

\begin{proof}
From the update rule of \eqref{SGD} we have that
\begin{eqnarray}
\|W_{t+1}-U\|_F^2 &=& \|W_t-U\|_F^2 - 2\eta \langle g(W_t), W_t - U\rangle_F + \eta^2 \|g(W_t)\|_F^2. \nonumber
\end{eqnarray}
Taking expectation condition on the filtration $\filter_t$ and using the unbiased property $\Expep{g(W_{t})} = \nabla L(W_t)$, we have
\begin{eqnarray}
    \Expep{\|W_{t+1}-U\|_F^2} &=& \|W_t-U\|_F^2 - 2\eta \langle \nabla L(W_t), W_t - U\rangle_F + \eta^2 \Expep{\|g(W_t)\|_F^2} \label{eq:one-step}
\end{eqnarray}
Since $L$ is convex, we get 
\begin{eqnarray}
&& L(U) \geq L(W_t) + \langle \nabla L(W_t),\, U - W_t\rangle_F \nonumber \\
&\iff&  - \langle \nabla L(W_t), W_t - U\rangle_F \leq L(U) - L(W_t) \label{eq:convexity}
\end{eqnarray}
Substituting \eqref{eq:convexity} into \eqref{eq:one-step}, we obtain
\begin{eqnarray}
\|W_{t+1}-U\|_F^2
&\le& \|W_t-U\|_F^2 + 2\eta\big(L(U)-L(W_t)\big) + \eta^2 \Expep{\|g(W_t)\|_F^2} \nonumber
\end{eqnarray}
Taking expectation on both sides and using the tower law of expectation, we get
\begin{eqnarray}
    \Expe{\|W_{t+1}-U\|_F^2} &\leq& \Expe{\|W_t-U\|_F^2} + 2\eta\Expe{L(U)-L(W_t)} +\eta^2 \Expe{\|g(W_t)\|_F^2}. \label{eq:key}
\end{eqnarray}
Using the bias-variance decomposition and Lemma~\ref{lemma:variance}, we have that
\begin{eqnarray}
    \eta^2 \Expep{\|g(W_t)\|_F^2} &=& \eta^2 \|\nabla L(W_t)\|_F^2 + \eta^2 \Expep{\|g(W_t) - \nabla L(W_t)\|_F^2} \nonumber \\
    &= & \eta^2 \|\nabla L(W_t)\|_F^2+\frac{\eta^2}{b}\,\Var(\nabla \ell_i(W_t)\mid \mathcal F_t) \nonumber\\
    &\leq& \eta^2 \|\nabla L(W_t)\|_F^2+\frac{\eta^2}{b}\,\Expep{\|\nabla \ell_i(W_t)\|_F^2} \nonumber \\
    &\stackrel{}{\leq}& \eta^2\|\nabla L(W_t)\|_F^2 + \frac{\eta^2}{nb} \sum_{i=1}^n \|\nabla \ell_i(W_t)\|_F^2\nonumber
\end{eqnarray}
Using the fact that $\sum_{i=1}^n \|\nabla \ell_i(W_t)\|_F^2 \le \frac{n^2}{\gamma^2}\|\nabla L(W_t)\|_F^2,$ we have
\begin{eqnarray}
    \eta^2 \Expep{\|g(W_t)\|_F^2} &\stackrel{}{\leq}&  \eta^2\left(1 + \frac{n}{b\gamma^2}\right)  \|\nabla L(W_t)\|_F^2\nonumber \\
    &\stackrel{\text{Lemma}~\ref{lemma: gradient_bound}}{\leq}&  2\eta^2\left(1 + \frac{n}{b\gamma^2}\right) L(W_t)^2\nonumber \\
    &\stackrel{}{\leq}& \frac{\eta}{4} L(W_t). \label{eq:stable-grad}
\end{eqnarray}
where at the last step we have used the fact that $L(W_t) \leq \frac{1}{4\eta C}$.
Thus, \eqref{eq:key} becomes
\begin{eqnarray}
    \Expep{\|W_{t+1}-U\|_F^2} &\leq& \|W_t-U\|_F^2 + 2\eta [L(U)-L(W_t)] + \frac{\eta}{4} L(W_t) \nonumber \\
    &\stackrel{}{\leq}& \|W_t-U\|_F^2 +2 \eta L(U)-\frac{7\eta}{4} L(W_t) \label{eq:simplified}
\end{eqnarray}
Taking expectation again, using the tower law of expectation and rearranging the terms, we obtain
\begin{eqnarray}
    \Expe{L(W_t)} \leq \frac{8}{7} L(U) + \frac{4 \Expe{\|W_t-U\|_F^2 - \|W_{t+1}-U\|_F^2}}{7\eta}. \label{eq:one-line-usable}
\end{eqnarray}
Summing over $k=t_{1},\ldots,t-1$ and dividing by $(t-t_{1})$, we have that:
\begin{eqnarray}
\frac{1}{t-t_{1}}\sum_{k=t_{1}}^{t-1} \Expe{L(W_k)}
&\leq& \frac{8}{7}L(U) + \frac{4}{7\eta\,(t-t_{1})}\sum_{k=t_{1}}^{t-1}\big(\|W_k-U\|_F^2 - \|W_{k+1}-U\|_F^2\big) \nonumber\\
&=& \frac{8}{7}L(U) + \frac{4\left(\|W_{t_{1}}-U\|_F^2 - \|W_t-U\|_F^2\right)}{7\eta\,(t-t_{1})} \label{eq:avg-pre-split}
\end{eqnarray}
Letting $U = W_{t_{1}} + U_1$ with $U_1 =\alpha W^*$ and $\alpha=\frac{\ln(\gamma^2\eta (t-{t_{1}}))}{\gamma}$ we have that
\begin{eqnarray}
    \|W_{t_{1}}-U\|_F^2 = \|U_1\|_F^2 = \frac{\ln^2(\gamma^2\eta (t-{t_{1}}))}{\gamma^2} 
    \label{eq_proof_stable_1}
\end{eqnarray}
and 
\begin{eqnarray}
    L(U) &\leq& F(U) = \frac{1}{n} \sum_{i=1}^n e^{-\langle U, x\rangle_F} \nonumber \\
    &=& \frac{1}{n} \sum_{i=1}^n e^{-\langle U_1, X\rangle_F} e^{-\langle W_s, X\rangle_F} \nonumber \\
    &=& \frac{1}{n} \sum_{i=1}^n  \frac{e^{-\langle W_s, X\rangle_F}}{\gamma^2\eta (t-{t_{1}})} \nonumber \\
    &=& \frac{F(W_s)}{\gamma^2\eta (t-{t_{1}})} \label{eq_proof_stable_2}
\end{eqnarray}
Substituting \eqref{eq_proof_stable_1}, \eqref{eq_proof_stable_2} into \eqref{eq:avg-pre-split}, we obtain
\begin{eqnarray}
    \min_{t_{1} \leq k \leq t-1} \Expe{L(W_k)} &\leq& \frac{8 F(W_{t_{1}}) + 4\ln^2(\gamma^2\eta (t-{t_{1}}))}{7\gamma^2\eta (t-{t_{1}})} \label{eq:proof_stable_2}
\end{eqnarray}
Applying Lemma~\ref{lem:ce_sgd_decrease} for the stable region $\mathcal{S}_{[{t_{1}}, t-1]}$ we have that 
\begin{eqnarray}
    \Expe{L(W_t)} \leq \Expe{L(W_{t-1})} \leq ... \leq \Expe{L(W_{t_{1}})} \nonumber 
\end{eqnarray}
and thus from \eqref{eq:proof_stable_2} we have that
\begin{eqnarray}
    \Expe{L(W_{t})} &\leq& \min_{t_{1} \leq k \leq t-1} \Expe{L(W_k)} \leq \frac{8 F(W_{t_{1}}) + 4\ln^2(\gamma^2\eta (t-{t_{1}}))}{7\gamma^2\eta (t-{t_{1}})} \nonumber
\end{eqnarray}
\end{proof}

\begin{lemma}\label{lemma:phase-transition}
Let $W_0 = \mathbf 0$. There exists $t_{\mathrm{in}} \le t_{\max}(\delta)$ such
that with probability at least $1 - \delta$, we have that
$L(W_{t_{\mathrm{in}}}) \le \tilde L$, where
\[
  t_{\max}(\delta) \;\ge\; \frac{1}{\gamma^{2}}\,\max\!\left\{
   \frac{8\bigl(K - 1 + 2\eta(2 + \tfrac{1}{b})\bigr)}{\eta\,\delta\,\tilde L},
   \;\;
   \frac{32}{\eta\,\delta\,\tilde L}\,\ln\!\Bigl(\frac{32}{\eta\,\delta\,\tilde L}\Bigr)
   \right\}.
\]
\end{lemma}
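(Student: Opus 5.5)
We combine the average-loss bound of Theorem~\ref{thm:sgd_crossentropy} with Markov's inequality. By Lemma~\ref{prop:split_{in}gd} with $U_1=\alpha W^*$, $\alpha=\ln(\gamma^2\eta T)/\gamma$, and $U_2=\frac{\eta}{\gamma}(1+\frac1b)W^*$, for every horizon $T\ge 1$ we have
\[
\frac1T\sum_{k=0}^{T-1}\Expe{L(W_k)}\;\le\;\frac{K-1+\ln^2(\gamma^2\eta T)+\eta^2(1+\frac1b)^2}{\gamma^2\eta T}=:B(T).
\]
Set $M_T:=\min_{0\le k\le T-1}L(W_k)$. This is a nonnegative random variable with $\Expe{M_T}\le \frac1T\sum_k\Expe{L(W_k)}\le B(T)$. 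By Markov's inequality, $\prob(M_T>\tilde L)\le B(T)/\tilde L$. Hence if $T$ is chosen so that $B(T)\le \delta\tilde L$, then with probability at least $1-\delta$ some index $k\le T-1$ satisfies $L(W_k)\le\tilde L$. We let $t_{\mathrm{in}}$ be the first such index; it is a stopping time and is at most $t_{\max}(\delta):=T$ on this event.

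It remains to show that any $T$ exceeding the stated threshold gives $B(T)\le\delta\tilde L$. We split the numerator into two parts and require each to be at most half of $\delta\tilde L\gamma^2\eta T$.

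\emph{The constant part.} We need
\[
\frac{K-1+\eta^2(1+\frac1b)^2}{\gamma^2\eta T}\le\frac{\delta\tilde L}{2}.
\]
This is where the term $K-1+2\eta(2+\frac1b)$ enters. If the $\eta^2$ term cannot be absorbed directly, we instead run the bound through the $G$-potential route of Lemma~\ref{lemma: average_G_bound}, which produces terms linear in $\eta$, together with $L\le F$. I expect to have to adjust the constant $8$ to match whichever route is used.

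\emph{The logarithmic part.} We need
\[
\frac{\ln^2(\gamma^2\eta T)}{\gamma^2\eta T}\le\frac{\delta\tilde L}{2}.
\]
Write $x=\gamma^2\eta T$ and $a=\delta\tilde L/2$. We use the elementary fact that $x\ge \frac{c}{a}\ln\frac{c}{a}$, with a suitable absolute constant $c$, implies $\ln x/\sqrt{x}\le\sqrt a$. One can also bound $\ln^2 x\le 4\sqrt{x}$ to get a cleaner sufficient condition, and the factor $32\ln(32/(\eta\delta\tilde L))$ comes from this inversion.

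\textbf{Main obstacle.} The delicate step is the log-inversion bookkeeping. The stated threshold contains $\ln(\cdot)$ rather than $\ln^2(\cdot)$, so we must check that $x\ge\frac{32}{\eta\delta\tilde L}\ln\frac{32}{\eta\delta\tilde L}$, combined with the $1/\gamma^2$ scaling, really does control $\ln^2 x/x$. The standard lemma is: if $x\ge 2y\ln y$ with $y\ge e$, then $\ln x\le 2\ln y$, and so $\ln x/x\le 1/y$. Applied in this form it controls $\ln x/x$, so an extra factor of $\ln x$ remains. We will absorb it by taking $y$ larger by a constant factor, and we will also handle the regime where $\gamma\ge1$ or $\eta$ is small, in which the logarithm may be negative or small. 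We treat that regime trivially using $\ln^2 x\ge0$ and the monotonicity of $B$ for large $T$.
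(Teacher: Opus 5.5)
\textbf{There is a genuine gap.} Your primary route (Markov on $\min_k L(W_k)$, fed by the average-loss bound of Theorem~\ref{thm:sgd_crossentropy}) cannot yield the stated $t_{\max}(\delta)$, and neither of your patches repairs it.

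\emph{The constant part.} The product $\eta\tilde L=\bigl(8(1+\frac{n}{b\min\{\gamma^2,1\}})\bigr)^{-1}$ does not depend on $\eta$. So the stated threshold equals $\frac{1}{\gamma^2\delta\,\eta\tilde L}\max\{8(K-1+2\eta(2+\frac1b)),\,32\ln\frac{32}{\eta\delta\tilde L}\}$, which grows only linearly in $\eta$. But $\frac{\eta^2(1+\frac1b)^2}{\gamma^2\eta T}\le\frac{\delta\tilde L}{2}$ requires $T\ge\frac{2\eta^2(1+\frac1b)^2}{\gamma^2\delta\,\eta\tilde L}$, which is quadratic in $\eta$. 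Hence for large $\eta$, the regime the lemma is about, the constant part fails however you split the budget. The constant $8$ belongs to the statement, so you cannot adjust it.

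\emph{The logarithmic part.} Take $C=\frac{32}{\eta\delta\tilde L}$ and $x=\gamma^2\eta T=\eta C\ln C$ at the threshold. For $\eta\ge1$ you get $\frac{\ln^2x}{x}\big/\frac{\delta\tilde L}{2}=\frac{\ln^2x}{16\ln C}\ge\frac{\ln C}{16}$, which exceeds $1$ once $\delta$ is small. The missing factor grows like $\ln\frac{1}{\delta\tilde L}$, so no constant enlargement of $y$ absorbs it. Your ``elementary fact'' ($x\ge\frac{c}{a}\ln\frac{c}{a}\Rightarrow \ln x/\sqrt x\le\sqrt a$ with absolute $c$) is also false for small $a$, since that conclusion needs $x\gtrsim\frac1a\ln^2\frac1a$.

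\emph{What the paper does instead.} The threshold is tailored to the $G$-route, which is the paper's proof, and the decisive step of that route is missing from your sketch. Lemma~\ref{lemma: average_G_bound} bounds $G$ through $\mathbb E\|W_t\|_F/(\eta\gamma t)$ instead of $\|U\|_F^2/(\eta t)$. Its $\eta$-term is therefore $\eta$-free after division, and only a first power of $\ln(\gamma^2\eta t)$ appears. The paper then:
\begin{itemize}
\item relaxes this bound to $\frac{2(K-1+2\ln(\gamma^2\eta t)+2\eta(1+\frac1b))}{\eta\gamma^2 t}$;
\item uses $\ln\eta\le\eta$, which produces $2\eta(2+\frac1b)$ and a logarithm of $\gamma^2 t$ rather than $\gamma^2\eta t$;
\item bounds each half by $\frac{\delta\tilde L}{4}$, the log half via $y\ge 2C'\ln C'\Rightarrow\frac{\ln y}{y}\le\frac{1}{C'}$ with $C'=\frac{16}{\eta\delta\tilde L}$;
\item applies Markov to $G$, not $L$, at level $\tilde L/2$, at a \emph{deterministic} index $t_{\mathrm{in}}$ with $\mathbb E[G(W_{t_{\mathrm{in}}})]\le\frac{\delta\tilde L}{2}$.
\end{itemize}

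\emph{The missing conversion from $G$ to $L$.} Your ``together with $L\le F$'' points the wrong way, since $G\le L\le F$. What you need is a pointwise reverse bound on the Markov event. If $nG(W)\le\frac12$, then every $p_i(y_i)\ge\frac12$, so $F(W)=\frac1n\sum_i\sum_{j\ne y_i}\frac{p_i(j)}{p_i(y_i)}\le 2G(W)$, and hence $L\le F\le 2G\le\tilde L$. This step needs $n\tilde L\le 1$. The paper asserts this without derivation for the linear model; for the two-layer model it is built into $\tilde L_{NN}$. Without it you would have to replace $\tilde L$ by $\min\{\tilde L,\frac1n\}$ in the threshold.

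\emph{A smaller point.} Your stopping time only gives ``with probability $1-\delta$ some $k\le T-1$ works''. The statement fixes $t_{\mathrm{in}}$ before the probability. Taking $t_{\mathrm{in}}$ to be a minimizer of $\mathbb E[G(W_k)]$ over $k<t_{\max}$ gives the stronger form.
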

\begin{proof}
From Lemma~\ref{lemma: average_G_bound} applied at $t = t_{\max}(\delta)$, we have that
\begin{equation}
  \frac{1}{t_{\max}}\sum_{k=0}^{t_{\max} - 1}\mathbb E[G(W_k)]
   \;\le\; \frac{2\bigl(K - 1 + 2\ln(\gamma^{2}\eta t_{\max}) + 2\eta(1 + \tfrac{1}{b})\bigr)}{\eta\,\gamma^{2}\, t_{\max}}. \label{eq:lin-mc-Gbar-0}
\end{equation}
Using inequality $\ln(\gamma^{2}\eta t_{\max}) = \ln(\gamma^{2} t_{\max}) + \ln(\eta) \le \ln(\gamma^{2} t_{\max}) + \eta$,
we obtain
\begin{equation}
  \frac{1}{t_{\max}}\sum_{k=0}^{t_{\max} - 1}\mathbb E[G(W_k)]
   \;\le\; \frac{2\bigl(K - 1 + 2\ln(\gamma^{2} t_{\max}) + 2\eta(2 + \tfrac{1}{b})\bigr)}{\eta\,\gamma^{2}\, t_{\max}}. \label{eq:lin-mc-Gbar-1}
\end{equation}
For $\delta \in (0, 1)$, we select $t_{\max}$ so that the right-hand side of~\eqref{eq:lin-mc-Gbar-1} is at most $\delta\,\tilde L/2$.
We, next, verify the formula for $t_{\max}$:
\begin{itemize}
\item For $t \geq \dfrac{8\bigl(K - 1 + 2\eta(2 + \tfrac{1}{b})\bigr)}{\eta\,\delta\,\tilde L\,\gamma^{2}}$,
it holds that
$\dfrac{2\bigl(K - 1 + 2\eta(2 + \tfrac{1}{b})\bigr)}{\eta\,\gamma^{2}\, t}
\;\le\; \dfrac{\delta\,\tilde L}{4}$.
\item For $t \geq \dfrac{32}{\eta\,\delta\,\tilde L\,\gamma^{2}}\,\ln\!\Bigl(\dfrac{32}{\eta\,\delta\,\tilde L}\Bigr)$,
it holds that
$\gamma^{2}\, t \;\ge\; \dfrac{32}{\eta\,\delta\,\tilde L}\,\ln\!\Bigl(\dfrac{32}{\eta\,\delta\,\tilde L}\Bigr)$,
which is a sufficient condition (see Lemma~G.5 in \citet{cai2024large}) for
$\dfrac{4\,\ln(\gamma^{2}\, t)}{\eta\,\gamma^{2}\, t}
\;\le\; \dfrac{\delta\,\tilde L}{4}$.
\end{itemize}
Thus, it suffices to select $t_{\max}(\delta)$ such that
\begin{equation}
  t_{\max}(\delta) \;\ge\; \frac{1}{\gamma^{2}}\,\max\!\left\{
   \frac{8\bigl(K - 1 + 2\eta(2 + \tfrac{1}{b})\bigr)}{\eta\,\delta\,\tilde L},
   \;\;
   \frac{32}{\eta\,\delta\,\tilde L}\,\ln\!\Bigl(\frac{32}{\eta\,\delta\,\tilde L}\Bigr)
   \right\}. \label{eq:lin-mc-tmax-condition}
\end{equation}
Thus, we have that
\begin{equation}
  \frac{1}{t_{\max}}\sum_{k=0}^{t_{\max} - 1}\mathbb E[G(W_k)]
   \;\le\; \frac{\delta\,\tilde L}{2}. \label{eq:lin-mc-Gbar-final}
\end{equation}
Hence, there exists $t_{\mathrm{in}} \le t_{\max}(\delta)$ such that
\begin{equation}
  \mathbb E[G(W_{t_{\mathrm{in}}})] \;\le\; \frac{\delta\,\tilde L}{2}. \label{eq:lin-mc-Gtin}
\end{equation}
Let the event
$\mathcal E_{\mathrm{in}} := \{G(W_{t_{\mathrm{in}}}) \le \tilde L/2\}$.
Since $G(W_{t_{\mathrm{in}}}) \ge 0$, using Markov's inequality, we get
\begin{equation}
  \mathbb P(\mathcal E_{\mathrm{in}}^{c})
   \;=\; \mathbb P\bigl(G(W_{t_{\mathrm{in}}}) > \tilde L/2\bigr)
   \;\le\; \frac{2\,\mathbb E[G(W_{t_{\mathrm{in}}})]}{\tilde L}
   \;\le\; \delta, \label{eq:lin-mc-Markov}
\end{equation}
and thus $\mathbb P(\mathcal E_{\mathrm{in}}) \ge 1 - \delta$. Conditioning on the event $\mathcal E_{\mathrm{in}}$ and using
$G(W_{t_{\mathrm{in}}}) \le \frac{\tilde L}{2} \le \frac{1}{4n} \le \frac{1}{2n}$, every
term in the sum
$G(W_{t_{\mathrm{in}}}) = \tfrac{1}{n}\sum_{i=1}^{n}(1 - p_i(W_{t_{\mathrm{in}}})_{y_i})$
satisfies
$1 - p_i(W_{t_{\mathrm{in}}})_{y_i} \le 1/2$ and thus $p_i(W_{t_{\mathrm{in}}})_{y_i} \ge 1/2$.
For every $j \neq y_i$, thus, it holds that
\begin{equation}
  z_i(W_{t_{\mathrm{in}}})_{y_i} - z_i(W_{t_{\mathrm{in}}})_j
   \;=\; \ln\!\frac{p_i(W_{t_{\mathrm{in}}})_{y_i}}{p_i(W_{t_{\mathrm{in}}})_j} \;\ge\; 0, \label{eq:lin-mc-positive-margin}
\end{equation}
which gives
\begin{eqnarray}
  F(W_{t_{\mathrm{in}}})
  &=& \frac{1}{n}\sum_{i=1}^{n}\sum_{j \neq y_i}\frac{p_i(W_{t_{\mathrm{in}}})_j}{p_i(W_{t_{\mathrm{in}}})_{y_i}} \nonumber\\
  &\le& \frac{1}{n}\sum_{i=1}^{n}2\bigl(1 - p_i(W_{t_{\mathrm{in}}})_{y_i}\bigr) \nonumber\\
  &=& 2\, G(W_{t_{\mathrm{in}}}) \nonumber\\
  &\le& \tilde L, \nonumber
\end{eqnarray}
where we used $p_i(W)_{y_i} \ge 1/2$ to bound $1/p_i(W)_{y_i} \le 2$ and
$\sum_{j \neq y_i}p_i(W)_j = 1 - p_i(W)_{y_i}$. Since $L(W) \le F(W)$ for every
$W \in \mathbb R^{d \times K}$, on the event $\mathcal E_{\mathrm{in}}$, we have that
\begin{equation}
  L(W_{t_{\mathrm{in}}}) \;\le\; F(W_{t_{\mathrm{in}}}) \;\le\; \tilde L. \label{eq:lin-mc-L-bound}
\end{equation}
Combining~\eqref{eq:lin-mc-L-bound} with $\mathbb P(\mathcal E_{\mathrm{in}}) \ge 1 - \delta$,
we conclude that there exists $t_{\mathrm{in}} \le t_{\max}(\delta)$ such that with probability at
least $1 - \delta$ it holds that $L(W_{t_{\mathrm{in}}}) \le \tilde L$, and the
dynamics enter the stable regime $\mathcal S$ at iteration $t_{\mathrm{in}}$.
\end{proof}

\subsubsection*{Proof of Theorem~\ref{thm: stable_regime}}
\label{app: thm: stable_regime}
\begin{proof}
    The theorem is proved by combining Lemma~\ref{lemma: sgd_stable_phase} and Lemma~\ref{lemma:phase-transition}. 
\end{proof}
\newpage
\subsection{Proofs for Stochastic Stabilization Mechanism}
\begin{lemma}
\label{lemma:exit-prob-loss}
Let $\mathcal{E}_t=\left\{L(W_t)\leq \tilde L,\; L(W_{t+1})>\tilde L\right\}$. If at iteration $t > 0$ the dynamics are in the stable regime, then the exit probability satisfies
\begin{equation}\label{eq:exit-prob-loss-only}
\Pr(\mathcal{E}_t\mid \F_t)\;\le\; 2\exp\!\left(
-\frac{b D_t}{4\eta L(W_t)^{3/2}}\right)
\end{equation}
where $D_t = \frac{\Delta_t}
{4\eta L(W_t)^{1/2}+\frac{\sqrt{2}(1+n)}{3}\sqrt{\Delta_t}}
$ and $$\Delta_t=\tilde L-L(W_t)
+2\eta\bigl(1-8\eta L(W_t)\bigr)L(W_t)^2
+\frac{\bigl(1-16\eta L(W_t)\bigr)^2}{8}\,L(W_t)$$
\end{lemma}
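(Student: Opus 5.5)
We plan a Bernstein-type concentration argument applied to the one-step loss change, conditioned on $\F_t$. The starting point is the per-sample Taylor bound \eqref{eq:loss_descent_batch} from the proof of Lemma~\ref{lem:ce_sgd_decrease}, valid because $W_t\in\mathcal S$:
\[
L(W_{t+1})-L(W_t)\le -\eta\langle \nabla L(W_t),g_t\rangle+8\eta^2L(W_t)\|g_t\|_F^2 .
\]
The exit event $\mathcal E_t$ is contained in the event that this right-hand side exceeds $\tilde L-L(W_t)$. We rewrite the right-hand side in terms of the noise $\xi_t:=g_t-\nabla L(W_t)$, which is an average of $b$ i.i.d. centered terms $\xi_{t,j}:=\nabla\ell_{i_j}(W_t)-\nabla L(W_t)$. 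Expanding $\|g_t\|^2=\|\nabla L\|^2+2\langle\nabla L,\xi_t\rangle+\|\xi_t\|^2$ gives a deterministic drift term $-\eta(1-8\eta L(W_t))\|\nabla L(W_t)\|^2$, a linear noise term $-\eta(1-16\eta L(W_t))\langle\nabla L(W_t),\xi_t\rangle$, and a quadratic term $8\eta^2L(W_t)\|\xi_t\|^2$.

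Next we complete the square in the linear and quadratic noise terms, writing them as $8\eta^2L(W_t)\bigl\|\xi_t-\tfrac{1-16\eta L(W_t)}{16\eta L(W_t)}\nabla L(W_t)\bigr\|^2$ minus a constant. The constant is then absorbed into the threshold. Our reading is that this produces exactly the extra terms $2\eta(1-8\eta L)L^2$ and $\frac{(1-16\eta L)^2}{8}L$ in $\Delta_t$, once we use $\|\nabla L(W_t)\|\le\sqrt2 L(W_t)$ from Lemma~\ref{lemma: gradient_bound}. This reduces the exit event to one of the form $\{\|\xi_t-v_t\|\ge r_t\}$, or more simply to a deviation bound $\{\|\xi_t\|\gtrsim \sqrt{\Delta_t/(\eta^2 L(W_t))}\}$, where $v_t$ is $\F_t$-measurable. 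The exact matching of constants is not important for the plan; $\Delta_t$ is simply defined as whatever threshold results.

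The main step is a vector Bernstein (or Pinelis) inequality for $\xi_t=\frac1b\sum_j\xi_{t,j}$. Lemma~\ref{lemma: gradient_bound} bounds each summand almost surely by $\|\xi_{t,j}\|\le\sqrt2(1+n)L(W_t)$, using $\|\nabla\ell_i\|\le\sqrt2\ell_i\le\sqrt2 nL$ together with $\|\nabla L\|\le\sqrt2 L$. Lemma~\ref{lemma:variance} bounds the variance by $\mathbb E\|\xi_{t,j}\|^2\le 2G(W_t)\le 2L(W_t)$. Bernstein then gives
\[
\Pr(\|\xi_t\|\ge s\mid\F_t)\le 2\exp\!\Bigl(-\tfrac{b s^2/2}{\sigma^2+Ms/3}\Bigr),\qquad \sigma^2\approx 2L(W_t),\; M\approx\sqrt2(1+n)L(W_t).
\]
We substitute $s$ proportional to $\sqrt{\Delta_t}/(\eta\sqrt{L(W_t)})$, factor $\eta L(W_t)^{3/2}$ out of the denominator, and rearrange. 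This yields $b\Delta_t/\bigl(4\eta L^{3/2}(4\eta L^{1/2}+\tfrac{\sqrt2(1+n)}{3}\sqrt{\Delta_t})\bigr)=bD_t/(4\eta L^{3/2})$, which is the claimed bound.

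We expect the main obstacle to be the bookkeeping in the completed-square step: the sign of $1-16\eta L(W_t)$ must be handled, and the reduction of the quadratic event to a norm-deviation event of $\xi_t$ must be checked carefully. If the square completion turns out to be awkward, the fallback is a cruder route. In that version we bound the linear term by $\eta|1-16\eta L|\,\|\nabla L\|\,\|\xi_t\|$, use Young's inequality on it, and then apply the same Bernstein estimate. This changes only the numerical constants in $\Delta_t$.
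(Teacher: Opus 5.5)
Your plan follows the paper's route almost step for step: you expand $g_t=\nabla L(W_t)+\xi_t$ in \eqref{eq:loss_descent_batch}, complete the square, use $\|\nabla L(W_t)\|\le\sqrt2L(W_t)$, and then apply vector Bernstein with $M=\sqrt2(1+n)L(W_t)$ and $\sigma^2=2L(W_t)$. Your final Bernstein algebra matches the paper's. The gap is the step you leave to ``reading'', namely the one that is supposed to produce $\Delta_t$.

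Your first inclusion $\mathcal E_t\subseteq\{\text{RHS}>\tilde L-L(W_t)\}$ is correct, and your square completion is an exact identity. With $v_t:=\frac{1-16\eta L(W_t)}{16\eta L(W_t)}\nabla L(W_t)$ it gives
\[
\mathcal E_t\subseteq\Bigl\{8\eta^2L(W_t)\|\xi_t-v_t\|^2>\tilde L-L(W_t)+\Bigl[\eta\bigl(1-8\eta L(W_t)\bigr)+\tfrac{(1-16\eta L(W_t))^2}{32L(W_t)}\Bigr]\|\nabla L(W_t)\|^2\Bigr\}.
\]
Two things fail from here.

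(i) The $\|\nabla L(W_t)\|^2$ terms sit on the threshold side with positive coefficients. The inclusion is therefore preserved only by a \emph{lower} bound on $\|\nabla L(W_t)\|^2$. Plugging in the upper bound $2L(W_t)^2$ enlarges the threshold and breaks the inclusion.

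(ii) Bernstein controls the centered vector $\xi_t$, not $\xi_t-v_t$. Passing to $\|\xi_t\|>r_t-\|v_t\|$ and computing
$r_t^2-\|v_t\|^2=\bigl(\tilde L-L(W_t)+\eta(1-8\eta L(W_t))\|\nabla L(W_t)\|^2\bigr)/(8\eta^2L(W_t))$
shows that the completed-square constant cancels exactly. So no $\frac{(1-16\eta L)^2}{8}L$ term can come out of this argument.

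Your fallback does not ``only change constants'' either. Young's inequality puts that term into the threshold with a \emph{minus} sign. Cauchy--Schwarz alone leaves a radius governed by $\tilde L-L(W_t)$, which vanishes at the boundary of $\mathcal S$.

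The paper's proof has the same defect. It lower-bounds the noise terms by $\frac{B_t}{2}\|\xi_t\|^2-\frac{A_t^2}{2B_t}\|\nabla L(W_t)\|^2$ and calls the resulting inequality ``sufficient'' for $\mathcal E_t$. But upper-bounding $\Pr(\mathcal E_t\mid\F_t)$ requires a condition \emph{implied by} $\mathcal E_t$, i.e.\ an upper bound on those terms.

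In fact the stated inequality is false in general. By convexity, $L(W_{t+1})-L(W_t)\ge-\eta\langle\nabla L(W_t),g_t\rangle$. So when $L(W_t)$ is just below $\tilde L$, every mini-batch with $\langle\nabla L(W_t),g_t\rangle<0$ exits. Meanwhile the claimed bound does not degenerate there, because $\Delta_t\ge\frac{(1-16\eta L(W_t))^2}{8}L(W_t)>0$.

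A concrete instance:
\begin{itemize}
\item Take $K=2$, $b=1$, $n=10$, all labels equal, $x_1=(\tfrac12,\tfrac{\sqrt3}{2})$, $x_2=\dots=x_{10}=(\tfrac12,-\tfrac{\sqrt3}{2})$, and $\gamma=1/\sqrt2$.
\item Choose $W_t$ with logit gap $0$ on $x_1$ and $\ln 99$ on the others. Then $\langle\nabla L(W_t),\nabla\ell_i(W_t)\rangle=-3.2\cdot10^{-4}$ for $i\ge2$.
\item Take $\eta\approx0.076$, so that $\tilde L-L(W_t)=10^{-5}$.
\end{itemize}
This gives $\Pr(\mathcal E_t\mid\F_t)\ge0.9$, while the right-hand side of the lemma is about $0.2$.

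So the missing piece is not bookkeeping. What this line of argument can honestly deliver is a bound whose deviation radius depends on $\tilde L-L(W_t)$ alone: drop the nonpositive drift, apply Cauchy--Schwarz to the linear term, solve the quadratic in $\|\xi_t\|$, then apply Bernstein.
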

\begin{proof}
From inequality~\eqref{eq:loss_descent_batch}, we have that
\begin{eqnarray}
    L(W_{t+1})
    &\leq&
    L(W_t)-\eta\langle \nabla L(W_t),g_t\rangle+8\eta^2L(W_t)\|g_t\|^2.
\end{eqnarray}
Let $\xi_t:=g_t-\nabla L(W_t)$ denote the noise at iteration $t$. Substituting
$g_t=\nabla L(W_t)+\xi_t$ and expanding, we get
\begin{eqnarray}
    L(W_{t+1})
    &\leq&
    L(W_t)-\eta\bigl(1-8\eta L(W_t)\bigr)\|\nabla L(W_t)\|^2 \nonumber\\
    &&-\eta\bigl(1-16\eta L(W_t)\bigr)\langle \nabla L(W_t),\xi_t\rangle
    +8\eta^2L(W_t)\|\xi_t\|^2. \label{eq:ce-exit-start}
\end{eqnarray}

On the event
\[
\mathcal{E}_t=\left\{L(W_t)\leq \tilde L,\; L(W_{t+1})>\tilde L\right\},
\]
it holds that
\begin{eqnarray}
    \tilde L-L(W_t)+\eta\bigl(1-8\eta L(W_t)\bigr)\|\nabla L(W_t)\|^2
    &<&
    -\eta\bigl(1-16\eta L(W_t)\bigr)\langle \nabla L(W_t),\xi_t\rangle \nonumber\\
    &&+8\eta^2L(W_t)\|\xi_t\|^2. \label{eq:ce-exit-event}
\end{eqnarray}

We now lower bound the right-hand side by completing the square. Let
\begin{eqnarray}
A_t:=\eta\bigl(1-16\eta L(W_t)\bigr),
\qquad
B_t:=8\eta^2L(W_t).
\end{eqnarray}
Then, it holds that
\begin{eqnarray}
&& -A_t\langle \nabla L(W_t),\xi_t\rangle + B_t\|\xi_t\|^2 \nonumber\\
&=&
\frac{B_t}{2}\|\xi_t\|^2
+
\left(
\frac{B_t}{2}\|\xi_t\|^2-A_t\langle \nabla L(W_t),\xi_t\rangle
\right) \nonumber\\
&=&
\frac{B_t}{2}\|\xi_t\|^2
+
\frac{B_t}{2}\left\|\xi_t-\frac{A_t}{B_t}\nabla L(W_t)\right\|^2
-\frac{A_t^2}{2B_t}\|\nabla L(W_t)\|^2 \nonumber\\
&\geq&
\frac{B_t}{2}\|\xi_t\|^2-\frac{A_t^2}{2B_t}\|\nabla L(W_t)\|^2. \label{eq:ce-complete-square}
\end{eqnarray}
Substituting the values of $A_t,B_t$, we obtain
\begin{eqnarray}
&&-\eta\bigl(1-16\eta L(W_t)\bigr)\langle \nabla L(W_t),\xi_t\rangle
+8\eta^2L(W_t)\|\xi_t\|^2 \nonumber\\
&\geq&
4\eta^2L(W_t)\|\xi_t\|^2
-
\frac{\bigl(1-16\eta L(W_t)\bigr)^2}{16L(W_t)}\|\nabla L(W_t)\|^2. \label{eq:ce-rhs-lower}
\end{eqnarray}

Hence, from inequality~\eqref{eq:ce-exit-event}, for the event $\mathcal E_t$ to hold, it suffices that
\begin{eqnarray}
&& \tilde L-L(W_t)+\eta\bigl(1-8\eta L(W_t)\bigr)\|\nabla L(W_t)\|^2 \nonumber\\
&<&
4\eta^2L(W_t)\|\xi_t\|^2
-
\frac{\bigl(1-16\eta L(W_t)\bigr)^2}{16L(W_t)}\|\nabla L(W_t)\|^2. \label{eq:ce-sufficient-raw}
\end{eqnarray}

Using Lemma~\ref{lemma: gradient_bound}, namely
\begin{eqnarray}
\|\nabla L(W_t)\|^2 \leq 2L(W_t)^2,
\end{eqnarray}
it suffices that
\begin{eqnarray}
\tilde L-L(W_t)
+2\eta\bigl(1-8\eta L(W_t)\bigr)L(W_t)^2
+\frac{\bigl(1-16\eta L(W_t)\bigr)^2}{8}\,L(W_t)
<
4\eta^2L(W_t)\|\xi_t\|^2. \nonumber
\end{eqnarray}
Letting
\begin{eqnarray}
\Delta_t
:=
\tilde L-L(W_t)
+2\eta\bigl(1-8\eta L(W_t)\bigr)L(W_t)^2
+\frac{\bigl(1-16\eta L(W_t)\bigr)^2}{8}\,L(W_t),
\end{eqnarray}
we obtain that for the event $\mathcal E_t$ to hold, it suffices that
\begin{eqnarray}
\Delta_t < 4\eta^2L(W_t)\|\xi_t\|^2,
\end{eqnarray}
or equivalently it suffices that
\begin{eqnarray}
\|\xi_t\|>\sqrt{\frac{\Delta_t}{4\eta^2L(W_t)}}.
\end{eqnarray}
Therefore, it holds that
\begin{eqnarray}
\Pr(\mathcal E_t\mid \mathcal F_t)
\;\leq\;
\Pr\left(\|\xi_t\|>\sqrt{\frac{\Delta_t}{4\eta^2L(W_t)}}\mid \mathcal F_t\right). \label{eq:ce-exit-noise}
\end{eqnarray}

Conditionally on the filtration $\mathcal F_t$, the noise
\begin{eqnarray}
\xi_t=\frac{1}{b}\sum_{i=1}^b \zeta_{t,i},
\qquad
\zeta_{t,i}:=\nabla \ell_i(W_t)-\nabla L(W_t),
\end{eqnarray}
is the average of $b$ centered i.i.d.\ vectors. From Lemma~\ref{lemma: gradient_bound} and Lemma~\ref{lemma:variance}, each term in the sum satisfies the bound
\begin{eqnarray}
\|\zeta_{t,i}\|
&\leq&
\|\nabla \ell_i(W_t)\|+\|\nabla L(W_t)\|
\leq
\sqrt{2}(1+n)L(W_t), \label{eq:ce-zeta-bound}
\end{eqnarray}
and the variance bound
\begin{eqnarray}
\Expe{\|\zeta_{t,i}\|^2\mid \mathcal F_t}
&\leq&
2L(W_t). \label{eq:ce-zeta-var}
\end{eqnarray}
By the vector Bernstein inequality applied to $\xi_t$, we have
\begin{eqnarray}
\Pr(\mathcal E_t\mid \mathcal F_t)
&\leq&
2\exp\!\left(
-\frac{b\,\Delta_t/(4\eta^2L(W_t))}
{4L(W_t)+\frac{2\sqrt{2}(1+n)}{3}L(W_t)\sqrt{\Delta_t/(4\eta^2L(W_t))}}
\right) \nonumber\\
&=&
2\exp\!\left(
-\frac{b}{4\eta L(W_t)^{3/2}}
\cdot
\frac{\Delta_t}
{4\eta L(W_t)^{1/2}+\frac{\sqrt{2}(1+n)}{3}\sqrt{\Delta_t}}
\right). \nonumber
\end{eqnarray}
\end{proof}

\begin{lemma}\label{lemma:return}
Assume that the dynamics exit the stable regime at $t_{\mathrm{out}} > 0$ and fix $\delta \in (0, 1)$. Then, with probability at least $1 - \delta$ the iterates of \eqref{SGD} return inside the stable regime in at most
\begin{equation*}
t_{\mathrm{re}} = \left\lceil\frac{4 }{\gamma^2 \eta \delta \tilde{L}} \max\!\left\{ A,\; 4 \ln\!\left( \frac{16}{\gamma^2 \eta \delta \tilde{L}} \right) \right\} \right\rceil
\end{equation*}
number of steps, where $A = 3(K - 1) + 4 \ln^2(\gamma^2 \eta t_{\mathrm{out}}) + 5 \eta^2 \big(1 + \tfrac{1}{b}\big)^2$.
\end{lemma}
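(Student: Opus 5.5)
We will mirror the proof of Lemma~\ref{lemma:phase-transition}, but restart the analysis at $t_{\mathrm{out}}$ rather than at $W_0=\mathbf 0$. The plan is to show that the time-averaged expected value of $G$ over a window of length $T$ after $t_{\mathrm{out}}$ is at most $\delta\tilde L/2$. Then Markov's inequality places some iterate in the window in the stable set with probability at least $1-\delta$.

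\textbf{Step 1: a windowed average bound on $G$.} We repeat the argument of Lemma~\ref{lemma: average_G_bound}, but over the window $[t_{\mathrm{out}},t_{\mathrm{out}}+T)$.
\begin{itemize}
\item The perceptron inequality \eqref{eq:perc-steps}, summed over the window, gives
\[
\eta\gamma\sum_{k=t_{\mathrm{out}}}^{t_{\mathrm{out}}+T-1}\mathbb E[G(W_k)]\le \mathbb E\langle W_{t_{\mathrm{out}}+T}-W_{t_{\mathrm{out}}},W^*\rangle_F .
\]
\item We bound the right-hand side by $\mathbb E\|W_{t_{\mathrm{out}}+T}\|_F+\mathbb E\|W_{t_{\mathrm{out}}}\|_F$.
\item Both norms are controlled through Lemma~\ref{prop:split_{in}gd}, started at $W_0=\mathbf 0$, exactly as in \eqref{eq:Wt-bound1}.
\end{itemize}
This yields $\mathbb E\|W_s\|_F\lesssim \gamma^{-1}\bigl(\sqrt{K-1}+\ln(\gamma^2\eta s)+\eta(1+1/b)\bigr)$ for every $s$. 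In that bound we keep the honest factor $K-1$ coming from $L(U_1)\le (K-1)/(\gamma^2\eta s)$.

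\textbf{Step 2: isolate the dependence on $t_{\mathrm{out}}$.} The term $\ln^2(\gamma^2\eta t_{\mathrm{out}})$ in $A$ suggests working with squared quantities. Concretely, we apply Lemma~\ref{prop:split_{in}gd} to obtain
\[
\mathbb E\|W_{t_{\mathrm{out}}}-U\|_F^2\le 2\eta t_{\mathrm{out}} L(U_1)+\|U\|_F^2,
\]
and bound this by $\gamma^{-2}\bigl(2(K-1)+2\ln^2(\gamma^2\eta t_{\mathrm{out}})+2\eta^2(1+1/b)^2\bigr)$. We then use $(a+b)^2\le 2a^2+2b^2$ to collect all the constants into $A/\gamma^2$ with the stated coefficients $3,4,5$.

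The contribution from the endpoint $t_{\mathrm{out}}+T$ produces a $\ln(\gamma^2\eta T)$ term. To absorb it cleanly, we would rather restart the splitting Lemma~\ref{prop:split_{in}gd} at $W_{t_{\mathrm{out}}}$, taking $U=W_{t_{\mathrm{out}}}+U_1+U_2$ as in Lemma~\ref{lemma: sgd_stable_phase}. The result is
\[
\frac1T\sum\mathbb E[G(W_k)]\lesssim \frac{A+4\ln(\gamma^2\eta T)}{\gamma^2\eta T}.
\]

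\textbf{Step 3: choose $T$ and apply Markov.} We require each piece to be at most $\delta\tilde L/4$.
\begin{itemize}
\item The first piece needs $T\ge 4A/(\gamma^2\eta\delta\tilde L)$.
\item The logarithmic piece needs $T\ge \frac{16}{\gamma^2\eta\delta\tilde L}\ln\!\bigl(\frac{16}{\gamma^2\eta\delta\tilde L}\bigr)$, which follows from Lemma~G.5 of \citet{cai2024large} as in Lemma~\ref{lemma:phase-transition}.
\end{itemize}
Taking the ceiling of the maximum gives $t_{\mathrm{re}}$. Then some $k$ in the window satisfies $\mathbb E[G(W_k)]\le\delta\tilde L/2$, and Markov's inequality gives $G(W_k)\le\tilde L/2$ with probability at least $1-\delta$. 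Since $\tilde L/2\le 1/(2n)$, the margin argument of Lemma~\ref{lemma:phase-transition} converts this to $L(W_k)\le F(W_k)\le 2G(W_k)\le\tilde L$.

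\textbf{Main obstacle.} The delicate step is Step 2. After $t_{\mathrm{out}}$ the iterate $W_{t_{\mathrm{out}}}$ is random and lies outside $\mathcal S$, so restarting the argument requires controlling $\mathbb E\|W_{t_{\mathrm{out}}}\|_F^2$. This must be done uniformly, with $t_{\mathrm{out}}$ treated as deterministic or handled via optional stopping for the stopping time. Matching the exact constants $3,4,5$ in $A$ also needs care. If a clean restart fails, the fallback is to bound over the full horizon $[0,t_{\mathrm{out}}+T]$. The cost of that fallback is a $t_{\mathrm{out}}$-independent count only up to logarithmic factors.
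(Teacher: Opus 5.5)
The genuine gap is in Step~3. Because you bound the window average of $G$ rather than of $L$, you have to convert $G(W_k)\le\tilde L/2$ into $L(W_k)\le\tilde L$. The margin argument of Lemma~\ref{lemma:phase-transition} needs $p_i(y_i)\ge 1/2$ for every $i$, and $G\le\tilde L/2$ guarantees this only if $\tilde L\le 1/n$. Nothing in $\tilde L=1/(\eta c)$ ensures that. For $b=n$ one gets $\tilde L=\min\{\gamma^2,1\}/\bigl(8\eta(1+\min\{\gamma^2,1\})\bigr)$, which does not depend on $n$. So for fixed $\eta,\gamma$ and large $n$ we have $\tilde L\gg 1/n$. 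Then $G\le\tilde L/2$ is compatible with a single sample having $p_i(y_i)\to 0$, and hence with arbitrarily large $L$. (The same unjustified inequality $\tilde L/2\le 1/(4n)$ appears in Lemma~\ref{lemma:phase-transition}, so citing it does not close the gap.)

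The paper's proof never passes through $G$. It reruns the argument of Lemma~\ref{prop:split_{in}gd} from $t_{\mathrm{out}}$ with the \emph{unshifted} comparator $U=\alpha W_*+U_2$, where $\alpha=\ln(\gamma^2\eta(t-t_{\mathrm{out}}))/\gamma$. This makes $L(U_1)\le (K-1)/(\gamma^2\eta(t-t_{\mathrm{out}}))$ deterministic. The price is the term $\mathbb{E}\|W_{t_{\mathrm{out}}}-U\|_F^2\le 2\,\mathbb{E}\|W_{t_{\mathrm{out}}}\|_F^2+2\|U\|_F^2$, and $\mathbb{E}\|W_{t_{\mathrm{out}}}\|_F^2$ is bounded by the same lemma run from $W_0=\mathbf{0}$ up to time $t_{\mathrm{out}}$. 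This controls $\frac{1}{t-t_{\mathrm{out}}}\sum_{k=t_{\mathrm{out}}}^{t-1}\mathbb{E}[L(W_k)]$ directly. It is also where $\ln^2(\gamma^2\eta t_{\mathrm{out}})$ and the squared log of the window length come from. Markov's inequality is then applied to $L$ itself, so no margin conversion is needed. You already have these ingredients in the first half of your Step~2; they just need to be aimed at $L$ instead of at $\|W_s\|_F$.

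There are a few further problems. First, the shifted restart $U=W_{t_{\mathrm{out}}}+U_1+U_2$ that you prefer in Step~2 does not give the bound you display. The convexity term becomes $L(W_{t_{\mathrm{out}}}+\alpha W_*)\le F(W_{t_{\mathrm{out}}})/(\gamma^2\eta T)$, exactly as in Lemma~\ref{lemma: sgd_stable_phase}. But $\mathbb{E}[F(W_{t_{\mathrm{out}}})]$ is not bounded by $A$: you only control $W_{t_{\mathrm{out}}}$ in second moment, while $F$ is exponential in the (possibly negative) margins of a point outside $\mathcal S$.

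Second, that restart is unnecessary anyway. Your Step~1 already yields a valid window bound on the $G$-average, since $\ln(\gamma^2\eta(t_{\mathrm{out}}+T))\le\ln 2+\max\{\ln(\gamma^2\eta t_{\mathrm{out}}),\ln(\gamma^2\eta T)\}$. The trouble is only that $G$ is the wrong target.

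Third, your fallback of averaging over $[0,t_{\mathrm{out}}+T]$ would not prove the statement, because the good iterate it produces may lie before $t_{\mathrm{out}}$. As for the stopping-time concern you raise, the paper simply treats $t_{\mathrm{out}}$ as a fixed index.
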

\begin{proof}
Let $U = U_1 + U_2$ with $U_1 = \alpha W_*$, $U_2 = \frac{\eta}{\gamma}\big(1 + \tfrac{1}{b}\big) W_*$ and $\alpha = \frac{\ln(\gamma^2 \eta (t - t_{\mathrm{out}}))}{\gamma}$. From the update rule of \eqref{SGD}, we have that
\begin{align*}
\|W_{t+1} - U\|_F^2 &= \|W_t - U\|_F^2 + 2\eta \langle g_t, U - W_t \rangle_F + \eta^2 \|g_t\|_F^2 \\
&= \|W_t - U\|_F^2 + 2\eta \langle g_t, U_1 - W_t \rangle_F \\
&\quad + \eta^2 \!\left( \frac{2}{\eta} \langle g_t, U_2 \rangle_F + \|g_t\|_F^2 \right).
\end{align*}
Taking expectation conditional on the filtration $\mathcal{F}_t$ and using the unbiasedness property of the stochastic oracles, we get
\begin{equation}\label{eq:return-1}
\begin{aligned}
\mathbb{E}\!\left[ \|W_{t+1} - U\|_F^2 \,\big|\, \mathcal{F}_t \right] &= \|W_t - U\|_F^2 + 2\eta \langle \nabla L(W_t), U_1 - W_t \rangle_F \\
&\quad + \eta^2 \!\left( \frac{2}{\eta} \langle \nabla L(W_t), U_2 \rangle_F + \mathbb{E}\!\left[ \|g_t\|_F^2 \,\big|\, \mathcal{F}_t \right] \right).
\end{aligned}
\end{equation}
Using Lemmas~\ref{lemma: S1}, \ref{lemma: gradient_bound}, \ref{lemma:variance} with the selected $U_2 = \frac{\eta}{\gamma}\big(1 + \tfrac{1}{b}\big) W_*$, we have that
\begin{equation}
\begin{aligned}
\frac{2}{\eta} \langle \nabla L(W_t), U_2 \rangle_F + \mathbb{E}\!\left[ \|g_t\|_F^2 \,\big|\, \mathcal{F}_t \right] 
&\stackrel{\text{Lemma~\ref{lemma: S1}}}{\leq} -2 \Big(1 + \tfrac{1}{b}\Big) G(W_t) + \mathbb{E}\!\left[ \|g_t\|_F^2 \,\big|\, \mathcal{F}_t \right] \\
&= -2 \Big(1 + \tfrac{1}{b}\Big) G(W_t) + \|\nabla L(W_t)\|_F^2 \\
&\quad + \mathbb{E}\!\left[ \|g_t - \nabla L(W_t)\|_F^2 \,\big|\, \mathcal{F}_t \right] \\
&\stackrel{\text{Lemma~\ref{lemma:variance}}}{\leq} -2 \Big(1 + \tfrac{1}{b}\Big) G(W_t) + \|\nabla L(W_t)\|_F^2 + \frac{2}{b} G(W_t) \\
&\stackrel{\text{Lemma~\ref{lemma: gradient_bound}}}{\leq} -2 \Big(1 + \tfrac{1}{b}\Big) G(W_t) + 2 \Big(1 + \tfrac{1}{b}\Big) G(W_t) \\
&\leq 0,\label{eq:return-2}
\end{aligned}
\end{equation}
where we have used that $G(W_t) \in [0,1]$ and hence $G(W_t)^2 \leq G(W_t)$. Substituting \eqref{eq:return-2} into \eqref{eq:return-1} gives
\begin{equation*}
\mathbb{E}\!\left[ \|W_{t+1} - U\|_F^2 \,\big|\, \mathcal{F}_t \right] \leq \|W_t - U\|_F^2 + 2\eta \langle \nabla L(W_t), U_1 - W_t \rangle_F.
\end{equation*}
Using the fact that $L$ is convex, we have that
\begin{equation*}
\mathbb{E}\!\left[ \|W_{t+1} - U\|_F^2 \,\big|\, \mathcal{F}_t \right] \leq \|W_t - U\|_F^2 + 2\eta \left[ L(U_1) - L(W_t) \right].
\end{equation*}
Taking expectation again and using the tower law of expectation, we obtain
\begin{equation*}
\mathbb{E}\!\left[ \|W_{t+1} - U\|_F^2 \right] \leq \mathbb{E}\!\left[ \|W_t - U\|_F^2 \right] + 2\eta\, \mathbb{E}\!\left[ L(U_1) - L(W_t) \right].
\end{equation*}
Summing for $k = t_{\mathrm{out}}, \ldots, t - 1$ and dividing by $2\eta(t - t_{\mathrm{out}})$, we have that
\begin{equation*}
\frac{\mathbb{E}\!\left[ \|W_t - U\|_F^2 \right]}{2\eta (t - t_{\mathrm{out}})} + \frac{1}{t - t_{\mathrm{out}}} \sum_{k = t_{\mathrm{out}}}^{t-1} \mathbb{E}\!\left[ L(W_k) \right] \leq L(U_1) + \frac{\mathbb{E}\!\left[\|W_{t_{\mathrm{out}}} - U\|_F^2\right]}{2\eta(t - t_{\mathrm{out}})}.
\end{equation*}
Using the non-negativity of $\|W_t - U\|_F^2$, we get
\begin{equation}\label{eq:return-avg}
\frac{1}{t - t_{\mathrm{out}}} \sum_{k = t_{\mathrm{out}}}^{t-1} \mathbb{E}\!\left[ L(W_k) \right] \leq L(U_1) + \frac{\mathbb{E}\!\left[\|W_{t_{\mathrm{out}}} - U\|_F^2\right]}{2\eta(t - t_{\mathrm{out}})}.
\end{equation}
For $U_1 = \alpha W_*$ with $\alpha = \frac{\ln(\gamma^2 \eta (t - t_{\mathrm{out}}))}{\gamma}$, we have that
\begin{equation}\label{eq:return-LU1}
L(U_1) \leq F(U_1) = (K-1)\, e^{-\alpha \gamma} = \frac{K - 1}{\gamma^2 \eta (t - t_{\mathrm{out}})}.
\end{equation}
We, next, bound the term $\mathbb{E}\!\left[\|W_{t_{\mathrm{out}}} - U\|_F^2\right]$. Applying $(a + b)^2 \leq 2a^2 + 2b^2$, we obtain
\begin{equation}\label{eq:return-split}
\mathbb{E}\!\left[\|W_{t_{\mathrm{out}}} - U\|_F^2\right] \leq 2\, \mathbb{E}\!\left[\|W_{t_{\mathrm{out}}}\|_F^2\right] + 2 \|U\|_F^2.
\end{equation}
For the term $\|U\|_F^2$, we have that
\begin{equation}\label{eq:return-U}
\|U\|_F^2 \leq 2 \|U_1\|_F^2 + 2 \|U_2\|_F^2 \leq \frac{2 \ln^2(\gamma^2 \eta(t - t_{\mathrm{out}}))}{\gamma^2} + \frac{2 \eta^2 \big(1 + \tfrac{1}{b}\big)^2}{\gamma^2}.
\end{equation}
For the term $\mathbb{E}\!\left[\|W_{t_{\mathrm{out}}}\|_F^2\right]$, we apply Lemma~\ref{prop:split_{in}gd} from the original time origin with comparator $U^{(0)} = U_1^{(0)} + U_2$ where $U_1^{(0)} = \alpha_0 W_*$ and $\alpha_0 = \frac{\ln(\gamma^2 \eta t_{\mathrm{out}})}{\gamma}$, which gives
\begin{equation}\label{eq:return-W-tout-diff}
\mathbb{E}\!\left[\|W_{t_{\mathrm{out}}} - U^{(0)}\|_F^2\right] \leq 2 \eta\, t_{\mathrm{out}}\, L(U_1^{(0)}) + \|U^{(0)}\|_F^2 \leq \frac{2(K-1)}{\gamma^2} + \frac{2 \ln^2(\gamma^2 \eta t_{\mathrm{out}}) + 2 \eta^2 \big(1 + \tfrac{1}{b}\big)^2}{\gamma^2},
\end{equation}
where we have used $W_0 = \textbf{0}$ and the bound \eqref{eq:return-LU1} applied at time $t_{\mathrm{out}}$. Applying $(a + b)^2 \leq 2a^2 + 2b^2$ again, we obtain
\begin{equation}\label{eq:return-W-tout}
\mathbb{E}\!\left[\|W_{t_{\mathrm{out}}}\|_F^2\right] \leq 2\, \mathbb{E}\!\left[\|W_{t_{\mathrm{out}}} - U^{(0)}\|_F^2\right] + 2 \|U^{(0)}\|_F^2 \leq \frac{4(K-1) + 8 \ln^2(\gamma^2 \eta t_{\mathrm{out}}) + 8 \eta^2 \big(1 + \tfrac{1}{b}\big)^2}{\gamma^2}.
\end{equation}
Substituting \eqref{eq:return-LU1}, \eqref{eq:return-U}, and \eqref{eq:return-W-tout} into \eqref{eq:return-avg}, we obtain
\begin{equation}\label{eq:return-master}
\frac{1}{t - t_{\mathrm{out}}} \sum_{k = t_{\mathrm{out}}}^{t-1} \mathbb{E}\!\left[ L(W_k) \right] \leq \frac{C(t, t_{\mathrm{out}})}{\gamma^2 \eta (t - t_{\mathrm{out}})},
\end{equation}
where
\begin{equation*}
C(t, t_{\mathrm{out}}) :=  3(K - 1) + 4 \ln^2(\gamma^2 \eta t_{\mathrm{out}}) + 5 \eta^2 \big(1 + \tfrac{1}{b}\big)^2 +  \ln^2(\gamma^2 \eta(t - t_{\mathrm{out}})).
\end{equation*}
We now split the bound \eqref{eq:return-master} into two terms. Let $A := 3(K - 1) + 4 \ln^2(\gamma^2 \eta t_{\mathrm{out}}) + 5 \eta^2 \big(1 + \tfrac{1}{b}\big)^2$.
Then $C(t, t_{\mathrm{out}}) = A + \ln^2(\gamma^2 \eta (t - t_{\mathrm{out}}))$, and \eqref{eq:return-master} becomes
\begin{equation}\label{eq:return-master-split}
\frac{1}{t - t_{\mathrm{out}}} \sum_{k = t_{\mathrm{out}}}^{t-1} \mathbb{E}\!\left[ L(W_k) \right] \leq \frac{A}{\gamma^2 \eta (t - t_{\mathrm{out}})} + \frac{\ln^2(\gamma^2 \eta (t - t_{\mathrm{out}}))}{\gamma^2 \eta (t - t_{\mathrm{out}})}.
\end{equation}
We select $t_{\mathrm{re}}$ so that each of the two terms on the right-hand side of \eqref{eq:return-master-split} is at most $\delta \tilde{L}/4$. Specifically, we next verify the formula for $t_{\mathrm{re}}$:
\begin{itemize}
    \item For $t - t_{\mathrm{out}} \geq \frac{4 A}{\gamma^2 \eta \delta \tilde{L}}$, it holds that
    \begin{equation*}
    \frac{A}{\gamma^2 \eta (t - t_{\mathrm{out}})} \leq \frac{\delta \tilde{L}}{4}.
    \end{equation*}
    
    \item For $t - t_{\mathrm{out}} \geq \frac{16 }{\gamma^2 \eta \delta \tilde{L}} \ln\!\left( \frac{16 }{\gamma^2 \eta \delta \tilde{L}} \right)$, it holds that $\gamma^2 \eta (t - t_{\mathrm{out}}) \geq \frac{16 }{\delta \tilde{L}} \ln\!\left( \frac{16 }{\gamma^2 \eta \delta \tilde{L}} \right)$, which is a sufficient condition (see Lemma G.5 in~\citet{cai2024large}) for
    \begin{equation*}
    \frac{ \ln^2(\gamma^2 \eta (t - t_{\mathrm{out}}))}{\gamma^2 \eta (t - t_{\mathrm{out}})} \leq \frac{\delta \tilde{L}}{4}.
    \end{equation*}
\end{itemize}
Thus, it suffices to select $t_{\mathrm{re}}$ such that
\begin{equation}\label{eq:return-tre}
t_{\mathrm{re}} = \left\lceil\frac{4 }{\gamma^2 \eta \delta \tilde{L}} \max\!\left\{ A,\; 4 \ln\!\left( \frac{16}{\gamma^2 \eta \delta \tilde{L}} \right) \right\} \right\rceil.
\end{equation}
For this choice of $t_{\mathrm{re}}$ and selecting $t = t_{\mathrm{out}} + t_{\mathrm{re}}$, we have that
\begin{equation}\label{eq:return-final}
\frac{1}{t_{\mathrm{re}}} \sum_{k = t_{\mathrm{out}}}^{t_{\mathrm{out}} + t_{\mathrm{re}} - 1} \mathbb{E}\!\left[ L(W_k) \right] \leq \frac{\delta \tilde{L}}{2}.
\end{equation}
Hence, there exists $t_{\mathrm{in}}^{[2]} \in [t_{\mathrm{out}}, t_{\mathrm{out}} + t_{\mathrm{re}}]$ such that
\begin{equation*}
\mathbb{E}\!\left[ L(W_{t_{\mathrm{in}}^{[2]}}) \right] \leq \frac{\delta \tilde{L}}{2}.
\end{equation*}
From Markov's inequality, we have that
\begin{equation*}
\Pr\!\left( L(W_{t_{\mathrm{in}}^{[2]}}) \geq \tilde{L} \right) \leq \frac{\mathbb{E}\!\left[ L(W_{t_{\mathrm{in}}^{[2]}}) \right]}{\tilde{L}} \leq \frac{\delta}{2} \leq \delta,
\end{equation*}
which implies that
\begin{equation*}
\Pr\!\left( L(W_{t_{\mathrm{in}}^{[2]}}) \leq \tilde{L} \right) \geq 1 - \delta.
\end{equation*}
Thus, there exists $t_{\mathrm{in}}^{[2]} \leq t_{\mathrm{out}} + t_{\mathrm{re}}$ such that for $\delta \in (0, 1)$ with probability at least $1 - \delta$ it holds that $L(W_{t_{\mathrm{in}}^{[2]}}) \leq \tilde{L}$ and the dynamics enter the stable set.
\end{proof}

\subsubsection*{Proof of Theorem~\ref{thm: self_stab}}
\label{app: thm: self_stab}
\begin{proof}
    The theorem is proved by combining Lemma~\ref{lemma:exit-prob-loss} and Lemma~\ref{lemma:return}.
\end{proof}

\newpage

\section{Proofs for Two-Layer Neural Networks}\label{app:twolayer}

In this appendix, we provide the analysis of SGD applied to the two-layer
neural network~\eqref{eq: 2nn} under the multi-class cross-entropy
loss with any step size $\tilde\eta = m \eta > 0$. The proof of
convergence relies on a sequence of lemmas that establish
key properties of the dynamics and the geometry of the loss landscape.

\subsection{Notation}\label{app:twolayer:notation}

For each sample $(x_i, y_i)$, let
\[
  z_i(W) \;=\; \frac{1}{m}\sum_{r=1}^{m} a_r\, \phi(x_i^{\!\top} W^{(r)}) \in \mathbb{R}^{K},
  \qquad
  p_i(W) \;=\; \mathrm{softmax}\bigl(z_i(W)\bigr).
\]
The gradient and Hessian of the cross-entropy loss evaluated at the sample
$(x_i, y_i)$ are given by
\begin{align}
  \nabla \ell_i(W) &\;=\; J_i(W)^{\!\top}\bigl(p_i(W) - e_{y_i}\bigr), \label{eq:nn-grad}\\
  \nabla^{2}\ell_i(W) &\;=\; J_i(W)^{\!\top}\bigl(\mathrm{diag}(p_i(W)) - p_i(W)p_i(W)^{\!\top}\bigr) J_i(W)\nonumber \\
  &\;+\; \sum_{k=1}^{K}\bigl(p_i(W)_k - \mathbb 1_{\{k=y_i\}}\bigr)\,\nabla^{2} z_i(W)_k, \label{eq:nn-hess}
\end{align}
where $J_i(W)$ is the Jacobian of $W \mapsto z_i(W)$. We define the lifted
comparator
\[
  \overline{W}_{*} \;=\; \bigl(a_1 W_*, \ldots, a_m W_*\bigr) \in \mathbb{R}^{md \times K}
\]
with $\|\overline{W}_{*}\|_F \;=\; \sqrt{m}$
and let $c_\phi := \frac{1}{m^2}\sum_{r=1}^{m} a_r^{2}=\frac{1}{m}$, since $a_r \in \{\pm 1\}$. 
We define the potential functions
$F, G : \mathbb{R}^{md \times K} \to \mathbb{R}_{\ge 0}$ as follows
\begin{align}
  F(W) &\;=\; \frac{1}{n}\sum_{i=1}^{n}\sum_{j \neq y_i}
              \exp\!\bigl(-(z_i(W)_{y_i} - z_i(W)_j)\bigr), \label{eq:nn-F}\\
  G(W) &\;=\; \frac{1}{n}\sum_{i=1}^{n}\bigl(1 - p_i(W)_{y_i}\bigr). \label{eq:nn-G}
\end{align}
Since $\log(1+u) \le u$ for $u \in (0, 1]$, it follows that
\begin{equation}
  G(W) \;\le\; L(W) \;\le\; F(W). \label{eq:nn-GLF}
\end{equation}

\subsection{Preparatory Lemmas}\label{app:twolayer:prep}

The proof of Theorems~\ref{thm: sgd_2_layer_nn},~\ref{thm: sgd_2_layer_nn_stable},~\ref{thm: self_stab_2_layer_nn} relies on a sequence of lemmas
that establish key properties of the dynamics and the geometry of the loss
landscape. We first introduce the aforementioned lemmas and then present the
main theoretical result.

\begin{lemma}[Jacobian of the logits]\label{lem:nn-jacobian}
For every sample $i \in [n]$, class $k \in [K]$, and hidden unit $r \in [m]$, it holds that
\[
  \nabla_{W^{(r)}} z_i(W)_k \;=\; \frac{a_r}{m}\,\phi'\!\bigl(x_i^{\!\top} W^{(r)}_k\bigr)\, x_i e_k^{\!\top}.
\]
Moreover, the Jacobian $J_i(W)$ of the map $W \mapsto z_i(W)$ satisfies
\[
  \|J_i(W)\|_{\mathrm{op}} \;\le\; \frac{1}{\sqrt{m}}.
\]
\end{lemma}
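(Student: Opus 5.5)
The plan is to compute the partial derivatives entrywise and then bound the Jacobian's operator norm through its action on an arbitrary direction.

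\textbf{Entrywise derivative.} The statement's notation $\phi'(x_i^\top W^{(r)}_k)$ indicates that the activation acts on each of the $K$ columns separately. So I read \eqref{eq: 2nn} as
\[
z_i(W)_k=\frac1m\sum_{r=1}^m a_r\,\phi\bigl(x_i^\top W^{(r)}_k\bigr),
\]
where $W^{(r)}_k\in\R^d$ denotes the $k$-th column of $W^{(r)}$. Only the term with index $r$ depends on $W^{(r)}$, and within it only the column $W^{(r)}_k$ enters. The chain rule then gives
\[
\nabla_{W^{(r)}_k} z_i(W)_k=\frac{a_r}{m}\,\phi'\bigl(x_i^\top W^{(r)}_k\bigr)\,x_i,
\]
and all other columns have zero derivative. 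Placing this vector in column $k$ of a $d\times K$ matrix gives exactly $\frac{a_r}{m}\phi'(x_i^\top W^{(r)}_k)\,x_ie_k^\top$.

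\textbf{Operator norm.} Take a direction $V=(V^{(r)})_{r=1}^m$ with $\|V\|_F=1$. By the first part,
\[
(J_iV)_k=\sum_r\frac{a_r}{m}\,\phi'\bigl(x_i^\top W^{(r)}_k\bigr)\,x_i^\top V^{(r)}_k .
\]
\begin{enumerate}
\item Cauchy--Schwarz over $r$, together with $|a_r|=1$ and $|\phi'|\le1$ from Assumption~\ref{assump: activation}, gives
\[
|(J_iV)_k|^2\le \frac{1}{m^2}\,m\sum_r\bigl(x_i^\top V^{(r)}_k\bigr)^2\le\frac1m\sum_r\|V^{(r)}_k\|^2,
\]
where the last step uses $\|x_i\|\le1$.
\item Summing over $k$ yields $\|J_iV\|^2\le\frac1m\|V\|_F^2=\frac1m$.
\end{enumerate}
Hence $\|J_i(W)\|_{\mathrm{op}}\le 1/\sqrt m$. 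Equivalently, $J_i$ is block-diagonal across classes: each class block is a row of $m$ pieces, each of norm at most $\frac1m\|x_i\|$, so each block has norm at most $\sqrt{m}\cdot\frac1m$.

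\textbf{Main obstacle.} The only delicate point is notational. The statement's formula requires the activation to act columnwise, and the derivative bound $|\phi'|\le1$ must hold pointwise, which Assumption~\ref{assump: activation} provides. Once the columnwise reading is fixed, everything else is routine.
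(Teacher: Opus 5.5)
Your proposal is correct and follows essentially the same route as the paper. You apply the chain rule to the columnwise activation to get the derivative, then use Cauchy--Schwarz across hidden units with $|a_r|=1$, $|\phi'|\le 1$ and $\|x_i\|_2\le 1$, and finally sum over classes and take the supremum over unit-Frobenius-norm directions; the block-diagonal remark is a valid equivalent reformulation that the paper does not state.
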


\begin{proof}
Differentiating $z_i(W)_k = \frac{1}{m}\sum_{s=1}^{m} a_s\, \phi(x_i^{\!\top} W^{(s)}_k)$
with respect to the block $W^{(r)}$ gives the stated expression. For
$H = (H^{(1)}, \ldots, H^{(m)}) \in \mathbb R^{md \times K}$, we have that for every $k \in [K]$ it holds
\[
  \bigl(J_i(W) H\bigr)_k \;=\; \frac{1}{m}\sum_{r=1}^{m} a_r\, \phi'(x_i^{\!\top} W^{(r)}_k)\, x_i^{\!\top} H^{(r)}_k.
\]
Using $|a_r|=1$, $|\phi'|\le 1$, $\|x_i\|_2 \le 1$, and Cauchy--Schwarz across $r$, we have that
\[
  \bigl|\bigl(J_i(W) H\bigr)_k\bigr|
   \;\le\; \frac{1}{m}\sum_{r=1}^{m}\|H^{(r)}_k\|_2
   \;\le\; \frac{1}{\sqrt{m}}\Bigl(\sum_{r=1}^{m}\|H^{(r)}_k\|_2^{2}\Bigr)^{\!1/2}.
\]
Taking the square and summing over $k=1, ..., K$, we obtain
\[
  \|J_i(W) H\|_2^{2} \;\le\; \frac{1}{m}\sum_{k=1}^{K}\sum_{r=1}^{m}\|H^{(r)}_k\|_2^{2}
   \;=\; \frac{1}{m}\,\|H\|_F^{2}.
\]
Taking the supremum over $H$ satiafying $\|H\|_F = 1$, we have
\begin{eqnarray}
    \|J_i(W)\|^2_{\mathrm{op}} \;\le\; \frac{1}{m} \nonumber \\
    \Rightarrow \|J_i(W)\|_{\mathrm{op}} \;\le\; \frac{1}{\sqrt{m}}.
\end{eqnarray}
\end{proof}

\begin{lemma}[Two-layer perceptron-type inequality]\label{lem:nn-perceptron}
Let Assumptions~\ref{assumpt: separable}, \ref{assump: activation} hold.
Then, for every $W \in \mathbb R^{md \times K}$ it holds that
\begin{equation}
  \langle \nabla L(W), \overline{W}_{*}\rangle_F \;\le\; -\tilde\gamma G(W), \label{eq:nn-perceptron-bdd}
\end{equation}
where $\tilde \gamma := \alpha \gamma - (1 - \alpha)$.
\end{lemma}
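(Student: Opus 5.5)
The plan is to mirror the proof of Lemma~\ref{lemma: S1}, using Lemma~\ref{lem:nn-jacobian} to compute how the lifted comparator $\overline{W}_{*}$ acts through the Jacobian. The nonlinearity then appears only as class-dependent averaged derivative weights.

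\textbf{Step 1: reduce to a per-sample inner product.} By \eqref{eq:nn-grad} and linearity,
\[
  \langle \nabla L(W), \overline{W}_{*}\rangle_F
  = \frac{1}{n}\sum_{i=1}^n \bigl\langle p_i(W) - e_{y_i},\, J_i(W)\overline{W}_{*}\bigr\rangle .
\]
It therefore suffices to show, for each $i$, that
\[
  \langle p_i - e_{y_i}, J_i\overline{W}_{*}\rangle \le -\tilde\gamma\,(1-p_i(y_i)),
\]
and then average over $i$ using \eqref{eq:nn-G}.

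\textbf{Step 2: compute $J_i\overline{W}_{*}$.} Apply the formula for $(J_i(W)H)_k$ from the proof of Lemma~\ref{lem:nn-jacobian} with $H^{(r)} = a_r W_*$. Since $a_r^2=1$, this gives
\[
  (J_i\overline{W}_{*})_k = c_{i,k}\,\Delta_i(k),
\]
where $c_{i,k} := \frac{1}{m}\sum_{r=1}^m \phi'(x_i^{\top}W^{(r)}_k)$ and $\Delta_i(k) := x_i^{\top}(W_*)_k$. By Assumption~\ref{assump: activation}, $c_{i,k}\in[\alpha,1]$ for every $k$. As in Lemma~\ref{lemma: S1}, we use $\sum_k (p_i-e_{y_i})_k = 0$ to rewrite
\[
  \langle p_i - e_{y_i}, J_i\overline{W}_{*}\rangle = \sum_{j\neq y_i} p_i(j)\bigl(c_{i,j}\Delta_i(j) - c_{i,y_i}\Delta_i(y_i)\bigr).
\]

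\textbf{Step 3: bound each bracket by $-\tilde\gamma$.} This is the crux, because the derivative weights differ across classes. I would split the bracket as
\[
  c_{i,j}\Delta_i(j) - c_{i,y_i}\Delta_i(y_i) = c_{i,y_i}\bigl(\Delta_i(j)-\Delta_i(y_i)\bigr) + \bigl(c_{i,j}-c_{i,y_i}\bigr)\Delta_i(j).
\]
For the first term, Assumption~\ref{assumpt: separable} gives $\Delta_i(j)-\Delta_i(y_i)\le-\gamma$, and $c_{i,y_i}\ge\alpha$, so the term is at most $-\alpha\gamma$.

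For the second term, both weights lie in $[\alpha,1]$, so $|c_{i,j}-c_{i,y_i}|\le 1-\alpha$. Also $|\Delta_i(j)|\le\|x_i\|_2\,\|(W_*)_j\|_2\le\|W_*\|_F = 1$. Hence the second term is at most $1-\alpha$.

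Together the bracket is at most $-(\alpha\gamma-(1-\alpha)) = -\tilde\gamma$. Multiplying by $p_i(j)\ge 0$ and summing over $j\neq y_i$ gives $-\tilde\gamma(1-p_i(y_i))$.

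The only delicate point is Step 3. One must keep the common factor $c_{i,y_i}$ on the margin term rather than $c_{i,j}$, and bound the cross term using $\|(W_*)_j\|\le 1$. This is exactly where the condition $\alpha>1/(1+\gamma)$, i.e.\ $\tilde\gamma>0$, enters.
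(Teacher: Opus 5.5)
Your proposal is correct and follows essentially the same route as the paper: reduce to per-sample inner products, expand over classes using $p_i(y_i)-1=-\sum_{j\neq y_i}p_i(j)$, and split each bracket into a margin term bounded by $-\alpha\gamma$ (keeping the $y_i$-weight) and a cross term bounded by $(1-\alpha)|\Delta_i(j)|\le 1-\alpha$. The only cosmetic difference is that you pass through the Jacobian and average the derivatives over neurons into $c_{i,k}$ before bounding, whereas the paper works with the block gradients, bounds the bracket neuron by neuron, and then sums over $r$ using $a_r^2=1$.
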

\begin{proof}
By the chain rule and~\eqref{eq:nn-grad}, for each block $r \in [m]$, we have that
\begin{equation}
  \nabla_{W^{(r)}} L(W)
   \;=\; \frac{1}{n}\sum_{i=1}^{n}\frac{a_r}{m}\, x_i\bigl[(p_i(W) - e_{y_i}) \odot \phi'(x_i^{\!\top} W^{(r)})\bigr]^{\!\top}, \label{eq:nn-bdd-blockgrad}
\end{equation}
where $\odot$ denotes component-wise multiplication and $\phi'(x_i^{\!\top} W^{(r)}) \in \mathbb R^K$
is the vector with $k$-th component $\phi'(x_i^{\!\top} W^{(r)}_k)$. Since
$\overline{W}_{*} = (a_1 W_{*}, \ldots, a_m W_{*})$ and $a_r^{2} = 1$, we have
\begin{align}
  \langle \nabla L(W), \overline{W}_{*}\rangle_F
  &\;=\; \sum_{r=1}^{m}\bigl\langle \nabla_{W^{(r)}} L(W),\, a_r W_{*}\bigr\rangle_F \notag\\
  &\;=\; \frac{1}{n}\sum_{i=1}^{n}\sum_{r=1}^{m}\frac{a_r^{2}}{m}\,
       \Bigl\langle (p_i(W) - e_{y_i}) \odot \phi'(x_i^{\!\top} W^{(r)}),\, W_{*}^{\!\top} x_i\Bigr\rangle \nonumber\\
       &\;=\; \frac{1}{n}\sum_{i=1}^{n}\sum_{r=1}^{m}\frac{a_r^{2}}{m}\,
       \Bigl\langle (p_i(W) - e_{y_i}) \odot \phi'(x_i^{\!\top} W^{(r)}),\, \Delta_i\Bigr\rangle. \label{eq:nn-bdd-step1}
\end{align}
where we have defined $\Delta_i := W_{*}^{\!\top} x_i \in \mathbb R^K$. Expanding the inner product over classes
and using $p_i(W)_{y_i} - 1 = -\sum_{j \neq y_i} p_i(W)_j$, we have that
\begin{equation}
  \langle \nabla L(W), \overline{W}_{*}\rangle_F \;=\; \frac{1}{n}\sum_{i=1}^{n}\sum_{r=1}^{m}\sum_{j \neq y_i}\frac{a_r^{2}}{m}\, p_i(W)_j\,\bigl[\phi'(x_i^{\!\top} W^{(r)}_j)\,\Delta_i(j) - \phi'(x_i^{\!\top} W^{(r)}_{y_i})\,\Delta_i(y_i)\bigr]. \label{eq:nn-bdd-step2}
\end{equation}
We, next, bound the term $A := \phi'(x_i^{\!\top} W^{(r)}_j)\,\Delta_i(j) - \phi'(x_i^{\!\top} W^{(r)}_{y_i})\,\Delta_i(y_i)$ appearing on the right-hand side of \eqref{eq:nn-bdd-step2}. We have
\begin{eqnarray}
  A
  &=& \phi'(x_i^{\!\top} W^{(r)}_{y_i})\,\bigl(\Delta_i(j) - \Delta_i(y_i)\bigr) + \bigl(\phi'(x_i^{\!\top} W^{(r)}_j) - \phi'(x_i^{\!\top} W^{(r)}_{y_i})\bigr)\,\Delta_i(j) \nonumber \\
  &\stackrel{\text{Assumption}~\ref{assumpt: separable}}{\leq}& -\gamma \phi'(x_i^{\!\top} W^{(r)}_{y_i}) + \bigl(\phi'(x_i^{\!\top} W^{(r)}_j) - \phi'(x_i^{\!\top} W^{(r)}_{y_i})\bigr)\,\Delta_i(j) \nonumber \\
  &\stackrel{\alpha \leq \phi^{'}(\cdot)}{\leq}& -\alpha \gamma + \bigl(\phi'(x_i^{\!\top} W^{(r)}_j) - \phi'(x_i^{\!\top} W^{(r)}_{y_i})\bigr)\,\Delta_i(j). \label{eq:nn-bdd-decomp}
\end{eqnarray}
Since $\phi'(x_i^{\!\top} W^{(r)}_j),\, \phi'(x_i^{\!\top} W^{(r)}_{y_i}) \in (\alpha, 1)$, we have $\bigl|\phi'(x_i^{\!\top} W^{(r)}_j) - \phi'(x_i^{\!\top} W^{(r)}_{y_i})\bigr| \le 1 - \alpha$ and thus substituting into \eqref{eq:nn-bdd-decomp} we get
\begin{eqnarray}
    A &\leq&-\alpha \gamma + (1 - \alpha)\,\Delta_i(j) \nonumber
\end{eqnarray}
Applying Cauchy--Schwarz inequality and using that $\|W_k^*\|_2 \leq \|W^*\|_F = 1, \forall k\in[K]$ from Assumption~\ref{assumpt: separable}, we have that
\begin{equation}
  |\Delta_i(j)| \;=\; |(W_{*})_j^{\!\top} x_i| \;\le\; \|(W_{*})_j\|_2\,\|x_i\|_2 \;\le\; \|x_i\|_2 \stackrel{\|x_i\|_2 \leq 1}{\leq} 1, \label{eq:nn-bdd-Deltaj}
\end{equation}
and thus we get
\begin{eqnarray}
  A&\leq& -\alpha \gamma + (1 - \alpha) = -\tilde\gamma, \label{eq:nn-bdd-cross}
\end{eqnarray}
where $\tilde \gamma := \alpha \gamma - (1 - \alpha)$.
Substituting inequality~\eqref{eq:nn-bdd-cross} into~\eqref{eq:nn-bdd-step2}, we obtain
\begin{eqnarray}
    \langle \nabla L(W), \overline{W}_{*}\rangle_F &\leq& -\tilde\gamma \frac{1}{n}\sum_{i=1}^{n}\sum_{r=1}^{m}\sum_{j \neq y_i}\frac{a_r^{2}}{m}\, p_i(W)_j \nonumber \\
    &=& -\tilde\gamma \frac{1}{n}\sum_{i=1}^{n}\sum_{r=1}^{m}\frac{a_r^{2}}{m}\, \bigl(1 - p_i(W)_{y_i}\bigr) \nonumber \\
    &\stackrel{a_r^2 =1}{=}& -\tilde\gamma \frac{1}{n}\sum_{i=1}^{n} \bigl(1 - p_i(W)_{y_i}\bigr)\nonumber \\
    &=& -\tilde\gamma G(W). \nonumber
\end{eqnarray}
\end{proof}

\begin{lemma}[Two-layer gradient norm bounds]\label{lem:nn-grad-bound}
For every $W \in \mathbb R^{md \times K}$, it holds for every $i \in [n]$ that
\begin{align}
  \|\nabla \ell_i(W)\|_F &\;\le\; \sqrt{\frac{2}{m}}\,\bigl(1 - p_i(W)_{y_i}\bigr) \;\le\; \sqrt{\frac{2}{m}}\,\ell_i(W) \;\le\; \sqrt{\frac{2}{m}}\, n\, L(W), \label{eq:nn-grad-i}\\
  \|\nabla L(W)\|_F &\;\le\; \sqrt{\frac{2}{m}}\, G(W) \;\le\; \sqrt{\frac{2}{m}}\, L(W). \label{eq:nn-grad-full}
\end{align}
Moreover, it holds that
\begin{equation}
  \sqrt{m}\,\|\nabla L(W)\|_F \;\ge\; \tilde \gamma\, G(W). \label{eq:nn-grad-lower}
\end{equation}
\end{lemma}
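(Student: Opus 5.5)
The plan is to reduce all three bounds to the factorization $\nabla \ell_i(W) = J_i(W)^{\top}(p_i(W)-e_{y_i})$ from \eqref{eq:nn-grad}, combined with the Jacobian bound of Lemma~\ref{lem:nn-jacobian} for the upper bounds and with the perceptron inequality of Lemma~\ref{lem:nn-perceptron} for the lower bound. This is exactly the linear-case argument of Lemma~\ref{lemma: gradient_bound}, with $\|x_i\|_2\le 1$ replaced by $\|J_i(W)\|_{\mathrm{op}}\le 1/\sqrt m$.

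\textbf{Upper bounds.} For \eqref{eq:nn-grad-i}, I would write
\[
\|\nabla\ell_i(W)\|_F \le \|J_i(W)\|_{\mathrm{op}}\,\|p_i(W)-e_{y_i}\|_2 \le \tfrac{1}{\sqrt m}\,\|p_i(W)-e_{y_i}\|_2 ,
\]
where the operator norm of $J_i^{\top}$ equals that of $J_i$. Next I would reuse the computation \eqref{eq:indiv_step2}, which gives $\|p_i-e_{y_i}\|_2^2\le 2(1-p_i(y_i))^2$ because $\sum_{j\ne y_i}p_i(j)^2\le(\sum_{j\ne y_i}p_i(j))^2$. The middle inequality follows from $-\log z\ge 1-z$. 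The last one follows from $\ell_i\le \sum_{k}\ell_k = nL(W)$, which uses the nonnegativity of every $\ell_k$. For \eqref{eq:nn-grad-full}, I would apply the triangle inequality to $\nabla L=\frac1n\sum_i\nabla\ell_i$, average the per-sample bound to get $\sqrt{2/m}\,G(W)$, and finish with $G\le L$ from \eqref{eq:nn-GLF}.

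\textbf{Lower bound.} For \eqref{eq:nn-grad-lower}, I would use Cauchy--Schwarz against the lifted comparator $\overline W_*$, whose norm is $\|\overline W_*\|_F=\sqrt m$. This gives
\[
\tilde\gamma\,G(W)\le -\langle\nabla L(W),\overline W_*\rangle_F \le \|\nabla L(W)\|_F\,\|\overline W_*\|_F=\sqrt m\,\|\nabla L(W)\|_F .
\]
The first inequality is Lemma~\ref{lem:nn-perceptron}.

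\textbf{Main obstacle.} The only delicate point is bookkeeping for the $1/\sqrt m$ factor. I need to confirm that the Jacobian bound from Lemma~\ref{lem:nn-jacobian} is stated in the Frobenius geometry on $\mathbb R^{md\times K}$, so that it transfers to $J_i^{\top}$ acting on $\mathbb R^K$. I also need to check that the class-dependent $\phi'$ factors do not break the rank-one structure used in the linear proof. Working through the operator norm instead of the rank-one identity avoids the second issue, so the argument goes through as planned.
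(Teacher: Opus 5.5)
Your proposal is correct and follows the paper's proof essentially step for step. The paper uses the same ingredients: the factorization $\nabla\ell_i(W)=J_i(W)^{\top}(p_i(W)-e_{y_i})$ with $\|J_i(W)\|_{\mathrm{op}}\le 1/\sqrt m$ from Lemma~\ref{lem:nn-jacobian}, the bound $\|p_i(W)-e_{y_i}\|_2\le\sqrt2\,(1-p_i(W)_{y_i})$, the triangle inequality together with $G\le L$ for the full gradient, and Cauchy--Schwarz against $\overline W_*$ (with $\|\overline W_*\|_F=\sqrt m$) combined with Lemma~\ref{lem:nn-perceptron} for the lower bound. Your concern about the Frobenius geometry resolves as you expected, since Lemma~\ref{lem:nn-jacobian} takes the supremum over $\|H\|_F=1$ and the adjoint $J_i^{\top}$ has the same operator norm.
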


\begin{proof}
By Lemma~\ref{lem:nn-jacobian}, we have that
\[
  \|\nabla \ell_i(W)\|_F \;=\; \|J_i(W)^{\!\top} (p_i(W) - e_{y_i})\|_F
   \;\le\; \|J_i(W)\|_{\mathrm{op}}\,\|p_i(W) - e_{y_i}\|_2
   \;\le\; \frac{1}{\sqrt m}\,\|p_i(W) - e_{y_i}\|_2.
\]
It holds that
\begin{align*}
  \|p_i(W) - e_{y_i}\|_2^{2}
   &\;=\; \bigl(1 - p_i(W)_{y_i}\bigr)^{2} + \sum_{j \neq y_i} p_i(W)_j^{2}\\
   &\;\le\; \bigl(1 - p_i(W)_{y_i}\bigr)^{2} + \biggl(\sum_{j \neq y_i} p_i(W)_j\biggr)^{\!2}\\
   &\;=\; 2\bigl(1 - p_i(W)_{y_i}\bigr)^{2},
\end{align*}
where we used $\sum a_j^{2} \le (\sum a_j)^{2}$ for non-negative $a_j$ and $\sum_{j \neq y_i} p_i(W)_j = 1 - p_i(W)_{y_i}$. Therefore, $\|p_i(W) - e_{y_i}\|_2 \le \sqrt 2\,(1 - p_i(W)_{y_i})$, and combining with $1 - p_i(W)_{y_i} \le \ell_i(W) \le n L(W)$ proves the inequality~\eqref{eq:nn-grad-i}.
By the triangle inequality, we have that
\[
  \|\nabla L(W)\|_F \;\le\; \frac{1}{n}\sum_{i=1}^{n}\|\nabla \ell_i(W)\|_F
   \;\le\; \sqrt{\frac{2}{m}}\cdot\frac{1}{n}\sum_{i=1}^{n}\bigl(1 - p_i(W)_{y_i}\bigr)
   \;=\; \sqrt{\frac{2}{m}}\, G(W) \leq \sqrt{\frac{2}{m}} L(W),
\]
since it holds that $G(W) \le L(W)$. Applying the Cauchy--Schwarz inequality and using the fact that $\|\overline{W}_*\|_F = \sqrt m$, we have that
\begin{eqnarray}
    \langle - \nabla L(W), \overline{W}_{*}\rangle_F \le \sqrt m\,\|\nabla L(W)\|_F
\end{eqnarray}
Using Lemma~\ref{lem:nn-perceptron}, we obtain 
\begin{eqnarray}
    \sqrt{m}\,\|\nabla L(W)\|_F \;\ge\; \tilde \gamma\, G(W) \nonumber
\end{eqnarray}
\end{proof}

\begin{lemma}[Two-layer Hessian bound]\label{lem:nn-hessian}
Under Assumption~\ref{assump: activation}, for any $W \in \mathbb R^{md \times K}$ and $i \in [n]$ it holds that
\begin{equation}
  \|\nabla^{2}\ell_i(W)\|_{\mathrm{op}} \;\le\; \frac{2(1+\tilde\beta)}{m}\,\ell_i(W),
  \qquad
  \|\nabla^{2} L(W)\|_{\mathrm{op}} \;\le\; \frac{2(1+\tilde\beta)}{m}\, L(W). \label{eq:nn-hessian}
\end{equation}
\end{lemma}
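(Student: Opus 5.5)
We start from the Gauss--Newton decomposition \eqref{eq:nn-hess} and bound its two terms separately. The plan is to show that the first term contributes at most $\frac{2}{m}\ell_i(W)$ and the second at most $\frac{2\tilde\beta}{m}\ell_i(W)$. The bound for $L$ then follows from the triangle inequality, $\|\nabla^2 L\|_{\mathrm{op}}\le \frac1n\sum_i\|\nabla^2\ell_i\|_{\mathrm{op}}$, exactly as in Lemma~\ref{lemma: S3}.

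\textbf{Gauss--Newton term.} By Lemma~\ref{lem:nn-jacobian}, $\|J_i(W)\|_{\mathrm{op}}^2\le 1/m$. Hence
\[
\|J_i^\top A_i J_i\|_{\mathrm{op}}\le \tfrac1m\|A_i\|_{\mathrm{op}}, \qquad A_i=\mathrm{diag}(p_i)-p_ip_i^\top .
\]
Reusing the trace argument from Lemma~\ref{lemma: S3}, we get $\|A_i\|_{\mathrm{op}}\le 1-\|p_i\|_2^2\le 2(1-p_i(y_i))$. Combining this with $1-p_i(y_i)\le\ell_i(W)$ gives the first term bound of $\frac{2}{m}\ell_i(W)$.

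\textbf{Curvature term.} The logit $z_i(W)_k=\frac1m\sum_r a_r\phi(x_i^\top W^{(r)}_k)$ depends only on the columns $W^{(r)}_k$, $r\in[m]$, and the dependence is separable across $r$. Its Hessian is therefore block diagonal, with blocks $\frac{a_r}{m}\phi''(x_i^\top W^{(r)}_k)\,x_ix_i^\top$ acting on the coordinates $W^{(r)}_k$. Here $\phi''$ is understood a.e., or one can argue directly with the Lipschitz bound on $\phi'$ via difference quotients, since $\phi$ is only $C^1$ with $\tilde\beta$-Lipschitz derivative. Using $|\phi''|\le\tilde\beta$ and $\|x_i\|\le1$, we get $\|\nabla^2 z_i(W)_k\|_{\mathrm{op}}\le\tilde\beta/m$. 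Moreover, for different $k$ these Hessians act on disjoint coordinate blocks (the columns $k$).

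\textbf{Combining the curvature contributions.} Since the supports are disjoint, the operator norm of $\sum_k c_k\nabla^2 z_i(W)_k$ is at most $\max_k|c_k|\,\tilde\beta/m$, where $c_k=p_i(W)_k-\mathbb 1_{\{k=y_i\}}$. This is bounded by $\frac{\tilde\beta}{m}(1-p_i(y_i))$. Even the cruder bound $\sum_k|c_k|\,\tilde\beta/m=\frac{2\tilde\beta}{m}(1-p_i(y_i))$ suffices. Together with $1-p_i(y_i)\le\ell_i(W)$, this gives $\frac{2\tilde\beta}{m}\ell_i(W)$, and adding the two pieces yields $\frac{2(1+\tilde\beta)}{m}\ell_i(W)$.

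\textbf{Main obstacle.} The delicate point is the curvature term, in two respects. First, we must justify the Hessian formula when $\phi$ is only $C^1$; the fallback is to interpret the Hessian bound as a Lipschitz bound on $\nabla\ell_i$. Second, we must correctly exploit the block structure in $k$ rather than losing a factor of $K$. I would first try the disjoint-support argument above, and resort to the $\ell_1$ bound $\sum_k|c_k|=2(1-p_i(y_i))$ if the block bookkeeping becomes awkward.
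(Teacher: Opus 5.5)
Your proposal is correct and follows the paper's proof essentially step for step. You use the same split into the Gauss--Newton and curvature terms, the same $\frac{2}{m}\ell_i(W)$ bound via $\|J_i(W)\|_{\mathrm{op}}^2\le 1/m$ and the trace argument, the same $\tilde\beta/m$ bound on $\|\nabla^2 z_i(W)_k\|_{\mathrm{op}}$ combined with $\sum_k|c_k|=2(1-p_i(W)_{y_i})$, and the same averaging over $i$ for $L$. Your disjoint-column-support observation is a valid factor-2 sharpening that states the combining step more carefully than the paper's loosely written $\|\sum_k c_k\|_{\mathrm{op}}\|\nabla^2 z_i(W)_k\|_{\mathrm{op}}$, and your remark that $\phi$ is only $C^1$ with Lipschitz derivative (e.g.\ leaky Huberized ReLU) flags a regularity point the paper passes over by simply writing $|\phi''|\le\tilde\beta$.
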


\begin{proof}
From the decomposition~\eqref{eq:nn-hess}, we have that
\begin{eqnarray}
    \nabla^{2}\ell_i(W) &\;=\;& J_i(W)^{\!\top}\bigl(\mathrm{diag}(p_i(W)) - p_i(W)p_i(W)^{\!\top}\bigr) J_i(W)\nonumber\\
  &\;+\;& \sum_{k=1}^{K}\bigl(p_i(W)_k - \mathbb 1_{\{k=y_i\}}\bigr)\,\nabla^{2} z_i(W)_k \label{eq: op_norm}
\end{eqnarray}
By the sub-additivity of the operator norm, we get
\begin{eqnarray}
    \left\|\nabla^{2}\ell_i(W)\right\|_{\mathrm{op}} &\leq& \left\|
  \|J_i(W)\|_{\mathrm{op}}^{2}\,\|\mathrm{diag}(p_i(W)) - p_i(W) p_i(W)^{\!\top}\right\|_{\mathrm{op}} \nonumber \\
  && + \left\|\sum_{k=1}^{K}\bigl(p_i(W)_k - \mathbb 1_{\{k=y_i\}}\bigr)\,\nabla^{2} z_i(W)_k\right\|_{\text{op}} \quad \quad \label{eq_pre2}
\end{eqnarray}
We, next, bound the two terms on the right-hand side of \eqref{eq_pre2}. The operator norm of the first term in \eqref{eq_pre2} is bounded by
\begin{eqnarray}
    \left\|
  \|J_i(W)\|_{\mathrm{op}}^{2}\,\|\mathrm{diag}(p_i(W)) - p_i(W) p_i(W)^{\!\top}\right\|_{\mathrm{op}}
   &\;\le\;& \frac{1}{m}\,\bigl(1 - \|p_i(W)\|_2^{2}\bigr)\nonumber \\
   &\;\le\;& \frac{2}{m}\bigl(1 - p_i(W)_{y_i}\bigr)\nonumber \\
   &\;\le\;& \frac{2}{m}\,\ell_i(W) \label{eq: op_b1}
\end{eqnarray}
where we used Lemma~\ref{lem:nn-jacobian}, $\|\mathrm{diag}(p) - p p^{\!\top}\|_{\mathrm{op}} \le 1 - \|p\|_2^{2} \le 2(1 - p(y_i))$, and $1 - p_i(W)_{y_i} \le \ell_i(W)$.
For the second term in \eqref{eq_pre2}, it holds that $\|\nabla^{2} z_i(W)_k\|_{\mathrm{op}} \le \frac{\tilde\beta}{m}$, since $\|x_i\|_2 \le 1$ and $|\phi''(\cdot)| \le \tilde\beta$. We, also, have that
\[
  \sum_{k=1}^{K}\bigl|p_i(W)_k - \mathbb 1_{\{k=y_i\}}\bigr|
   \;=\; \bigl(1 - p_i(W)_{y_i}\bigr) + \sum_{j \neq y_i} p_i(W)_j
   \;=\; 2\bigl(1 - p_i(W)_{y_i}\bigr)
   \;\le\; 2\,\ell_i(W).
\]
Thus, we get that
\begin{eqnarray}
    \left\|\sum_{k=1}^{K}\bigl(p_i(W)_k - \mathbb 1_{\{k=y_i\}}\bigr)\,\nabla^{2} z_i(W)_k\right\|_{\mathrm{op}} &\leq& \left\|\sum_{k=1}^{K}\bigl(p_i(W)_k - \mathbb 1_{\{k=y_i\}}\bigr)\right\|_{\mathrm{op}} \left\|\nabla^{2} z_i(W)_k\right\|_{\mathrm{op}} \nonumber \\
    &\leq& \frac{2\tilde\beta}{m}\,\ell_i(W)\label{eq: op_b2}.
\end{eqnarray}
Substituting inequalities \eqref{eq: op_b1}, \eqref{eq: op_b2} into \eqref{eq_pre2}, we get
\begin{eqnarray}
    \|\nabla^{2}\ell_i(W)\|_{\mathrm{op}} \;\le\; \frac{2(1+\tilde\beta)}{m}\,\ell_i(W) \nonumber
\end{eqnarray}
Summing over $i=1, ..., n$ and multiplying by $\frac{1}{n}$, we have that
\begin{eqnarray}
    \|\nabla^{2} L(W)\|_{\mathrm{op}} \;\le\; \frac{2(1+\tilde\beta)}{m}\, L(W). \nonumber
\end{eqnarray}
\end{proof}

\begin{lemma}\label{lem:nn-G-cesaro}
The iterates of \ref{SGD} for the two-layer network~\eqref{eq: 2nn} with any step size
$\tilde\eta = m\eta > 0$ and batch size $b \ge 1$ satisfy
\begin{eqnarray}
    \frac{1}{t}\sum_{k=0}^{t-1}\mathbb E[G(W_k)] &\leq& \frac{2\bigl(K - 1 + 2\ln(\tilde\gamma^{2}\eta t) + 4\kappa + 2\eta(1 + \frac{1}{b})\bigr)}{\eta\tilde\gamma^2 t} \nonumber
\end{eqnarray}
where $G(W) = \frac{1}{n}\sum_{i=1}^{n}(1 - p_i(W)_{y_i})$.
\end{lemma}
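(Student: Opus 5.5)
I will mirror the proof of Lemma~\ref{lemma: average_G_bound}, replacing the linear perceptron inequality with its two-layer version. Since Lemma~\ref{lem:nn-perceptron} gives $\langle \nabla L(W),\overline{W}_*\rangle_F\le -\tilde\gamma G(W)$ and the oracle is unbiased, the update $W_{t+1}=W_t-\tilde\eta g_t$ yields
\[
\mathbb E[\langle W_{t+1},\overline{W}_*\rangle_F\mid\mathcal F_t]\ge \langle W_t,\overline{W}_*\rangle_F+\tilde\eta\tilde\gamma G(W_t).
\]
Take full expectations, telescope from $W_0=\mathbf 0$, and apply Cauchy--Schwarz with $\|\overline{W}_*\|_F=\sqrt m$. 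This gives
\[
\frac1t\sum_{k<t}\mathbb E[G(W_k)]\le \frac{\sqrt m\,\mathbb E\|W_t\|_F}{m\eta\tilde\gamma t}=\frac{\mathbb E\|W_t\|_F}{\sqrt m\,\eta\tilde\gamma t}.
\]
So it suffices to show $\mathbb E\|W_t\|_F\lesssim \sqrt m\,(K-1+\ln(\tilde\gamma^2\eta t)+\kappa+\eta(1+1/b))/\tilde\gamma$.

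To bound $\mathbb E\|W_t\|_F$, I will prove a two-layer analogue of Lemma~\ref{prop:split_{in}gd} with comparator $U=U_1+U_2$, where $U_1=\alpha\overline{W}_*$, $\alpha=\ln(\tilde\gamma^2\eta t)/\tilde\gamma$, and $U_2=\frac{\eta}{\tilde\gamma}(1+\frac1b)\overline{W}_*$. Expand $\|W_{t+1}-U\|_F^2$ as in that lemma. The $U_2$ term and the second moment cancel: Lemma~\ref{lem:nn-perceptron} gives $\frac{2}{\tilde\eta}\langle\nabla L,U_2\rangle\le -\frac{2}{m}(1+\frac1b)G$. Meanwhile $\mathbb E\|g_t\|^2\le \|\nabla L\|^2+\frac1b\mathbb E_i\|\nabla\ell_i\|^2\le \frac{2}{m}(1+\frac1b)G$, using Lemma~\ref{lem:nn-grad-bound} and $G^2\le G$. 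I therefore get
\[
\mathbb E[\|W_{t+1}-U\|^2\mid\mathcal F_t]\le\|W_t-U\|^2+2\tilde\eta\langle\nabla L(W_t),U_1-W_t\rangle.
\]

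\textbf{Main obstacle.} $L$ is not convex in $W$, so the convexity step must be replaced by a near-homogeneity argument, as in \citet{cai2024large}. Write $\langle\nabla\ell_i(W),U_1-W\rangle=\langle p_i-e_{y_i},J_i(W)U_1-J_i(W)W\rangle$. By Assumption~\ref{assump: activation}, $|J_i(W)W-z_i(W)|_\infty\le\kappa$ coordinatewise. Also, the margin computation in the proof of Lemma~\ref{lem:nn-perceptron} gives $(J_iU_1)_{y_i}-(J_iU_1)_j\ge\alpha\tilde\gamma$.

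Convexity of the softmax cross-entropy in the logits then gives $\ell(J_iU_1)\ge\ell(z_i)+\langle p_i-e_{y_i},J_iU_1-z_i\rangle$. The $\kappa$ error enters through $\langle p_i-e_{y_i},z_i-J_iW\rangle\le 2\kappa(1-p_i(y_i))$. The net result is
\[
\langle\nabla L(W),U_1-W\rangle\le L(\text{logits }J U_1)-L(W)+2\kappa G(W),
\]
with $L(\cdot)\le (K-1)e^{-\alpha\tilde\gamma}=\frac{K-1}{\tilde\gamma^2\eta t}$.

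The leftover $2\kappa G$ term is absorbed by enlarging $U_2$ by an extra $\frac{2\kappa}{\tilde\gamma}\overline{W}_*$, since the perceptron inequality gives exactly $-2\kappa G$ for it. This is where the $4\kappa$ in the bound comes from.

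Telescoping then gives $\mathbb E\|W_t-U\|^2\le 2\tilde\eta t\,\frac{K-1}{\tilde\gamma^2\eta t}+\|U\|^2$. Apply Jensen and the triangle inequality $\|W_t\|\le\|W_t-U\|+\|U\|$, with $\|U\|_F\le\frac{\sqrt m}{\tilde\gamma}(\ln(\tilde\gamma^2\eta t)+2\kappa+\eta(1+\frac1b))$. This yields $\mathbb E\|W_t\|_F\le\frac{\sqrt m}{\tilde\gamma}\cdot 2(K-1+2\ln+4\kappa+2\eta(1+\frac1b))$ after the crude step $\sqrt{2(K-1)}\le 2(K-1)+1$-type bounds, the harmless absorption producing the stated factor $2$. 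Substituting into the Cesàro bound proves the lemma.
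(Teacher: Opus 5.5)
Your proof is correct, and it departs from the paper's at precisely the step where the nonlinearity enters. The paper uses the same perceptron/Cauchy--Schwarz reduction to $\mathbb E\|W_t\|_F$ and the same cancellation between the $U_2$ term and the second moment. It then handles $\langle\nabla L(W_t),U_1-W_t\rangle_F$ in Lemma~\ref{lem:nn-distance-comparator} by invoking convexity of $L$ in $W$. It places the $2\kappa/\tilde\gamma$ shift inside $U_1$, so that Lemma~\ref{lem:nn-LU1-bound} (near-homogeneity at the comparator) bounds the true loss $L(U_1)$. For a nonlinear activation such as the leaky softplus of Lemma~\ref{lem:leaky-activations}, $L$ is not convex in $W$, so that step is not justified as written.

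Your replacement has two ingredients. First, you use convexity of the cross-entropy in the logits at $z_i(W_t)$, compared against the linearized logits $J_i(W_t)U_1$; their margin is at least $\ln(\tilde\gamma^2\eta t)$ uniformly in $W_t$ because $\phi'\in[\alpha,1]$. Second, you use the H\"older bound $|\langle p_i-e_{y_i},z_i(W_t)-J_i(W_t)W_t\rangle|\le 2\kappa(1-p_i(y_i))$. This is sound. The extra $\frac{2\kappa}{\tilde\gamma}\overline W_*$ then cancels the resulting $4\tilde\eta\kappa G(W_t)$ exactly through Lemma~\ref{lem:nn-perceptron}.

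So in your route the $\kappa$-correction is paid at the iterate rather than at the comparator. Still, $\|U\|_F=\frac{\sqrt m}{\tilde\gamma}\bigl(\ln(\tilde\gamma^2\eta t)+2\kappa+\eta(1+\frac1b)\bigr)$ coincides with the paper's, so the constants match. The final absorption needs only $\sqrt{2(K-1)}\le 2(K-1)$ for $K\ge 2$, not a ``$+1$'' bound, and you have a factor $2$ to spare on the remaining terms.

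The paper's route buys brevity by copying the linear-case template verbatim. Yours buys an argument that is valid without convexity of $L$ in $W$, and it makes Lemma~\ref{lem:nn-LU1-bound} unnecessary. Both tacitly require $\tilde\gamma^2\eta t\ge 1$ so that the comparator scale is nonnegative; otherwise the stated right-hand side can even be negative while $G(W_0)=(K-1)/K$.
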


\begin{proof}
From Lemma~\ref{lem:nn-perceptron}, we have that the gradient of the cross-entropy loss satisfies $\langle \nabla L(W_t), \overline{W}_{*}\rangle_F \le -\tilde \gamma\, G(W_t)$. Using the \ref{SGD} update $W_{t+1} = W_t - \tilde\eta g_t$, we get
\[
  \langle W_{t+1}, \overline{W}_{*}\rangle_F \;=\; \langle W_t, \overline{W}_{*}\rangle_F - \tilde\eta\,\langle g_t, \overline{W}_{*}\rangle_F.
\]
Taking expectation conditional on the filtration $\mathcal F_t$ and using $\mathbb E[g_t \mid \mathcal F_t] = \nabla L(W_t)$, we have that
\[
  \mathbb E\bigl[\langle W_{t+1}, \overline{W}_{*}\rangle_F \mid \mathcal F_t\bigr]
  \;\ge\; \langle W_t, \overline{W}_{*}\rangle_F + \tilde\eta\tilde \gamma\, G(W_t).
\]
Taking expectation again and using the tower law of expectation, it holds that
\begin{eqnarray}
    \mathbb E\bigl[\langle W_{t+1}, \overline{W}_{*}\rangle_F \bigr]
  \;\ge\; \Expe{\langle W_t, \overline{W}_{*}\rangle_F} + \tilde\eta\tilde\gamma\, \Expe{G(W_t)}.
\end{eqnarray}
Unrolling the recursion and multiplying by $\frac{1}{t}$, we obtain
\[
  \frac{1}{t}\,\mathbb E\bigl[\langle W_t, \overline{W}_{*}\rangle_F\bigr]
  \;\ge\; \frac{\tilde\eta\,\tilde \gamma}{t}\sum_{k=0}^{t-1}\mathbb E[G(W_k)],
\]
where we have used the fact that $W_0 = \textbf{0}$. Since $\|\overline{W}_{*}\|_F = \sqrt m$, by Cauchy--Schwarz it holds that $\sqrt m\,\|W_t\|_F \ge \langle W_t, \overline{W}_{*}\rangle_F$, and thus we obtain
\begin{equation}
  \frac{1}{t}\sum_{k=0}^{t-1}\mathbb E[G(W_k)]
  \;\le\; \frac{\sqrt m\,\mathbb E[\|W_t\|_F]}{\tilde\eta\,\tilde \gamma\, t}. \label{eq:nn-G-cesaro-step}
\end{equation}
We, next, bound the term $\mathbb E[\|W_t\|_F]$. Applying the triangle inequality, we have that
\begin{equation}
  \|W_t\|_F \;\le\; \|W_t - U\|_F + \|U\|_F. \label{eq:nn-Wt-triangle}
\end{equation}
Thus, it suffices to bound the terms $\|W_t - U\|_F$ and $\|U\|_F$. From Proposition~\ref{lem:nn-distance-comparator}, for the decomposition $U = U_1 + U_2$ with
\[
  U_1 \;=\; \frac{\ln(\tilde\gamma^{2}\eta t)+2\kappa}{\tilde \gamma}\,\overline{W}_{*},
  \qquad
  U_2 \;=\; \frac{\eta(1 + \frac{1}{b})}{\tilde \gamma}\,\overline{W}_{*},
\]
it holds that
\[
  \frac{\mathbb E\bigl[\|W_t - U\|_F^{2}\bigr]}{2\tilde\eta\, t}
  + \frac{1}{t}\sum_{k=0}^{t-1}\mathbb E[L(W_k)]
  \;\le\; L(U_1) + \frac{\|U\|_F^{2}}{2\tilde\eta\, t},
\]
Since $L(\cdot) \geq 0$, it holds that
\[
  \mathbb E\bigl[\|W_t - U\|_F^{2}\bigr] \;\le\; 2\tilde\eta\, t\, L(U_1) + \|U\|_F^{2} \label{eq:pre4}.
\]
From Lemma~\ref{lem:nn-LU1-bound}, we have that
\begin{eqnarray}
    L(U_1) \le \frac{K-1}{\tilde \gamma^{2}\eta t} \label{eq:pre5}
\end{eqnarray}
It holds that
\[
  \|U\|_F^{2} \;\le\; 2\|U_1\|_F^{2} + 2\|U_2\|_F^{2}
   \;=\; \frac{2 m\bigl(\ln(\tilde\gamma^{2}\eta t) + 2\kappa\bigr)^{2}}{\tilde\gamma^{2}}
        + \frac{2 m\eta^{2}(1 + \frac{1}{b})^{2}}{\tilde\gamma^{2}}.
\]
Using $(a + b)^{2} \le 2 a^{2} + 2 b^{2}$ for the first term, we obtain
\begin{equation}
  \|U\|_F^{2} \;\le\; \frac{4 m\,\ln^{2}(\tilde\gamma^{2}\eta t) + 16 m\,\kappa^{2} + 2 m\eta^{2}(1 + \frac{1}{b})^{2}}{\tilde\gamma^{2}}. \label{eq:nn-U-norm-sq}
\end{equation}
Applying the inequality $\|a + b\|^{2} \le 2\|a\|^{2} + 2\|b\|^{2}$, we obtain
\begin{align*}
  \mathbb E\bigl[\|W_t\|_F^{2}\bigr]
  &\;\le\; 2\,\mathbb E\bigl[\|W_t - U\|_F^{2}\bigr] + 2\,\|U\|_F^{2}\\
  &\;\le\; 4\tilde\eta\, t\, L(U_1) + 4\|U\|_F^{2}\\
  &\;\le\; \frac{4m(K-1)}{\tilde\gamma^{2}} + \frac{16 m\,\ln^{2}(\tilde\gamma^{2}\eta t) + 64 m\,\kappa^{2} + 8 m\eta^{2}(1 + \frac{1}{b})^{2}}{\tilde\gamma^{2}} \\
    &\;\le\; \frac{4 m (K-1) + 16 m\,\ln^{2}(\tilde\gamma^{2}\eta t) + 64 m\,\kappa^{2} + 8 m\eta^{2}(1 + \frac{1}{b})^{2}}{\tilde\gamma^{2}}\label{eq:pre5}.
\end{align*}
Using Jensen's inequality and $\sqrt{a + b + c + d} \le \sqrt a + \sqrt b + \sqrt c + \sqrt d$ for $a, b, c, d \geq 0$, we have that
\begin{eqnarray}
  \mathbb E\bigl[\|W_t\|_F\bigr] &\leq& \sqrt{\Expe{\|W_t\|_F^2}} \nonumber \\
  &\leq& \frac{2\sqrt m\bigl(\sqrt{K - 1} + 2\ln(\tilde\gamma^{2}\eta t) + 4\kappa + 2\eta(1 + \frac{1}{b})\bigr)}{\tilde\gamma}. \label{eq:nn-Wt-norm-bound}
\end{eqnarray}
Substituting~\eqref{eq:nn-Wt-norm-bound} into~\eqref{eq:nn-G-cesaro-step} and using the fact that $\tilde\eta = m\eta$ and $\sqrt{K-1} \le K - 1$ (for $K \ge 2$), we obtain
\begin{eqnarray}
    \frac{1}{t}\sum_{k=0}^{t-1}\mathbb E[G(W_k)] &\leq& \frac{2\bigl(K - 1 + 2\ln(\tilde\gamma^{2}\eta t) + 4\kappa + 2\eta(1 + \frac{1}{b})\bigr)}{\eta\tilde\gamma^2 t} \nonumber
\end{eqnarray}
\end{proof}

\begin{lemma}\label{lem:nn-LU1-bound}
Let Assumptions~\ref{assumpt: separable}, \ref{assump: activation} hold. For $U_1 = \frac{\ln(\tilde\gamma^{2}\eta t)+2\kappa}{\tilde\gamma}\,\overline{W}_{*}$ and any $t \ge 1$, we have that
\begin{equation}
  L(U_1) \;\le\; F(U_1) \;\le\; \frac{K - 1}{\tilde\gamma^{2}\eta t}. \label{eq:nn-LU1}
\end{equation}
\end{lemma}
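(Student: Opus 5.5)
We need to prove $L(U_1)\le F(U_1)\le \frac{K-1}{\tilde\gamma^2\eta t}$ for $U_1=\frac{\ln(\tilde\gamma^2\eta t)+2\kappa}{\tilde\gamma}\overline W_*$.

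The first inequality $L(U_1)\le F(U_1)$ is just \eqref{eq:nn-GLF}, so the work is in bounding $F(U_1)$. It suffices to show a uniform logit-margin lower bound: for every $i$ and every $j\neq y_i$,
\[
z_i(U_1)_{y_i}-z_i(U_1)_j\ \ge\ \ln(\tilde\gamma^2\eta t).
\]
Summing the $K-1$ terms $e^{-(z_{y_i}-z_j)}\le (\tilde\gamma^2\eta t)^{-1}$ and averaging over $i$ then gives $F(U_1)\le \frac{K-1}{\tilde\gamma^2\eta t}$.

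To get the margin bound, write $\rho:=\frac{\ln(\tilde\gamma^2\eta t)+2\kappa}{\tilde\gamma}$. Block $r$ of $U_1$ is $\rho a_rW_*$, so
\[
z_i(U_1)_k=\frac1m\sum_r a_r\phi(\rho a_r u_k),\qquad u_k:=x_i^\top (W_*)_k=\Delta_i(k).
\]
I will use near-homogeneity to linearize: $\phi(s)=\phi'(s)s+e(s)$ with $|e(s)|\le\kappa$. This gives
\[
a_r\phi(\rho a_r u_k)=\rho\,\phi'(\rho a_ru_k)\,u_k+a_re(\cdot).
\]
Hence $z_i(U_1)_k=\rho\,\bar c_k u_k+\epsilon_k$, where $\bar c_k:=\frac1m\sum_r\phi'(\rho a_ru_k)\in[\alpha,1]$ and $|\epsilon_k|\le\kappa$. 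The margin then satisfies
\[
z_{y_i}-z_j\ \ge\ \rho\,(\bar c_{y_i}u_{y_i}-\bar c_j u_j)-2\kappa .
\]

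The key estimate is $\bar c_{y_i}u_{y_i}-\bar c_ju_j\ge\tilde\gamma$, which mirrors the argument in Lemma~\ref{lem:nn-perceptron}:
\[
\bar c_{y_i}u_{y_i}-\bar c_ju_j=\bar c_{y_i}(u_{y_i}-u_j)+(\bar c_{y_i}-\bar c_j)u_j\ \ge\ \alpha\gamma-(1-\alpha)=\tilde\gamma,
\]
using Assumption~\ref{assumpt: separable}, $\bar c\ge\alpha$, $|\bar c_{y_i}-\bar c_j|\le1-\alpha$, and $|u_j|\le1$ from \eqref{eq:nn-bdd-Deltaj}. With $\rho\tilde\gamma=\ln(\tilde\gamma^2\eta t)+2\kappa$, the $2\kappa$ cancels and the margin is at least $\ln(\tilde\gamma^2\eta t)$, as required.

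The main obstacle is the sign requirement on $\rho$. The step $\rho\cdot(\bar c_{y_i}u_{y_i}-\bar c_ju_j)\ge\rho\tilde\gamma$ needs $\rho\ge0$, i.e.\ $\ln(\tilde\gamma^2\eta t)+2\kappa\ge0$. I would handle this by noting that in the regime of interest $\tilde\gamma^2\eta t\ge1$. Otherwise the right-hand side $\frac{K-1}{\tilde\gamma^2\eta t}\ge K-1$, and that bound is not implied automatically, so if needed I would state the regime $\tilde\gamma^2\eta t\ge e^{-2\kappa}$ as an implicit assumption, consistent with how the lemma is used later (large $t$).

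A second, minor point is that the $\phi'$ values at the two classes are averaged over different arguments. This is harmless, since only the range bound $[\alpha,1]$ is used.
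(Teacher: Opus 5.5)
Your argument is correct and shares the paper's skeleton: linearize each $\phi(\rho a_r u_k)$ via near-homogeneity, prove a uniform logit margin, then sum the $K-1$ exponentials. The difference is in how you bound the linearized margin, and your way is the one that actually goes through.

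After linearizing, the paper replaces both derivative factors by $\alpha$ to get a margin of $\alpha\gamma\rho-2\kappa$, and then uses $\tilde\gamma\le\alpha\gamma$ to reach $\ln(\tilde\gamma^2\eta t)$. That replacement is not justified. The derivatives are evaluated at different arguments for the two classes, and $\bar c_{y_i}u_{y_i}-\bar c_j u_j\ge\alpha(u_{y_i}-u_j)$ fails as soon as $u_j>0$ (take $\bar c_{y_i}=\alpha$, $\bar c_j=1$). Your split $\bar c_{y_i}(u_{y_i}-u_j)+(\bar c_{y_i}-\bar c_j)u_j$ is the same device used in Lemma~\ref{lem:nn-perceptron} and in the mean-value argument of Lemma~\ref{lem:two-layer-comparator-loss-bound-strengthened}. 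It gives exactly $\tilde\gamma$, which is why $\tilde\gamma$ rather than $\alpha\gamma$ sits in the denominator of $\rho$, and the $2\kappa$ then cancels with no slack. Your handling of the remainder, $\bigl|\frac1m\sum_r a_r e(\rho a_r u_k)\bigr|\le\kappa$, is also cleaner than the paper's. The paper multiplies the one-sided bounds $\phi(u)\ge\phi'(u)u-\kappa$ by $a_r$ without regard to the sign of $a_r$.

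Your caveat on the sign of $\rho$ is a genuine missing hypothesis, not a technicality. The paper's proof uses it silently: the step $D\ge(\ln(\tilde\gamma^2\eta t)+2\kappa)/(\alpha\gamma)$ needs the numerator to be nonnegative. Without it, the bound on $F(U_1)$ is false. Take the admissible linear activation $\phi(s)=s$. The margin is then exactly $\rho(u_{y_i}-u_j)$ with $u_{y_i}-u_j\ge\gamma>\tilde\gamma$. So for $\rho<0$,
$F(U_1)\ge(K-1)e^{-2\kappa\gamma/\tilde\gamma}(\tilde\gamma^2\eta t)^{-\gamma/\tilde\gamma}$,
which exceeds $(K-1)/(\tilde\gamma^2\eta t)$ once $\tilde\gamma^2\eta t$ is small (e.g.\ $t=1$, $\eta\to0$).

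So stating $\tilde\gamma^2\eta t\ge e^{-2\kappa}$ explicitly is the right fix. It costs nothing for Theorem~\ref{thm:nn-eos-rate}: when $\tilde\gamma^2\eta T<1$, that theorem's bound exceeds $K-1\ge\ln K=L(W_0)$, so it holds trivially.
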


\begin{proof}
Fix \(i\in[n]\) and \(j\neq y_i\). For every hidden unit \(r\in[m]\) it holds that $U_1^{(r)}=a_r\frac{\ln\!\big(\tilde\gamma^2\eta t\big)+2\kappa}{\tilde\gamma} W^\ast$. Letting $D = \frac{\ln\!\big(\tilde\gamma^2\eta t\big)+2\kappa}{\tilde\gamma}$ for brevity, we have
\begin{eqnarray}
(x_i^\top U_1^{(r)})_{y_i}
&=&
D\,a_r\,x_i^\top W^\ast_{y_i},
\\
(x_i^\top U_1^{(r)})_{j}
&=&
D\,a_r\,x_i^\top W^\ast_{j}.
\end{eqnarray}
Thus, we have that
\begin{eqnarray}
z_i(U)_{y_i}-z_i(U)_j
&=&
\frac1m\sum_{r=1}^m a_r
\Big[
\phi\!\big(D\,a_r\,x_i^\top W^\ast_{y_i}\big)
-
\phi\!\big(D\,a_r\,x_i^\top W^\ast_{j}\big)
\Big].
\label{eq:two-layer-margin-U}
\end{eqnarray}
We, now, use the near-homogeneity condition
\[
|\phi(z)-\phi'(z)z|\leq \kappa.
\]
For any \(u\in\R\), it holds that
\begin{eqnarray}
\phi(u)
&\geq&
\phi'(u)\,u-\kappa, \label{eq:nn_eq3}
\\
\phi(u)
&\leq&
\phi'(u)\,u+\kappa\label{eq:nn_eq4}.
\end{eqnarray}
Applying \eqref{eq:nn_eq3}, \eqref{eq:nn_eq4} to the two terms on the right-hand side of \eqref{eq:two-layer-margin-U}, we get
\begin{eqnarray}
z_i(U)_{y_i}-z_i(U)_j
&\geq&
\frac1m\sum_{r=1}^m a_r
\Big[
\phi'\!\big(D\,a_r\,x_i^\top W^\ast_{y_i}\big)
\big(D\,a_r\,x_i^\top W^\ast_{y_i}\big)
\nonumber\\
&&\hspace{1.7cm}
-
\phi'\!\big(D\,a_r\,x_i^\top W^\ast_{j}\big)
\big(D\,a_r\,x_i^\top W^\ast_{j}\big)
-2\kappa\Big].
\label{eq:two-layer-margin-lb-1}
\end{eqnarray}
Using the fact that \(\phi'(\cdot)\geq \alpha\) from Assumption~\ref{assump: activation}, it follows that
\begin{eqnarray}
z_i(U)_{y_i}-z_i(U)_j
&\geq&
\frac{\alpha}{m}\sum_{r=1}^m a_r^2
D\,\big(x_i^\top W^\ast_{y_i}-x_i^\top W^\ast_j\big)
-\frac{2}{m}\sum_{r=1}^m a_r\kappa
\nonumber\\
&\geq&
\frac{\alpha D}{m}\sum_{r=1}^m a_r^2
\big(x_i^\top W^\ast_{y_i}-x_i^\top W^\ast_j\big)
-2\kappa,
\label{eq:two-layer-margin-lb-2}
\end{eqnarray}
where we have used the fact that $a_r \in \{-1, +1\}$.
Since \(a_r^2=1\) and by separability
\[
x_i^\top W^\ast_{y_i}-x_i^\top W^\ast_j \geq \gamma,
\]
we obtain
\begin{eqnarray}
z_i(U)_{y_i}-z_i(U)_j
&\geq&
\alpha\gamma D
-2\kappa
\label{eq:two-layer-margin-lb-final}
\end{eqnarray}
Thus, we get
\begin{eqnarray}
\exp\!\Big(-(z_i(U)_{y_i}-z_i(U)_j)\Big)
\leq
\exp\!\big(-\alpha\gamma D+2\kappa\big).
\end{eqnarray}
Summing over all \(i\in[n]\) and \(j\neq y_i\), it follows that
\begin{eqnarray}
F(U)
&\leq&
(K-1) \exp\!\big(-\alpha\gamma D+2\kappa\big).
\label{eq:FU-before-choice-D}
\end{eqnarray}
Using the definition of $D=\frac{\ln(\tilde\gamma^2\eta T)+2\kappa}{\tilde\gamma} \geq \frac{\ln(\tilde\gamma^2\eta T)+2\kappa}{\alpha\gamma}$, we have that
\begin{eqnarray}
F(U)
&\leq&
\frac{K-1}{\tilde\gamma^2\,\eta t}.
\end{eqnarray}
Since \(L(U)\leq F(U)\), we obtain that
\begin{eqnarray}
L(U)
&\leq&
F(U)
\leq
\frac{K-1}{\tilde\gamma^2\,\eta t}.
\end{eqnarray}
\end{proof}
\begin{lemma}
\label{lem:two-layer-comparator-loss-bound-strengthened}
Let Assumptions~\ref{assumpt: separable} and~\ref{assump: activation} hold, and let
$U = W_{t_{1}} + U_1$ with
$U_1 \;=\; \frac{\ln(\tilde\gamma^{2}\,\eta(t - t_1)) + 2\kappa}{\tilde\gamma}\,\overline{W}_*$.
Then it holds that
\begin{equation}
  L(U) \;\le\; F(U) \;\le\; \frac{F(W_{t_1})}{\tilde\gamma^{2}\,\eta\,(t - t_1)}. \label{eq: two-layer-comparator-loss-bound-strengthened}
\end{equation}
\end{lemma}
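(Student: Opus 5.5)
The plan is to compare the margins of $U = W_{t_1} + U_1$ directly with those of $W_{t_1}$, one sample and one incorrect class at a time. For the network the logits are not additive, so $z_i(W_{t_1}+U_1)\neq z_i(W_{t_1})+z_i(U_1)$. I will therefore use a per-neuron mean value argument instead of the near-homogeneity trick from Lemma~\ref{lem:nn-LU1-bound}. The first inequality $L(U)\le F(U)$ is immediate from \eqref{eq:nn-GLF}, so everything reduces to the bound on $F(U)$.

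Write $D := \frac{\ln(\tilde\gamma^{2}\eta(t-t_1))+2\kappa}{\tilde\gamma}$, so that $U^{(r)} = W_{t_1}^{(r)} + D a_r W_*$. Fix $i\in[n]$, a class $k$ and a neuron $r$, and set $u=x_i^\top (W_{t_1}^{(r)})_k$ and $v = D a_r x_i^\top (W_*)_k$. By the mean value theorem there is $\xi_{r,k}$ with $\phi(u+v)-\phi(u) = \phi'(\xi_{r,k})\,v$. Multiplying by $a_r/m$ and using $a_r^2=1$, the logit increment is
\[
z_i(U)_k - z_i(W_{t_1})_k = \frac{D}{m}\sum_{r=1}^m \phi'(\xi_{r,k})\,\Delta_i(k),
\]
with $\Delta_i = W_*^\top x_i$ as in Lemma~\ref{lem:nn-perceptron}.

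Taking the difference between $y_i$ and $j\neq y_i$, each summand is $\phi'(\xi_{r,y_i})\Delta_i(y_i)-\phi'(\xi_{r,j})\Delta_i(j)$. I bound it from below exactly as the term $A$ in the proof of Lemma~\ref{lem:nn-perceptron}, with the sign reversed. First split it as
\[
\phi'(\xi_{r,y_i})\bigl(\Delta_i(y_i)-\Delta_i(j)\bigr) + \bigl(\phi'(\xi_{r,y_i})-\phi'(\xi_{r,j})\bigr)\Delta_i(j).
\]
Then use separability (Assumption~\ref{assumpt: separable}), $\phi'\in[\alpha,1]$ (Assumption~\ref{assump: activation}) and $|\Delta_i(j)|\le 1$ from \eqref{eq:nn-bdd-Deltaj}. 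This gives a lower bound of $\alpha\gamma-(1-\alpha)=\tilde\gamma$. Provided $D\ge 0$, this yields
\[
z_i(U)_{y_i}-z_i(U)_j \;\ge\; z_i(W_{t_1})_{y_i}-z_i(W_{t_1})_j + D\tilde\gamma .
\]
Here $D\tilde\gamma = \ln(\tilde\gamma^{2}\eta(t-t_1))+2\kappa$.

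Exponentiating the negative of this margin and summing over $i$ and $j\neq y_i$ gives
\[
F(U)\le e^{-2\kappa}\frac{F(W_{t_1})}{\tilde\gamma^{2}\eta(t-t_1)}\le \frac{F(W_{t_1})}{\tilde\gamma^{2}\eta(t-t_1)},
\]
since $\kappa\ge 0$. This is the claim.

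The main obstacle is the sign of $D$. The argument needs $D\ge 0$, i.e.\ $\tilde\gamma^2\eta(t-t_1)\ge e^{-2\kappa}$; otherwise the lower bound on the bracket multiplies a negative number and flips. I would state this mild restriction on $t-t_1$ explicitly. It is harmless where the lemma is used, because the bound in \eqref{eq: two-layer-comparator-loss-bound-strengthened} is only meaningful once $\tilde\gamma^2\eta(t-t_1)\ge 1$. The remaining care needed is to keep $\xi_{r,k}$ depending on both the neuron and the class, which is exactly why the $(1-\alpha)$ correction, and hence $\tilde\gamma$ rather than $\alpha\gamma$, appears.
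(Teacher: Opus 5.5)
Your proof is correct and is essentially the paper's argument: a mean value theorem on each neuron-and-class logit, the split $\phi'(\xi_{r,y_i})(\Delta_i(y_i)-\Delta_i(j))+(\phi'(\xi_{r,y_i})-\phi'(\xi_{r,j}))\Delta_i(j)\ge\tilde\gamma$, then exponentiating and discarding $e^{-2\kappa}\le 1$; the only cosmetic difference is that the paper averages $\phi'$ over the $m$ neurons before applying the split, while you apply it neuron by neuron, which is equivalent by linearity. Your explicit condition $D\ge 0$, i.e.\ $\tilde\gamma^{2}\eta(t-t_1)\ge e^{-2\kappa}$, is genuinely needed, and the paper uses it silently when it multiplies the bracket bound by $C$ to get $\Delta M_{ij}\ge C\tilde\gamma$, so stating it is a correction to the paper rather than a deviation from it.
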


\begin{proof}
Let $C \;:=\; \frac{\ln(\tilde\gamma^{2}\,\eta(t - t_1)) + 2\kappa}{\tilde\gamma}$, so that $U_1 = C\,\overline{W}_*$ and $C\,\tilde\gamma = \ln(\tilde\gamma^{2}\,\eta(t - t_1)) + 2\kappa$. Since
$\overline{W}_* = (a_1 W_*, \ldots, a_m W_*)$, the $r$-th block of $U$ satisfies
\begin{equation}
  U^{(r)}_k \;=\; (W_{t_1})^{(r)}_k + a_r\, C\, (W_*)_k,
  \qquad \forall r \in [m], k \in [K]. \label{eq:nn-comp-Ublock}
\end{equation}

For every sample $i \in [n]$ and class $j \neq y_i$, let $M_{ij}(W) := z_i(W)_{y_i} - z_i(W)_j$.
We compute the margin increment
\begin{equation}
  \Delta M_{ij} \;:=\; M_{ij}(U) - M_{ij}(W_{t_1})
   \;=\; \frac{1}{m}\sum_{r=1}^{m} a_r\,\bigl[D^{r}_{i, y_i} - D^{r}_{i, j}\bigr], \label{eq:nn-comp-margin-shift}
\end{equation}
where for brevity we let $s^{r}_{i,k} := x_i^{\!\top}(W_{t_1})^{(r)}_k, \Delta_i(k) := (W_*\, x_i)_k$ and
\[
  D^{r}_{i,k} \;:=\; \phi\bigl(s^{r}_{i,k} + a_r\, C\,\Delta_i(k)\bigr) - \phi\bigl(s^{r}_{i,k}\bigr).
\]

\textbf{Mean-value theorem.}\
Since $\phi$ is continuously differentiable (Assumption~\ref{assump: activation}), by the mean-value theorem,
for each $r \in [m]$ and $k \in \{y_i, j\}$, there exists a point $\xi^{r}_{i,k}\in \left(s^{r}_{i,k}, s^{r}_{i,k} + a_r\, C\,\Delta_i(k)\right)$ such that
\begin{equation}
  D^{r}_{i,k} \;=\; \phi'\bigl(\xi^{r}_{i,k}\bigr) \cdot a_r\, C\,\Delta_i(k). \label{eq:nn-comp-MVT}
\end{equation}
Substituting~\eqref{eq:nn-comp-MVT} into~\eqref{eq:nn-comp-margin-shift} and using $a_r^{2} = 1$, we have that
\begin{equation}
  \Delta M_{ij}
   \;=\; \frac{C}{m}\sum_{r=1}^{m}\bigl[\phi'(\xi^{r}_{i,y_i})\,\Delta_i(y_i) - \phi'(\xi^{r}_{i,j})\,\Delta_i(j)\bigr]
   \;=\; C\bigl[\bar\phi'_{i,y_i}\,\Delta_i(y_i) - \bar\phi'_{i,j}\,\Delta_i(j)\bigr], \label{eq:nn-comp-margin-shift-2}
\end{equation}
where $\bar\phi'_{i,k} := \tfrac{1}{m}\sum_{r=1}^{m}\phi'(\xi^{r}_{i,k}) \in [\alpha, 1]$ by
Assumption~\ref{assump: activation}~(Item~\ref{assump: activation:grad}). Adding and subtracting $\bar\phi'_{i,y_i}\,\Delta_i(j)$, we have that 
\begin{eqnarray}
  \bar\phi'_{i,y_i}\,\Delta_i(y_i) - \bar\phi'_{i,j}\,\Delta_i(j)
  &\;=\;& \bar\phi'_{i,y_i}\,\bigl(\Delta_i(y_i) - \Delta_i(j)\bigr)
       + \bigl(\bar\phi'_{i,y_i} - \bar\phi'_{i,j}\bigr)\,\Delta_i(j) \nonumber \\
   &\geq& \alpha \gamma + \bigl(\bar\phi'_{i,y_i} - \bar\phi'_{i,j}\bigr)\,\Delta_i(j),    \label{eq:nn-comp-decomp}
\end{eqnarray}
where at the last step we have used that $\bar\phi'_{i,y_i} \ge \alpha > 0$ and $\Delta_i(y_i) - \Delta_i(j) \ge \gamma > 0$ from
Assumption~\ref{assumpt: separable}.
Since $\bar\phi'_{i,y_i},\, \bar\phi'_{i,j} \in [\alpha, 1]$, we have
$|\bar\phi'_{i,y_i} - \bar\phi'_{i,j}| \le 1 - \alpha$. By the Cauchy--Schwarz inequality,
$\|W_*\|_F = 1$, and $\|x_i\|_2 \le 1$ from Assumption~\ref{assumpt: separable}, it holds that
\[
  |\Delta_i(j)| \;=\; |(W_*)_j^{\!\top} x_i| \;\le\; \|(W_*)_j\|_2\,\|x_i\|_2 \;\le\; \|W_*\|_F\,\|x_i\|_2 \;\le\; 1.
\]
Combining these two bounds, we get
\begin{equation}
  \bigl|\bigl(\bar\phi'_{i,y_i} - \bar\phi'_{i,j}\bigr)\,\Delta_i(j)\bigr| \;\le\; 1 - \alpha. \label{eq:nn-comp-cross}
\end{equation}
Substituting~\eqref{eq:nn-comp-cross} into~\eqref{eq:nn-comp-decomp}, we have that
\begin{equation}
  \bar\phi'_{i,y_i}\,\Delta_i(y_i) - \bar\phi'_{i,j}\,\Delta_i(j)
   \;\ge\; \alpha\gamma - (1 - \alpha) \;=\; \tilde\gamma. \label{eq:nn-comp-piece}
\end{equation}
Using~\eqref{eq:nn-comp-piece} with~\eqref{eq:nn-comp-margin-shift-2}, we obtain
\begin{equation}
  \Delta M_{ij} \;\ge\; C\,\tilde\gamma \;=\; \ln\!\bigl(\tilde\gamma^{2}\,\eta(t - t_1)\bigr) + 2\kappa,
   \label{eq:nn-comp-shift-final}
\end{equation}
Since $M_{ij}(U) = M_{ij}(W_{t_1}) + \Delta M_{ij}$, we have that
\begin{align}
  \exp\!\bigl(-M_{ij}(U)\bigr)
  &\;=\; \exp\!\bigl(-M_{ij}(W_{t_1})\bigr) \cdot \exp\!\bigl(-\Delta M_{ij}\bigr) \notag\\
  &\;\stackrel{\eqref{eq:nn-comp-shift-final}}{\le}\;
   \exp\!\bigl(-M_{ij}(W_{t_1})\bigr) \cdot \exp\!\Bigl(-\ln\!\bigl(\tilde\gamma^{2}\,\eta(t - t_1)\bigr) - 2\kappa\Bigr) \notag\\
  &\;=\; \frac{e^{-2\kappa}\,\exp(-M_{ij}(W_{t_1}))}{\tilde\gamma^{2}\,\eta(t - t_1)}\nonumber \\
   &\;\le\; \frac{\exp(-M_{ij}(W_{t_1}))}{\tilde\gamma^{2}\,\eta(t - t_1)}, \label{eq:nn-comp-exp-bound}
\end{align}
where the last inequality uses $e^{-2\kappa} \le 1$ since $\kappa \ge 0$.
Summing~\eqref{eq:nn-comp-exp-bound} over $i \in [n],$ multiplying by $\frac{1}{n}$ and summing again over $j \neq y_i$, we get
\begin{eqnarray}
   F(U) &=& \frac{1}{n}\sum_{i=1}^{n}\sum_{j \neq y_i}\exp\!\bigl(-M_{ij}(U)\bigr)\nonumber \\
   &\le& \frac{1}{\tilde\gamma^{2}\,\eta(t - t_1)} \cdot \frac{1}{n}\sum_{i=1}^{n}\sum_{j \neq y_i}\exp\!\bigl(-M_{ij}(W_{t_1})\bigr)\nonumber \\
   &=& \frac{F(W_{t_1})}{\tilde\gamma^{2}\,\eta(t - t_1)}\nonumber. 
\end{eqnarray}
\end{proof}

\begin{lemma}[Two-layer squared-gradient comparison]
\label{lem:two-layer-squared-grad-comparison}
Let Assumptions~\ref{assumpt: separable}, \ref{assump: activation} hold. Then, for every \(W\in \R^{md\times K}\), it holds that
\begin{eqnarray}
\sum_{i=1}^n \|\nabla \ell_i(W)\|_F^2
&\leq&
\frac{2n^2}{\tilde\gamma^2}\,\|\nabla L(W)\|_F^2.
\label{eq:two-layer-squared-grad-comparison}
\end{eqnarray}
\end{lemma}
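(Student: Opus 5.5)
The plan is to combine the two-sided gradient bounds of Lemma~\ref{lem:nn-grad-bound}: an upper bound on each per-sample gradient in terms of $G(W)$, and the perceptron-type lower bound on $\|\nabla L(W)\|_F$, also in terms of $G(W)$. The factors of $m$ cancel, and the constant $2n^2/\tilde\gamma^2$ comes out directly. No smoothness or near-homogeneity is needed; only the derivative condition enters, through $\tilde\gamma>0$ in Lemma~\ref{lem:nn-perceptron}.

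\textbf{Step 1 (upper bound).} By \eqref{eq:nn-grad-i}, $\|\nabla \ell_i(W)\|_F^2 \le \frac{2}{m}\bigl(1-p_i(W)_{y_i}\bigr)^2$ for every $i$. Summing over $i$ and using $\sum_i a_i^2 \le (\sum_i a_i)^2$ for the nonnegative numbers $a_i = 1-p_i(W)_{y_i}$ gives
\[
\sum_{i=1}^n \|\nabla \ell_i(W)\|_F^2 \le \frac{2}{m}\Bigl(\sum_{i=1}^n \bigl(1-p_i(W)_{y_i}\bigr)\Bigr)^2 = \frac{2n^2}{m}\, G(W)^2 .
\]

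\textbf{Step 2 (lower bound).} By \eqref{eq:nn-grad-lower}, which follows from Lemma~\ref{lem:nn-perceptron} and Cauchy--Schwarz with $\|\overline W_*\|_F=\sqrt m$, we have $\tilde\gamma\, G(W) \le \sqrt m\,\|\nabla L(W)\|_F$. Since $\tilde\gamma>0$ under Assumption~\ref{assump: activation}, squaring yields
\[
G(W)^2 \le \frac{m}{\tilde\gamma^2}\,\|\nabla L(W)\|_F^2 .
\]

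\textbf{Step 3 (combine).} Substituting the bound of Step~2 into Step~1, the factor $1/m$ cancels against $m$, which gives exactly \eqref{eq:two-layer-squared-grad-comparison}.

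There is no real obstacle. The only point needing care is that squaring in Step~2 is legitimate, since both sides are nonnegative precisely because $\tilde\gamma>0$; this is exactly where the condition $\alpha>1/(1+\gamma)$ enters. The crude step $\sum_i a_i^2\le(\sum_i a_i)^2$ is what produces the $n^2$ rather than $n$. This mirrors the linear-case inequality of \citet{nacson2019stochastic}, with $\gamma$ replaced by $\tilde\gamma$ and a factor of $2$ coming from the $\sqrt2$ in the per-sample softmax bound.
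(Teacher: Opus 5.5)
Your proof is correct and follows the paper's argument step for step. You bound $\sum_i\|\nabla\ell_i(W)\|_F^2 \le \frac{2n^2}{m}G(W)^2$ using the per-sample softmax bound together with $\sum_i a_i^2\le(\sum_i a_i)^2$, then square $\sqrt m\,\|\nabla L(W)\|_F\ge\tilde\gamma\, G(W)$ and combine so the factors of $m$ cancel; the only cosmetic difference is that you cite \eqref{eq:nn-grad-i} directly, whereas the paper re-derives that per-sample bound from Lemma~\ref{lem:nn-jacobian}.
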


\begin{proof}
We have that
\begin{eqnarray}
\nabla \ell_i(W)
&=&
J_i(W)^\top\bigl(p_i(W)-e_{y_i}\bigr),
\end{eqnarray}
where \(J_i(W)\) denotes the Jacobian of the logits \(z_i(W)\) with respect to \(W\).
From Lemma~\ref{lem:nn-jacobian}, we have
\begin{eqnarray}
\|J_i(W)\|_{op}
&\leq&
\frac{1}{\sqrt m}.
\end{eqnarray}
Thus, it holds that
\begin{eqnarray}
\|\nabla \ell_i(W)\|_F
&\leq&
\|J_i(W)\|_{op}\,\|p_i(W)-e_{y_i}\|_2
\nonumber\\
&\leq&
\frac{1}{\sqrt m}\,\|p_i(W)-e_{y_i}\|_2.
\end{eqnarray}
Taking the square on both sides, we obtain
\begin{eqnarray}
\|\nabla \ell_i(W)\|_F^2
&\leq&
\frac{1}{m}\,\|p_i(W)-e_{y_i}\|_2^2.
\label{eq:sample-grad-two-layer-step1}
\end{eqnarray}
We have that
\begin{eqnarray}
\|p_i(W)-e_{y_i}\|_2^2
&=&
\bigl(1-p_i(W)_{y_i}\bigr)^2+\sum_{j\neq y_i} p_i(W)_j^2
\nonumber\\
&\leq&
2\bigl(1-p_i(W)_{y_i}\bigr)^2,\label{eq:nn1}
\end{eqnarray}
since \(\sum_{j\neq y_i}p_i(W)_j = 1-p_i(W)_{y_i}\). Substituting \eqref{eq:nn1} into
\eqref{eq:sample-grad-two-layer-step1}, we obtain
\begin{eqnarray}
\|\nabla \ell_i(W)\|_F^2
&\leq&
\frac{2}{m}\bigl(1-p_i(W)_{y_i}\bigr)^2.
\end{eqnarray}
Summing over \(i=1,\dots,n\), it follows that
\begin{eqnarray}
\sum_{i=1}^n \|\nabla \ell_i(W)\|_F^2
&\leq&
\frac{2}{m}\sum_{i=1}^n \bigl(1-p_i(W)_{y_i}\bigr)^2.
\label{eq:sample-grad-two-layer-step2}
\end{eqnarray}
Since \(0\leq 1-p_i(W)_{y_i}\leq 1\), we have
\begin{eqnarray}
\sum_{i=1}^n \bigl(1-p_i(W)_{y_i}\bigr)^2
&\leq&
\left(\sum_{i=1}^n \bigl(1-p_i(W)_{y_i}\bigr)\right)^2
= n^2 G(W)^2.
\label{eq:sample-grad-two-layer-step3}
\end{eqnarray}
Combining \eqref{eq:sample-grad-two-layer-step2} and
\eqref{eq:sample-grad-two-layer-step3}, we get
\begin{eqnarray}
\sum_{i=1}^n \|\nabla \ell_i(W)\|_F^2
&\leq&
\frac{2n^2}{m}\,G(W)^2.
\label{eq:sample-grad-two-layer-step4}
\end{eqnarray}
On the other hand, by Lemma~\ref{lem:nn-grad-bound}, we have that
\begin{eqnarray}
\sqrt m\,\|\nabla L(W)\|_F
&\geq&
\tilde\gamma\,G(W),
\end{eqnarray}
and therefore
\begin{eqnarray}
G(W)^2
&\leq&
\frac{m}{\tilde\gamma^2}\,\|\nabla L(W)\|_F^2 .
\label{eq:sample-grad-two-layer-step5}
\end{eqnarray}
Substituting \eqref{eq:sample-grad-two-layer-step5} into
\eqref{eq:sample-grad-two-layer-step4}, we conclude that
\begin{eqnarray}
\sum_{i=1}^n \|\nabla \ell_i(W)\|_F^2
&\leq&
\frac{2n^2}{\tilde\gamma^2}\,\|\nabla L(W)\|_F^2.
\end{eqnarray}
\end{proof}

\newpage
\subsection{Variance Bound for Two-layer NN}\label{app:twolayer:variance}
\begin{lemma}[Variance bound]\label{lemma:variance_two_layer}
Let \(g(W) = \frac{1}{b}\sum_{i\in B_t}\nabla\ell(f(x_i;W),y_i)\) denote the minibatch gradient for the two-layer model $f$ of \eqref{eq: 2nn}. Then, it holds that
\begin{equation}
    \mathbb E\bigl[\|g_t - \nabla L(W_t)\|_F^{2} \mid \mathcal F_t\bigr] \;\le\; \frac{2}{m b}\, G(W_t),\label{eq:variance-bound}
\end{equation}
where $G(W) = \frac{1}{n}\sum_{i=1}^{n}(1 - p_i(W)_{y_i})$.
\end{lemma}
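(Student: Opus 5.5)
The plan follows the linear-case argument of Lemma~\ref{lemma:variance}, with the Jacobian bound of Lemma~\ref{lem:nn-jacobian} supplying the extra factor $1/m$.

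First, condition on $\mathcal F_t$. The mini-batch $\mathcal B_t$ consists of $b$ indices sampled uniformly and independently, so $g_t-\nabla L(W_t)$ is the average of $b$ i.i.d.\ centered vectors $\nabla\ell_i(W_t)-\nabla L(W_t)$. This gives
\[
\mathbb E\bigl[\|g_t-\nabla L(W_t)\|_F^2\mid\mathcal F_t\bigr]=\frac1b\,\mathrm{Var}_i\bigl(\nabla\ell_i(W_t)\bigr)\le\frac1b\cdot\frac1n\sum_{i=1}^n\|\nabla\ell_i(W_t)\|_F^2,
\]
where the inequality uses $\mathrm{Var}(Z)\le\mathbb E\|Z\|^2$. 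If sampling is without replacement, the finite-population correction factor is at most one, so the same bound holds.

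Next, bound each per-sample gradient using the chain-rule form \eqref{eq:nn-grad}, $\nabla\ell_i(W)=J_i(W)^\top(p_i(W)-e_{y_i})$. Lemma~\ref{lem:nn-jacobian} gives $\|J_i(W)\|_{\mathrm{op}}\le 1/\sqrt m$, which yields $\|\nabla\ell_i(W)\|_F^2\le\frac1m\|p_i(W)-e_{y_i}\|_2^2$. The softmax computation already carried out in Lemma~\ref{lem:nn-grad-bound} and Lemma~\ref{lem:two-layer-squared-grad-comparison} then gives
\[
\|p_i(W)-e_{y_i}\|_2^2\le 2\bigl(1-p_i(W)_{y_i}\bigr)^2\le 2\bigl(1-p_i(W)_{y_i}\bigr),
\]
where the last step uses $0\le 1-p_i(W)_{y_i}\le 1$. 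Hence $\|\nabla\ell_i(W_t)\|_F^2\le\frac2m\bigl(1-p_i(W_t)_{y_i}\bigr)$.

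Finally, average over $i\in[n]$ to obtain $\frac1n\sum_{i=1}^n\|\nabla\ell_i(W_t)\|_F^2\le\frac2m G(W_t)$. Substituting this into the first display gives the claimed bound $\frac{2}{mb}G(W_t)$.

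None of these steps is a real obstacle. The only point that needs care is the Jacobian operator-norm bound. It relies on $|\phi'|\le 1$ from Assumption~\ref{assump: activation} and on the mean-field $1/m$ scaling, and both are already handled in Lemma~\ref{lem:nn-jacobian}. Note that, unlike the squared-gradient comparison of Lemma~\ref{lem:two-layer-squared-grad-comparison}, no margin assumption is needed here.
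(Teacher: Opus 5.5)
Your proposal is correct and follows the paper's proof essentially step for step: you bound the conditional variance by $\frac{1}{b}$ times the average squared per-sample gradient norm, control each per-sample gradient by $\frac{2}{m}\bigl(1-p_i(W_t)_{y_i}\bigr)^2\le\frac{2}{m}\bigl(1-p_i(W_t)_{y_i}\bigr)$ via the Jacobian bound and the softmax computation (the paper simply cites Lemma~\ref{lem:nn-grad-bound} for this), and average to get $\frac{2}{mb}G(W_t)$. Your remark that the finite-population correction covers sampling without replacement is a harmless addition that the paper does not make.
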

\begin{proof}
We begin by decomposing the variance of the stochastic oracle. For uniform sampling of a minibatch $B_t$ of size $b \ge 1$, conditionally on the filtration $\mathcal F_t$, we have that
\begin{equation}
  \mathbb E\bigl[\|g_t - \nabla L(W_t)\|_F^{2} \mid \mathcal F_t\bigr]
   \;=\; \frac{1}{b}\,\mathrm{Var}_i\bigl(\nabla \ell_i(W_t)\bigr)
   \;\le\; \frac{1}{b}\,\mathbb E_i\bigl[\|\nabla \ell_i(W_t)\|_F^{2}\bigr], \label{eq:nn-var-step}
\end{equation}
where we used $\mathrm{Var}(Z) \le \mathbb E[\|Z\|^{2}]$ for any random variable $Z$. By Lemma~\ref{lem:nn-grad-bound}, we have that
\[
  \|\nabla \ell_i(W_t)\|_F^{2}
   \;\le\; \frac{2}{m}\bigl(1 - p_i(W_t)_{y_i}\bigr)^{2}
   \;\le\; \frac{2}{m}\bigl(1 - p_i(W_t)_{y_i}\bigr).
\]
Summing for $i=1, ..., n$ and dividing by $n$, we get
\[
  \mathbb E_i\bigl[\|\nabla \ell_i(W_t)\|_F^{2}\bigr] \;\le\; \frac{2}{m}\, G(W_t),
\]
and substituting into~\eqref{eq:nn-var-step}, we obtain
\[
  \mathbb E\bigl[\|g_t - \nabla L(W_t)\|_F^{2} \mid \mathcal F_t\bigr] \;\le\; \frac{2}{m b}\, G(W_t).
\]
\end{proof}
\subsection{Proof of Lemma~\ref{lem:leaky-activations} for Leaky Activations}
\label{app: lemma:leaky_activations}
\begin{proof}
We verify Assumption~\ref{assump: activation} for the leaky template
$\tilde\phi(x) = c\,x + (1 - c)\,\phi(x)$, where $c \in (1/(1+\gamma),\, 1)$.

\textbf{Continuous differentiability.}\
By Example~2.1 of \cite{cai2024large}, each base $\phi \in \{\text{GELU}, \text{Softplus}, \text{SiLU}, \text{tanh}, \sigma\}$
is continuously differentiable on $\mathbb R$. The Huberized ReLU is continuously differentiable on $\mathbb R$ at the
breakpoints $0$ and $h$ by direct verification. Hence, $\tilde\phi$ is continuously differentiable.

\textbf{Derivative condition.}\
Differentiating the leaky template, we have
\begin{equation}
  \tilde\phi'(x) \;=\; c + (1 - c)\,\phi'(x). \label{eq:leaky-derivative}
\end{equation}
By Example~2.1 of \cite{cai2024large} and direct computation, each base satisfies $0 \le \phi'(x) \le M_\phi$ uniformly,
where 
\begin{itemize}
    \item $M_\phi = 1$ for tanh, sigmoid, softplus, and Huberized ReLU
    \item $M_\phi \le 1 + e^{-1/2}/\sqrt{2\pi}$ for GELU
    \item $M_\phi \le 2$ for SiLU.
\end{itemize}
Therefore, for every $x \in \mathbb R$, it holds that
\[
  c \;\le\; \tilde\phi'(x) \;\le\; c + (1 - c)\,M_\phi.
\]
For tanh, sigmoid, softplus, and Huberized ReLU, $M_\phi \le 1$, so $\tilde\phi'(x) \in [c, 1]$ directly. For GELU and SiLU,
after absorbing the multiplicative factor $1/(c + (1-c)\,M_\phi) \le 1$ into the definition of $\tilde\phi$
(an inconsequential rescaling that preserves all subsequent constants up to a constant factor), we again have
$\tilde\phi'(x) \in [c, 1]$. Setting $\alpha := c$, we obtain
\[
  \alpha \;\le\; |\tilde\phi'(x)| \;\le\; 1 \qquad \text{for every } x \in \mathbb R,
\]
where $\alpha = c \in (1/(1+\gamma),\, 1)$ by the choice of $c$. This verifies the derivative condition.

\textbf{Smoothness.}\
Since $\tilde\phi'(x) = c + (1 - c)\,\phi'(x)$, for any $x, y \in \mathbb R$, it holds that
\[
  |\tilde\phi'(x) - \tilde\phi'(y)| \;=\; (1 - c)\,|\phi'(x) - \phi'(y)|
   \;\le\; (1 - c)\,L_\phi\,|x - y|,
\]
where $L_\phi := \sup_{x \in \mathbb R}|\phi''(x)|$ is the Lipschitz constant of $\phi'$. From Example~2.1 of \cite{cai2024large}
(see also our verifications below):
$L_{\text{GELU}} \le 2$, $L_{\text{Softplus}} \le 1$, $L_{\text{SiLU}} \le 4$, $L_{\text{tanh}} \le 1$, $L_{\sigma} \le 1$,
and $L_{\text{Huber-ReLU}_h} = 1/h$.
Hence $\tilde\phi'$ is $\tilde\beta$-Lipschitz with $\tilde\beta = (1 - c)\,L_\phi$, giving the values listed in the lemma
($\tilde\beta = 4(1-c)$ is a uniform upper bound for the GELU/Softplus/SiLU family that absorbs the worst case;
$\tilde\beta = (1-c)/h$ for Huberized ReLU; $\tilde\beta = 1-c$ for tanh and sigmoid).

\textbf{Near-homogeneity.}\
By linearity of the perturbation,
\begin{align}
  \tilde\phi(z) - \tilde\phi'(z)\,z
  &\;=\; \bigl[c\,z + (1-c)\,\phi(z)\bigr] - \bigl[c + (1-c)\,\phi'(z)\bigr]\,z \notag\\
  &\;=\; c\,z + (1-c)\,\phi(z) - c\,z - (1-c)\,\phi'(z)\,z \notag\\
  &\;=\; (1-c)\,\bigl[\phi(z) - \phi'(z)\,z\bigr], \label{eq:near-homog-cancel}
\end{align}
where the linear contribution $c\,z$ cancels exactly. Hence
\[
  |\tilde\phi(z) - \tilde\phi'(z)\,z| \;=\; (1-c)\,|\phi(z) - \phi'(z)\,z| \;\le\; (1-c)\,\kappa_\phi,
\]
where $\kappa_\phi := \sup_z|\phi(z) - \phi'(z)\,z|$ denotes the homogeneity error of the base. From Example~2.1 of
\cite{cai2024large}: $\kappa_{\text{GELU}} = e^{-1/2}/\sqrt{2\pi}$, $\kappa_{\text{Softplus}} = \log 2$,
$\kappa_{\text{SiLU}} = 1$, $\kappa_{\text{Huber-ReLU}_h} = h/2$, $\kappa_{\text{tanh}} \le 1$, and $\kappa_{\sigma} \le 1$.
Hence $\kappa = (1-c)\,\kappa_\phi$, giving the values listed in the lemma (taking $\kappa_\phi \le 1$ as a uniform upper bound).
This completes the proof.
\end{proof}

\subsection{Proofs for the EoS Regime}
\label{app:twolayer:eos}

\begin{lemma}\label{lem:nn-distance-comparator}
Let $U = U_1 + U_2$ with $U_1 \in \mathbb R^{md \times K}$ and
$U_2 = \frac{\eta(1 + \frac{1}{b})}{\tilde\gamma}\,\overline{W}_{*}$. Then, for all $t \ge 1$, it holds that
\begin{equation}
  \frac{\mathbb E\bigl[\|W_t - U\|_F^{2}\bigr]}{2\tilde\eta\, t}
  + \frac{1}{t}\sum_{k=0}^{t-1}\mathbb E[L(W_k)]
  \;\le\; L(U_1) + \frac{\|W_0 - U\|_F^{2}}{2\tilde\eta\, t}. \label{eq:nn-distance-comparator}
\end{equation}
\end{lemma}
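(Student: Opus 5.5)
The plan mirrors the proof of Lemma~\ref{prop:split_{in}gd}, with the step $\tilde\eta = m\eta$ in place of $\eta$ and the lifted comparator $\overline W_*$ in place of $W^*$. From $W_{t+1} = W_t - \tilde\eta g_t$ and the split $U = U_1 + U_2$, I expand
\[
\|W_{t+1}-U\|_F^2 = \|W_t-U\|_F^2 + 2\tilde\eta\langle g_t, U_1-W_t\rangle_F + \tilde\eta^2\Bigl(\tfrac{2}{\tilde\eta}\langle g_t,U_2\rangle_F + \|g_t\|_F^2\Bigr).
\]
I then take $\mathbb E[\cdot\mid\mathcal F_t]$ and use unbiasedness $\mathbb E[g_t\mid\mathcal F_t]=\nabla L(W_t)$. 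This gives two groups of terms: the second-order bracket, and the first-order term $2\tilde\eta\langle\nabla L(W_t),U_1-W_t\rangle_F$.

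\textbf{Killing the second-order bracket.} The choice $U_2 = \frac{\eta(1+\frac1b)}{\tilde\gamma}\overline W_*$ is designed to cancel the noise. By Lemma~\ref{lem:nn-perceptron},
\[
\tfrac{2}{\tilde\eta}\langle\nabla L(W_t),U_2\rangle_F \le -\tfrac{2\eta(1+\frac1b)}{m\eta}\,G(W_t) = -\tfrac{2}{m}\bigl(1+\tfrac1b\bigr)G(W_t).
\]
For the other part, I use the bias--variance split $\mathbb E[\|g_t\|_F^2\mid\mathcal F_t] = \|\nabla L(W_t)\|_F^2 + \mathbb E[\|g_t-\nabla L(W_t)\|_F^2\mid\mathcal F_t]$. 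Lemma~\ref{lem:nn-grad-bound} together with $G\le 1$ gives $\|\nabla L(W_t)\|_F^2 \le \frac2m G(W_t)^2 \le \frac2m G(W_t)$. Lemma~\ref{lemma:variance_two_layer} bounds the variance by $\frac{2}{mb}G(W_t)$. The two contributions therefore cancel exactly, and the whole bracket is $\le 0$. This is where the $1/m$ scaling of the mean-field model is matched by the rescaled step $\tilde\eta = m\eta$.

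\textbf{First-order term and the main obstacle.} What remains is
\[
\mathbb E[\|W_{t+1}-U\|_F^2\mid\mathcal F_t] \le \|W_t-U\|_F^2 + 2\tilde\eta\langle\nabla L(W_t),U_1-W_t\rangle_F .
\]
To conclude as in the linear case I need the convexity-type inequality
\[
\langle\nabla L(W),U_1-W\rangle_F \le L(U_1)-L(W).
\]
Once this holds, taking total expectations, telescoping over $k=0,\dots,t-1$, and dividing by $2\tilde\eta t$ gives exactly the claimed bound.

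This inequality is the main obstacle, because $L$ is not convex in $W$ for the two-layer model. My first attempt is to work per sample and use convexity of the cross-entropy in the logits:
\[
\ell_i(U_1) \ge \ell_i(W) + \langle p_i(W)-e_{y_i},\, z_i(U_1)-z_i(W)\rangle .
\]
Since $\langle\nabla\ell_i(W),U_1-W\rangle_F = \langle p_i(W)-e_{y_i},\,J_i(W)(U_1-W)\rangle$, it then suffices to compare $z_i(U_1)-z_i(W)$ with its linearization $J_i(W)(U_1-W)$ in the direction $p_i - e_{y_i}$.

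I would write the logit difference neuron by neuron with the mean-value theorem, as in Lemma~\ref{lem:two-layer-comparator-loss-bound-strengthened}. The linearization error is then controlled by the derivative bounds $\alpha\le\phi'\le1$, the smoothness $\tilde\beta$, and the near-homogeneity $|\phi(z)-\phi'(z)z|\le\kappa$. The aim is to show that this error has the favourable sign, or is absorbed by the $2\kappa$ slack built into the comparator $U_1$ that is later used in Lemmas~\ref{lem:nn-LU1-bound} and~\ref{lem:nn-G-cesaro}.

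If that sign cannot be secured for a fully arbitrary $U_1$, I would fall back in one of two ways. One is to state the inequality with an explicit $\kappa$-dependent correction. The other is to note that every downstream use of the lemma takes $U_1$ proportional to $\overline W_*$, where the margin argument of Lemma~\ref{lem:nn-LU1-bound} controls the gap. This step is where I expect all the real work, and the main risk, to lie.
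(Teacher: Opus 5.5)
Your treatment of the second-order bracket matches the paper's, and you have located the crux correctly. But the proposal stops there, and your primary route cannot close the gap. The paper does not close it either: its proof just invokes ``the convexity of $L$'', which fails for the two-layer model. In fact the statement is false for arbitrary $U_1$. At $t=1$ it is equivalent to
\[
L(U_1)\;\ge\;L(W_0)+\langle\nabla L(W_0),U_1-W_0\rangle_F+B,\qquad B:=\langle\nabla L(W_0),U_2\rangle_F+\tfrac{\tilde\eta}{2}\,\mathbb E\|g_0\|_F^2\;\le\;0.
\]
Since $B$ is linear in $\eta$, for small $\eta$ every $U_1$ that violates the tangent-line inequality at $W_0$ is a counterexample. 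Such $U_1$ exist already at $W_0=\mathbf 0$. Take $K=2$, a single sample, and $V$ placing $x$ in the true-class column of one neuron with $a_r=+1$. Then $\frac{d^2}{ds^2}L(sV)\big|_{s=0}=\frac{\phi'(0)^2}{4m^2}-\frac{\phi''(0)}{2m}$, which is negative whenever $m>\phi'(0)^2/(2\phi''(0))$ (e.g.\ leaky softplus with large $m$); set $U_1=\epsilon V$. Hence no sign argument on $z_i(U_1)-z_i(W)-J_i(W)(U_1-W)$ can succeed for general $U_1$.

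The missing idea is to avoid comparing with $z_i(U_1)$ altogether; this makes your fallback (ii) precise. Apply convexity of the cross-entropy in the logits at the virtual logit $z':=z_i(W)+J_i(W)(U_1-W)$. Because $\nabla\ell_i(W)=J_i(W)^\top(p_i-e_{y_i})$, this gives, with no linearization error,
\[
\langle\nabla\ell_i(W),U_1-W\rangle_F\;\le\;\ell(z',y_i)-\ell_i(W).
\]
Near-homogeneity gives $\|z_i(W)-J_i(W)W\|_\infty\le\kappa$. For $U_1=c\,\overline{W}_{*}$ with $c\ge0$, the computation in Lemma~\ref{lem:nn-perceptron} gives $(J_i(W)U_1)_{y_i}-(J_i(W)U_1)_j\ge c\tilde\gamma$. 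Thus $z'_{y_i}-z'_j\ge c\tilde\gamma-2\kappa$, and averaging over $i$,
\[
\langle\nabla L(W),U_1-W\rangle_F\;\le\;(K-1)e^{2\kappa-c\tilde\gamma}-L(W)\qquad\text{for every }W.
\]
So the correct lemma restricts $U_1$ to nonnegative multiples of $\overline{W}_{*}$ and has $(K-1)e^{2\kappa-c\tilde\gamma}$ in place of $L(U_1)$. Your telescoping then goes through verbatim. With $c=(\ln(\tilde\gamma^2\eta t)+2\kappa)/\tilde\gamma$ this quantity is exactly $(K-1)/(\tilde\gamma^2\eta t)$, the bound Lemma~\ref{lem:nn-LU1-bound} supplies downstream. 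The rate in Theorem~\ref{thm:nn-eos-rate} is therefore unaffected, and the $2\kappa$ shift in the comparator is precisely what absorbs the near-homogeneity error.
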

\begin{proof}
From the update rule of \ref{SGD} with $\tilde\eta = m\eta$, we have that
\begin{align*}
  \|W_{t+1} - U\|_F^{2}
   &\;=\; \|W_t - U\|_F^{2} + 2\tilde\eta\,\langle g_t, U - W_t\rangle_F + \tilde\eta^{2}\|g_t\|_F^{2}\\
   &\;=\; \|W_t - U\|_F^{2} + 2\tilde\eta\,\langle g_t, U_1 - W_t\rangle_F
      + \tilde\eta^{2}\!\left(\frac{2}{\tilde\eta}\langle g_t, U_2\rangle_F + \|g_t\|_F^{2}\right).
\end{align*}
Taking expectation conditional on $\mathcal F_t$ and using the unbiasedness $\mathbb E[g_t \mid \mathcal F_t] = \nabla L(W_t)$, we have that
\begin{eqnarray}
  \mathbb E\!\left[\|W_{t+1} - U\|_F^{2}\,\big|\, \mathcal F_t\right]
   &=& \|W_t - U\|_F^{2} + 2\tilde\eta\,\langle \nabla L(W_t), U_1 - W_t\rangle_F\nonumber \\
      &&+ \tilde\eta^{2}\!\left(\frac{2}{\tilde\eta}\langle \nabla L(W_t), U_2\rangle_F 
     + \mathbb E\bigl[\|g_t\|_F^{2} \mid \mathcal F_t\bigr]\right). \label{eq:nn-recursion}
\end{eqnarray}
We, next, show that with the choice $U_2 = \frac{\eta(1 + \frac{1}{b})}{\tilde\gamma}\,\overline{W}_{*}$, the last term in~\eqref{eq:nn-recursion} is non-positive. We have that
\begin{eqnarray*}
  \frac{2}{\tilde\eta}\langle \nabla L(W_t), U_2\rangle_F
  &\;=\;& \frac{2(1 + \frac{1}{b})}{m\tilde\gamma}\,\langle \nabla L(W_t), \overline{W}_{*}\rangle_F\\
  &\stackrel{\text{Lemma~\ref{lem:nn-perceptron}}}{\le}\;&
   -\frac{2(1 + \frac{1}{b})}{m}\, G(W_t) .
\end{eqnarray*}
For the second moment $\mathbb E\bigl[\|g_t\|_F^{2} \mid \mathcal F_t\bigr]$, the bias-variance decomposition together with Lemmas~\ref{lem:nn-grad-bound} and~\ref{lemma:variance_two_layer} give
\begin{eqnarray*}
  \mathbb E\bigl[\|g_t\|_F^{2} \mid \mathcal F_t\bigr]
  &\;=\;& \|\nabla L(W_t)\|_F^{2} + \mathbb E\bigl[\|g_t - \nabla L(W_t)\|_F^{2} \mid \mathcal F_t\bigr]\\
  &\stackrel{\text{Lemma~\ref{lem:nn-grad-bound}}}{\le}\;&
   \frac{2}{m}\, G(W_t)^{2} + \frac{2}{m b}\, G(W_t)\\
  &\;\le\;& \frac{2}{m}\, G(W_t) + \frac{2}{m b}\, G(W_t)\\
   &\;=\;& \frac{2(1 + \frac{1}{b})}{m}\, G(W_t),
\end{eqnarray*}
where we used $G(W_t)^{2} \le G(W_t)$ since $G(W_t) \in [0, 1]$.
Summing the two bounds, we obtain
\begin{eqnarray}
    \frac{2}{\tilde\eta}\langle \nabla L(W_t), U_2\rangle_F + \mathbb E\bigl[\|g_t\|_F^{2} \mid \mathcal F_t\bigr]
   \;\le\; -\frac{2(1 + \frac{1}{b})}{m}\, G(W_t) + \frac{2(1 + \frac{1}{b})}{m}\, G(W_t)
   \;=\; 0 \label{eq:pre_7}.
\end{eqnarray}
Substituting \eqref{eq:pre_7} into~\eqref{eq:nn-recursion}, we obtain
\[
  \mathbb E\!\left[\|W_{t+1} - U\|_F^{2}\,\big|\,\mathcal F_t\right]
  \;\le\; \|W_t - U\|_F^{2} + 2\tilde\eta\,\langle \nabla L(W_t), U_1 - W_t\rangle_F.
\]
Using the convexity of $L$, we have that $\langle \nabla L(W_t), U_1 - W_t\rangle_F \le L(U_1) - L(W_t)$, and therefore we obtain
\begin{eqnarray}
    \mathbb E\!\left[\|W_{t+1} - U\|_F^{2}\,\big|\,\mathcal F_t\right]
  \;\le\; \|W_t - U\|_F^{2} + 2\tilde\eta\bigl[L(U_1) - L(W_t)\bigr]. \label{eq:before_sum_lemma_c10}
\end{eqnarray}
Taking expectation again and using the tower law of expectation, we have that
\begin{eqnarray}
    \mathbb E\!\left[\|W_{t+1} - U\|_F^{2}\,\right]
  \;\le\; \Expe{\|W_t - U\|_F^{2}} + 2\tilde\eta \Expe{L(U_1) - L(W_t)} \nonumber
\end{eqnarray}
Summing for $k = 0, \ldots, t-1$ and dividing by $2\tilde\eta\, t$, we obtain
\begin{eqnarray}
    \frac{\mathbb E\bigl[\|W_t - U\|_F^{2}\bigr]}{2\tilde\eta\, t}
  + \frac{1}{t}\sum_{k=0}^{t-1}\mathbb E[L(W_k)]
  \;\le\; L(U_1) + \frac{\|W_0 - U\|_F^{2}}{2\tilde\eta\, t}\nonumber
\end{eqnarray}
\end{proof}

\begin{theorem}[Restatement of Theorem~\ref{thm: sgd_2_layer_nn}]\label{thm:nn-eos-rate}
Let Assumptions~\ref{assumpt: separable}, \ref{assump: activation}, hold and assume without loss of generality that $W_0 = \textbf{0}$. The iterates of \ref{SGD} with any step size $\tilde\eta = m\eta > 0$ and batch size $b \ge 1$ applied to the two-layer network~\eqref{eq: 2nn} satisfy for any $T \ge 1$ that
\[
  \min_{0 \le k \le T-1}\mathbb E[L(W_k)]
  \;\le\;
  \frac{K - 1 + 2\ln^{2}(\tilde\gamma^{2}\eta T) + 8\kappa^{2}
                + \eta^{2}(1 + \frac{1}{b})^{2}}{\tilde\gamma^{2}\eta\, T}.
\]
\end{theorem}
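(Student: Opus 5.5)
The proof will mirror the proof of Theorem~\ref{thm:sgd_crossentropy}, with Lemma~\ref{lem:nn-distance-comparator} playing the role of Lemma~\ref{prop:split_{in}gd}. Fix $T\ge 1$ and $W_0=\mathbf 0$. We apply Lemma~\ref{lem:nn-distance-comparator} with the comparator $U=U_1+U_2$, where
\[
U_1=\frac{\ln(\tilde\gamma^{2}\eta T)+2\kappa}{\tilde\gamma}\,\overline{W}_*,\qquad U_2=\frac{\eta(1+\frac1b)}{\tilde\gamma}\,\overline{W}_*.
\]
Dropping the nonnegative term $\mathbb E\|W_T-U\|_F^2/(2\tilde\eta T)$ then gives
\[
\frac1T\sum_{k=0}^{T-1}\mathbb E[L(W_k)]\le L(U_1)+\frac{\|U\|_F^2}{2\tilde\eta T}.
\]
The best iterate is at most the average, so the left-hand side upper-bounds $\min_{0\le k\le T-1}\mathbb E[L(W_k)]$.

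The first term is controlled by Lemma~\ref{lem:nn-LU1-bound} at $t=T$:
\[
L(U_1)\le\frac{K-1}{\tilde\gamma^2\eta T}.
\]
For the second term we use $\|\overline{W}_*\|_F^2=m$ and $\|U\|_F^2\le 2\|U_1\|_F^2+2\|U_2\|_F^2$, then expand $(\ln(\tilde\gamma^2\eta T)+2\kappa)^2\le 2\ln^2(\tilde\gamma^2\eta T)+8\kappa^2$. This gives
\[
\|U\|_F^2\le\frac{m\bigl(4\ln^2(\tilde\gamma^2\eta T)+16\kappa^2+2\eta^2(1+\frac1b)^2\bigr)}{\tilde\gamma^2}.
\]
Dividing by $2\tilde\eta T=2m\eta T$ cancels the factor $m$ and yields
\[
\frac{\|U\|_F^2}{2\tilde\eta T}\le\frac{2\ln^2(\tilde\gamma^2\eta T)+8\kappa^2+\eta^2(1+\frac1b)^2}{\tilde\gamma^2\eta T}.
\]
Adding this to the bound on $L(U_1)$ gives exactly the claimed numerator $K-1+2\ln^2(\tilde\gamma^{2}\eta T)+8\kappa^{2}+\eta^{2}(1+\frac1b)^{2}$.

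The main point needing care is the bookkeeping of the mean-field scaling. The factor $m$ from $\|\overline W_*\|_F^2$ must cancel against $\tilde\eta=m\eta$, and the rate must come out in $\eta$ rather than $\tilde\eta$, as in Lemma~\ref{lem:nn-LU1-bound}. I would also check that Lemma~\ref{lem:nn-distance-comparator} really absorbs the second-moment term using the $1/m$ factors from Lemmas~\ref{lem:nn-grad-bound} and~\ref{lemma:variance_two_layer}. Finally, the convexity step in that lemma is not automatic for a nonlinear network, so I treat Lemma~\ref{lem:nn-distance-comparator} as given.
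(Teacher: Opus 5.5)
Your proposal is correct and is essentially the paper's proof: apply Lemma~\ref{lem:nn-distance-comparator} with the same comparator $U_1=\frac{\ln(\tilde\gamma^2\eta T)+2\kappa}{\tilde\gamma}\overline{W}_*$, $U_2=\frac{\eta(1+\frac1b)}{\tilde\gamma}\overline{W}_*$, drop the $\mathbb E\|W_T-U\|_F^2$ term, bound $L(U_1)$ by Lemma~\ref{lem:nn-LU1-bound} and $\|U\|_F^2$ as in \eqref{eq:nn-U-norm-sq}, cancel $m$ against $\tilde\eta=m\eta$, and pass from the average to the minimum. Your caveat about the convexity step is fair, but it affects the paper's argument equally, since the paper also uses Lemma~\ref{lem:nn-distance-comparator} as stated.
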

\begin{proof}
From Lemma~\ref{lem:nn-distance-comparator}, we have that for all $T \ge 1$ it holds
\begin{eqnarray}
    \frac{1}{T}\sum_{k=0}^{T-1}\mathbb E[L(W_k)] &\le& L(U_1) + \frac{\|U\|_F^{2}}{2\tilde\eta\, T} - \frac{\mathbb E\bigl[\|W_T - U\|_F^{2}\bigr]}{2\tilde\eta\, T} \nonumber \\
    &\leq& L(U_1) + \frac{\|U\|_F^{2}}{2\tilde\eta\, T}, \label{eq:pre_80}
\end{eqnarray}
where we used $W_0 = \textbf{0}$. For $U_1 = \frac{\ln(\tilde\gamma^{2}\eta T) +2\kappa}{\tilde\gamma}\,\overline{W}_{*}, U_2 = \frac{\eta(1 + \frac{1}{b})}{\tilde\gamma}\,\overline{W}_{*}$ we have from Lemma~\ref{lem:nn-LU1-bound} that
\[
  L(U_1) \;\le\; \frac{K - 1}{\tilde\gamma^{2}\eta T}.
\]
From inequality~\eqref{eq:nn-U-norm-sq}, we have that
\begin{eqnarray}
    \|U\|_F^{2} \;\le\; \frac{4 m\,\ln^{2}(\tilde\gamma^{2}\eta T) + 16 m\,\kappa^{2} + 2 m\eta^{2}(1 + \frac{1}{b})^{2}}{\tilde\gamma^{2}} \label{eq:pre_8}
\end{eqnarray}
Substituting \eqref{eq:pre_8} into \eqref{eq:pre_80} and using $\tilde\eta\, T = m\eta T$, we obtain
\begin{eqnarray}
    \frac{1}{T}\sum_{k=0}^{T-1}\mathbb E[L(W_k)]
  &\leq& \frac{K - 1 + 2\ln^{2}(\tilde\gamma^{2}\eta T) + 8\kappa^{2}
                + \eta^{2}(1 + \frac{1}{b})^{2}}{\tilde\gamma^{2}\eta\, T}\nonumber 
\end{eqnarray}
and hence we have 
\begin{eqnarray}
    \min_{0 \le k \le T-1}\mathbb E[L(W_k)]
  \;\le\;
  \frac{K - 1 + 2\ln^{2}(\tilde\gamma^{2}\eta T) + 8\kappa^{2}
                + \eta^{2}(1 + \frac{1}{b})^{2}}{\tilde\gamma^{2}\eta\, T}\nonumber
\end{eqnarray}
\end{proof}

\subsection{Proofs for the Stable Regime}\label{app:twolayer:stable}

Throughout this section, we let
\begin{eqnarray}
    \tilde{L}_{NN} &:=& \min\left\{\frac{1}{8\eta (1+\tilde\beta)(1 + \frac{2n}{b\min\{\tilde\gamma^{2}, 1\}})}, \frac{1}{2ne^{\kappa+2}}\right\} \nonumber \\
  \mathcal S_{NN} &:=& \Bigl\{\, W \in \mathbb R^{md \times K}\,:\, L(W) \;\le\; \tilde{L}_{NN}\,\Bigr\}\nonumber.
\end{eqnarray}

\begin{lemma}\label{lem:nn-descent}
If for some $t \ge 0$ the iterates of \ref{SGD} for the two-layer network~\eqref{eq: 2nn} satisfy
$L(W_t) \leq \tilde{L}_{NN}$,
then it holds that
\begin{equation}
  \Expep{L(W_{t+1}) - L(W_t)} \;\le\; 0, \label{eq:nn-descent}
\end{equation}
and thus $W_t \in \mathcal S_{NN}$.
\end{lemma}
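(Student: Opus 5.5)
We follow the proof of Lemma~\ref{lem:ce_sgd_decrease}, replacing the linear-model ingredients by their two-layer counterparts. Throughout we use the update $W_{t+1} = W_t - \tilde\eta g_t$ with $\tilde\eta = m\eta$. The plan has three steps: a per-sample second-order Taylor expansion, averaging over samples, and conditioning on $\mathcal F_t$ to apply a second-moment bound.

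\textbf{Step 1: per-sample Taylor expansion.} For each sample $i$, consider the segment $W_t - \theta\tilde\eta g_t$, $\theta\in[0,1]$. Taylor's theorem gives
\[
\ell_i(W_{t+1}) \le \ell_i(W_t) - \tilde\eta\langle \nabla\ell_i(W_t), g_t\rangle + \tfrac{\tilde\eta^2}{2}\,\|\nabla^2\ell_i(W_t-\theta_i\tilde\eta g_t)\|_{\mathrm{op}}\,\|g_t\|_F^2 .
\]
Lemma~\ref{lem:nn-hessian} bounds the Hessian by $\frac{2(1+\tilde\beta)}{m}\,\ell_i(W_t-\theta_i\tilde\eta g_t)$.

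\textbf{Step 2: controlling the loss along the segment (main obstacle).} We must show that $\ell_i$ at the intermediate point is at most $e^{2}\ell_i(W_t)$, up to a constant that can be absorbed. We first bound the logit displacement using Lemmas~\ref{lem:nn-jacobian} and~\ref{lem:nn-grad-bound}:
\[
\|z_i(W_t-\theta\tilde\eta g_t)-z_i(W_t)\|_2 \le \frac{1}{\sqrt m}\,\tilde\eta\|g_t\|_F \le \frac{1}{\sqrt m}\, m\eta\sqrt{\tfrac{2}{m}}\,\frac{n}{b}\,L(W_t) = \sqrt2\,\eta\,\frac{n}{b}\,L(W_t).
\]
On $\mathcal S_{NN}$ the first term in the definition of $\tilde L_{NN}$ makes this at most $1$. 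The softmax is $1$-Lipschitz in log-space, so each probability moves by at most a factor $e^{2}$. Hence $\ell_i$ at the intermediate point is at most $e^2\ell_i(W_t)$, and the Hessian factor becomes at most $16(1+\tilde\beta)\ell_i(W_t)/m$.

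I expect this to be the delicate step for two reasons. The map $W\mapsto z_i$ is nonlinear, so we need the Jacobian bound uniformly along the segment, which Lemma~\ref{lem:nn-jacobian} gives since it holds for every $W$. In addition, $1-p$ must be compared with $\ell$ multiplicatively. I would use the second term $\frac{1}{2ne^{\kappa+2}}$ in $\tilde L_{NN}$ to guarantee $p_i(y_i)\ge 1/2$ everywhere on the segment. Then $\ell_i \le 2(1-p_i(y_i))$ there, and the factor $e^{2}$ transfers cleanly; any extra constant is absorbed into the $8$ in $\tilde L_{NN}$.

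\textbf{Step 3: averaging and conditioning.} Averaging over $i$ and using $\ell_i\le nL$ only where needed (otherwise keeping per-sample weights) gives
\[
L(W_{t+1})-L(W_t) \le -\tilde\eta\langle\nabla L(W_t),g_t\rangle + \frac{8(1+\tilde\beta)\tilde\eta^2}{m}\,L(W_t)\,\|g_t\|_F^2 .
\]
We then condition on $\mathcal F_t$. The second moment is bounded via the bias–variance decomposition, the finite-sum bound $\mathbb E_i\|\nabla\ell_i\|_F^2 \le \frac{1}{n}\sum_i\|\nabla\ell_i\|_F^2$, and Lemma~\ref{lem:two-layer-squared-grad-comparison}:
\[
\mathbb E[\|g_t\|_F^2\mid\mathcal F_t] \le \Bigl(1+\frac{2n}{b\tilde\gamma^2}\Bigr)\|\nabla L(W_t)\|_F^2 .
\]
This yields
\[
\mathbb E[L(W_{t+1})-L(W_t)\mid\mathcal F_t] \le -\tilde\eta\|\nabla L(W_t)\|_F^2\Bigl[1-8\eta(1+\tilde\beta)\Bigl(1+\tfrac{2n}{b\tilde\gamma^2}\Bigr)L(W_t)\Bigr],
\]
using $\tilde\eta^2/m=\eta\tilde\eta$. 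The bracket is nonnegative exactly when $L(W_t)\le\tilde L_{NN}$, with $\min\{\tilde\gamma^2,1\}$ also covering the logit-displacement requirement of Step 2. This concludes the proof.
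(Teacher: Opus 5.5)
Your outline matches the paper's proof step for step. Both use a per-sample Taylor expansion with Lemma~\ref{lem:nn-hessian} at the intermediate point and the displacement bound $\sqrt2\,\eta\frac{n}{b}L(W_t)\le 1$. You get that bound from Lemma~\ref{lem:nn-jacobian} along the segment, the paper from the $1$-Lipschitzness of $\phi$ coordinatewise; it is the same estimate. Both then average, condition, use the second-moment bound from Lemma~\ref{lem:two-layer-squared-grad-comparison}, and arrive at the same bracket.

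The gap is in Step~2, exactly where you depart from the paper. The ratio bound $e^{-2}\le p_i(W')_k/p_i(W_t)_k\le e^{2}$ gives only $\ell_i(W')\le \ell_i(W_t)+2$, or $1-p_i(W')_{y_i}\le e^2(1-p_i(W_t)_{y_i})$ for the non-target mass. It does not by itself give $\ell_i(W')\le e^2\ell_i(W_t)$. Your repair via $p_i(W')_{y_i}\ge 1/2$ yields $\ell_i(W')\le 2(1-p_i(W')_{y_i})\le 2e^2\ell_i(W_t)$. The quadratic coefficient then becomes $2e^2(1+\tilde\beta)\tilde\eta^2/m\approx 14.8\,(1+\tilde\beta)\tilde\eta^2/m$ instead of $8(1+\tilde\beta)\tilde\eta^2/m$. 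Even the sharp bound $-\log(1-x)\le 2\ln 2\cdot x$ on $[0,1/2]$ only brings this down to about $10.2$.

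This cannot be ``absorbed into the $8$''. That constant is fixed in $\tilde L_{NN}$, and the slack is only $8/e^2\approx 1.08$. For $\tilde\gamma\le 1$, the first term of $\tilde L_{NN}$ is exactly the zero of the bracket with coefficient $8$. So with your constant, the bound no longer certifies descent for $L(W_t)$ near $\tilde L_{NN}$ whenever that term attains the minimum.

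There are two simple repairs. The first is to avoid converting $1-p$ back into $\ell$ at $W'=W_t-\theta_i\tilde\eta g_t$. Before its last step, the proof of Lemma~\ref{lem:nn-hessian} actually shows $\|\nabla^2\ell_i(W)\|_{\mathrm{op}}\le\frac{2(1+\tilde\beta)}{m}(1-p_i(W)_{y_i})$. Hence $\|\nabla^2\ell_i(W')\|_{\mathrm{op}}\le\frac{2(1+\tilde\beta)}{m}e^2(1-p_i(W_t)_{y_i})\le\frac{16(1+\tilde\beta)}{m}\ell_i(W_t)$.

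The second is to prove directly the inequality $\ell_i(W')\le e^2\ell_i(W_t)$, which the paper only asserts but which is true. Write $\Delta=z_i(W')-z_i(W_t)$ with $\|\Delta\|_\infty\le 1$. Then $\ell_i(W')-\ell_i(W_t)=\log\sum_k p_i(W_t)_k e^{\Delta_k-\Delta_{y_i}}\le\log\bigl(1+(e^2-1)(1-p_i(W_t)_{y_i})\bigr)\le(e^2-1)\ell_i(W_t)$.

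Either way, no lower bound on $p_i(W')_{y_i}$ is needed. The term $\frac{1}{2ne^{\kappa+2}}$ plays no role in this lemma; the paper uses it only in Lemma~\ref{lem:nn-entry}. With either repair, the rest of your argument goes through with the stated constants.
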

\begin{proof}
Recall that the minibatch gradient is $g_t = \tfrac{1}{b}\sum_{j \in \mathcal B_t}\nabla \ell_j(W_t) \in \mathbb R^{md \times K}$
and decomposes block-wise as $g_t = (g_t^{(1)}, \ldots, g_t^{(m)})$ with $g_t^{(r)} \in \mathbb R^{d \times K}$.
Fix $i \in [n]$ and define the logit increment
\[
  \Delta z_i \;:=\; z_i(W_{t+1}) - z_i(W_t) \;=\; \frac{1}{m}\sum_{r=1}^{m} a_r\, \phi\!\bigl(x_i^{\!\top} W^{(r)}_{t+1}\bigr)
   - \frac{1}{m}\sum_{r=1}^{m} a_r\, \phi\!\bigl(x_i^{\!\top} W^{(r)}_{t}\bigr).
\]
We argue component-wise. For every class $k \in [K]$, we have that
\begin{equation}
  \bigl|(\Delta z_i)_k\bigr|
   \;\le\; \frac{1}{m}\sum_{r=1}^{m}|a_r|\,\bigl|\phi\bigl(x_i^{\!\top} W^{(r)}_{t+1,\,k}\bigr) - \phi\bigl(x_i^{\!\top} W^{(r)}_{t,\,k}\bigr)\bigr|, \label{eq:nn-stab-taylor-1}
\end{equation}
where $W^{(r)}_{t,\,k} \in \mathbb R^{d}$ denotes the $k$-th column of the matrix $W^{(r)}_{t} \in \mathbb R^{d \times K}$.
Since $|\phi'| \le 1$ from Assumption~\ref{assump: activation}, the activation $\phi$ is $1$-Lipschitz, and hence
\begin{align}
  \bigl|\phi\bigl(x_i^{\!\top} W^{(r)}_{t+1,\,k}\bigr) - \phi\bigl(x_i^{\!\top} W^{(r)}_{t,\,k}\bigr)\bigr|
  &\;\le\; \bigl|x_i^{\!\top} W^{(r)}_{t+1,\,k} - x_i^{\!\top} W^{(r)}_{t,\,k}\bigr| \notag\\
  &\;\le\; \|x_i\|_2\,\bigl\|W^{(r)}_{t+1,\,k} - W^{(r)}_{t,\,k}\bigr\|_2 \notag\\
  &\;\stackrel{\|x_i\|_2 \le 1}{\le}\; \bigl\|W^{(r)}_{t+1,\,k} - W^{(r)}_{t,\,k}\bigr\|_2 \notag\\
  &\;\stackrel{\eqref{SGD}}{=}\; \tilde\eta\,\bigl\|(g_t^{(r)})_{k}\bigr\|_2, \label{eq:nn-stab-taylor-2}
\end{align}
where the second inequality uses Cauchy--Schwarz on the inner product $x_i^{\!\top} \cdot$,
the third uses $\|x_i\|_2 \leq 1$, and the equality uses the \eqref{SGD} update
$W^{(r)}_{t+1} = W^{(r)}_{t} - \tilde\eta\, g_t^{(r)}$ applied column-wise. Substituting~\eqref{eq:nn-stab-taylor-2}
into~\eqref{eq:nn-stab-taylor-1} and applying the Cauchy--Schwarz inequality across the $m$ summands, we have that
\begin{align}
  \bigl|(\Delta z_i)_k\bigr|
  &\;\le\; \frac{\tilde\eta}{m}\sum_{r=1}^{m}|a_r|\,\bigl\|(g_t^{(r)})_{k}\bigr\|_2 \notag\\
  &\;\le\; \frac{\tilde\eta}{m}\biggl(\sum_{r=1}^{m} a_r^{2}\biggr)^{\!1/2}
       \biggl(\sum_{r=1}^{m}\bigl\|(g_t^{(r)})_{k}\bigr\|_2^{2}\biggr)^{\!1/2} \notag\\
  &\;=\; \frac{\tilde\eta}{\sqrt m}\,\biggl(\sum_{r=1}^{m}\bigl\|(g_t^{(r)})_{k}\bigr\|_2^{2}\biggr)^{\!1/2}, \label{eq:nn-stab-taylor-3}
\end{align}
where in the equality we have used $\sum_{r=1}^{m} a_r^{2} = m$ since $|a_r| = 1$. Since
$\sum_{r=1}^{m}\|(g_t^{(r)})_{k}\|_2^{2} \le \sum_{r=1}^{m}\sum_{k'=1}^{K}\|(g_t^{(r)})_{k'}\|_2^{2} = \|g_t\|_F^{2}$,
we obtain from~\eqref{eq:nn-stab-taylor-3} that
\begin{equation}
  \bigl|(\Delta z_i)_k\bigr| \;\le\; \frac{\tilde\eta}{\sqrt m}\,\|g_t\|_F. \label{eq:nn-stab-taylor-4}
\end{equation}
Since~\eqref{eq:nn-stab-taylor-4} holds for every $k \in [K]$, taking the maximum over $k$, we obtain
\begin{eqnarray}
  \|\Delta z_i\|_\infty &\;\le\;& \frac{\tilde\eta}{\sqrt m}\,\|g_t\|_F \nonumber \\
  &\leq& \frac{\tilde\eta}{b \sqrt m} \sum_{j\in B_t}\|\nabla \ell_j(W_t)\|_2 \nonumber \\ 
  &\stackrel{\eqref{eq:nn-grad-i}}{\leq}& \sqrt{2}\frac{n\eta}{b} L(W_t). \label{eq:nn-logit-incr}
\end{eqnarray}
where at the last step we have used Lemma~\ref{lem:nn-grad-bound}.
Let $\psi_i(\theta) := \ell_i(W_t - \theta\,\tilde\eta\, g_t)$ for $\theta \in [0, 1]$. From Taylor's
theorem, there exists $\theta_i \in (0, 1)$ such that
\begin{equation}
  \ell_i(W_{t+1}) \;=\; \ell_i(W_t) - \tilde\eta\,\langle \nabla \ell_i(W_t), g_t\rangle
   + \frac{\tilde\eta^{2}}{2}\,\bigl\langle g_t,\,\nabla^{2}\ell_i(W_t - \theta_i\,\tilde\eta\, g_t)\, g_t\bigr\rangle. \label{eq:nn-taylor}
\end{equation}
By Lemma~\ref{lem:nn-hessian}, applied at the shifted point $W_t - \theta_i\,\tilde\eta\, g_t$, we have that
\begin{equation}
  \bigl\|\nabla^{2}\ell_i(W_t - \theta_i\,\tilde\eta\, g_t)\bigr\|_{\mathrm{op}}
   \;\le\; \frac{2(1 + \tilde\beta)}{m}\,\ell_i\bigl(W_t - \theta_i\,\tilde\eta\, g_t\bigr). \label{eq:nn-Hess-shifted}
\end{equation}
In the stable regime, the loss is small enough that the logits move by at most a constant amount.
In particular, by~\eqref{eq:nn-logit-incr} and the fact that $L(W_t) \leq \frac{b}{\sqrt{2}n\eta}$, we have $\|\Delta z_i\|_\infty \le 1$.
Then, the softmax probabilities along the segment between $W_t$ and $W_{t+1}$ change by at most an
absolute constant factor, and hence
\begin{equation}
  \ell_i\bigl(W_t - \theta_i\,\tilde\eta\, g_t\bigr) \;\le\; e^{2}\,\ell_i(W_t). \label{eq:nn-loss-ratio}
\end{equation}
Substituting~\eqref{eq:nn-loss-ratio} into~\eqref{eq:nn-Hess-shifted} and using $e^{2} < 8$, we obtain
\begin{equation}
  \bigl\|\nabla^{2}\ell_i(W_t - \theta_i\,\tilde\eta\, g_t)\bigr\|_{\mathrm{op}}
   \;\le\; \frac{2 e^{2}(1 + \tilde\beta)}{m}\,\ell_i(W_t)
   \;\le\; \frac{16(1 + \tilde\beta)}{m}\,\ell_i(W_t). \label{eq:nn-Hess-final}
\end{equation}
Substituting~\eqref{eq:nn-Hess-final} into~\eqref{eq:nn-taylor} via $\langle g_t, A g_t\rangle \le \|A\|_{\mathrm{op}}\|g_t\|^{2}$, we get
\begin{equation}
  \ell_i(W_{t+1})
   \;\le\; \ell_i(W_t) - \tilde\eta\,\langle \nabla \ell_i(W_t), g_t\rangle
        + \frac{8(1 + \tilde\beta)\,\tilde\eta^{2}}{m}\,\ell_i(W_t)\,\|g_t\|_F^{2}. \label{eq:nn-sample-descent}
\end{equation}
Summing~\eqref{eq:nn-sample-descent} over $i = 1, \ldots, n$ and dividing by $n$, we have that
\begin{equation}
  L(W_{t+1}) - L(W_t)
   \;\le\; -\tilde\eta\,\langle \nabla L(W_t), g_t\rangle
        + \frac{8(1 + \tilde\beta)\,\tilde\eta^{2}}{m}\,L(W_t)\,\|g_t\|_F^{2}. \label{eq:nn-empir-descent}
\end{equation}
Taking conditional expectation with respect to $\mathcal F_t$ and using $\mathbb E[g_t \mid \mathcal F_t] = \nabla L(W_t)$, we obtain
\begin{equation}
  \mathbb E\bigl[L(W_{t+1}) - L(W_t) \mid \mathcal F_t\bigr]
   \;\le\; -\tilde\eta\,\|\nabla L(W_t)\|_F^{2}
        + \frac{8(1 + \tilde\beta)\,\tilde\eta^{2}}{m}\,L(W_t)\,\mathbb E\bigl[\|g_t\|_F^{2} \mid \mathcal F_t\bigr]. \label{eq:nn-cond-desc}
\end{equation}
Using the bias-variance decomposition, we have that
\begin{align}
  \mathbb E\bigl[\|g_t\|_F^{2} \mid \mathcal F_t\bigr]
  &\;=\; \|\nabla L(W_t)\|_F^{2} + \mathbb E\bigl[\|g_t - \nabla L(W_t)\|_F^{2} \mid \mathcal F_t\bigr] \notag\\
  &\stackrel{\eqref{eq:nn-var-step}}{\;\le\;} \|\nabla L(W_t)\|_F^{2} + \frac{1}{n b}\sum_{i=1}^{n}\|\nabla \ell_i(W_t)\|_F^{2} \nonumber \\
  &\leq \|\nabla L(W_t)\|_F^{2} +\frac{2 n}{ b\tilde\gamma^{2}}\|\nabla L(W_t)\|_F^{2}, \label{eq:nn-bv-2}
\end{align}
where at the last step we have used Lemma~\ref{lem:two-layer-squared-grad-comparison}.
Substituting~\eqref{eq:nn-bv-2} into~\eqref{eq:nn-cond-desc} and using the fact that $\tilde\eta^{2}/m = \tilde\eta\,\eta$, we get
\begin{align}
  \mathbb E\bigl[L(W_{t+1}) - L(W_t) \mid \mathcal F_t\bigr]
  &\;\le\; -\tilde\eta\,\|\nabla L(W_t)\|_F^{2}
       + 8(1 + \tilde\beta)\,\tilde\eta\,\eta\!\left(1 + \frac{2 n}{\tilde\gamma^{2}\, b}\right)L(W_t)\,\|\nabla L(W_t)\|_F^{2} \notag\\
  &\;=\; -\tilde\eta\,\left[\,1 - 8\,\eta\, (1 + \tilde\beta)\!\left(1 + \frac{2 n}{\tilde\gamma^{2}\, b}\right)\, L(W_t)\,\right]\,\|\nabla L(W_t)\|_F^{2}, \label{eq:nn-desc-final}
\end{align}
For $L(W_t) \le \min\left\{\frac{1}{8\,\eta\, (1 + \tilde\beta)\!\left(1 + \frac{2 n}{\tilde\gamma^{2}\, b}\right)}, \frac{b}{\sqrt{2}n \eta}\right\} \leq \tilde{L}_{NN}$ with $\tilde{L}_{NN}:= \min\left\{\frac{1}{8\eta (1+\tilde\beta)(1 + \frac{2n}{b\min\{\tilde\gamma^{2}, 1\}})}, \frac{1}{2ne^{\kappa+2}}\right\}$, we have that
\[
  \mathbb E\bigl[L(W_{t+1}) - L(W_t) \mid \mathcal F_t\bigr] \;\le\; 0.
\]
Taking expectation on both sides and applying the tower law of expectation, we obtain 
\begin{eqnarray}
    \mathbb E[L(W_{t+1}) - L(W_t)] \le 0\nonumber.
\end{eqnarray}
\end{proof}

\begin{lemma}\label{lem:nn-stable-risk}
If the iterates of \ref{SGD} satisfy $W_k \in \mathcal S_{NN}, \forall k \in [t_{1}, t-1], t_1 > 0$, then it holds that
\begin{equation}
  \mathbb E[L(W_t)] \;\le\; 2\frac{F(W_{t_{1}}) + \,\ln^{2}(\tilde\gamma^{2} \eta\,(t - t_{1}))+\kappa^{2}}{\tilde\gamma^{2}\, \eta\,(t - t_{1})},
\end{equation}
where $F(W) = \frac{1}{n}\sum_{i=1}^{n}\sum_{j \neq y_i}\exp\!\bigl(-(z_i(W)_{y_i} - z_i(W)_j)\bigr)$.
\end{lemma}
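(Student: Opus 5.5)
The plan mirrors the proof of Lemma~\ref{lemma: sgd_stable_phase}, adapted to the two-layer parametrization with stepsize $\tilde\eta=m\eta$ and lifted comparator $\overline{W}_*$.

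\textbf{One-step comparator recursion.} Fix a comparator $U$ and expand $\|W_{k+1}-U\|_F^2$ using the update $W_{k+1}=W_k-\tilde\eta g_k$. Taking conditional expectation gives
\[
\mathbb E\bigl[\|W_{k+1}-U\|_F^2\mid\mathcal F_k\bigr]=\|W_k-U\|_F^2-2\tilde\eta\langle\nabla L(W_k),W_k-U\rangle_F+\tilde\eta^2\,\mathbb E\bigl[\|g_k\|_F^2\mid\mathcal F_k\bigr].
\]
\begin{itemize}
\item \emph{Linear term.} Use convexity as in Lemma~\ref{lem:nn-distance-comparator}, i.e.\ $\langle\nabla L(W_k),U-W_k\rangle\le L(U)-L(W_k)$. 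Strictly, $L$ is not convex in $W$ for a nonlinear $\phi$. I would follow the paper's convention here. If forced to repair it, I would use the near-homogeneity inequality $\phi(u)\ge\phi(v)+\phi'(v)(u-v)-\kappa$-type bounds; these produce an additive $O(\kappa)$ slack, which is the origin of the $\kappa^2$ term.
\item \emph{Quadratic term.} Apply the bias--variance decomposition, Lemma~\ref{lem:two-layer-squared-grad-comparison}, and Lemma~\ref{lem:nn-grad-bound} ($\|\nabla L\|_F^2\le \tfrac{2}{m}L^2$). This gives
\[
\tilde\eta^2\,\mathbb E\|g_k\|_F^2\le \tilde\eta^2\Bigl(1+\tfrac{2n}{b\tilde\gamma^2}\Bigr)\tfrac{2}{m}L(W_k)^2 = 2\tilde\eta\eta\Bigl(1+\tfrac{2n}{b\tilde\gamma^2}\Bigr)L(W_k)^2 .
\]
Since $W_k\in\mathcal S_{NN}$, the factor $L(W_k)\le\tilde L_{NN}\le 1/\bigl(8\eta(1+\tilde\beta)(1+\tfrac{2n}{b\tilde\gamma^2})\bigr)$ reduces this to at most $\tfrac{\tilde\eta}{4}L(W_k)$.
\end{itemize}
Combining the two terms yields
\[
\mathbb E\|W_{k+1}-U\|^2\le \mathbb E\|W_k-U\|^2+2\tilde\eta L(U)-\tfrac{7}{4}\tilde\eta\,\mathbb E L(W_k).
\]
If the slack correction is needed, it appears additively as $O(\tilde\eta\kappa/\ldots)$.

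\textbf{Telescoping.} Summing over $k=t_1,\dots,t-1$ gives
\[
\frac{1}{t-t_1}\sum_{k}\mathbb E L(W_k)\le \frac{8}{7}L(U)+\frac{4\|W_{t_1}-U\|_F^2}{7\tilde\eta(t-t_1)}.
\]
Choose $U=W_{t_1}+U_1$ with $U_1=\frac{\ln(\tilde\gamma^2\eta(t-t_1))+2\kappa}{\tilde\gamma}\overline W_*$.
\begin{itemize}
\item Lemma~\ref{lem:two-layer-comparator-loss-bound-strengthened} gives $L(U)\le F(W_{t_1})/(\tilde\gamma^2\eta(t-t_1))$.
\item $\|U_1\|_F^2=m(\ln(\cdot)+2\kappa)^2/\tilde\gamma^2\le 2m(\ln^2(\cdot)+4\kappa^2)/\tilde\gamma^2$.
\end{itemize}
Dividing by $\tilde\eta=m\eta$ cancels the $m$. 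The average is then bounded by $\frac{8}{7}\frac{F}{\tilde\gamma^2\eta(t-t_1)}+\frac{8}{7}\frac{\ln^2+4\kappa^2}{\tilde\gamma^2\eta(t-t_1)}$. The $\kappa^2$ coefficient comes out larger than the stated one, so the bookkeeping will need either sharper constants (e.g.\ Young's inequality with a different weight) or reading the stated bound up to that constant. I expect the $F$ and $\ln^2$ terms to fit under the factor $2$.

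\textbf{Average to last iterate.} By Lemma~\ref{lem:nn-descent}, $\mathbb E L(W_{k+1})\le\mathbb E L(W_k)$ for every $k$ in the window, since each $W_k\in\mathcal S_{NN}$. Hence $\mathbb E L(W_t)\le\min_k\mathbb E L(W_k)\le$ the average, which finishes the argument.

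\textbf{Main obstacle.} The hardest step is justifying the convexity-type inequality $\langle\nabla L(W_k),W_k-U\rangle\ge L(W_k)-L(U)$ for the nonconvex two-layer model. The plan is to reuse the paper's convexity step from Lemma~\ref{lem:nn-distance-comparator}, with near-homogeneity as the fallback that contributes only the additive $\kappa$ terms.
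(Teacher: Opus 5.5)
Your plan follows the paper's proof essentially line for line. It uses the same comparator recursion and the same asserted convexity step. It obtains $\tilde\eta^{2}\,\mathbb E[\|g_k\|_F^{2}\mid\mathcal F_k]\le\frac{\tilde\eta}{4}L(W_k)$ from $\tilde L_{NN}$ and picks the same comparator $U=W_{t_1}+U_1$ with the $2\kappa$ shift. It then applies Lemma~\ref{lem:two-layer-comparator-loss-bound-strengthened} to bound $L(U)$ and Lemma~\ref{lem:nn-descent} to pass from the average to the last iterate. Two points need correcting.

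\textbf{The $\kappa^2$ constant.} Your bookkeeping is right and the paper's is not. With this comparator,
$\frac{4\|U_1\|_F^{2}}{7\tilde\eta(t-t_1)}\le\frac{8\ln^{2}(\tilde\gamma^{2}\eta(t-t_1))+32\kappa^{2}}{7\tilde\gamma^{2}\eta(t-t_1)}$,
whereas the paper writes $8\kappa^2$ in the numerator.

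Your proposed remedy does not work. Any split $(a+2\kappa)^2\le(1+\epsilon)a^2+4(1+\epsilon^{-1})\kappa^2$ leaves a $\kappa^2$ coefficient larger than $16/7>2$.

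The real fix is to drop the $2\kappa$ from $U_1$. Lemma~\ref{lem:two-layer-comparator-loss-bound-strengthened} uses only the mean-value theorem and $\bar\phi'_{i,k}\in[\alpha,1]$; the $2\kappa$ is discarded at the end via $e^{-2\kappa}\le 1$. Hence $U_1=\frac{\ln(\tilde\gamma^{2}\eta(t-t_1))}{\tilde\gamma}\overline{W}_*$ still gives $L(U)\le F(W_{t_1})/(\tilde\gamma^{2}\eta(t-t_1))$. You then obtain
$\frac{8F(W_{t_1})+4\ln^{2}(\tilde\gamma^{2}\eta(t-t_1))}{7\tilde\gamma^{2}\eta(t-t_1)}$,
which lies below the stated bound and mirrors the linear case.

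When $\tilde\gamma^2\eta(t-t_1)<1$ the coefficient of $\overline{W}_*$ is negative and the mean-value bound flips. In that case the claim follows directly from $\mathbb E[L(W_t)\mid\mathcal F_{t_1}]\le L(W_{t_1})\le F(W_{t_1})$.

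The $2\kappa$ shift is a leftover from Lemma~\ref{lem:nn-LU1-bound}, where near-homogeneity is genuinely needed because that comparator starts from $\mathbf{0}$. It is not the footprint of any convexity repair, as you suggest.

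\textbf{The convexity step.} The inequality $\langle\nabla L(W_k),W_k-U\rangle_F\ge L(W_k)-L(U)$ is unproven in both your plan and the paper, since $L$ is not convex in $W$ for nonlinear $\phi$.

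Your fallback would not close this gap. A uniform bound $\phi(u)\ge\phi(v)+\phi'(v)(u-v)-\kappa$ is false: for leaky tanh with $v=0$, the difference between the two sides behaves like $-(1-c)u\to-\infty$.

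The legitimate near-homogeneity argument in the style of \citet{cai2024large} compares $\langle\nabla z_i(W),W\rangle$ with $z_i(W)$. It therefore only handles comparators that are multiples of $\overline{W}_*$. It gives no control of $\langle\nabla z_i(W),W_{t_1}\rangle$ versus $z_i(W_{t_1})$, which is exactly what the restarted comparator $U=W_{t_1}+U_1$ needs in order to produce $F(W_{t_1})$.
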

\begin{proof}
From the update rule of \ref{SGD}, we have that
\[
  \|W_{t+1} - U\|_F^{2} \;=\; \|W_t - U\|_F^{2} - 2\tilde\eta\,\langle g_t, W_t - U\rangle_F + \tilde\eta^{2}\|g_t\|_F^{2}.
\]
Taking expectation conditional on the filtration $\mathcal F_t$ and using $\mathbb E[g_t \mid \mathcal F_t] = \nabla L(W_t)$, we have that
\begin{equation}
  \mathbb E\bigl[\|W_{t+1} - U\|_F^{2} \mid \mathcal F_t\bigr]
   \;=\; \|W_t - U\|_F^{2} - 2\tilde\eta\,\langle \nabla L(W_t), W_t - U\rangle_F + \tilde\eta^{2}\,\mathbb E\bigl[\|g_t\|_F^{2} \mid \mathcal F_t\bigr]. \label{eq:nn-stable-recur}
\end{equation}
Since $L$ is convex, we have that 
\begin{eqnarray}
    -\langle \nabla L(W_t), W_t - U\rangle_F \le L(U) - L(W_t) \label{eq:nn_conv}
\end{eqnarray} 
Substituting \eqref{eq:nn_conv} into~\eqref{eq:nn-stable-recur}, we obtain
\begin{eqnarray}
   \|W_{t+1} - U\|_F^{2}
   \;\le\; \|W_t - U\|_F^{2} + 2\tilde\eta\bigl(L(U) - L(W_t)\bigr) + \tilde\eta^{2}\,\mathbb E\bigl[\|g_t\|_F^{2} \mid \mathcal F_t\bigr]. \label{eq:nn_eq1} 
\end{eqnarray}
Using inequality~\eqref{eq:nn-bv-2} from Lemma~\ref{lem:nn-descent}, we have that
\begin{align}
  \tilde\eta^{2}\,\mathbb E\bigl[\|g_t\|_F^{2} \mid \mathcal F_t\bigr]
  &\;\le\; \tilde\eta^{2}\!\left(1 + \frac{2 n}{\tilde\gamma^{2}\, b}\right)\!\|\nabla L(W_t)\|_F^{2}\nonumber\\
  &\;\stackrel{\eqref{eq:nn-grad-full}}{\le}\;
   \frac{2\,\tilde\eta^{2}}{m}\left(1 + \frac{2 n}{\tilde\gamma^{2}\, b}\right)L(W_t)^{2}\nonumber \\
   &\;\le\; \frac{\tilde\eta}{4}\,L(W_t)\label{eq:nn_eq2},
\end{align}
where at the last step we have used $\frac{\tilde\eta^{2}}{m} = \tilde\eta\,\eta$ and that $L(W_t) \le \tilde{L}_{NN} \le \frac{1}{8\,\eta\left(1 + \frac{2n}{\tilde\gamma^{2}b}\right)}$. Thus, substituting \eqref{eq:nn_eq2} into \eqref{eq:nn_eq1}, we obtain
\begin{equation}
  \mathbb E\bigl[\|W_{t+1} - U\|_F^{2} \mid \mathcal F_t\bigr]
   \;\le\; \|W_t - U\|_F^{2} + 2\tilde\eta\, L(U) - \frac{7\tilde\eta}{4}\, L(W_t). \label{eq:nn-stable-recur-2}
\end{equation}
Taking expectation again, using the tower law and rearranging the terms, we obtain
\[
  \mathbb E[L(W_t)] \;\le\; \frac{8}{7}\, L(U) + \frac{4\,\mathbb E\bigl[\|W_t - U\|_F^{2} - \|W_{t+1} - U\|_F^{2}\bigr]}{7\,\tilde\eta}.
\]
Summing over $k = t_{1}, \ldots, t - 1$ and dividing by $(t - t_{1})$, we have that
\begin{equation}
  \frac{1}{t - t_{1}}\sum_{k=t_{1}}^{t-1}\mathbb E[L(W_k)]
   \;\le\; \frac{8}{7}\, L(U) + \frac{4\bigl(\|W_{t_{1}} - U\|_F^{2} - \mathbb E[\|W_t - U\|_F^{2}]\bigr)}{7\,\tilde\eta\,(t - t_{1})}. \label{eq:nn-stable-cesaro}
\end{equation}
Letting $U = W_{t_{1}} + U_1$ with $U_1 = \frac{\ln(\tilde\gamma^{2} \eta\,(t - t_{1})) + 2\kappa}{\tilde\gamma}\,\overline{W}_{*}$, we have that
\[
  \|W_{t_{1}} - U\|_F^{2} \;=\; \|U_1\|_F^{2}
   \;=\; \frac{[\ln(\tilde\gamma^{2} \, \eta\,(t - t_{1})) + 2\kappa]^{2}}{\tilde\gamma^{2}}\,\|\overline{W}_{*}\|_F^{2}
   \;\le\; \frac{2 m\,\ln^{2}(\tilde\gamma^{2} \eta\,(t - t_{1})) + 8 m\,\kappa^{2}}{\tilde\gamma^{2}},
\]
where we used $(a+b)^{2} \le 2 a^{2} + 2 b^{2}$ and $\|\overline W_*\|_F^{2} = m$. From Lemma~\ref{lem:two-layer-comparator-loss-bound-strengthened}, we have that
\[
  L(U) \;\le\; F(U) \;\le\; \frac{F(W_{t_{1}})}{\tilde\gamma^{2}\, \eta\,(t - t_{1})}.
\]
Substituting into~\eqref{eq:nn-stable-cesaro} and using $\tilde\eta = m\eta$, we obtain
\begin{equation}
  \min_{t_{1} \le k \le t-1}\mathbb E[L(W_k)] \;\le\; \frac{8F(W_{t_{1}}) + 8\,\ln^{2}(\tilde\gamma^{2} \eta\,(t - t_{1}))+8\kappa^{2}}{7\tilde\gamma^{2}\, \eta\,(t - t_{1})}, \label{eq:nn-stable-min}
\end{equation}
Applying Lemma~\ref{lem:nn-descent} for the iterates $W_k \in \mathcal S_{NN}, \forall k\in[t_{1}, t-1]$ we have that 
\begin{eqnarray}
    \mathbb E[L(W_t)] \le \mathbb E[L(W_{t-1})] \le \cdots \le \mathbb E[L(W_{t_{1}})],
\end{eqnarray} 
and thus from~\eqref{eq:nn-stable-min} we conclude that
\[
  \mathbb E[L(W_t)] \;\le\; 2\frac{F(W_{t_{1}}) + \,\ln^{2}(\tilde\gamma^{2} \eta\,(t - t_{1}))+\kappa^{2}}{\tilde\gamma^{2}\, \eta\,(t - t_{1})}.
\]
\end{proof}

\begin{lemma}\label{lem:nn-entry}
Let $W_0 = \textbf{0}$. There exists $t_{\mathrm{in}} \le t_{\max}(\delta)$ such that with probability at least $1 - \delta$, we have that
$L(W_{t_{in}}) \leq \tilde{L}_{NN}$ with 
\[
  t_{\max}(\delta)
:=
\frac{1}{\tilde\gamma^2}
\max\left\{
\frac{8\Big(K-1+4\kappa+2\eta(2+\frac1b)\Big)}
{\eta\delta \tilde{L}_{NN}},
\;
\frac{32}{\eta\delta \tilde{L}_{NN}}\log\!\Big(\frac{32}{\eta\delta \tilde{L}_{NN}}\Big)
\right\}.
\]
\end{lemma}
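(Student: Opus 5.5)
The plan is to mirror the proof of Lemma~\ref{lemma:phase-transition}, replacing the linear ingredients with their two-layer counterparts. The argument runs in three stages: (i) an averaged bound on $\mathbb E[G(W_k)]$ over $k<t_{\max}$; (ii) Markov's inequality to extract a single time $t_{\mathrm{in}}$ at which $G$ is small with probability at least $1-\delta$; (iii) a deterministic conversion from small $G$ to small $L$.

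\textbf{Stage (i).} Apply Lemma~\ref{lem:nn-G-cesaro} at $t=t_{\max}(\delta)$ to get
\[
\frac{1}{t_{\max}}\sum_{k=0}^{t_{\max}-1}\mathbb E[G(W_k)]\le \frac{2\bigl(K-1+2\ln(\tilde\gamma^2\eta t_{\max})+4\kappa+2\eta(1+\frac1b)\bigr)}{\eta\tilde\gamma^2 t_{\max}}.
\]
Then use $\ln(\tilde\gamma^2\eta t)\le \ln(\tilde\gamma^2 t)+\eta$ to move the $\ln\eta$ contribution into the $2\eta(2+\frac1b)$ term.

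Next, split the right-hand side into a constant part and a logarithmic part, and make each at most $\delta\tilde L_{NN}/4$.
\begin{itemize}
\item The first entry of the max in $t_{\max}(\delta)$ handles $\frac{2(K-1+4\kappa+2\eta(2+1/b))}{\eta\tilde\gamma^2 t}$ by direct rearrangement.
\item The second entry handles $\frac{4\ln(\tilde\gamma^2 t)}{\eta\tilde\gamma^2 t}$ via the standard fact (Lemma~G.5 of \citet{cai2024large}) that $x\ge 2a\ln a$ implies $\ln x/x\le 1/a$, applied with $x=\tilde\gamma^2 t$ and $a=16/(\eta\delta\tilde L_{NN})$.
\end{itemize}
Together these give an average at most $\delta\tilde L_{NN}/2$.

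\textbf{Stage (ii).} Since the average is at most $\delta\tilde L_{NN}/2$, some $t_{\mathrm{in}}\le t_{\max}$ has $\mathbb E[G(W_{t_{\mathrm{in}}})]\le\delta\tilde L_{NN}/2$. Markov's inequality then gives $G(W_{t_{\mathrm{in}}})\le \tilde L_{NN}e^{-\kappa}/2$ with probability at least $1-\delta e^{\kappa}$. To avoid this $e^{\kappa}$ loss, I would instead target the threshold $\tilde L_{NN}/2$, as in the linear proof, and handle the constant in Stage (iii).

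\textbf{Stage (iii).} On the event $G(W_{t_{\mathrm{in}}})\le\tilde L_{NN}/2\le \frac{1}{4ne^{\kappa+2}}$, every sample satisfies $1-p_i(y_i)\le nG\le \frac{1}{4e^{\kappa+2}}<\frac12$. Hence $p_i(y_i)\ge 1/2$, each logit margin $z_{y_i}-z_j$ is nonnegative, and
\[
F=\frac1n\sum_i\sum_{j\ne y_i}\frac{p_i(j)}{p_i(y_i)}\le 2G.
\]
Then $L\le F\le 2G\le\tilde L_{NN}$, so the iterate lies in $\mathcal S_{NN}$. For this step I will rely on the second term $\frac{1}{2ne^{\kappa+2}}$ in the definition of $\tilde L_{NN}$, which is what makes the per-sample bound $nG<1/2$ available.

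The main obstacle is constant bookkeeping. Lemma~\ref{lem:nn-G-cesaro} carries an extra factor of $2$ and the $4\kappa$ term compared with the linear case, and I must check that the stated $t_{\max}$, with constants $8$ and $32$, absorbs them. If the logarithmic part is short by a factor of $2$, I would enlarge the constant inside the second entry of the max (for example to $64$), noting that the statement only requires $t_{\max}$ to be chosen at least this large.
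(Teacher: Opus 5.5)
Your proposal is correct and follows the paper's proof essentially step for step. The steps are the Ces\`aro bound from Lemma~\ref{lem:nn-G-cesaro} at $t=t_{\max}(\delta)$, absorbing $\ln\eta\le\eta$, and splitting into a constant part and a logarithmic part each at most $\delta\tilde L_{NN}/4$ (the latter via Lemma~G.5 of \citet{cai2024large}). They continue with Markov's inequality at threshold $\tilde L_{NN}/2$ and the conversion $L\le F\le 2G\le\tilde L_{NN}$ using $p_i(y_i)\ge 1/2$.

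Your constant worry is unfounded, and your fallback would not work anyway. With $a=16/(\eta\delta\tilde L_{NN})$ one has $2a\ln a\le \frac{32}{\eta\delta\tilde L_{NN}}\ln\frac{32}{\eta\delta\tilde L_{NN}}$, so the stated second entry already suffices. This matters because $t_{\max}(\delta)$ is fixed by ``$:=$'' in the statement, so enlarging it to $64$ would not prove the lemma as written.
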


\begin{proof}
From Lemma~\ref{lem:nn-G-cesaro} applied for $t = t_{\max}(\delta)$, we have that
\[
  \frac{1}{t_{\max}}\sum_{k=0}^{t_{\max} - 1}\mathbb E[G(W_k)]
   \;\le\;\frac{2\bigl(K - 1 + 2\ln(\tilde\gamma^{2}\eta t_{\max}) + 4\kappa + 2\eta(1 + \frac{1}{b})\bigr)}{\eta\tilde\gamma^2 t_{\max}}.
\]
Using $\ln(\tilde\gamma^{2}\eta t_{\max}) \le \ln(\tilde\gamma^{2} t_{\max}) + \eta$ and absorbing constants, we get
\begin{equation}
  \frac{1}{t_{\max}}\sum_{k=0}^{t_{\max} - 1}\mathbb E[G(W_k)]
   \;\le\; \frac{2\bigl(K - 1 + 2\ln(\tilde\gamma^{2} t_{\max}) + 4\kappa + 2\eta(2 + \frac{1}{b})\bigr)}{\eta\tilde\gamma^2 t_{\max}}. \label{eq:nn-Gbar-1}
\end{equation}
For $\delta \in (0, 1)$, we select $t_{\max}$ so that the right-hand side of~\eqref{eq:nn-Gbar-1} is at most $\delta\,\tilde{L}_{NN}/2$, where $\tilde{L}_{NN} := \min\left\{\frac{1}{8\eta (1+\tilde\beta)(1 +n+ \frac{2n}{\tilde\gamma^{2}b})}, \frac{1}{2ne^{\kappa+2}}\right\}$.
Specifically, we next verify the formula for the $t_{\max}$:
\begin{itemize}
    \item For $t\geq \frac{8\Big(K-1+4\kappa+2\eta(2+\frac1b)\Big)}{\eta\delta \tilde{L}_{NN}\tilde\gamma^2}$, it holds that $\frac{2\Big(K-1+4\kappa+2\eta(2+\frac1b)\Big)}{\eta\tilde\gamma^2 t}\leq
\frac{\delta \tilde{L}_{NN}}{4}$.
    \item For $t\geq\frac{32}{\eta\delta \tilde{L}_{NN}\tilde\gamma^2}\log\!\Big(\frac{32}{\eta\delta \tilde{L}_{NN}}\Big)$, it holds that $\tilde\gamma^2 t\geq\frac{32}{\eta\delta \tilde{L}_{NN}}\log\!\Big(\frac{3}{\eta\delta \tilde{L}_{NN}}\Big)$, which is a sufficient condition (see Lemma G.5 in \citet{cai2024large}) for $\frac{4\log(\tilde\gamma^2 t)}{\eta\tilde\gamma^2 t}\leq
\frac{\delta \tilde{L}_{NN}}{4}$.
\end{itemize}
Thus, it suffices to select \(t_{\max}(\delta)\) such that
\begin{eqnarray}
t_{\max}(\delta)
&\geq&
\frac{1}{\tilde\gamma^2}
\max\left\{
\frac{8\Big(K-1+4\kappa+2\eta(2+\frac1b)\Big)}
{\eta\delta \tilde{L}_{NN}},
\;
\frac{32}{\eta\delta \tilde{L}_{NN}}\log\!\Big(\frac{32}{\eta\delta \tilde{L}_{NN}}\Big)
\right\}.
\end{eqnarray}
Thus, there exists $t_{\mathrm{in}} \le t_{\max}(\delta)$ such that
\[
  \mathbb E[G(W_{t_{\mathrm{in}}})] \;\le\; \frac{\delta \,\tilde{L}_{NN}}{2} .
\]
Define the event $\mathcal{E}_{\mathrm{in}} := \{G(W_{t_{\mathrm{in}}}) \le \frac{\tilde{L}_{NN}}{2}\}$. Since $G(W_{t_{\mathrm{in}}}) \ge 0$, by Markov's inequality we have that
\[
  \mathbb P(\mathcal{E}_{\mathrm{in}}^{c}) \;=\; \mathbb P\bigl(G(W_{t_{\mathrm{in}}}) > \frac{\tilde{L}_{NN}}{2}\bigr)
   \;\le\; \frac{2 \mathbb E[G(W_{t_{\mathrm{in}}})]}{\tilde{L}_{NN}} \;\le\; \delta,
\]
and hence $\mathbb P(\mathcal{E}_{\mathrm{in}}) \ge 1 - \delta$. Conditioning on the event $\mathcal{E}_{\mathrm{in}}$ and using the fact $G(W_{t_{\mathrm{in}}}) \le \frac{1}{2n e^{\kappa + 2}} \le \frac{1}{2n}$, every term in $G(W_{t_{\mathrm{in}}}) = \frac{1}{n} \sum_i (1 - p_i(W_{t_{\mathrm{in}}})_{y_i})$ satisfies $1 - p_i(W_{t_{\mathrm{in}}})_{y_i} \le 1/(2 e^{\kappa+2}) \le 1/2$ and thus $p_i(W_{t_{\mathrm{in}}})_{y_i} \ge 1/2$. Hence, for every $j \neq y_i$, it holds
\[
  z_i(W_{t_{\mathrm{in}}})_{y_i} - z_i(W_{t_{\mathrm{in}}})_j \;=\; \ln\!\frac{p_i(W_{t_{\mathrm{in}}})_{y_i}}{p_i(W_{t_{\mathrm{in}}})_j} \;\ge\; 0,
\]
which gives
\begin{eqnarray}
    F(W_{t_{\mathrm{in}}}) &=& \frac{1}{n}\sum_{i=1}^{n}\sum_{j \neq y_i}\frac{p_i(W_{t_{\mathrm{in}}})_j}{p_i(W_{t_{\mathrm{in}}})_{y_i}}\nonumber\\
   &\leq& \frac{1}{n}\sum_{i=1}^{n} 2\,\bigl(1 - p_i(W_{t_{\mathrm{in}}})_{y_i}\bigr)\nonumber\\
   &=& 2\, G(W_{t_{\mathrm{in}}})\nonumber\\
   &\leq&  \tilde{L}_{NN},\nonumber
\end{eqnarray}
where we used $p_i(W)_{y_i} \ge 1/2$ and $\sum_{j\neq y_i} p_i(W)_j = 1 - p_i(W)_{y_i}$. Since $L(W) \le F(W), \forall W \in \R^{md\times K}$, on the event $\mathcal{E}_{\mathrm{in}}$, we have that
\[
  L(W_{t_{\mathrm{in}}}) \;\le\; F(W_{t_{\mathrm{in}}}) \;\le\; \tilde{L}_{NN}.
\]
Combining this with $\mathbb P(\mathcal{E}_{\mathrm{in}}) \ge 1 - \delta$, we conclude that there exists $t_{\mathrm{in}} \le t_{\max}(\delta)$ such that with probability at least $1 - \delta$ it holds that $L(W_{t_{\mathrm{in}}}) \le \tilde{L}_{NN}$, and the dynamics enter the stable regime $\mathcal S_{NN}$ at iteration $t_{\mathrm{in}}$.
\end{proof}

\subsection*{Proof of Theorem~\ref{thm: sgd_2_layer_nn_stable}}
\label{app: thm: sgd_2_layer_nn_stable}
\begin{proof}
    The theorem is proved by combining Lemma~\ref{lem:nn-stable-risk} and Lemma~\ref{lem:nn-entry}.
\end{proof}

\newpage
\subsection{Proofs for the Stochastic Stabilization Mechanism}\label{app:twolayer:selfstab}

\begin{lemma}\label{lem:nn-exit}
Let $\mathcal{E}_t = \bigl\{L(W_t) \le \tilde{L}_{NN} ,\, L(W_{t+1}) > \tilde{L}_{NN}\bigr\}$. If at iteration $t > 0$ the dynamics are in the stable regime, then the exit probability satisfies
\[
  \mathbb P(\mathcal E_t \mid \mathcal F_t)
  \le2\exp\!\left(
  -\,\frac{bD_t}{4\eta L(W_t)^{\frac{3}{2}}}  \right),
\]
where $ D_t = \frac{\Delta_t^{NN}}
  {4\eta (1+\tilde\beta)\,L(W_t)^{1/2}
  + \frac{\sqrt{2(1+\tilde\beta)}(1+n)}{3}\sqrt{\Delta_t^{NN}}}$ and $$\Delta_t^{NN}= \tilde L_{NN} - L(W_t)
+ 2\eta\bigl(1 - 8\eta (1+\tilde\beta) L(W_t)\bigr)L(W_t)^2
+ \frac{\bigl(1 - 16\eta (1+\tilde\beta) L(W_t)\bigr)^2}{8(1+\tilde\beta)}\,L(W_t).$$
\end{lemma}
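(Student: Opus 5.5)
The plan mirrors the proof of Lemma~\ref{lemma:exit-prob-loss}, with the linear-model curvature and gradient constants replaced by their two-layer counterparts.

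\textbf{Step 1: one-step upper bound.} Start from the per-step inequality \eqref{eq:nn-empir-descent}. This was derived in the proof of Lemma~\ref{lem:nn-descent} under $L(W_t)\le\tilde L_{NN}$, so it holds pathwise on the event $\{W_t\in\mathcal S_{NN}\}$:
\[
L(W_{t+1})\le L(W_t)-\tilde\eta\langle\nabla L(W_t),g_t\rangle+\frac{8(1+\tilde\beta)\tilde\eta^2}{m}L(W_t)\|g_t\|_F^2 .
\]
After rescaling time by $\tilde\eta=m\eta$ and absorbing the $1/m$ factors from Lemma~\ref{lem:nn-grad-bound}, this is the linear inequality with the constant $8$ replaced by $8(1+\tilde\beta)$.

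\textbf{Step 2: separate signal from noise and complete the square.} Write $g_t=\nabla L(W_t)+\xi_t$ and expand. On the exit event $\mathcal E_t$ we have $L(W_{t+1})>\tilde L_{NN}$, which gives
\[
\tilde L_{NN}-L(W_t)+\eta\bigl(1-8\eta(1+\tilde\beta)L\bigr)\|\nabla L\|^2 \;<\; -A_t\langle\nabla L,\xi_t\rangle+B_t\|\xi_t\|^2,
\]
with $A_t=\eta(1-16\eta(1+\tilde\beta)L)$ and $B_t=8\eta^2(1+\tilde\beta)L$. Completing the square as in \eqref{eq:ce-complete-square} yields
\[
-A_t\langle\nabla L,\xi_t\rangle+B_t\|\xi_t\|^2\;\ge\; \tfrac{B_t}{2}\|\xi_t\|^2-\tfrac{A_t^2}{2B_t}\|\nabla L\|^2 .
\]

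\textbf{Step 3: reduce to a tail event on $\|\xi_t\|$.} Apply $\|\nabla L\|^2\le 2L^2$ from Lemma~\ref{lem:nn-grad-bound} (with the $m$-normalization absorbed). This collects the deterministic terms into $\Delta_t^{NN}$, where the $1/(8(1+\tilde\beta))$ coefficient comes from $A_t^2/(2B_t)$. The exit event is then contained in
\[
\Bigl\{\|\xi_t\|>\sqrt{\Delta_t^{NN}/(4\eta^2(1+\tilde\beta)L(W_t))}\Bigr\}.
\]

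\textbf{Step 4: vector Bernstein.} Conditionally on $\mathcal F_t$, $\xi_t$ is an average of $b$ i.i.d.\ centered vectors $\zeta_{t,i}$. Each is bounded by $\sqrt{2}(1+n)L(W_t)$ via \eqref{eq:nn-grad-i}--\eqref{eq:nn-grad-full}, and has second moment at most $2G\le 2L$ via Lemma~\ref{lemma:variance_two_layer}. Apply the vector Bernstein inequality at the threshold from Step 3 and simplify algebraically, pulling out $b/(4\eta L^{3/2})$. This gives the stated $D_t$, with $4\eta(1+\tilde\beta)L^{1/2}$ in the denominator and $\sqrt{2(1+\tilde\beta)}(1+n)/3$ multiplying $\sqrt{\Delta_t^{NN}}$.

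\textbf{Main obstacle: constant bookkeeping.} The $m$ and $(1+\tilde\beta)$ factors must propagate consistently through the square completion and the Bernstein denominator so that the exact expressions for $\Delta^{NN}_t$ and $D_t$ appear. If they do not match exactly, I would accept a slightly looser monotone bound, since the tail probability is decreasing in the threshold.
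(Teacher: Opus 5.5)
Your plan matches the paper's proof essentially line by line. Your rescaling is also consistent: working with $\sqrt m\,\xi_t$ and $\sqrt m\,\nabla L(W_t)$ turns $\tilde\eta$ into $\eta$ and gives the paper's threshold $\sqrt{\Delta_t^{NN}/(4\eta\tilde\eta(1+\tilde\beta)L(W_t))}$ and its $D_t$. The gap is the inclusion claimed in Step~3, and you inherit it from the paper.

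In your units, set $X_t:=\tilde L_{NN}-L(W_t)+\eta(1-8\eta(1+\tilde\beta)L(W_t))\|\nabla L(W_t)\|^2$, which is $\mathcal F_t$-measurable, and $R_t:=-A_t\langle\nabla L(W_t),\xi_t\rangle+B_t\|\xi_t\|^2$. On $\mathcal E_t$ you only know $X_t<R_t$. Completing the square gives a \emph{lower} bound $R_t\ge\frac{B_t}{2}\|\xi_t\|^2-\frac{A_t^2}{2B_t}\|\nabla L(W_t)\|^2$. A lower bound on $R_t$, combined with $X_t<R_t$, places no constraint on $\|\xi_t\|$. What Steps~2--3 actually prove is $\{\|\xi_t\|>\tau\}\subseteq\{X_t<R_t\}$: the tail event is sufficient for a condition that is itself only necessary for exit. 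To conclude $\Pr(\mathcal E_t\mid\mathcal F_t)\le\Pr(\|\xi_t\|>\tau\mid\mathcal F_t)$ you need the reverse inclusion $\mathcal E_t\subseteq\{\|\xi_t\|>\tau\}$, which requires an \emph{upper} bound on $R_t$ in terms of $\|\xi_t\|$. The paper's sentence ``for the event $\mathcal E_t$ to hold, it suffices that \dots'' makes exactly this inversion.

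Fixing the direction changes the result, not just the constants. Use Young's inequality, $-A_t\langle\nabla L,\xi_t\rangle\le\frac{B_t}{2}\|\xi_t\|^2+\frac{A_t^2}{2B_t}\|\nabla L\|^2$, drop the nonnegative gradient term of $X_t$, and apply $\|\nabla L\|^2\le 2L^2$. Then exit forces $\frac{3B_t}{2}\|\xi_t\|^2>\tilde L_{NN}-L(W_t)-\frac{(1-16\eta(1+\tilde\beta)L(W_t))^2}{8(1+\tilde\beta)}L(W_t)$. The curvature term now carries a minus sign, so the right-hand side is negative near the boundary of $\mathcal S_{NN}$, and no bound of the stated form follows.

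In fact the stated inequality fails in that regime. Take $b=1$, $W_t$ with $L(W_t)=\tilde L_{NN}$, and a sample $i$ with $\langle\nabla\ell_i(W_t),\nabla L(W_t)\rangle<0$. This is easy to arrange with two separable inputs at an obtuse angle and very unequal losses. For small $\eta$, $\tilde L_{NN}=1/(2ne^{\kappa+2})$ does not depend on $\eta$, and drawing sample $i$ increases the loss at first order, so $\Pr(\mathcal E_t\mid\mathcal F_t)\ge 1/n$. Yet $\Delta_t^{NN}\to L(W_t)/(8(1+\tilde\beta))>0$, so the claimed bound tends to $0$ as $\eta\to 0$.

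Your fallback of accepting a looser threshold does not help, because the problem is the direction of the event inclusion, not the size of the constants.
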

\begin{proof}
From inequality~\eqref{eq:nn-empir-descent} in the proof of Lemma~\ref{lem:nn-descent}, we have that
\[
  L(W_{t+1})
   \;\le\; L(W_t) - \tilde\eta\,\langle \nabla L(W_t), g_t\rangle
   + \frac{8(1 + \tilde\beta)\,\tilde\eta^{2}}{m}\,L(W_t)\,\|g_t\|_F^{2}.
\]
Let $\xi_t := g_t - \nabla L(W_t)$ denote the noise at iteration $t$. Substituting
$g_t = \nabla L(W_t) + \xi_t$ and expanding, we get
\begin{eqnarray}
   L(W_{t+1}) &\le& L(W_t) - \tilde\eta\,\bigl(1 - 8\eta\, c\, L(W_t)\bigr)\,\|\nabla L(W_t)\|_F^{2}\nonumber \\
   && - \tilde\eta\,\bigl(1 - 16\eta\, c\, L(W_t)\bigr)\,\langle \nabla L(W_t), \xi_t\rangle
   + 8\eta\,\tilde\eta\, c\, L(W_t)\,\|\xi_t\|_F^{2}, \label{eq:nn-exit-start}
\end{eqnarray}
where we have let $c := 1+\tilde\beta$ for brevity.

On the event $\mathcal{E}_t = \{L(W_t) \le \tilde L_{NN},\, L(W_{t+1}) > \tilde L_{NN}\}$, it holds that
\begin{eqnarray}
  \tilde L_{NN} - L(W_t) + \tilde\eta\bigl(1 - 8\eta c L(W_t)\bigr)\|\nabla L(W_t)\|_F^{2}
  &<& -\tilde\eta\bigl(1 - 16\eta c L(W_t)\bigr)\langle\nabla L(W_t), \xi_t\rangle \nonumber \\
  && + 8\eta\tilde\eta c L(W_t)\|\xi_t\|_F^{2}. \label{eq:nn-exit-event}
\end{eqnarray}
We now lower bound the right-hand side by completing the square. Let $A_t := \tilde\eta\bigl(1 - 16\eta c L(W_t)\bigr)$ and $B_t := 8\eta\tilde\eta c L(W_t).$
Then, it holds that
\begin{eqnarray}
&& -A_t\langle \nabla L(W_t),\xi_t\rangle + B_t\|\xi_t\|_F^2 \nonumber \\
&=& \frac{B_t}{2}\|\xi_t\|_F^2
   + \left(\frac{B_t}{2}\|\xi_t\|_F^2 - A_t\langle \nabla L(W_t),\xi_t\rangle\right) \nonumber\\
&=& \frac{B_t}{2}\|\xi_t\|_F^2
   + \frac{B_t}{2}\left\|\xi_t - \frac{A_t}{B_t}\nabla L(W_t)\right\|_F^2
   - \frac{A_t^2}{2B_t}\|\nabla L(W_t)\|_F^2 \nonumber\\
&\ge& \frac{B_t}{2}\|\xi_t\|_F^2 - \frac{A_t^2}{2B_t}\|\nabla L(W_t)\|_F^2. \label{eq:complete-square}
\end{eqnarray}
Substituting the values of $A_t,B_t$, we obtain
\begin{eqnarray}
&& -\tilde\eta\bigl(1 - 16\eta c L(W_t)\bigr)\langle\nabla L(W_t), \xi_t\rangle
   + 8\eta\tilde\eta c L(W_t)\|\xi_t\|_F^{2} \nonumber\\
&\ge&
4\eta\tilde\eta c L(W_t)\|\xi_t\|_F^{2}
-
\frac{\tilde\eta\bigl(1 - 16\eta c L(W_t)\bigr)^2}{16\eta c L(W_t)}
\|\nabla L(W_t)\|_F^2. \label{eq:rhs-lower}
\end{eqnarray}
Hence, from inequality \eqref{eq:nn-exit-event}, for the event $\mathcal E_t$ to hold, it suffices that
\begin{eqnarray}
\tilde L_{NN} - L(W_t) + \tilde\eta\bigl(1 - 8\eta c L(W_t)\bigr)\|\nabla L(W_t)\|_F^{2} 
&<&
4\eta\tilde\eta c L(W_t)\|\xi_t\|_F^{2}\nonumber\\
&& -
\frac{\tilde\eta\bigl(1 - 16\eta c L(W_t)\bigr)^2}{16\eta c L(W_t)}
\|\nabla L(W_t)\|_F^2. \label{eq:sufficient-condition-raw}
\end{eqnarray}
Using Lemma~\ref{lem:nn-grad-bound}~\eqref{eq:nn-grad-full}
and the fact that $\tilde\eta = m\eta$, it suffices that
\begin{eqnarray}
\tilde L_{NN} - L(W_t)
+ 2\eta\bigl(1 - 8\eta c L(W_t)\bigr)L(W_t)^2
+ \frac{\bigl(1 - 16\eta c L(W_t)\bigr)^2}{8c}\,L(W_t) 
<
4\eta\tilde\eta c L(W_t)\|\xi_t\|_F^{2}. \nonumber
\end{eqnarray}
Letting $\Delta_t^{NN}
:=
\tilde L_{NN} - L(W_t)
+ 2\eta\bigl(1 - 8\eta c L(W_t)\bigr)L(W_t)^2
+ \frac{\bigl(1 - 16\eta c L(W_t)\bigr)^2}{8c}\,L(W_t)$,
we obtain that for the event $\mathcal E_t$ to hold it suffices that
\begin{eqnarray}
\Delta_t^{NN} < 4\eta\tilde\eta c L(W_t)\|\xi_t\|_F^2.
\end{eqnarray}
or equivalently it suffices that
\begin{eqnarray}
\|\xi_t\|_F > \sqrt{\frac{\Delta_t^{NN}}{4\eta\tilde\eta c L(W_t)}}.
\end{eqnarray}
Therefore, it holds that
\begin{eqnarray}
\mathbb P(\mathcal E_t \mid \mathcal F_t)
\;\le\;
\mathbb P\left(\|\xi_t\|_F > \sqrt{\frac{\Delta_t^{NN}}{4\eta\tilde\eta c L(W_t)}}\mid \mathcal F_t\right). \label{eq:nn-exit-noise}
\end{eqnarray}

Conditionally on the filtration $\mathcal F_t$, the noise $\xi_t = \tfrac{1}{b}\sum_{j\in B_t}\zeta_{t,j}$
is the average of $b$ centered i.i.d.\ vectors
$\zeta_{t,j} := \nabla \ell_{j}(W_t) - \nabla L(W_t), \forall j \in B_t$. From
Lemma~\ref{lem:nn-grad-bound}, each term in the sum satisfies the bound
\begin{equation}
  \|\zeta_{t,j}\|_F \;\le\; \|\nabla \ell_{j}(W_t)\|_F + \|\nabla L(W_t)\|_F
   \;\le\; \sqrt{\tfrac{2}{m}}\,(1 + n)\, L(W_t), \label{eq:nn-zeta-bound}
\end{equation}
and the variance bound
\begin{equation}
  \mathbb E\bigl[\|\zeta_{t,j}\|_F^{2} \mid \mathcal F_t\bigr]
   \;\le\; \mathbb E\bigl[\|\nabla \ell_{i_j}(W_t)\|_F^{2} \mid \mathcal F_t\bigr]
   \;\le\; \frac{2}{m}\, L(W_t). \label{eq:nn-zeta-var}
\end{equation}
By the vector Bernstein inequality applied to $\xi_t$, we have
\begin{eqnarray}
  \mathbb P(\mathcal E_t \mid \mathcal F_t)
  &\le&
  2\exp\!\left(
  -\,\frac{b\,\Delta_t^{NN}/(4\eta m\eta cL(W_t))}
  {\tfrac{4}{m}L(W_t)
  + \tfrac{2\sqrt{2}(1+n)}{3\sqrt m}L(W_t)\sqrt{\Delta_t^{NN}/(4\eta m\eta cL(W_t))}}
  \right) \nonumber \\
  &=& 2\exp\!\left(
  -\,\frac{b}{4\eta L(W_t)^{\frac{3}{2}}} \cdot \frac{\Delta_t^{NN}}
  {4\eta c\,L(W_t)^{1/2}
  + \frac{\sqrt{2c}(1+n)}{3}\sqrt{\Delta_t^{NN}}}
  \right) \nonumber \\
  &\stackrel{c = 1+\tilde\beta}{=}& 2\exp\!\left(
  -\,\frac{b}{4\eta L(W_t)^{\frac{3}{2}}} \cdot \frac{\Delta_t^{NN}}
  {4\eta (1+\tilde\beta)\,L(W_t)^{1/2}
  + \frac{\sqrt{2(1+\tilde\beta)}(1+n)}{3}\sqrt{\Delta_t^{NN}}}
  \right) \nonumber
\end{eqnarray}
\end{proof}

\begin{lemma}\label{lem:return-nn}
Assume that the dynamics exit the stable regime $\mathcal{S}_{NN}$ at $t_{\mathrm{out}} > 0$ and fix $\delta \in (0, 1)$. Then, with probability at least $1 - \delta$ the iterates of \eqref{SGD} for the two-layer network in \eqref{eq: 2nn} return inside the stable regime $\mathcal{S}_{NN}$ in at most
\begin{equation*}
t_{re} = \left\lceil\frac{4}{\tilde{\gamma}^2 \eta \delta \tilde{L}_{NN}} \max\!\left\{ A_{NN}, 16 \ln\!\left( \frac{64}{\tilde{\gamma}^2 \eta \delta \tilde{L}_{NN}} \right) \right\}\right\rceil
\end{equation*}
number of steps, where
\begin{equation*}
A_{NN} := 3(K - 1) + 4 \ln^2(\tilde{\gamma}^2 \eta t_{\mathrm{out}}) + 20 \kappa^2 + 5 \eta^2 \Big(1 + \tfrac{1}{b}\Big)^{\!2}.
\end{equation*}
\end{lemma}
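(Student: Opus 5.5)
We will mirror the proof of Lemma~\ref{lemma:return} in the two-layer setting, replacing the linear ingredients by their network analogues. The comparator is $U=U_1+U_2$ with
\[
U_1=\frac{\ln(\tilde\gamma^2\eta(t-t_{\mathrm{out}}))+2\kappa}{\tilde\gamma}\,\overline{W}_*,\qquad U_2=\frac{\eta(1+\frac1b)}{\tilde\gamma}\,\overline{W}_*.
\]
We expand $\|W_{k+1}-U\|_F^2$ under the update $W_{k+1}=W_k-\tilde\eta g_k$ and take conditional expectations. Exactly as in the proof of Lemma~\ref{lem:nn-distance-comparator}, we combine Lemma~\ref{lem:nn-perceptron}, Lemma~\ref{lem:nn-grad-bound} and Lemma~\ref{lemma:variance_two_layer}. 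This shows that $\frac{2}{\tilde\eta}\langle\nabla L(W_k),U_2\rangle_F+\mathbb E[\|g_k\|_F^2\mid\mathcal F_k]\le 0$. By convexity of $L$ (as used throughout the paper), we then obtain the one-step inequality
\[
\mathbb E\|W_{k+1}-U\|_F^2\le\mathbb E\|W_k-U\|_F^2+2\tilde\eta\,\mathbb E[L(U_1)-L(W_k)].
\]
Summing this for $k=t_{\mathrm{out}},\dots,t-1$ and dropping the nonnegative final distance gives
\[
\frac{1}{t-t_{\mathrm{out}}}\sum_{k=t_{\mathrm{out}}}^{t-1}\mathbb E[L(W_k)]\le L(U_1)+\frac{\mathbb E\|W_{t_{\mathrm{out}}}-U\|_F^2}{2m\eta(t-t_{\mathrm{out}})}.
\]
Lemma~\ref{lem:nn-LU1-bound}, applied with horizon $t-t_{\mathrm{out}}$, gives $L(U_1)\le (K-1)/(\tilde\gamma^2\eta(t-t_{\mathrm{out}}))$.

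The remaining term is bounded via $\mathbb E\|W_{t_{\mathrm{out}}}-U\|_F^2\le 2\,\mathbb E\|W_{t_{\mathrm{out}}}\|_F^2+2\|U\|_F^2$.

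\begin{itemize}
\item For $\|U\|_F^2$ we use $(a+b)^2\le 2a^2+2b^2$ and $\|\overline W_*\|_F^2=m$. This gives a bound of order $m[\ln^2(\tilde\gamma^2\eta(t-t_{\mathrm{out}}))+\kappa^2+\eta^2(1+\frac1b)^2]/\tilde\gamma^2$, as in \eqref{eq:nn-U-norm-sq}.
\item For $\mathbb E\|W_{t_{\mathrm{out}}}\|_F^2$ we reuse the computation in the proof of Lemma~\ref{lem:nn-G-cesaro}. That is, we apply Lemma~\ref{lem:nn-distance-comparator} from $W_0=\mathbf 0$ with horizon $t_{\mathrm{out}}$, together with Lemma~\ref{lem:nn-LU1-bound}. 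This yields $\mathbb E\|W_{t_{\mathrm{out}}}\|_F^2\le m[4(K-1)+16\ln^2(\tilde\gamma^2\eta t_{\mathrm{out}})+64\kappa^2+8\eta^2(1+\frac1b)^2]/\tilde\gamma^2$.
\end{itemize}

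The factor $m$ cancels against $\tilde\eta=m\eta$. Collecting terms then gives a bound of the form
\[
\frac{A_{NN}+c\ln^2(\tilde\gamma^2\eta(t-t_{\mathrm{out}}))}{\tilde\gamma^2\eta(t-t_{\mathrm{out}})}
\]
for an absolute constant $c$ (at most $4$). The main bookkeeping obstacle is matching the precise constants $3,4,20,5$ in $A_{NN}$. The paper's own constants in the linear case are somewhat loose, so I would track them carefully but accept absorbing slack. The larger coefficient $c$ on the $\ln^2$ term is what forces the $16\ln(64/\cdot)$ instead of the linear $4\ln(16/\cdot)$.

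Finally, we choose $t-t_{\mathrm{out}}=t_{re}$ so that each of the two pieces is at most $\delta\tilde L_{NN}/4$. For the $A_{NN}$ piece this is immediate from $t_{re}\ge 4A_{NN}/(\tilde\gamma^2\eta\delta\tilde L_{NN})$. For the logarithmic piece we invoke Lemma~G.5 of \citet{cai2024large} with the enlarged constant $64$ (this is where the second entry of the max comes from). Then the Ces\`aro average of $\mathbb E[L(W_k)]$ over $[t_{\mathrm{out}},t_{\mathrm{out}}+t_{re}]$ is at most $\delta\tilde L_{NN}/2$. Hence some $t_{\mathrm{in}}^{[2]}$ in that window has $\mathbb E[L(W_{t_{\mathrm{in}}^{[2]}})]\le\delta\tilde L_{NN}/2$. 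Markov's inequality then gives $L(W_{t_{\mathrm{in}}^{[2]}})\le\tilde L_{NN}$ with probability at least $1-\delta$, i.e.\ the iterate is back in $\mathcal S_{NN}$.
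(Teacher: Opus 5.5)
Your outline follows the paper's proof almost line for line: same comparator, the same cancellation of the $U_2$ term, telescoping from $t_{\mathrm{out}}$, the split $\|W_{t_{\mathrm{out}}}-U\|_F^2\le 2\|W_{t_{\mathrm{out}}}\|_F^2+2\|U\|_F^2$, Lemma~G.5 of \citet{cai2024large}, then averaging and Markov. Your citations of Lemma~\ref{lem:nn-LU1-bound} and Lemma~\ref{lem:nn-distance-comparator} are the correct ones at the two points where the paper's proof mis-cites.

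\textbf{The constants do not come out as claimed.} Write $\lambda_0:=\ln(\tilde\gamma^2\eta t_{\mathrm{out}})$, $\lambda_1:=\ln(\tilde\gamma^2\eta(t-t_{\mathrm{out}}))$ and $\eta':=\eta(1+\frac1b)$. With exactly your two bullet bounds,
$2\,\mathbb E\|W_{t_{\mathrm{out}}}\|_F^2+2\|U\|_F^2\le\frac{m}{\tilde\gamma^2}\bigl[8(K-1)+32\lambda_0^2+160\kappa^2+20\eta'^2+8\lambda_1^2\bigr]$.
After dividing by $2m\eta(t-t_{\mathrm{out}})$ and adding $L(U_1)$, the bound is $N/(\tilde\gamma^2\eta(t-t_{\mathrm{out}}))$ with $N=5(K-1)+16\lambda_0^2+80\kappa^2+10\eta'^2+4\lambda_1^2$. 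This is up to four times the stated $A_{NN}=3(K-1)+4\lambda_0^2+20\kappa^2+5\eta'^2$. You cannot ``absorb'' slack in that direction, since you need an upper bound by the smaller quantity. The paper's written proof has the same defect: it reaches $9(K-1)+16\lambda_0^2+80\kappa^2+10\eta'^2$ and simply renames this $A_{NN}$.

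The repair is to avoid passing through the origin.
\begin{itemize}
\item $U^{(0)}$ and $U$ are both multiples of $\overline W_*$, so $U-U^{(0)}=\frac{\lambda_1-\lambda_0}{\tilde\gamma}\overline W_*$.
\item Using $\|U^{(0)}\|_F^2=\frac{m}{\tilde\gamma^2}(\lambda_0+2\kappa+\eta')^2$ exactly, Lemma~\ref{lem:nn-distance-comparator} from $W_0=\mathbf 0$ gives $\mathbb E\|W_{t_{\mathrm{out}}}-U^{(0)}\|_F^2\le\frac{m}{\tilde\gamma^2}\bigl[2(K-1)+(\lambda_0+2\kappa+\eta')^2\bigr]$.
\item Then $\mathbb E\|W_{t_{\mathrm{out}}}-U\|_F^2\le 2\,\mathbb E\|W_{t_{\mathrm{out}}}-U^{(0)}\|_F^2+2\|U-U^{(0)}\|_F^2$, so $N$ becomes $3(K-1)+(\lambda_0+2\kappa+\eta')^2+(\lambda_1-\lambda_0)^2$.
\item By $(a+b+c)^2\le 2a^2+4b^2+4c^2$, this is at most $3(K-1)+4\lambda_0^2+16\kappa^2+4\eta'^2+2\lambda_1^2\le A_{NN}+2\lambda_1^2$.
\end{itemize}

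\textbf{The logarithmic piece also fails as you (and the paper) treat it.} A threshold of the form $a\ln a$ controls $\ln x/x$, not $\ln^2x/x$. Take $\tilde\gamma^2\eta=1$, $x=t-t_{\mathrm{out}}$ and $y:=64/(\delta\tilde L_{NN})$, and suppose the second entry of the max is the active one. At $t=t_{\mathrm{out}}+t_{re}$ you have $x<y\ln y+1$, so $c\ln^2x/x>c(\ln y+\ln\ln y)^2/(y\ln y+1)$. For $c\ge 2$ this exceeds the required $\delta\tilde L_{NN}/4=16/y$, because $\tilde L_{NN}\le 1/(2ne^2)$ forces $\ln y>6.8$. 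What is needed is a squared logarithm, e.g.\ $\tilde\gamma^2\eta(t-t_{\mathrm{out}})\ge\frac{16c}{\delta\tilde L_{NN}}\ln^2\frac{16c}{\delta\tilde L_{NN}}$. This follows by applying $u\ge 2a\ln a\Rightarrow \ln u/u\le 1/a$ to $u=\sqrt x$. So the second entry of the stated $t_{re}$ is not justified by this route.

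Two further steps you import from the paper are also unsound as written.
\begin{itemize}
\item $L$ is not convex in $W$ for the two-layer model (its Hessian is already indefinite for a single neuron with $\phi''(0)>0$). The step $\langle\nabla L(W_k),U_1-W_k\rangle\le L(U_1)-L(W_k)$ should instead use convexity of the cross-entropy in the logits, applied at $v_i=z_i(W_k)-J_i(W_k)(W_k-U_1)$. By near-homogeneity and the computation in Lemma~\ref{lem:nn-perceptron}, its margins are at least $(\lambda_1+2\kappa)-2\kappa=\lambda_1$. This yields $\langle\nabla L(W_k),W_k-U_1\rangle\ge L(W_k)-\frac{K-1}{\tilde\gamma^2\eta(t-t_{\mathrm{out}})}$, which is the same bound you use.
\item All your expectations are unconditional. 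Markov therefore gives an unconditional probability at a deterministic time, not a return probability given the exit at $t_{\mathrm{out}}$.
\end{itemize}
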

\begin{proof}
Let $U = U_1 + U_2$ with $U_1 = \alpha \mathbf{W}_*$, $U_2 = \frac{\eta(1+\tfrac{1}{b})}{\tilde{\gamma}} \mathbf{W}_*$ and $\alpha = \frac{\ln(\tilde{\gamma}^2 \eta (t - t_{\mathrm{out}})) + 2\kappa}{\tilde{\gamma}}$. From the update rule of \eqref{SGD} with $\tilde{\eta} = m\eta$, we have that
\begin{align*}
\|W_{t+1} - U\|_F^2 &= \|W_t - U\|_F^2 + 2\tilde{\eta} \langle g_t, U - W_t \rangle_F + \tilde{\eta}^2 \|g_t\|_F^2 \\
&= \|W_t - U\|_F^2 + 2\tilde{\eta} \langle g_t, U_1 - W_t \rangle_F + \tilde{\eta}^2 \!\left( \frac{2}{\tilde{\eta}} \langle g_t, U_2 \rangle_F + \|g_t\|_F^2 \right).
\end{align*}
Taking expectation conditional on the filtration $\mathcal{F}_t$ and using the unbiasedness property of the stochastic oracles, we get
\begin{equation}\label{eq:return-nn-1}
\begin{aligned}
\mathbb{E}\!\left[ \|W_{t+1} - U\|_F^2 \,\big|\, \mathcal{F}_t \right] &= \|W_t - U\|_F^2 + 2\tilde{\eta} \langle \nabla L(W_t), U_1 - W_t \rangle_F \\
&\quad + \tilde{\eta}^2 \!\left( \frac{2}{\tilde{\eta}} \langle \nabla L(W_t), U_2 \rangle_F + \mathbb{E}\!\left[ \|g_t\|_F^2 \,\big|\, \mathcal{F}_t \right] \right).
\end{aligned}
\end{equation}
We, next, show that with the choice $U_2 = \frac{\eta(1+\tfrac{1}{b})}{\tilde{\gamma}} \mathbf{W}_*$, the last term in \eqref{eq:return-nn-1} is non-positive. We have that
\begin{equation*}
\frac{2}{\tilde{\eta}} \langle \nabla L(W_t), U_2 \rangle_F = \frac{2(1 + \tfrac{1}{b})}{m \tilde{\gamma}} \langle \nabla L(W_t), \mathbf{W}_* \rangle_F \stackrel{\text{Lemma~\ref{lem:nn-perceptron}}}{\leq} -\frac{2(1 + \tfrac{1}{b})}{m} G(W_t).
\end{equation*}
For the second moment $\mathbb{E}\!\left[ \|g_t\|_F^2 \,\big|\, \mathcal{F}_t \right]$, the bias-variance decomposition together with Lemmas~\ref{lem:nn-grad-bound} and \ref{lemma:variance_two_layer} give
\begin{equation*}
\begin{aligned}
\mathbb{E}\!\left[ \|g_t\|_F^2 \,\big|\, \mathcal{F}_t \right] &= \|\nabla L(W_t)\|_F^2 + \mathbb{E}\!\left[ \|g_t - \nabla L(W_t)\|_F^2 \,\big|\, \mathcal{F}_t \right] \\
&\stackrel{\text{Lemma~\ref{lem:nn-grad-bound}}}{\leq} \frac{2}{m} G(W_t)^2 + \frac{2}{m b} G(W_t) \\
&\leq \frac{2}{m} G(W_t) + \frac{2}{m b} G(W_t) \\
&= \frac{2(1 + \tfrac{1}{b})}{m} G(W_t),
\end{aligned}
\end{equation*}
where we used $G(W_t)^2 \leq G(W_t)$ since $G(W_t) \in [0, 1]$. Summing the two bounds, we obtain
\begin{equation}\label{eq:return-nn-2}
\frac{2}{\tilde{\eta}} \langle \nabla L(W_t), U_2 \rangle_F + \mathbb{E}\!\left[ \|g_t\|_F^2 \,\big|\, \mathcal{F}_t \right] \leq -\frac{2(1 + \tfrac{1}{b})}{m} G(W_t) + \frac{2(1 + \tfrac{1}{b})}{m} G(W_t) = 0.
\end{equation}
Substituting \eqref{eq:return-nn-2} into \eqref{eq:return-nn-1}, we obtain
\begin{equation*}
\mathbb{E}\!\left[ \|W_{t+1} - U\|_F^2 \,\big|\, \mathcal{F}_t \right] \leq \|W_t - U\|_F^2 + 2\tilde{\eta} \langle \nabla L(W_t), U_1 - W_t \rangle_F.
\end{equation*}
Using the convexity of $L$, we have that $\langle \nabla L(W_t), U_1 - W_t \rangle_F \leq L(U_1) - L(W_t)$, and therefore we obtain
\begin{equation*}
\mathbb{E}\!\left[ \|W_{t+1} - U\|_F^2 \,\big|\, \mathcal{F}_t \right] \leq \|W_t - U\|_F^2 + 2\tilde{\eta} \big[ L(U_1) - L(W_t) \big].
\end{equation*}
Taking expectation again and using the tower law of expectation, we have that
\begin{equation*}
\mathbb{E}\!\left[ \|W_{t+1} - U\|_F^2 \right] \leq \mathbb{E}\!\left[ \|W_t - U\|_F^2 \right] + 2\tilde{\eta}\, \mathbb{E}\!\left[ L(U_1) - L(W_t) \right].
\end{equation*}
Summing for $k = t_{\mathrm{out}}, \ldots, t - 1$ and dividing by $2\tilde{\eta}(t - t_{\mathrm{out}})$, we obtain
\begin{equation*}
\frac{\mathbb{E}\!\left[ \|W_t - U\|_F^2 \right]}{2\tilde{\eta} (t - t_{\mathrm{out}})} + \frac{1}{t - t_{\mathrm{out}}} \sum_{k = t_{\mathrm{out}}}^{t-1} \mathbb{E}\!\left[ L(W_k) \right] \leq L(U_1) + \frac{\mathbb{E}\!\left[\|W_{t_{\mathrm{out}}} - U\|_F^2\right]}{2\tilde{\eta}(t - t_{\mathrm{out}})}.
\end{equation*}
Using the non-negativity of $\|W_t - U\|_F^2$, we get
\begin{equation}\label{eq:return-nn-avg}
\frac{1}{t - t_{\mathrm{out}}} \sum_{k = t_{\mathrm{out}}}^{t-1} \mathbb{E}\!\left[ L(W_k) \right] \leq L(U_1) + \frac{\mathbb{E}\!\left[\|W_{t_{\mathrm{out}}} - U\|_F^2\right]}{2\tilde{\eta}(t - t_{\mathrm{out}})}.
\end{equation}
For $U_1 = \alpha \mathbf{W}_*$ with $\alpha = \frac{\ln(\tilde{\gamma}^2 \eta (t - t_{\mathrm{out}})) + 2\kappa}{\tilde{\gamma}}$, from Lemma~\ref{lem:two-layer-comparator-loss-bound-strengthened} we have that
\begin{equation}\label{eq:return-nn-LU1}
L(U_1) \leq F(U_1) \leq \frac{K - 1}{\tilde{\gamma}^2 \eta (t - t_{\mathrm{out}})}.
\end{equation}
We, next, bound the term $\mathbb{E}\!\left[\|W_{t_{\mathrm{out}}} - U\|_F^2\right]$. Applying $(a + b)^2 \leq 2a^2 + 2b^2$, we obtain
\begin{equation}\label{eq:return-nn-split}
\mathbb{E}\!\left[\|W_{t_{\mathrm{out}}} - U\|_F^2\right] \leq 2\, \mathbb{E}\!\left[\|W_{t_{\mathrm{out}}}\|_F^2\right] + 2 \|U\|_F^2.
\end{equation}
For the term $\|U\|_F^2$, similarly to Theorem~\ref{thm:nn-eos-rate} we have that
\begin{equation}\label{eq:return-nn-U}
\|U\|_F^2 \leq 2 \|U_1\|_F^2 + 2 \|U_2\|_F^2 \leq \frac{4 m \ln^2(\tilde{\gamma}^2 \eta(t - t_{\mathrm{out}})) + 16 m \kappa^2 + 2 m \eta^2 \big(1 + \tfrac{1}{b}\big)^2}{\tilde{\gamma}^2},
\end{equation}
where we have used $\|\mathbf{W}_*\|_F^2 = m$ and $(a + b)^2 \leq 2a^2 + 2b^2$. For the term $\mathbb{E}\!\left[\|W_{t_{\mathrm{out}}}\|_F^2\right]$, we apply Lemma~\ref{lem:nn-descent} from the original time origin with comparator $U^{(0)} = U_1^{(0)} + U_2$ where $U_1^{(0)} = \alpha_0 \mathbf{W}_*$ and $\alpha_0 = \frac{\ln(\tilde{\gamma}^2 \eta t_{\mathrm{out}}) + 2\kappa}{\tilde{\gamma}}$, which gives
\begin{eqnarray}
    \mathbb{E}\!\left[\|W_{t_{\mathrm{out}}} - U^{(0)}\|_F^2\right] &\leq& 2 \tilde{\eta}\, t_{\mathrm{out}}\, L(U_1^{(0)}) + \|U^{(0)}\|_F^2 \nonumber \\
&\leq& \frac{2 m (K - 1) + 4 m \ln^2(\tilde{\gamma}^2 \eta t_{\mathrm{out}}) + 16 m \kappa^2 + 2 m \eta^2 \big(1 + \tfrac{1}{b}\big)^2}{\tilde{\gamma}^2},\quad \label{eq:return-nn-W-tout-diff}
\end{eqnarray}
where we have used $W_0 = 0$, $\tilde{\eta} = m\eta$, and the bound \eqref{eq:return-nn-LU1} applied at time $t_{\mathrm{out}}$. Applying $(a + b)^2 \leq 2a^2 + 2b^2$ again, we obtain
\begin{eqnarray}
    \mathbb{E}\!\left[\|W_{t_{\mathrm{out}}}\|_F^2\right] &\leq&2\, \mathbb{E}\!\left[\|W_{t_{\mathrm{out}}} - U^{(0)}\|_F^2\right] + 2 \|U^{(0)}\|_F^2 \nonumber \\
    &\leq& \frac{8 m (K - 1) + 16 m \ln^2(\tilde{\gamma}^2 \eta t_{\mathrm{out}}) + 64 m \kappa^2 + 8 m \eta^2 \big(1 + \tfrac{1}{b}\big)^2}{\tilde{\gamma}^2}.\label{eq:return-nn-W-tout}
\end{eqnarray}
Substituting \eqref{eq:return-nn-LU1}, \eqref{eq:return-nn-U}, and \eqref{eq:return-nn-W-tout} into \eqref{eq:return-nn-avg} and using $\tilde{\eta} = m\eta$, we obtain
\begin{equation}\label{eq:return-nn-master}
\frac{1}{t - t_{\mathrm{out}}} \sum_{k = t_{\mathrm{out}}}^{t-1} \mathbb{E}\!\left[ L(W_k) \right] \leq \frac{C_{NN}(t, t_{\mathrm{out}})}{\tilde{\gamma}^2 \eta (t - t_{\mathrm{out}})},
\end{equation}
where
\begin{equation*}
C_{NN}(t, t_{\mathrm{out}}) := 9 (K - 1) + 16 \ln^2(\tilde{\gamma}^2 \eta t_{\mathrm{out}}) + 80 \kappa^2 + 10 \eta^2 \big(1 + \tfrac{1}{b}\big)^2 + 4 \ln^2(\tilde{\gamma}^2 \eta(t - t_{\mathrm{out}})).
\end{equation*}
We now split the bound \eqref{eq:return-nn-master} into two terms. Let $A_{NN} := 9 (K - 1) + 16 \ln^2(\tilde{\gamma}^2 \eta t_{\mathrm{out}}) + 80 \kappa^2 + 10 \eta^2 \Big(1 + \tfrac{1}{b}\Big)^{\!2}.$
Then, \eqref{eq:return-nn-master} becomes
\begin{equation}\label{eq:return-nn-master-split}
\frac{1}{t - t_{\mathrm{out}}} \sum_{k = t_{\mathrm{out}}}^{t-1} \mathbb{E}\!\left[ L(W_k) \right] \leq \frac{A_{NN}}{\tilde{\gamma}^2 \eta (t - t_{\mathrm{out}})} + \frac{4\ln^2(\tilde{\gamma}^2 \eta (t - t_{\mathrm{out}}))}{\tilde{\gamma}^2 \eta (t - t_{\mathrm{out}})}.
\end{equation}
We select $t_{\mathrm{re}}$ so that each of the two terms on the right-hand side of \eqref{eq:return-nn-master-split} is at most $\delta \tilde{L}_{NN}/4$. Specifically, we next verify the formula for $t_{\mathrm{re}}$:
\begin{itemize}
    \item For $t - t_{\mathrm{out}} \geq \frac{4 A_{NN}}{\tilde{\gamma}^2 \eta \delta \tilde{L}_{NN}}$, it holds that
    \begin{equation*}
    \frac{A_{NN}}{\tilde{\gamma}^2 \eta (t - t_{\mathrm{out}})} \leq \frac{\delta \tilde{L}_{NN}}{4}.
    \end{equation*}
    
    \item For $t - t_{\mathrm{out}} \geq \frac{64}{\tilde{\gamma}^2 \eta \delta \tilde{L}_{NN}} \ln\!\left( \frac{64}{\tilde{\gamma}^2 \eta \delta \tilde{L}_{NN}} \right)$, it holds that $\tilde{\gamma}^2 \eta (t - t_{\mathrm{out}}) \geq \frac{64}{\delta \tilde{L}_{NN}} \ln\!\left( \frac{64}{\tilde{\gamma}^2 \eta \delta \tilde{L}_{NN}} \right)$, which is a sufficient condition (see Lemma G.5 in \cite{cai2024large}) for
    \begin{equation*}
    \frac{4 \ln^2(\tilde{\gamma}^2 \eta (t - t_{\mathrm{out}}))}{\tilde{\gamma}^2 \eta (t - t_{\mathrm{out}})} \leq \frac{\delta \tilde{L}_{NN}}{4}.
    \end{equation*}
\end{itemize}
Thus, it suffices to select $t_{\mathrm{re}}$ such that
\begin{equation}\label{eq:return-nn-tre}
t_{\mathrm{re}} = \left\lceil\frac{4}{\tilde{\gamma}^2 \eta \delta \tilde{L}_{NN}} \max\!\left\{ A_{NN}, 16 \ln\!\left( \frac{64}{\tilde{\gamma}^2 \eta \delta \tilde{L}_{NN}} \right) \right\}\right\rceil.
\end{equation}
For this choice of $t_{\mathrm{re}}$ and selecting $t = t_{\mathrm{out}} + t_{\mathrm{re}}$, we have that
\begin{equation}\label{eq:return-nn-final}
\frac{1}{t_{\mathrm{re}}} \sum_{k = t_{\mathrm{out}}}^{t_{\mathrm{out}} + t_{\mathrm{re}} - 1} \mathbb{E}\!\left[ L(W_k) \right] \leq \frac{\delta \tilde{L}_{NN}}{2}.
\end{equation}
Hence, there exists $t_{\mathrm{in}}^{[2]} \in [t_{\mathrm{out}}, t_{\mathrm{out}} + t_{\mathrm{re}}]$ such that
\begin{equation*}
\mathbb{E}\!\left[ L(W_{t_{\mathrm{in}}^{[2]}}) \right] \leq \frac{\delta \tilde{L}_{NN}}{2}.
\end{equation*}
From Markov's inequality, we have that
\begin{equation*}
\Pr\!\left( L(W_{t_{\mathrm{in}}^{[2]}}) \geq \tilde{L}_{NN} \right) \leq \frac{\mathbb{E}\!\left[ L(W_{t_{\mathrm{in}}^{[2]}}) \right]}{\tilde{L}_{NN}} \leq \frac{\delta}{2} \leq \delta,
\end{equation*}
which implies that
\begin{equation*}
\Pr\!\left( L(W_{t_{\mathrm{in}}^{[2]}}) \leq \tilde{L}_{NN} \right) \geq 1 - \delta.
\end{equation*}
Thus, there exists $t_{\mathrm{in}}^{[2]} \leq t_{\mathrm{out}} + t_{\mathrm{re}}$ such that for $\delta \in (0, 1)$ with probability at least $1 - \delta$ it holds that $L(W_{t_{\mathrm{in}}^{[2]}}) \leq \tilde{L}_{NN}$ and the dynamics return inside the stable set $\mathcal{S}_{NN}$.
\end{proof}
\subsection*{Proof of Theorem~\ref{thm: self_stab_2_layer_nn}}
\label{app: thm: self_stab_2_layer_nn}
\begin{proof}
    The theorem is proved by combining Lemma~\ref{lem:nn-exit} and Lemma~\ref{lem:return-nn}.
\end{proof}

\end{document}